\documentclass[11pt]{article}
\pdfoutput=1

\usepackage{mathrsfs}
\usepackage{amsmath,amssymb}
\usepackage{bm}
\usepackage{natbib}
\usepackage[usenames]{color}
\usepackage{amsthm}

\usepackage{multirow} 
\usepackage{enumitem}

\usepackage{subfigure}
\usepackage{mathtools}

\usepackage[utf8]{inputenc} % allow utf-8 input
\usepackage[T1]{fontenc}    % use 8-bit T1 fonts
\usepackage{url}            % simple URL typesetting
\usepackage{booktabs}       % professional-quality tables
\usepackage{amsfonts}       % blackboard math symbols
\usepackage{nicefrac}       % compact symbols for 1/2, etc.
\usepackage{microtype}      % microtypography
\usepackage{xcolor}         % colors

\usepackage[colorlinks,
linkcolor=red,
anchorcolor=blue,
citecolor=blue
]{hyperref}

\allowdisplaybreaks

\usepackage{mylatexstyle}

\usepackage{setspace}
\usepackage[left=1in, right=1in, top=1in, bottom=1in]{geometry}

\usepackage{xcolor}
\newenvironment{nscenter}
 {\parskip=5pt\par\nopagebreak\centering}
 {\par\noindent\ignorespacesafterend}

\ifdefined\final
\usepackage[disable]{todonotes}
\else
\usepackage[textsize=tiny]{todonotes}
\fi
\newcommand{\Sb}{\mathbf{S}}

\newcommand{\grad}[2]{\frac{\partial #1}{\partial #2}}

\DeclareMathOperator{\spanv}{span}
\DeclareMathOperator{\Proj}{Proj}
\DeclareMathOperator{\dist}{dist}

\title{\huge Looped Transformers with Layer Normalization Provably Learn the Power Method}

\author
{
Lyumin Wu\thanks{%\scriptsize Department of Statistics \& Actuarial Science, School of Computing \& Data Science, 
\scriptsize School of Computing \& Data Science, The University of Hong Kong; {\tt lyuminwu001@connect.hku.hk}}
\qquad
Chenyang Zhang\thanks{%\scriptsize Department of Statistics \& Actuarial Science, School of Computing \& Data Science, 
\scriptsize School of Computing \& Data Science, The University of Hong Kong; {\tt chyzhang@connect.hku.hk}}
\qquad
Yuan Cao\thanks{
\scriptsize School of Computing \& Data Science, 
	The University of Hong Kong;  {\tt yuancao@hku.hk}}
}

\date{}

\begin{document}

\maketitle

\begin{abstract}
Transformers have achieved remarkable success across a wide range of applications, and a growing body of work suggests that part of their strength comes from their ability to learn and execute algorithmic procedures. However, our understanding of how transformers learn such algorithms remains limited, especially in the presence of layer normalization (LN). In this work, we study principal component prediction as a concrete testbed for understanding the training dynamics of transformers with LN. We prove that a looped linear transformer with LN, trained by gradient descent, converges to a solution that implements the power method, with each self-attention layer performing one power iteration. Notably, the model is trained only for principal component prediction, rather than being explicitly supervised to implement the power method. Our finding thus reveals an ``algorithmic implicit bias'' of looped transformers with LN: principal-component prediction can in principle be achieved by many mechanisms, yet gradient descent selects one that realizes the power method. We further provide a concrete comparison between transformers with and without LN: even with layerwise guidance from power iterations, a transformer without LN cannot exactly learn the power method, whereas the corresponding transformer with LN can, leading to a provable performance gap in principal component prediction. Our results provide, to our knowledge, the first theoretical analysis of the training dynamics of looped and single-layer transformers with LN, and shed light on the role of LN in transformer models.
\end{abstract}

\section{Introduction}
Transformers \citep{Vaswani2017Attention} have become the dominant architecture across a wide range of applications, including natural language processing \citep{Wolf2020Transformers,Touvron2023Llama}, computer vision \citep{Dosovitskiy2020image,Rao2021Dynamicvit}, and reinforcement learning \citep{Parisotto2020Stabilizing,Janner2021Offline,Chen2021Decision}. Despite their widespread adoption, the underlying mechanisms of transformers remain insufficiently understood due to their architectural complexity.

Several recent studies have sought to interpret transformers by analyzing their ability to execute specific algorithms. A prominent line of research focuses on in-context learning (ICL). In terms of expressive power, \citet{Garg2022What} empirically demonstrated the ICL capabilities of transformers across various function classes. \citet{Bai2023Transformers} theoretically established that transformers can implement a broad spectrum of standard machine learning algorithms, such as least squares, ridge regression, and Lasso. Turning to optimization dynamics, \citet{Zhang2024Trained} analyzed the training trajectory of a single linear attention layer under gradient flow for in-context linear regression, while \citet{Gatmiry2024Can} showed that linear looped transformers can learn to implement multi-step gradient descent for the same task. Many existing works have also investigated how transformers perform probabilistic and unsupervised learning algorithms. \citet{Chen2024Unveiling} proved that a modified two-layer multi-head transformer can learn $n$-gram Markov chains. \citet{Edelman2024evolution} demonstrated that transformers can acquire statistical induction heads in bigram models trained on samples from a Markov chain. More recently, \citet{Cao2025Transformers} showed that transformers can simulate maximum likelihood estimation to learn Bayesian networks, while \citet{He2025Learning} provided a constructive existence proof that a transformer can be manually programmed to perform Principal Component Analysis (PCA).

Notably, most existing theoretical studies of transformers learning or executing algorithms do not incorporate layer normalization (LN), creating a gap between current theory and practical transformer architectures. In this paper, we study a stylized but explicit setting for analyzing LN through principal component prediction. We first analyze end-to-end population gradient descent for a looped linear transformer with LN, and then compare normalized and unnormalized single-layer transformers under layerwise supervision by power-iteration targets.

\noindent\textbf{Contributions.} We summarize the main contributions of this work as follows. \par
\begin{itemize}[leftmargin=*,nosep]

\item We establish a convergence guarantee for looped linear transformers with LN in principal component prediction, and provide a precise characterization of the gradient descent training dynamics (Theorem~\ref{thm:looped_end_to_end}). Interestingly, the learned transformer is equivalent to the power method, with each self-attention layer performing one power iteration, even though the model is not directly supervised by the power method. Our result thus reveals an ``algorithmic implicit bias'' of transformers with LN: although there are many algorithms that can predict principal components, looped transformers trained by gradient descent specifically learn the power method.

\item We provide a concrete comparison between transformers with and without LN in principal component prediction. We show that, even when a looped transformer without LN is trained with layerwise guidance from power iterations, it still cannot perfectly learn the power method (Theorem~\ref{thm:unnorm convergence}). In contrast, when the same layerwise guidance is used to train transformers with LN, the model successfully learns the power method (Theorem~\ref{thm:convergence}). This difference leads to a concrete performance gap in predicting the leading principal components (Theorem~\ref{thm:looped error}).

\item To the best of our knowledge, this work provides the first theoretical analysis of the training dynamics of looped and single-layer transformers with LN. To address the input-dependent nonlinearities introduced by LN, we develop new theoretical tools, including a Schur's lemma argument showing the preservation of specific weight-matrix structures during training and a dominated-convergence-based analysis that controls gradient behavior. Our results show that LN keeps the optimization dynamics well-behaved even when model weights diverge, highlighting its crucial role in shaping the optimization landscape of transformers.
\end{itemize}

\noindent\textbf{Notation.} For a vector $\vb \in \RR^d$, we denote its $\ell_2$-norm by $\|\vb\|_2$. 
For a matrix $\Ab \in \RR^{m \times n}$, we denote by $\|\Ab\|_2$ its spectral norm and by $\|\Ab\|_{\mathsf{max}}$ its maximum absolute entry. For a symmetric matrix $\Ab$, we denote by $\lambda_{\min}(\Ab)$ its smallest eigenvalue. The notation $\Ab_{:,j} \in \RR^{m}$ refers to the $j$-th column of $\Ab$, while $\Ab_{i,:} \in \RR^{n}$ refers to its $i$-th row.
The symbols $\mathbf{0}_n$ and $\mathbf{0}_{m\times n}$ denote the zero vector in $\RR^n$ 
and the zero matrix in $\RR^{m\times n}$, respectively. 
We denote by $\Ib_d$ the $d$-dimensional identity matrix, 
and by $\Ib$ the identity matrix when the dimension is clear from the context. 
For a vector $\ub \in \RR^d$ and a linear subspace $\cV\subseteq \RR^d$, we define their distance as $\mathrm{Dist}(\ub,\cV) = \min_{\vb\in \cV} \| \ub - \vb \|_2$. 
For an integer $n$, we denote $[n] = \{ 1,2, \ldots, n \}$. For any $\alpha \in \mathbb{R}$, let $\lfloor \alpha \rfloor$ denote the largest integer that is smaller than or equal to $\alpha$. For two sequences $\{a_n\}$ and $\{b_n\}$, denote $a_n = O(b_n)$ if there exists a constant $C> 0$ such that $|a_n|\leq C |b_n|$ for all large enough $n$. Denote $a_n = \Omega(b_n)$ if $b_n = O(a_n)$. We say $a_n = \Theta(b_n)$ if both $a_n = O(b_n)$ and $a_n = \Omega(b_n)$ hold.

\section{Related Works}
\textbf{Expressive power of transformers.}
A growing body of recent work studies the expressive power of transformers from the perspective of their ability to represent functions and implement algorithms. \citet{yuntransformers, dehghaniuniversal, perez2021attention, wei2022statistically} focused on the universal approximation of transformers and showed that transformers or attention mechanisms can approximate arbitrary sequence-to-sequence mappings and even simulate Turing machines. \citet{likhosherstov2021expressive} analyzed the intrinsic expressive power of self-attention matrices, while \citet{sanford2024representational} characterized both the representational strengths and fundamental limitations of attention layers through parameter complexity lower bounds.
Beyond universality, several works investigated the fine-grained algorithmic behaviors of transformers. \citet{bhattamishra2020ability, bhattamishra2022simplicity, liutransformers} studied transformers’ ability to recognize formal languages and learn automata-like structures, revealing both shortcut learning and simplicity bias. \citet{sahiner2022unraveling} interpreted attention through convex duality and connected transformer computations to finite-dimensional convex optimization problems.
A closely related line of work views transformers as implicit optimizers. \citet{olsson2022context} empirically identified induction heads and interpreted attention as implementing gradient-like updates, while \citet{Garg2022What} analyzed what classes of functions can be learned purely in context. \citet{dong2022survey} provided a comprehensive survey of in-context learning. \citet{guo2023transformers, Bai2023Transformers} further demonstrated the in-context learning capacities of transformers by constructing multi-head ReLU transformers capable of performing a variety of learning methods, including ridge regression, lasso regression, generalized linear models, and learning with shallow neural networks. \citet{chen2024transformers} demonstrated that transformers can utilize the multi-head structure to solve sparse regression tasks. \citet{Cao2025Transformers} studied how transformers perform in-context maximum likelihood estimation for Bayesian networks and autoregressively generate new samples from the learned probabilistic model.
\citet{He2025Learning} demonstrated that transformers can implement spectral methods and perform statistical estimation tasks such as PCA and Gaussian mixture clustering. 

\textbf{Optimization of transformers.}
Besides the expressive power of transformer models, a line of recent works investigates the optimization of transformers, mainly focusing on shallow architectures (single-layer or two-layer models).
\citet{kunstnernoise2023, li2024optimization} studied using the sign gradient to optimize transformers and investigated generalization. \citet{pan2023toward, zhang2020adaptive} investigated the advantages of adaptive optimizers in training transformers. \citet{li2023transformers} studied how transformers learn semantic structure, showing that embeddings and self-attention encode topical co-occurrence patterns by strengthening similarity and attention among words from the same topic. \citet{Tian2023Scan, tianjoma2024} investigated the training dynamics of transformers jointly with a decoder layer and MLP layers. \citet{tarzanagh2023transformers, ataee2023max, Huang2025How}
investigated the implicit bias of training the softmax attention layer and its connection to the SVM solution. \citet{Chen2024Unveiling, Nichani2024How, Edelman2024evolution} studied how transformers can be trained to utilize the induction heads to infer the causal structure/Markov chain. \citet{gao2024global} addressed the global convergence of transformers under certain conditions. Furthermore, many other existing works investigate the optimization of transformers under the so-called ``in-context learning'' settings. \citet{Ahn2023Transformers, Zhang2024Trained, Huang2024context} studied the training dynamics of single-layer linear/softmax attention in solving in-context linear regression tasks. \citet{Li2024One} analyzed the in-context learning of transformers in solving one-nearest neighbor selection. \citet{zhang2024context} showed that the MLP component in the transformer block can learn to simulate the initialization of in-context gradient descent. \citet{siyu2024training} investigated mechanisms of multi-head attention and demonstrated that all heads exhibit two distinct patterns and jointly interact to solve in-context linear regression.  
\citet{Huang2025Transformers} considered a Chain of Thought (CoT) training regime and showed that transformer models can be trained to perform multi-step gradient descent. \citet{Yang2025Multi} studied how transformers learn to perform symbolic multi-step reasoning via chain-of-thought, proving that even one-layer multi-head transformers can autonomously specialize and coordinate attention heads to implement sequential algorithmic procedures such as path-finding in trees.

\section{Preliminaries}
In this section, we introduce the details of our problem setup. We first give an overview of the principal component prediction problem, and then introduce the specific transformer architecture we consider to solve the problem.
\subsection{Principal Component Prediction}
In this subsection, we introduce the principal component prediction task considered in this paper. 

\begin{definition}\label{def:principal_component}
    For data vectors $\xb_1, \dots, \xb_n \in \RR^d$, let \(\Xb = [\xb_1,\ldots, \xb_n] \in \RR^{d\times n}\) be the data matrix, and suppose that the matrix $\Xb\Xb^\top$ has eigenvalue decomposition
    \begin{equation*}%\label{eq: XX decomposition}
\Xb\Xb^\top
=
\sum_{i=1}^d
\lambda_i(\Xb)\vb_i(\Xb)\vb_i(\Xb)^\top,
\quad
\lambda_1(\Xb)=\cdots=\lambda_{r(\Xb)}(\Xb)>
\lambda_{r(\Xb)+1}(\Xb)\ge\cdots\ge\lambda_d(\Xb),
\end{equation*}
where \(r(\Xb)\in\{1,\ldots,d-1\}\) is the multiplicity of the largest eigenvalue of $\Xb\Xb^\top$. Then the leading principal subspace of $\Xb$ is defined as $\cV(\Xb) \coloneqq \mathrm{span}\{\vb_1(\Xb),\ldots,\vb_{r(\Xb)}(\Xb)\}$, and each nonzero $\vb\in \cV(\Xb)$ is called a leading principal component of $\Xb$.
\end{definition}
In this work, we aim to study whether transformers can be trained to predict a leading principal component of an input matrix. We make several assumptions on the distribution of the data matrix $\Xb$, which are specified as follows.

\begin{assumption}\label{assump:distribution}
    $\Xb\in\RR^{d\times n}$ is generated from a distribution $\PP_{\Xb}$ satisfying:
    \begin{enumerate}[leftmargin=*, label=(A\arabic*),nosep]
        \item  Each column of $\Xb$ has unit norm. 
        \item  For any orthogonal matrix $\Pb$, $\Xb$ and $\Pb\Xb $ have the same distribution.
        \item  $\Upsilon(n,d) \coloneqq \EE_{\Xb\sim \PP_\Xb}  \big[ \lambda_d(\Xb)^{-1}\big ]\allowbreak < \infty$, where $\lambda_d(\Xb)$ is defined in Definition~\ref{def:principal_component}.
    \end{enumerate}
\end{assumption}
% Assumption~\ref{assump:disxtribution}(A3) is a technical assumption ensuring that certain gradients have finite expectation in our analysis. A convenient sufficient (but stronger) condition for (A3) is a spectral lower bound, e.g., $\lambda_{\min}(\Xb\Xb^\top)\ \ge\ \underline{\lambda}\ >\ 0$ (deterministically or almost surely),
% which implies $\Upsilon(n,d)\le \underline{\lambda}^{-1}$. 
% Below we give an example satisfying Assumption~\ref{assump:distribution}.
Assumption~\ref{assump:distribution}(A3) is a technical integrability condition ensuring that the population gradients in our analysis are well defined. A convenient sufficient (but stronger) condition for (A3) is a uniform spectral lower bound, e.g., $\lambda_{\min}(\Xb\Xb^\top)\ge \underline{\lambda}>0$ (deterministically or almost surely), which implies $
\Upsilon(n,d)\le \underline{\lambda}^{-1}$. Below we give an example satisfying Assumption~\ref{assump:distribution}.

% Below we give an example satisfying Assumption~\ref{assump:distribution}.

\begin{example}\label{first example}
Let $\Ub\in\RR^{d\times n}$ be a fixed matrix whose columns have unit norm and satisfy $\lambda_{\min}(\Ub\Ub^\top) > 0$.
Draw an orthogonal matrix $\Rb$ uniformly at random over all orthogonal $d\times d$ matrices, and define $\Xb=\Rb\Ub$.
Then $\Xb$ has unit columns, and by construction $\Xb$ is rotation-invariant in distribution. Moreover,
$\Xb\Xb^\top$ shares the same eigenvalues as $\Ub\Ub^\top$. This provides the bound:
\begin{equation*}
\EE\big[\lambda_d(\Xb)^{-1}\big]
\le
\lambda_{\min}(\Ub\Ub^\top)^{-1}
<\infty.
\end{equation*}
Therefore, $\Xb$ satisfies Assumption~\ref{assump:distribution}.
\end{example}
Example~\ref{first example} shows that Assumption~\ref{assump:distribution} is satisfied by a broad class of rotation-invariant constructions: one can start from any fixed full-rank matrix and randomly rotate it to obtain training data satisfying the assumption. Moreover, it is also easy to see that, when $n\geq d$, any distribution satisfying Assumption~\ref{assump:distribution}(A1) and (A2) can be easily utilized to generate data by rejecting samples whose $d$-th eigenvalue is smaller than a certain positive threshold. 

% Example~\ref{first example} shows that Assumption~\ref{assump:distribution} is a mild assumption: one can simply start from an arbitrary fixed full-rank matrix, and randomly rotate it to obtain the training data that satisfy this assumption.

\subsection{Transformer Architecture}
In this subsection, we introduce the transformer model considered in our analysis.

\textbf{Self-attention layer.}
We consider a one-layer self-attention module with a residual connection, followed by LN. 
%, which is often taken in recent theoretical frameworks . 
We define its input matrix $\Eb$ by combining the data matrix $\Xb \in \RR^{d \times n}$ with an independent random vector $\ab$ uniformly distributed on the unit sphere $\mathbb{S}^{d-1}$ as
\begin{equation*}
    \Eb=\begin{pmatrix} \Xb & \mathbf{0}_{d\times 1} \\ \mathbf{0}_{d\times n} & \ab \end{pmatrix}\in \RR^{2d\times (n+1)}.
\end{equation*}
Here, the vector \(\ab\) serves as an initial query vector from which the model predicts a leading principal component of \(\Xb\). Following the setting in \citet{Ahn2023Transformers,Ahn2024Linear,Gatmiry2024Can,Zhang2024Trained}, we remove the softmax$(\cdot)$ nonlinearity, consolidate the projection and value matrices into $\Vb \in \RR^{2d \times 2d}$, and merge the key and query matrices into $\Wb \in \RR^{2d \times 2d}$. 
The transformer layer parameterized by $\btheta = (\Vb, \Wb)$ is then defined as:
\begin{equation*}
\mathrm{TF}(\Eb; \btheta) = \mathrm{LN}\Big(\Eb + \Vb\Eb\Eb^{\top}\Wb\Eb\Big),
\end{equation*}
where the column-wise normalizer $\mathrm{LN}:\RR^{2d\times (n+1)}\to\RR^{2d\times (n+1)}$ is given by
$[\mathrm{LN}(\Ab)]_{:,j}
= {\Ab_{:,j}}/{\|\Ab_{:,j}\|_2}
$ for $j\in [n+1]$. Note that instead of the standard LN, which includes a mean-centering operation, we adopt a formulation structurally equivalent to the RMSNorm \citep{Zhang2019Root} (up to a scalar constant $\sqrt{2d}$). This choice aligns with the design of state-of-the-art Large Language Models, including Llama \citep{Touvron2023Llama} and Gemma \citep{Team2024Gemma,Team2025Gemma}, where mean-centering is removed to improve stability and efficiency without sacrificing performance. 
% The final prediction is defined as the last $d$ rows of the final column of the output matrix
% \begin{equation}\label{transformer prediction}
%   \yb(\Eb; \btheta)= \mathrm{LN}\Big(\Eb + \Vb\Eb\Eb^{\top}\Wb\Eb\Big)_{d+1:2d, n+1}.
% \end{equation}
% For later use, we also express the parameter matrices in  $d \times d$ block form as
% \begin{equation*}
%     \Vb =
%     \begin{pmatrix}
%         \Vb_{11} & \Vb_{12} \\
%         \Vb_{21} & \Vb_{22}
%     \end{pmatrix},
%     \quad
%     \Wb =
%     \begin{pmatrix}
%         \Wb_{11} & \Wb_{12} \\
%         \Wb_{21} & \Wb_{22}
%     \end{pmatrix}
%     \in \mathbb{R}^{2d \times 2d},
% \end{equation*}
% where $\Vb_{ij}, \Wb_{ij} \in \mathbb{R}^{d \times d}$ for $i,j \in [2]$.

\noindent\textbf{Looped $L$-layer Transformer.} 
The looped transformer is defined by stacking \(L\) self-attention layers with shared parameters, yielding an \(L\)-layer transformer denoted by \(\mathrm{TF}_L\). The parameters \(\btheta\) are shared across all layers to enforce structural recurrence. Specifically, the output of layer \(\ell-1\) serves as the input to layer \(\ell\), which is defined recursively as
\begin{equation*}
    \Eb^{(0)} = \Eb, \quad 
    \Eb^{(\ell)} = \mathrm{TF}(\Eb^{(\ell-1)}; \btheta) 
    \quad \text{for } \ell \in [L], \quad \mathrm{TF}_L(\Eb; \btheta) := \Eb^{(L)}.
\end{equation*}
 % The final output of the looped model is \(\mathrm{TF}_L(\Eb; \btheta) = \Eb^{(L)}\). 
 The final prediction of the looped model is defined as the last $d$ rows of the final column of $\mathrm{TF}_L(\Eb; \btheta)$:
 % , denoted by \(\yb_L\), is extracted from the output of the final layer:
\begin{equation}\label{eq:looped transformer prediction}
    \yb_L(\Eb; \btheta) = \big[\mathrm{TF}_L(\Eb; \btheta) \big]_{d+1:2d, n+1}.
\end{equation}

\section{Main Result}
In this section, we first characterize the training dynamics of looped linear transformers with LN in principal component prediction, and then provide a detailed comparison between transformers with and without LN, revealing the structural advantage of LN. Finally, we establish out-of-distribution (OOD) performance guarantees for the trained transformers.

\subsection{Looped Transformer Learns Power Method}
\label{subsec:e2e_looped_training}
In this subsection, we establish training guarantees for looped linear transformer models \eqref{eq:looped transformer prediction}.

We consider a training objective that measures the squared distance from the output of the looped \(L\)-layer model \(\yb_L(\Eb;\btheta)\) to the principal subspace of \(\Xb\Xb^\top\):
\[
\mathcal L_L(\btheta)
:=
\EE_{\Xb,\ab}\!\left[ \mathrm{Dist}^2(\yb_L(\Eb;\btheta) , \cV(\Xb))\right] = \EE_{\Xb,\ab}\!\left[ 
\left\|
\yb_L(\Eb;\btheta)- \mathrm{Proj}_{\cV(\Xb)}(\yb_L(\Eb;\btheta))
\right\|_2^2
\right],
\]
where $\cV(\Xb)$ is defined in Definition~\ref{def:principal_component} and the expectation is taken over the joint distribution of $\Xb$ and $\ab$.
We consider training the looped transformer by gradient descent with learning rate $\eta$, i.e.,
\begin{equation}\label{GD}
\begin{aligned}
    \Vb^{(t+1)}=\Vb^{(t)}-\eta \nabla_{\Vb} \mathcal{L}_L(\btheta^{(t)}),\quad
    \Wb^{(t+1)}=\Wb^{(t)}-\eta \nabla_{\Wb} \mathcal{L}_L(\btheta^{(t)}).
\end{aligned}
\end{equation}
We consider initialization 
$\Wb^{(0)}=\mathbf{0}_{2d\times 2d}$ and $\Vb^{(0)}\in \RR^{2d\times 2d}$ with $d\times d$ blocks 
$\Vb_{11}^{(0)}=\mathbf{0}$, $\Vb_{12}^{(0)}=\mathbf{0}$, $\Vb_{21}^{(0)}=\Ib_d$, $\Vb_{22}^{(0)}=\mathbf{0}$. This configuration ensures non-zero gradients at initialization. Similar initialization strategies with specific non-zero blocks are common in recent studies \citep{Li2024One,Zhang2024Trained,Gatmiry2024Can,Huang2025Transformers}.

\begin{theorem}%[End-to-end training dynamics of looped transformers]
\label{thm:looped_end_to_end}
Suppose that \(d\ge3\), \(L\ge1\), \(0<\eta\le d/(2Ln)\), and Assumption~\ref{assump:distribution} holds. %We further assume that \(\EE[\lambda_d(\Xb)^{-1}]<\infty\), where $\lambda_d(\Xb)$ is defined in Definition~\ref{def:principal_component}. 
The gradient descent dynamics exhibit the following properties:
\begin{enumerate}[leftmargin = *,nosep]
\item \textbf{Structure of the Parameter Matrices:}
For all training steps $t \ge 0$, the parameter matrices $\Wb^{(t)}$ and $\Vb^{(t)}$ maintain a specific structure:
\begin{align}\label{eq:WVstructure}
    \Wb^{(t)}
=
\begin{pmatrix}
\mathbf 0_{d\times d}& w_t\Ib_d\\
\mathbf 0_{d\times d}&\mathbf 0_{d\times d}
\end{pmatrix},
\qquad
\Vb^{(t)}
=
\begin{pmatrix}
\mathbf 0_{d\times d}&\mathbf 0_{d\times d}\\
v_t\Ib_d&\mathbf 0_{d\times d}
\end{pmatrix}.
\end{align}
Furthermore, the scalar coefficients satisfy $w_t, v_t = \Theta\big((\eta\Upsilon_1 t/d)^{1/4}\big)$ for a positive constant \(\Upsilon_1=\Upsilon_1(L,d,n)>0\).

\item \textbf{Convergence:}
The training loss $\mathcal L_L(\btheta^{(t)}) $ converges to a strictly positive value $\mathcal L_L^{(\infty)}>0$:
\[
\mathcal L_L(\btheta^{(t)})
-
\mathcal L_L^{(\infty)} = O\Bigg(\sqrt{\frac{d\Upsilon_1}{\eta t}} \Bigg),
% \left(\frac{d\Upsilon_1}{\eta}\right)^{1/2}
% t^{-1/2}(1+o_t(1)),
\]
where $
\mathcal L_L^{(\infty)}
=
\EE_{\Xb,\ab}\!\left[
\mathrm{Dist}^2({(\Xb\Xb^\top)^L\ab}/{\|(\Xb\Xb^\top)^L\ab\|_2} , \cV(\Xb))
\right]
$. 
% , and $C$ is a constant that only depends on the distribution of $\Xb$.
\end{enumerate}
\end{theorem}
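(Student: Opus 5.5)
The plan is to reduce the matrix-valued gradient descent \eqref{GD} to a two-dimensional scalar recursion in $(w_t,v_t)$, and then analyze that recursion through a single one-variable loss function.

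\textbf{Step 1: forward pass under the structure \eqref{eq:WVstructure}.} Since $\Eb\Eb^\top=\mathrm{diag}(\Xb\Xb^\top,\ab\ab^\top)$, the structured $\Wb$ gives $\Wb\Eb$ nonzero only in its last column, with top block $w\ab$. The structured $\Vb$ then maps this to a bottom block $vw\,\Xb\Xb^\top\ab$. Consequently the first $n$ columns (unit vectors by (A1)) are fixed points of each layer. The last column evolves as $\ab\mapsto(\Ib+c\Xb\Xb^\top)\ab/\|(\Ib+c\Xb\Xb^\top)\ab\|_2$ with $c=vw$, and its top block stays $\mathbf 0$. By induction,
\[
\yb_L=(\Ib+c\Xb\Xb^\top)^L\ab\,/\,\|(\Ib+c\Xb\Xb^\top)^L\ab\|_2,
\]
so $\mathcal L_L(\btheta)=f(c)$ for a scalar function $f$.

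\textbf{Step 2: structure preservation by induction on $t$.} I will show that the gradient at a structured point has the same structure.

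\emph{Equivariance.} For any orthogonal $\Pb$, (A2) and the rotation invariance of $\ab$ make $\mathcal L_L$ invariant under $\btheta\mapsto(\Qb\Vb\Qb^\top,\Qb\Wb\Qb^\top)$ with $\Qb=\mathrm{diag}(\Pb,\Pb)$. A structured point is fixed by this map, so each $d\times d$ block of $\nabla\mathcal L_L$ there commutes with every orthogonal $\Pb$.

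\emph{Schur's lemma.} Because $d\ge3$, the defining representation of $O(d)$ (even of $SO(d)$) is absolutely irreducible. Hence every gradient block is a scalar multiple of $\Ib_d$.

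\emph{Killing the unwanted blocks.} I use sign symmetries: $\ab\mapsto-\ab$, the transformation $\Qb=\mathrm{diag}(\Ib,-\Ib)$, and the fact that $\Eb$ has no cross terms. From these I will show that the scalars in blocks other than $\Wb_{12}$ and $\Vb_{21}$ vanish. Where symmetry alone does not suffice, I will check directly that the directional derivative is zero. For instance, a perturbation of $\Vb_{11}$ only changes top blocks that are then removed by the normalization and the read-out.

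\emph{Well-definedness.} Differentiating under the expectation is justified because the integrand's derivative is dominated by $O(1/\lambda_d(\Xb))$, which is integrable by (A3).

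\textbf{Step 3: scalar dynamics.} By the chain rule the updates are
\[
w_{t+1}=w_t-\eta v_t f'(c_t),\qquad v_{t+1}=v_t-\eta w_t f'(c_t),\qquad (w_0,v_0)=(0,1).
\]
The quantity $v^2-w^2$ changes only by $O(\eta^2 f'^2)$ per step, so it stays near $1$; with $f'<0$, both $w_t$ and $v_t$ increase. The key asymptotic is
\[
f(c)-\mathcal L_L^{(\infty)}\sim \frac{\Upsilon_1}{d\,c},\qquad f'(c)\sim-\frac{\Upsilon_1}{d\,c^{2}} .
\]
To obtain it, I expand $(1+c\lambda_i)^L/(1+c\lambda_1)^L=(\lambda_i/\lambda_1)^L\big(1+\tfrac{L}{c}(\lambda_i^{-1}-\lambda_1^{-1})+O(c^{-2}\lambda_d^{-2})\big)$ inside the distance formula. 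The resulting $1/c$ coefficient is controlled by $\lambda_d^{-1}$, so dominated convergence with (A3) identifies the limit $\Upsilon_1$ (including a factor $1/d$ from $\EE[\ab\ab^\top]=\Ib/d$). I also need $f'<0$ for all $c\ge0$, which I will check from the same derivative formula (each eigencomponent ratio is monotone in $c$). Substituting, $c_{t+1}-c_t\approx-\eta(v_t^2+w_t^2)f'(c_t)\approx 2\eta\Upsilon_1/(d\,c_t)$ once $c_t\gtrsim1$. Comparison with the ODE $\dot c=2\eta\Upsilon_1/(dc)$ gives $c_t=\Theta(\sqrt{\eta\Upsilon_1t/d})$, hence $w_t,v_t=\Theta((\eta\Upsilon_1t/d)^{1/4})$. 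The loss gap then equals $\Theta(1/c_t)=O(\sqrt{d\Upsilon_1/(\eta t)})$.

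\textbf{Step 4: stepsize and main obstacle.} The step size condition $\eta\le d/(2Ln)$ is used to bound $|f'(c)|(1+c^2)\lesssim Ln/d$ uniformly. This keeps each step small and monotone, and makes the discrete recursion track the ODE. I expect the main obstacle to be Step 3: obtaining the sharp two-sided $c^{-2}$ behavior of $f'$ uniformly in $c$, with constants, while handling the integrability of the $1/\lambda_d$ factors via dominated convergence. The Schur step is conceptually clean but requires careful bookkeeping for the off-diagonal blocks.
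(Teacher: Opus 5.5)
Your overall route matches the paper's. You reduce the forward pass on the structured manifold to $(\Ib_d+c\Sb)^L\ab/\|(\Ib_d+c\Sb)^L\ab\|_2$, show the manifold is invariant (Schur's lemma for the active blocks, vanishing gradients for the inactive ones), and pass to the scalar recursion in which $v_t^2-w_t^2$ contracts. You then get $c^2R_L'(c)\to-\Upsilon_1$ by dominated convergence under (A3) and conclude $c_t^2\sim4\eta\Upsilon_1t/d$. Several steps, however, do not go through as written.

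\textbf{Inactive blocks.} Conjugation by $\mathrm{diag}(\Ib_d,-\Ib_d)$ (the symmetry induced by $\ab\mapsto-\ab$) sends $(w,v)$ to $(-w,-v)$. So it does not fix the structured point, and by itself it only relates gradients at two different points. It becomes useful once you compose it with the invariance of $\Vb\Eb\Eb^\top\Wb\Eb$ under $(\Vb,\Wb)\mapsto(-\Vb,-\Wb)$. The composite fixes every structured point and negates the diagonal blocks, which kills the gradients of $\Wb_{11},\Wb_{22},\Vb_{11},\Vb_{22}$. No such symmetry touches $\Wb_{21}$ and $\Vb_{12}$, so these need a direct check. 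That check is immediate: at a structured point the output of $\dot\Wb_{21}$ only meets $\Vb_{12}=\Vb_{22}=\mathbf 0$, and $\dot\Vb_{12}$ only sees the bottom block of $\Eb\Eb^\top\Wb\Eb$, which is zero. Hence no $\Eb^{(\ell)}$ moves to first order.

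Your remark on $\Vb_{11}$ does not suffice by itself in the looped model, because the top-block change it creates in the query column is fed into the next layer. The paper dispenses with symmetry here. It proves, for all six inactive blocks at once and by induction over layers, that the top blocks of the data columns and the bottom block of the query column have zero first variation.

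Note also that differentiating under $\EE$ at a fixed parameter does not need (A3). The sample space is compact and all pre-LN columns have norm at least $1/2$ near a structured point, so sample derivatives are uniformly bounded. (A3) is needed only for the large-$c$ asymptotics.

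\textbf{Scalar dynamics.} You never show $c_t\to\infty$. Monotonicity does not give it, and the asymptotics of $f'$ may only be used afterwards. The paper argues by contradiction: if $c_t\le B$ for all $t$, continuity and strict negativity of $R_L'$ give $-R_L'\ge m_B>0$ on $[0,B]$. Hence $c_{t+1}-c_t\ge(\eta/d)m_B(w_t^2+v_t^2)\ge(\eta/d)m_B$, since $v_t\ge1$, which contradicts boundedness.

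For the asymptotics, your expansion of $(1+c\lambda_i)^L/(1+c\lambda_1)^L$ holds only once $c\lambda_d\gtrsim1$. Its $O(c^{-2}\lambda_d^{-2})$ remainder, after multiplying by $c$, is controlled by $\lambda_d^{-2}$ rather than $\lambda_d^{-1}$. So it does not by itself give an integrable dominating function under (A3); this is the obstacle you anticipate. The paper removes it by differentiating the exact sample loss $\ell_L(c)=1-\sum_{i\le r}p_i(c)$, which gives $\ell_L'(c)=-2L\sum_{i\le r}\sum_{j>r}p_ip_j(\lambda_1-\lambda_j)/((1+c\lambda_1)(1+c\lambda_j))$. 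Because $(1+c\lambda_1)(1+c\lambda_j)\ge c^2\lambda_1\lambda_j$, this yields at once $\ell_L'<0$, $|\ell_L'|\le2Ln$, and $c^2|\ell_L'(c)|\le2L/\lambda_d$ for every $c\ge0$. Hence $c^2R_L'(c)\to-\Upsilon_1$ and, after integrating, $R_L(c)-\mathcal L_L^{(\infty)}=\Upsilon_1/c+o(1/c)$.

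The step-size condition serves only to make $(\eta/d)|R_L'|\le1$, so that $v_t^2-w_t^2=\prod_{s<t}\bigl(1-(\eta/d)^2R_L'(c_s)^2\bigr)\in[0,1]$. This quantity is bounded, but not necessarily near $1$. The bound $|f'(c)|(1+c^2)\lesssim Ln/d$ you invoke is not established; what one gets is $c^2|R_L'(c)|\le2L\Upsilon(n,d)$. It is also not needed, since $R_L'(c_t)\to0$ once $c_t\to\infty$.

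The factor $1/d$ in the recursion comes from the chain rule, $vR_L'(wv)=\partial_w\mathcal L_L=\langle\nabla_{\Wb_{12}}\mathcal L_L,\Ib_d\rangle_F$, so the update is $w_{t+1}=w_t-(\eta/d)v_tR_L'(c_t)$. It has nothing to do with $\EE[\ab\ab^\top]=\Ib_d/d$. With $f$ the restricted loss as in your Step 1, your two placements of $1/d$ merely cancel each other.

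Finally, you still need $R_L(c)\to\mathcal L_L^{(\infty)}$ and $\mathcal L_L^{(\infty)}>0$. Both follow because almost surely $\lambda_d(\Xb)>0$ (so $\Sb^L\ab\neq\mathbf 0$) and $\ab$ has a nonzero component orthogonal to $\cV(\Xb)$.
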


%the transformer's output $\yb(\Eb; \btheta^{(t)})$ has the form:

% We defer the detailed proof to Appendix~\ref{appendix:end-to-end training}. Theorem~\ref{thm:looped_end_to_end} provides both a convergence guarantee for population gradient descent and an exact characterization of the parameter structure along training. In particular, with $\Wb^{(t)}$ and $\Vb^{(t)}$ specified in \eqref{eq:WVstructure} and $w_t, v_t = \Theta\big((\eta\Upsilon_1 t/d)^{1/4}\big)$ diverging to infinity, the learned looped module realizes a normalized finite-step spectral iteration. More precisely, each layer applies one normalized update with matrix $I+\rho_t X X^\top$, where $\rho_t = w_t v_t$. To see this, with the learned parameters at step $t$, we have

We defer the detailed proof to Appendix~\ref{appendix:end-to-end training}. Theorem~\ref{thm:looped_end_to_end} not only gives a convergence guarantee for gradient descent but also gives a precise characterization of the structures of the weight matrices throughout training. Interestingly, 
with $\Wb^{(t)}$ and $\Vb^{(t)}$ specified in \eqref{eq:WVstructure} and $w_t, v_t = \Theta\big((\eta\Upsilon_1 t/d)^{1/4}\big)$ diverging to infinity, we can observe that 
\begin{nscenter}
\emph{\parbox{0.9\columnwidth}{As training progresses, the transformer converges to a model equivalent to the $L$-step power method, with each self-attention layer contributing one step of power iteration.}}
\end{nscenter}
% \begin{nscenter}
% % \vspace{0.05in}
%     \emph{\parbox{0.9\columnwidth}{The trained transformer is equivalently performing the power method, with each self-attention layer performing one step of power iteration.}}
% \end{nscenter}
To see this, with the learned parameters at step $t$, we have
\begin{align*}
\Eb^{(1)}
&=
\mathrm{LN}\!\left(
\begin{pmatrix}
\Xb & \mathbf{0}\\
\mathbf{0} & \ab
\end{pmatrix}
+
\begin{pmatrix}
\mathbf{0} & \mathbf{0}\\
v_t\Ib_d & \mathbf{0}
\end{pmatrix}
\begin{pmatrix}
\Xb\Xb^\top & \mathbf{0}\\
\mathbf{0} & \ab\ab^\top
\end{pmatrix}
\begin{pmatrix}
\mathbf{0} & w_t\Ib_d\\
\mathbf{0} & \mathbf{0}
\end{pmatrix}
\begin{pmatrix}
\Xb & \mathbf{0}\\
\mathbf{0} & \ab
\end{pmatrix}
\right) \\
&=
\begin{pmatrix}
\Xb & \mathbf{0}\\
\mathbf{0} & \ab_{1}
\end{pmatrix},
\quad
\text{where}
\quad
\ab_{1}
=
\frac{(\Ib_d+\rho_t\Xb\Xb^\top)\ab}
{\|(\Ib_d+\rho_t\Xb\Xb^\top)\ab\|_2},
\quad
\rho_t=w_tv_t.
\end{align*}
Iterating this update through \(L\) looped layers gives
\begin{align*}
&\yb_L(\Eb;\btheta^{(t)})
=
\frac{
(\Ib_d+\rho_t\Xb\Xb^\top)^L\ab
}{
\|(\Ib_d+\rho_t\Xb\Xb^\top)^L\ab\|_2
}.
\end{align*}
% \begin{align*}
% &\yb_L(\Eb;\btheta^{(t)})
% =
% \left[
% \mathrm{LN}\!\left(
% \Eb^{(L-1)}
% +
% \Vb^{(t)}\Eb^{(L-1)}\Eb^{(L-1)\top}\Wb^{(t)}\Eb^{(L-1)}
% \right)
% \right]_{d+1:2d,n+1}
% \\
% &=
% \left[
% \mathrm{LN}\!\left(
% \begin{pmatrix}
% \Xb & \mathbf{0}\\
% \mathbf{0} & \ab_{L-1}
% \end{pmatrix}
% +
% \begin{pmatrix}
% \mathbf{0} & \mathbf{0}\\
% v_t\Ib_d & \mathbf{0}
% \end{pmatrix}
% \begin{pmatrix}
% \Xb\Xb^\top & \mathbf{0}\\
% \mathbf{0} & \ab_{L-1}\ab_{L-1}^\top
% \end{pmatrix}
% \begin{pmatrix}
% \mathbf{0} & w_t\Ib_d\\
% \mathbf{0} & \mathbf{0}
% \end{pmatrix}
% \begin{pmatrix}
% \Xb & \mathbf{0}\\
% \mathbf{0} & \ab_{L-1}
% \end{pmatrix}
% \right)
% \right]_{d+1:2d,n+1}
% \\
% &=
% \frac{
% (\Ib_d+\rho_t\Xb\Xb^\top)\ab_{L-1}
% }{
% \|(\Ib_d+\rho_t\Xb\Xb^\top)\ab_{L-1}\|_2
% }
% =
% \frac{
% (\Ib_d+\rho_t\Xb\Xb^\top)^L\ab
% }{
% \|(\Ib_d+\rho_t\Xb\Xb^\top)^L\ab\|_2
% }.
% \end{align*}
As training progresses, $\rho_t$ diverges at the rate $\Theta((\eta\Upsilon_1 t/d)^{1/2})$. Thus, for large $t$, the term $\rho_t\Xb\Xb^\top$ dominates $\Ib_d$, and the output converges to the $L$-step power-method iterate:
\begin{equation*}
\lim_{t\to\infty} \yb_L(\Eb;\btheta^{(t)}) = \frac{(\Xb\Xb^\top)^L\ab}{\|(\Xb\Xb^\top)^L\ab\|_2}.
\end{equation*}
This limit exactly matches the $L$-step power method. The significance of this equivalence is that the model is trained only to predict principal components. This can be interpreted as an \textit{algorithmic implicit bias} result: among the many algorithms that can predict principal components, looped transformers with LN trained by gradient descent learn the power method.
We also emphasize that Theorem~\ref{thm:looped_end_to_end} shows that the loss converges to $\mathcal L_L^{(\infty)}>0$, but this result does not imply any residual gap between the trained transformer and the power method. The trained model converges to the $L$-step power method
exactly; the limiting loss is positive simply because, for finite $L$, the $L$-step power iterate need not lie in the leading principal subspace.

Recent work \citet{He2025Learning} studied the expressive capacity of multi-layer transformers for PCA. Leveraging the universal approximation power of ReLU activations, they showed that transformers without LN can approximate the top-$k$ principal components of a given data matrix. However, their study does not address whether such behavior can emerge through training. In contrast, our work focuses on the optimization dynamics of extracting the leading principal component. We theoretically demonstrate that looped linear transformers with LN trained by gradient descent learn the power method, even though the model is not directly supervised by the power method. Moreover, unlike the complex constructions required by the existence proofs in \citet{He2025Learning}, our analysis reveals that the transformer learns a simpler and more intuitive mechanism through training. Finally, our study explicitly incorporates LN into the theoretical analysis, distinguishing our findings from those of \citet{He2025Learning}.
% , which do not consider normalization layers.

The proof of Theorem~\ref{thm:looped_end_to_end} develops several theoretical tools for analyzing the training dynamics of transformer layers with LN, which may be of independent interest. First, we use Schur’s lemma to characterize the structures preserved by the parameter matrices along the optimization trajectory (Appendix \ref{lem:active_scalar_gradient}), extending previous analyses \citep{Li2024One,Wang2024Transformers} to architectures with LN. Second, LN introduces additional input-dependent terms in the population gradient that can in principle diverge, and we control these terms via a refined dominated-convergence-based argument (Appendix \ref{lem:scalar_derivative}). Finally, we show that, in the presence of LN, the training loss does not admit a finite minimizer, and is minimized only in the limit as the scale parameter $\rho_t = w_t v_t$ tends to infinity. By analyzing the quadratic increment \(\rho_{t+1}^2-\rho_t^2\) (Appendix \ref{lem:scalar_dynamics}), we obtain precise rates for the divergence of $\rho_t$.

\subsection{Comparison between Transformers with and without LN}
In this subsection, we provide a detailed comparison between transformers with and without LN trained under layerwise guidance by power iterations. 
% We show that transformers without LN cannot be trained to exactly implement the power method, even with layerwise guidance from exact power iterations. In contrast, when the same layerwise guidance is used to train transformers with LN, the model successfully learns the power method.

\subsubsection{Transformer without LN}
\label{sec:unnorm-main}
To understand the advantage of LN, we analyze how an unnormalized model learns to perform power iterations. Rather than considering direct end-to-end training of the looped model, we establish a stronger result: even layerwise guidance by power iterations cannot teach transformers without LN to perform the power method.
The unnormalized transformer layer is defined as
\begin{equation}\label{unnormalized transformer prediction}
\begin{aligned}
\tilde{\mathrm{TF}}(\Eb; \btheta)=\Eb+\Vb\Eb\Eb^\top\Wb\Eb, \quad\tilde{\yb}(\Eb;\btheta)=\big[\tilde{\mathrm{TF}}(\Eb;\btheta)\big]_{d+1:2d,n+1}.\end{aligned}
\end{equation}
% We first present the detailed setup for proving this result.
\noindent\textbf{Training with Layerwise Guidance.} We consider training a single-layer module \eqref{unnormalized transformer prediction} to learn one step of the power iteration, using the same initialization as in Theorem~\ref{thm:looped_end_to_end}.
% , and then looping this module at inference time.

The training objective measures the distance between the
output of the unnormalized transformer layer $ \tilde{\yb}(\Eb;\btheta)$ and a given training target $\yb_{\text{target}}$, and is given as
\begin{equation}\label{training target}
\tilde{\mathcal L}(\btheta)
\coloneqq
\EE_{\Xb,\ab}\left[\left\|\tilde{\yb}(\Eb;\btheta)-\yb_{\text{target}}\right\|_2^2\right].
\end{equation}
Note that for models without LN, it is not obvious whether the target response given by the one-step power method should be modified to yield better performance. In order to make a fair and thorough comparison between cases with and without LN, here we consider two choices of $\yb_{\text{target}}$ in \eqref{training target}:
\begin{align*}
     \yb_{\text{target}}^{(1)} = \frac{\Xb\Xb^\top\ab}{\|\Xb\Xb^\top\ab\|_2}, \quad 
    \yb_{\text{target}}^{(2)} = {\Xb\Xb^\top\ab},
\end{align*}
which correspond to normalized and unnormalized targets, respectively. For both choices, training guarantees for transformers without LN are given in the following theorem.

\begin{theorem}
\label{thm:unnorm convergence}
Suppose that  $d\ge3$, and Assumption~\ref{assump:distribution} holds. Define  $\Upsilon_2\coloneqq\EE[\|\Xb\Xb^\top\ab\|_2^2]$ and 
\begin{align*}
    \gamma^\star = \frac{\EE\bigl[\|\Xb\Xb^\top\ab\|_2\bigr] - \EE\bigl[\ab^\top\Xb\Xb^\top\ab\bigr]}
     {\EE\bigl[\|\Xb\Xb^\top\ab\|_2^2\bigr]} \mathds{1}\{\yb_{\textnormal{target}}= \yb_{\textnormal{target}}^{(1)}\} + \bigg[1 - \frac{\EE\bigl[\ab^\top\Xb\Xb^\top\ab\bigr]}
         {\EE\bigl[\|\Xb\Xb^\top\ab\|_2^2\bigr]}\bigg] \mathds{1}\{\yb_{\textnormal{target}}= \yb_{\textnormal{target}}^{(2)}\}.
\end{align*}
% \begin{equation*}
% % \label{eq:def-gamma-star-unnorm}
% \gamma^\star \coloneqq
% \begin{cases}
% \displaystyle
% \frac{\EE\bigl[\|\Xb\Xb^\top\ab\|_2\bigr] - \EE\bigl[\ab^\top\Xb\Xb^\top\ab\bigr]}
%      {\EE\bigl[\|\Xb\Xb^\top\ab\|_2^2\bigr]},
% & \hspace{-6pt}\text{if } \yb_{\textnormal{target}}= \yb_{\textnormal{target}}^{(1)} ,\\[8pt]
% \displaystyle
% 1 - \frac{\EE\bigl[\ab^\top\Xb\Xb^\top\ab\bigr]}
%          {\EE\bigl[\|\Xb\Xb^\top\ab\|_2^2\bigr]},
% & \hspace{-6pt}\text{if } \yb_{\textnormal{target}}= \yb_{\textnormal{target}}^{(2)}.
% \end{cases}
% \end{equation*}
Then for both choices of $\yb_{\textnormal{target}}$ and any 
%\begin{align*}
    $0<\eta \leq{d}/{8 \Upsilon_2\sqrt{1+16(\gamma^\star)^2}}$,
%\end{align*}
the following results hold.
\begin{enumerate}[leftmargin = *,nosep]
    \item \textbf{Structure of the Parameter Matrices:} For all training steps $t \ge 0$, the parameter matrices $\Wb^{(t)}$ and $\Vb^{(t)}$ maintain a specific structure:
    \begin{equation*}
        \Wb^{(t)}=
\begin{pmatrix}
\mathbf{0}_{d\times d} & \tilde{\alpha}_t \Ib_d\\[2pt]
\mathbf{0}_{d\times d} & \mathbf{0}_{d\times d} \vphantom{\tilde{\beta}_t}
\end{pmatrix},
\quad
\Vb^{(t)}=
\begin{pmatrix}
\mathbf{0}_{d\times d} & \mathbf{0}_{d\times d}\\[2pt]
\tilde{\beta}_t \Ib_d & \mathbf{0}_{d\times d}
\end{pmatrix},
    \end{equation*}
   where $\tilde{\alpha}_t,\tilde{\beta}_t$ are scalars. Moreover, their product $\tilde{\gamma}_t\coloneqq\tilde{\alpha}_t\tilde{\beta}_t$ converges to $\gamma^\star$ defined above.
 \item \textbf{Convergence}: The training loss $\tilde{\mathcal L}(\btheta^{(t)}) $ converges linearly to a strictly positive value $\tilde{\mathcal L}^\star>0$:
    \begin{align*}
     \tilde{\mathcal L}(\btheta^{(t)}) - \tilde{\mathcal L}^\star
    \le \rho^{t-1} \big(  \tilde{\mathcal L}(\btheta^{(1)}) - \tilde{\mathcal L}^\star \big),\quad
    |\tilde{\gamma}_t - \gamma^\star|
    \le   \rho^{(t-1)/2}\,
    \sqrt{(\tilde{\mathcal L}(\btheta^{(1)}) - \tilde{\mathcal L}^\star) / \Upsilon_2}
    % \sqrt{\frac{\rho^{t-1}}{\Upsilon_2}}\,
    % \sqrt{\tilde{\mathcal L}(\btheta^{(1)}) - \tilde{\mathcal L}^\star}
    \end{align*}
    for all $t \ge 1$, where $\rho \coloneqq 1 - 8\eta^2 {\Upsilon_2^2\gamma^\star}/{d^2}  \in [0, 1)$.
    % $\tilde{\gamma}_1=2\eta {\Upsilon_2\gamma^\star}/{d}  $ and
 \end{enumerate}  
\end{theorem}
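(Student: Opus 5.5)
The plan is to reduce the matrix dynamics to a two-scalar recursion and then analyze a quadratic loss in the product $\tilde\gamma=\tilde\alpha\tilde\beta$. I would proceed in three stages: preserve the block structure, compute the scalar gradients, and show contraction of $\gamma$.

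\textbf{Step 1 (structure preservation).} Argue by induction on $t$. Assume $\Wb^{(t)},\Vb^{(t)}$ have the stated form. Then $\tilde{\yb}(\Eb;\btheta^{(t)})=\ab+\tilde\gamma_t\Xb\Xb^\top\ab$. The population gradient $\nabla_{\Wb}\tilde{\mathcal L}$ is an expectation of products of blocks of $\Eb$. Two symmetries act on it:
\begin{itemize}[leftmargin=*,nosep]
\item the sign flip $\ab\mapsto-\ab$ (the targets $\yb_{\text{target}}^{(1)},\yb_{\text{target}}^{(2)}$ are odd in $\ab$) kills every block except the $(1,2)$ block of $\nabla_{\Wb}$ and the $(2,1)$ block of $\nabla_{\Vb}$ (also using that $\Vb_{12}$, $\Wb_{22}$ directions couple only to odd moments);
\item the joint rotation $(\Xb,\ab)\mapsto(\Pb\Xb,\Pb\ab)$, valid by (A2) and the uniformity of $\ab$, makes each surviving $d\times d$ block commute with every orthogonal $\Pb$.
\end{itemize}
By Schur's lemma each surviving block is therefore a multiple of $\Ib_d$, so the form is preserved.

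\textbf{Step 2 (scalar gradients).} Write $f(\gamma)=\EE\|\ab+\gamma\Xb\Xb^\top\ab-\yb_{\text{target}}\|_2^2$. This is a quadratic with leading coefficient $\Upsilon_2$. Expanding it, and using $\yb_{\text{target}}^{(1)\top}\Xb\Xb^\top\ab=\|\Xb\Xb^\top\ab\|_2$ and $\yb_{\text{target}}^{(2)\top}\Xb\Xb^\top\ab=\|\Xb\Xb^\top\ab\|_2^2$, shows its minimizer is exactly $\gamma^\star$. So $f'(\gamma)=2\Upsilon_2(\gamma-\gamma^\star)$. Since the block is $\tilde\alpha\Ib_d$, the matrix gradient block equals $(\partial_{\tilde\alpha}f)/d\cdot\Ib_d$. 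This gives the scalar recursion
\[
\tilde\alpha_{t+1}=\tilde\alpha_t-\tfrac{2\eta\Upsilon_2}{d}\tilde\beta_t(\tilde\gamma_t-\gamma^\star),\qquad \tilde\beta_{t+1}=\tilde\beta_t-\tfrac{2\eta\Upsilon_2}{d}\tilde\alpha_t(\tilde\gamma_t-\gamma^\star),
\]
starting from $\tilde\alpha_0=0$, $\tilde\beta_0=1$. In particular $\tilde\gamma_1=2\eta\Upsilon_2\gamma^\star/d$.

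Before the convergence argument I would check $\gamma^\star>0$:
\begin{itemize}[leftmargin=*,nosep]
\item for $\yb_{\text{target}}^{(1)}$, Cauchy--Schwarz gives $\ab^\top\Xb\Xb^\top\ab\le\|\Xb\Xb^\top\ab\|_2$, with strict inequality with positive probability because $r(\Xb)<d$;
\item for $\yb_{\text{target}}^{(2)}$, use rotation invariance to write $\EE[\ab^\top\Xb\Xb^\top\ab]=n/d$ and $\Upsilon_2=\EE\operatorname{tr}((\Xb\Xb^\top)^2)/d$, and compare these using (A1). I am least sure this comparison closes in all regimes and would examine it carefully.
\end{itemize}

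\textbf{Step 3 (convergence, the main obstacle).} The dynamics is nonconvex in $(\tilde\alpha,\tilde\beta)$. Subtracting the two recursions gives the near-conservation law
\[
\tilde\beta_{t+1}^2-\tilde\alpha_{t+1}^2=\bigl(1-(2\eta\Upsilon_2/d)^2(\tilde\gamma_t-\gamma^\star)^2\bigr)(\tilde\beta_t^2-\tilde\alpha_t^2).
\]
The product evolves as
\[
\tilde\gamma_{t+1}-\gamma^\star=(\tilde\gamma_t-\gamma^\star)\Bigl(1-\tfrac{2\eta\Upsilon_2}{d}(\tilde\alpha_t^2+\tilde\beta_t^2)\Bigr)+\tfrac{4\eta^2\Upsilon_2^2}{d^2}\tilde\gamma_t(\tilde\gamma_t-\gamma^\star)^2.
\]
I would prove by induction that $\tilde\alpha_t,\tilde\beta_t\ge0$, $\tilde\gamma_t\in[\tilde\gamma_1,\gamma^\star]$ and $\tilde\alpha_t^2+\tilde\beta_t^2\le 1+16(\gamma^\star)^2$ (up to constants). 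This is where the step-size condition $\eta\le d/(8\Upsilon_2\sqrt{1+16(\gamma^\star)^2})$ enters: it keeps the multiplier in $[0,1)$ and prevents overshoot.

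The contraction factor then follows from
\[
\tilde\alpha_t^2+\tilde\beta_t^2\ge2\tilde\alpha_t\tilde\beta_t=2\tilde\gamma_t\ge2\tilde\gamma_1=4\eta\Upsilon_2\gamma^\star/d,
\]
which yields $|\tilde\gamma_{t+1}-\gamma^\star|^2\le\rho\,|\tilde\gamma_t-\gamma^\star|^2$ with $\rho=1-8\eta^2\Upsilon_2^2\gamma^\star/d^2$. Because $f$ is exactly quadratic, $\tilde{\mathcal L}(\btheta^{(t)})-\tilde{\mathcal L}^\star=\Upsilon_2(\tilde\gamma_t-\gamma^\star)^2$. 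This gives both displayed bounds.

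Finally, $\tilde{\mathcal L}^\star=f(\gamma^\star)>0$, because $\ab+\gamma^\star\Xb\Xb^\top\ab$ cannot equal the target almost surely. For $\yb_{\text{target}}^{(1)}$, the vector $\ab+\gamma^\star\Xb\Xb^\top\ab$ is not parallel to $\Xb\Xb^\top\ab$ on a positive-probability event; for $\yb_{\text{target}}^{(2)}$, the residual $\ab-(1-\gamma^\star)\Xb\Xb^\top\ab$ is nonzero on a positive-probability event.
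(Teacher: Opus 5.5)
Your plan works, and Step 3 takes a genuinely different route from the paper. Step 1, however, justifies the $(2,2)$ blocks incorrectly. The sign flip $\ab\mapsto-\ab$ does not separate the blocks. With $\mathbf{r}=\tilde\yb-\yb_{\text{target}}$ odd in $\ab$, every block that enters $\tilde\yb=\ab+\Vb_{21}\Sb\Wb_{12}\ab+\Vb_{22}\ab\ab^\top\Wb_{22}\ab$ has an integrand that is even in $\ab$. For example, $\nabla_{\Wb_{22}}\tilde{\mathcal L}=2\EE[(\mathbf{r}^\top\Vb_{22}\ab)\,\ab\ab^\top]$ and $\nabla_{\Vb_{22}}\tilde{\mathcal L}=2\EE[(\ab^\top\Wb_{22}\ab)\,\mathbf{r}\ab^\top]$ are even, exactly like the surviving $(1,2)$ and $(2,1)$ blocks, so the symmetry kills nothing. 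The argument that actually works, and the one the paper uses, has two parts. First, $\Wb_{11},\Wb_{21},\Vb_{11},\Vb_{12}$ do not appear in $\tilde\yb$, so their gradients vanish identically. Second, $\nabla_{\Wb_{22}}$ is linear in $\Vb_{22}$ and $\nabla_{\Vb_{22}}$ is linear in $\Wb_{22}$, so starting both at zero keeps both at zero. Replace the sign-flip step with this; your rotation/Schur argument for the active blocks is fine.

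For convergence, the paper proves a descent lemma. It expands the quadratic loss exactly along $\Delta\tilde\gamma_t$ and bounds four higher-order terms by half the main term, using a multi-part $\eta_{\max}$. From this it gets invariance of the level set $[0,2\gamma^\star]$, which allows overshoot, and then $\tilde\gamma_t\ge\tilde\gamma_1$ from monotonicity of the loss. You instead show that no overshoot occurs. Writing $c=2\Upsilon_2/d$ and $e_t=\tilde\gamma_t-\gamma^\star\le0$, you get $e_{t+1}=e_t\bigl[1-\eta cS_t-\eta^2c^2\tilde\gamma_t|e_t|\bigr]$ with the bracket in $[0,1-\eta cS_t]$. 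This yields monotone $\tilde\gamma_t,\tilde\alpha_t,\tilde\beta_t$ and even $|e_{t+1}|\le\rho|e_t|$, which is stronger than needed. It is shorter and sharper; the paper's argument is more generic (it does not use the one-sided approach) but needs more bookkeeping.

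To close your induction, you need $S_t\le\sqrt{1+4(\gamma^\star)^2}$. This follows from your conservation law, since $D_t\in[0,1]$ because $\eta c|e_t|\le1$, together with $S_t^2=D_t^2+4\tilde\gamma_t^2$; it gives $\eta cS_t\le 1/4$ under the stated step size. A bound of order $1+16(\gamma^\star)^2$ without the square root would not give this for $\gamma^\star$ near $1$.

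Your worry about $\gamma^\star>0$ for $\yb_{\text{target}}^{(2)}$ resolves. Assumption (A3) forces $\lambda_d(\Xb)>0$ almost surely, hence $n\ge d$. The identity $\operatorname{tr}(\Sb^2)=n+\sum_{i\ne j}(\xb_i^\top\xb_j)^2$ equals $n$ only if the columns are orthonormal. That forces $n=d$ and $\Sb=\Ib_d$, contradicting $r(\Xb)\le d-1$. Hence $\Upsilon_2>n/d=\EE[\ab^\top\Sb\ab]$, so $\gamma^\star>0$.
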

We defer the detailed proof to Appendix~\ref{sec:appendix:noLN-full}. Theorem~\ref{thm:unnorm convergence} shows that the unnormalized single-layer attention model defined in \eqref{unnormalized transformer prediction} develops a structure in its parameter matrices similar to that in Theorem~\ref{thm:looped_end_to_end}, but with scalar coefficients converging to finite values. Moreover, the training loss $\tilde{\mathcal L}(\btheta^{(t)})$, defined by the difference between transformer layer and one-step power iteration outputs, converges linearly to a strictly positive limit, regardless of the choice of $\yb_{\text{target}}$ in \eqref{training target}. This implies that a transformer layer without LN cannot be trained to exactly perform the power iteration.

\subsubsection{Transformer with LN}
To provide a direct comparison with the results for transformers without LN, we also consider training transformers with LN under layerwise guidance. We denote the one-layer model output as
\begin{equation}\label{transformer prediction}
\begin{aligned}
\mathrm{TF}(\Eb; \btheta) = \mathrm{LN}\Big(\Eb + \Vb\Eb\Eb^{\top}\Wb\Eb\Big), \quad \yb(\Eb; \btheta)= \big[\mathrm{TF}(\Eb; \btheta) \big]_{d+1:2d, n+1}.\end{aligned}
\end{equation}
% \begin{equation*}
% \mathrm{TF}(\Eb; \btheta) = \mathrm{LN}\Big(\Eb + \Vb\Eb\Eb^{\top}\Wb\Eb\Big),
% \end{equation*}
% \begin{equation}\label{transformer prediction}
%   \yb(\Eb; \btheta)= \mathrm{LN}\Big(\Eb + \Vb\Eb\Eb^{\top}\Wb\Eb\Big)_{d+1:2d, n+1}.
% \end{equation}
We consider a training objective measuring the distance between the output of the transformer layer $\yb(\Eb;\btheta)$ and the ideal one-step power iteration:%, we minimize the Mean-Square Error (MSE) loss:
\begin{equation*}
    \mathcal{L}(\btheta)=\EE_{\Xb,\ab}\left[\left\|\yb(\Eb; \btheta)-\frac{\Xb\Xb^\top\ab}{\|\Xb\Xb^\top\ab\|_2}\right\|_2^2\right],
\end{equation*}
%where the expectation is taken over the joint distribution of $\Xb$ and $\ab$. 
We consider training the transformer layer using the same initialization as in Theorem~\ref{thm:looped_end_to_end}. The following theorem characterizes the gradient descent dynamics for transformers with LN.

\begin{theorem}
\label{thm:convergence}
Suppose that  $d\ge3$, $\eta>0$, and Assumption \ref{assump:distribution} holds. Define the constant $\Upsilon_3 \coloneqq \EE\!\left[
  \bigl(\|\ab\|_2^2 \|\Xb\Xb^\top\ab\|_2^2 - (\ab^\top \Xb\Xb^\top\ab)^2\bigr)
 \|\Xb\Xb^\top\ab\|_2^{-4}
\right]$, which satisfies $0<\Upsilon_3=O(d\Upsilon(n,d)/ n)<\infty$. 
The gradient descent dynamics exhibit the following properties:
\begin{enumerate}[leftmargin = *,nosep]
    \item \textbf{Structure of the Parameter Matrices:} For all training steps $t \ge 0$, the parameter matrices $\Wb^{(t)}$ and $\Vb^{(t)}$ maintain a specific sparse structure:
    \begin{equation*}
        \Wb^{(t)}=
\begin{pmatrix}
\mathbf{0}_{d\times d} & \alpha_t \Ib_d\\[2pt]
\mathbf{0}_{d\times d} & \mathbf{0}_{d\times d}
\end{pmatrix},
\quad
\Vb^{(t)}=
\begin{pmatrix}
\mathbf{0}_{d\times d} & \mathbf{0}_{d\times d}\\[2pt]
\beta_t \Ib_d & \mathbf{0}_{d\times d}
\end{pmatrix}.
    \end{equation*}
    Furthermore, the scalar coefficients satisfy $\alpha_t, \beta_t = \Theta\big((\eta\Upsilon_3 t/d)^{1/6}\big)$.
\item  \textbf{Convergence:} For any sufficiently small $\epsilon>0$, there exists a finite number of iterations $T^\star=O(  {d^{{3}/{2}}\Upsilon(n,d)^{{1}/{2}}}/{\eta n^{{1}/{2}}\epsilon^{{3}/{2}}}) $,
such that for all $t\ge T^\star$,
    \begin{equation*}
        \mathcal{L}(\btheta^{(t)}) \leq \epsilon.
    \end{equation*}
 \end{enumerate}  
\end{theorem}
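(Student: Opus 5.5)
We would follow the same three-stage template as the proof of Theorem~\ref{thm:looped_end_to_end}: first show that the block structure is preserved, then reduce gradient descent to a two-dimensional scalar recursion in $(\alpha_t,\beta_t)$, and finally extract rates from the large-$\gamma$ asymptotics of a one-variable loss. Throughout we write $\Mb=\Xb\Xb^\top$ and $\bb=\Mb\ab/\|\Mb\ab\|_2$.

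\emph{Step 1 (structure).} We argue by induction on $t$. Suppose $\Wb^{(t)},\Vb^{(t)}$ have the stated form. For any orthogonal $\Pb$, Assumption~\ref{assump:distribution}(A2) and the uniformity of $\ab$ make $(\Pb\Xb,\Pb\ab)$ equidistributed with $(\Xb,\ab)$. Hence each $d\times d$ block of $\nabla_{\Vb}\mathcal L$ and $\nabla_{\Wb}\mathcal L$ commutes with every orthogonal $\Pb$. We then apply the Schur's lemma argument used for Theorem~\ref{thm:looped_end_to_end} (Appendix~\ref{lem:active_scalar_gradient}), which needs $d\ge3$, to conclude that each block is a multiple of $\Ib_d$. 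We then check that all blocks other than $\Wb_{12}$ and $\Vb_{21}$ have zero coefficient, by one of two mechanisms. For some blocks the relevant expectation is odd in $\ab$, i.e.\ invariant under $\ab\mapsto-\ab$ with a sign flip, so it vanishes. For the others, the perturbation moves the output only along $\yb$ itself, and the derivative of $\yb=\ub/\|\ub\|_2$ is orthogonal to $\yb$, so the gradient contribution is zero.

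\emph{Step 2 (scalar reduction).} With the structure in place, the output is $\yb=\ub/\|\ub\|_2$ with $\ub=(\Ib+\gamma\Mb)\ab$ and $\gamma=\alpha\beta$. So $\mathcal L=f(\gamma)$ with
\[
f(\gamma)=\EE\Bigl[\bigl\|\ub/\|\ub\|_2-\bb\bigr\|_2^2\Bigr]=2-2\,\EE\bigl[\bb^\top\ub/\|\ub\|_2\bigr].
\]
Once the entries are summed with the $\Ib_d$ normalization, gradient descent becomes
\[
\alpha_{t+1}=\alpha_t-\tfrac{\eta}{d}\beta_t f'(\gamma_t),\qquad \beta_{t+1}=\beta_t-\tfrac{\eta}{d}\alpha_t f'(\gamma_t),
\]
where the factor $1/d$ accounts for the $\Ib_d$ blocks. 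We then need two facts about $f'$.

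First, $f'(\gamma)<0$ for all $\gamma\ge0$. Differentiating gives
\[
f'(\gamma)=-2\,\EE\Bigl[\bb^\top(\Ib-\yb\yb^\top)\Mb\ab/\|\ub\|_2\Bigr].
\]
We would prove positivity of the expectation by diagonalizing $\Mb$ and noting that $\bb^\top(\Ib-\yb\yb^\top)\Mb\ab\ge0$ pointwise, via a Cauchy--Schwarz type inequality in the eigenbasis. This is strict unless $\ab$ is an eigenvector, which happens with probability zero.

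Second, as $\gamma\to\infty$,
\[
\gamma^3 f'(\gamma)\to-2\Upsilon_3 .
\]
Expanding $\yb=\bb+\gamma^{-1}(\Ib-\bb\bb^\top)\ab/\|\Mb\ab\|_2+O(\gamma^{-2})$ gives $f(\gamma)=\Upsilon_3\gamma^{-2}+o(\gamma^{-2})$. We would also verify the upper bound $f(\gamma)\le C\Upsilon_3/\gamma^2$ for all $\gamma>0$ and the bound $\Upsilon_3=O(d\Upsilon(n,d)/n)$, using $\|\Mb\ab\|_2\ge\lambda_d(\Xb)$ and $\operatorname{tr}\Mb=n$.

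\emph{Step 3 (dynamics).} From the recursion, $\alpha_{t+1}^2-\beta_{t+1}^2=(\alpha_t^2-\beta_t^2)\bigl(1-\eta^2f'(\gamma_t)^2/d^2\bigr)$, so the imbalance stays bounded. The first step moves $\alpha$ off zero because $f'(0)<0$, so $\gamma$ becomes positive and then increases monotonically. Using $\alpha^2+\beta^2\ge2\gamma$ together with the boundedness of the imbalance, the increment of $\gamma$ is $\Theta\bigl(\eta\gamma_t|f'(\gamma_t)|/d\bigr)=\Theta\bigl(\eta\Upsilon_3/(d\gamma_t^2)\bigr)$ up to a lower-order $\eta^2$ term. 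Analyzing $\gamma_{t+1}^3-\gamma_t^3$ as in Appendix~\ref{lem:scalar_dynamics} yields $\gamma_t^3=\Theta(\eta\Upsilon_3t/d)$. Balancedness then gives $\alpha_t,\beta_t=\Theta(\gamma_t^{1/2})=\Theta\bigl((\eta\Upsilon_3t/d)^{1/6}\bigr)$. Finally,
\[
\mathcal L(\btheta^{(t)})\le C\Upsilon_3\gamma_t^{-2}=O\bigl((d\Upsilon_3/(\eta t))^{2/3}\bigr),
\]
and setting this equal to $\epsilon$ with $\Upsilon_3=O(d\Upsilon/n)$ gives the stated $T^\star$. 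No step-size restriction is needed, because the increment $\eta\Upsilon_3/(d\gamma^2)$ decays, so the $\eta^2$ corrections cannot cause overshoot.

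\emph{Main obstacle.} We expect the hardest step to be the second fact in Step 2: the uniform sign and asymptotics of $f'$. This requires exchanging limits and expectations when $\|\Mb\ab\|_2$ can be small. Our first attempt would be to bound the integrand of $\gamma^3f'(\gamma)$ by a multiple of $\|\Mb\ab\|_2^{-2}$ for every $\gamma$. That dominating function is not obviously integrable under (A3), so we would refine the bound to $\|\Mb\ab\|_2^{-1}\lambda_d^{-1}$ using the factor $\|\ub\|_2\ge\gamma\|\Mb\ab\|_2$. We would then invoke dominated convergence exactly as in Appendix~\ref{lem:scalar_derivative}, with $\EE[\lambda_d^{-1}]<\infty$ supplying integrability.
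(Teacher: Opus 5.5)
Your overall architecture matches the paper's. That covers equivariance plus Schur's lemma for the structure, the scalar recursion (the paper's $F_t$ is your $f'(\gamma_t)/d$), and the pointwise sign $\mathbf b^\top(\Ib-\yb\yb^\top)\Mb\ab=\|\Mb\ab\|_2\bigl(1-(\mathbf b^\top\yb)^2\bigr)\ge0$. It also covers the limit $\gamma^3f'(\gamma)\to-2\Upsilon_3$, the cubic increments, and $\mathcal L\le 2\Upsilon_3\gamma_t^{-2}$. However, the step you single out as the main obstacle is resolved incorrectly.

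Your first dominating function, $2\|\Mb\ab\|_2^{-2}$, is the right one, and it \emph{is} integrable under (A3). The missing idea is to average over $\ab$ conditionally on $\Xb$. Writing $\|\Mb\ab\|_2^2\ge\lambda_1^2u_1^2+\lambda_d^2(1-u_1^2)$ with $u_1^2\sim\mathrm{Beta}(1/2,(d-1)/2)$ gives $\EE_{\ab}[\|\Mb\ab\|_2^{-2}\mid\Xb]\le C_d/(\lambda_1\lambda_d)$. Since $\lambda_1\ge\operatorname{tr}(\Mb)/d=n/d$, this is at most $C_d\,d/(n\lambda_d)$, and (A3) then finishes; this is Lemma~\ref{lem:lambda_min_to_directional_inverse}.

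Your ``refinement'' goes the wrong way. Because $\|\Mb\ab\|_2\ge\lambda_d$, the function $\|\Mb\ab\|_2^{-1}\lambda_d^{-1}$ is pointwise \emph{larger} than $\|\Mb\ab\|_2^{-2}$. Its integrability also does not follow from $\EE[\lambda_d^{-1}]<\infty$. When $\lambda_2=\cdots=\lambda_d$ are small, $\EE_{\ab}[\|\Mb\ab\|_2^{-1}\mid\Xb]$ is of order $\lambda_1^{-1}\log(\lambda_1/\lambda_d)$. You would therefore need $\EE[\lambda_d^{-1}\log(1/\lambda_d)]<\infty$, which (A3) does not provide. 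Such spectra are realizable with unit columns and can then be randomly rotated, so this is not a vacuous concern.

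The same averaged bound is what yields $\Upsilon_3=O(d\Upsilon(n,d)/n)$. The ingredients you list for that, $\|\Mb\ab\|_2\ge\lambda_d$ and $\operatorname{tr}\Mb=n$, only give $\Upsilon_3\le\EE[\lambda_d^{-2}]$ on their own, and that can be infinite.

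There are two smaller errors.

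\emph{The final rate.} With $\gamma_t^3\asymp\eta\Upsilon_3 t/d$, one gets $\Upsilon_3\gamma_t^{-2}\asymp\Upsilon_3^{1/3}\bigl(d/(\eta t)\bigr)^{2/3}$, not $\bigl(d\Upsilon_3/(\eta t)\bigr)^{2/3}$. Your expression would give $T^\star\asymp d^{2}\Upsilon(n,d)/(n\eta\epsilon^{3/2})$. Only the corrected exponent, combined with $\Upsilon_3=O(d\Upsilon(n,d)/n)$, yields the stated $T^\star=O\bigl(d^{3/2}\Upsilon(n,d)^{1/2}/(\eta n^{1/2}\epsilon^{3/2})\bigr)$.

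\emph{The inactive blocks.} Their gradients do not vanish by oddness in $\ab$ or by tangency to $\yb$. At the structured point:
\begin{itemize}
\item $\Wb_{11}$ and $\Wb_{21}$ do not enter the last column at all;
\item $\Wb_{22}$, $\Vb_{12}$ and $\Vb_{22}$ enter only multiplied by blocks that are zero;
\item $\Vb_{11}$ enters only through the top block $\pb=\mathbf 0$, which affects $r=\sqrt{\|\pb\|_2^2+\|\hb\|_2^2}$ only at second order.
\end{itemize}
This is easy to state correctly, but the mechanisms you name do not apply.
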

We defer the detailed proof to Appendix~\ref{proof of theorem 1}. Theorem~\ref{thm:convergence} shows that the one-layer attention model \eqref{transformer prediction} learns one update step of the power method. In contrast to Theorem~\ref{thm:unnorm convergence}, Theorem~\ref{thm:convergence} shows that, for transformer layers equipped with LN, the population loss can be driven arbitrarily close to zero through training. At test time, we further show in Theorem~\ref{thm:looped error} that this contrast translates into a faster convergence rate when the learned module is looped.

\subsection{Out-of-distribution Performance of Looped Models}
In this section, we investigate the OOD capabilities of the trained transformers obtained from Theorems~\ref{thm:unnorm convergence} and~\ref{thm:convergence}, when looped at inference time, as well as the looped transformers obtained from Theorem~\ref{thm:looped_end_to_end}. We consider OOD ``test data'' generated according to the following assumption.

% effectively perform the power method on data distributions substantially more general than those specified in Assumption~\ref{assump:distribution}.
% of the transformers with and without LN trained under the layervise guidance of power iterations, as well as the looped model obtained via end-to-end training in

% looped transformers. 

% We show that the trained transformers obtained from Theorems~\ref{thm:unnorm convergence} and~\ref{thm:convergence}, when looped at inference time, as well as the looped transformers obtained from Theorem~\ref{thm:looped_end_to_end}, effectively perform the power method on data distributions substantially more general than those specified in Assumption~\ref{assump:distribution}. We consider ``test data'' generated according to the following assumption.
\begin{assumption}[Test Distribution]\label{test distribution}
    $\Xb_{\text{test}}\in\RR^{d\times n}$ is generated from a distribution $\PP_{\Xb_{\text{test}}}$ such that each column of $\Xb_{\text{test}}$ has unit norm.
\end{assumption}
Notably, Assumption \ref{test distribution} imposes only condition (A1) from Assumption \ref{assump:distribution}. This relaxation thus allows distribution shift between training and inference, constituting an OOD setting. We next define the necessary notation for a specific sample $\Xb_{\text{test}} \sim \PP_{\Xb_{\text{test}}}$ and a random vector $\ab \in \mathbb{S}^{d-1}$.
\begin{definition}
    For a sample $\Xb_{\text{test}}$, let $\Xb_{\text{test}}\Xb_{\text{test}}^\top$ admit  the eigendecomposition $\Xb_{\text{test}}\Xb_{\text{test}}^\top = \sum_{i=1}^{d} \lambda_i \vb_i \vb_i^\top$ with $\lambda_1 = \dots = \lambda_r > \lambda_{r+1} \ge \dots \ge \lambda_d \ge 0$, where $r$ is the multiplicity of $\lambda_1$. Let $\mathcal{Z}$ denote the principal subspace associated with $\lambda_1, \ldots , \lambda_r$. Then for a vector $\ab \in \mathbb{S}^{d-1}$, we define $c_{r+1} = \langle \vb_{r+1}, \ab \rangle$, denote $\ab_{\mathcal{Z}}$ as the projection of $\ab$ onto $\mathcal{Z}$, and let $\ab_{\perp} = \ab - \ab_{\mathcal{Z}}$. 
\end{definition}
The following theorem shows that, when looped at inference, the distinct training dynamics with and without LN lead to a performance gap in predicting the leading principal components.
\begin{theorem}
\label{thm:looped error}
 Suppose that $\ab$ is an arbitrary unit vector that is not orthogonal to $\cZ$, i.e., $\ab_{\mathcal{Z}}\neq \mathbf{0}$. Then, under Assumption \ref{test distribution}, the following results hold.
\begin{enumerate}[leftmargin = *,nosep]
     \item \textbf{Normalized model:}
    Let the single-layer model \eqref{transformer prediction} be trained for $t$ steps under the conditions of Theorem \ref{thm:convergence}. Then, unrolling $L$ times yields a deep model that gives an output $\yb_L^{(t)}$ when applied to the data matrix $\Xb_{\text{test}}$ and the vector $\ab$. Let $\phi_L^{(t)} \in [0,\pi/2]$ denote the canonical angle between $\yb_L^{(t)}$ and $\cZ$. Then:
    \begin{equation*}
    {|c_{r+1}|}\left(\frac{\lambda_{r+1}}{\lambda_1}\right)^{L}
    \le\lim_{t\to\infty}\sin\phi_L^{(t)}
    \le
    \frac{\|\ab_{\perp}\|_2}{\|\ab_{\cZ}\|_2}\left(\frac{\lambda_{r+1}}{\lambda_1}\right)^{L}.
    \end{equation*}

    \item \textbf{Unnormalized Model:}
   Let the single-layer model \eqref{unnormalized transformer prediction} be trained for $t$ steps under the conditions of Theorem \ref{thm:unnorm convergence}. Then, unrolling $L$ times yields a deep model that gives an output $\tilde{\yb}_L^{(t)}$ when applied to the data matrix $\Xb_{\text{test}}$ and the vector $\ab$. Let $\tilde{\phi}_L^{(t)} \in [0,\pi/2]$ denote the canonical angle between $\tilde{\yb}_L^{(t)}$ and $\cZ$. Then:
    \begin{align*}
    {|c_{r+1}|}\left(\frac{1+\gamma^\star\lambda_{r+1}}{1+\gamma^\star\lambda_1}\right)^{L}
    &\le\lim_{t\to \infty}\sin\tilde{\phi}_L^{(t)}
     \le
    \frac{\|\ab_{\perp}\|_2}{\|\ab_{\cZ}\|_2}\left(\frac{1+\gamma^\star\lambda_{r+1}}{1+\gamma^\star\lambda_1}\right)^{L}.
    \end{align*}
\end{enumerate}
\end{theorem}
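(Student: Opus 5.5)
Both parts will follow the same route. First we use the structural results of Theorems~\ref{thm:convergence} and~\ref{thm:unnorm convergence} to write the unrolled model as a matrix polynomial applied to $\ab$. Then we pass to the limit $t\to\infty$. Finally we bound the sine of the canonical angle through the eigen-expansion of $\ab$.

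\textbf{Normalized model.} By Theorem~\ref{thm:convergence}, $\Wb^{(t)}$ and $\Vb^{(t)}$ have the block structure with scalars $\alpha_t,\beta_t$. The computation displayed after Theorem~\ref{thm:looped_end_to_end} therefore applies verbatim with $\Xb$ replaced by $\Xb_{\text{test}}$. It uses only that the columns of $\Xb_{\text{test}}$ have unit norm, so the first $n$ columns are left unchanged by LN; this is Assumption~\ref{test distribution}. We obtain
\[
\yb_L^{(t)}=\frac{(\Ib_d+\rho_t\Xb_{\text{test}}\Xb_{\text{test}}^\top)^L\ab}{\|(\Ib_d+\rho_t\Xb_{\text{test}}\Xb_{\text{test}}^\top)^L\ab\|_2},\qquad \rho_t=\alpha_t\beta_t\to\infty .
\]
Write $\ab=\sum_i c_i\vb_i$ with $c_i=\langle\vb_i,\ab\rangle$. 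The unnormalized vector is then $\sum_i(1+\rho_t\lambda_i)^Lc_i\vb_i$. The sine of the canonical angle to $\cZ$ is the norm of the orthogonal component divided by the total norm:
\[
\sin\phi_L^{(t)}=\frac{\big(\sum_{i>r}(1+\rho_t\lambda_i)^{2L}c_i^2\big)^{1/2}}{\big((1+\rho_t\lambda_1)^{2L}\|\ab_\cZ\|_2^2+\sum_{i>r}(1+\rho_t\lambda_i)^{2L}c_i^2\big)^{1/2}}.
\]
Dividing numerator and denominator by $(1+\rho_t\lambda_1)^{L}$ and letting $\rho_t\to\infty$ (note $\lambda_1>0$, since the columns have unit norm), each ratio $(1+\rho_t\lambda_i)/(1+\rho_t\lambda_1)$ tends to $\lambda_i/\lambda_1$. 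This gives the explicit limit
\[
\lim_{t\to\infty}\sin\phi_L^{(t)}=\frac{S}{\sqrt{\|\ab_\cZ\|_2^2+S^2}},\qquad S^2=\sum_{i>r}(\lambda_i/\lambda_1)^{2L}c_i^2 .
\]

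\textbf{Bounds on the limit.} For the upper bound, drop $S^2$ from the denominator. Then use $\lambda_i\le\lambda_{r+1}$ for $i>r$, which gives $S\le(\lambda_{r+1}/\lambda_1)^L\|\ab_\perp\|_2$. For the lower bound, use that $\|\ab_\cZ\|_2^2+S^2\le\|\ab_\cZ\|_2^2+\|\ab_\perp\|_2^2=1$, because each ratio $\lambda_i/\lambda_1\le1$. Hence the limit is at least $S\ge|c_{r+1}|(\lambda_{r+1}/\lambda_1)^L$, keeping only the $i=r+1$ term.

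\textbf{Unnormalized model.} By Theorem~\ref{thm:unnorm convergence}, one layer maps the last column to $(\Ib+\tilde\gamma_t\Xb_{\text{test}}\Xb_{\text{test}}^\top)\ab$. The first $n$ columns again stay fixed: the $\Wb$ structure gives $\Eb^\top\Wb\Eb$ a nonzero block only in the $(1{:}n,\,n{+}1)$ position, so the update touches only the last column. Unrolling therefore gives $(\Ib+\tilde\gamma_t\Xb_{\text{test}}\Xb_{\text{test}}^\top)^L\ab$. Since the angle is scale-invariant, the same computation applies with the ratios $(1+\tilde\gamma_t\lambda_i)/(1+\tilde\gamma_t\lambda_1)$. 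Now $\tilde\gamma_t\to\gamma^\star$, which is finite, and continuity gives these ratios with $\gamma^\star$ in place of $\tilde\gamma_t$.

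The bounds then follow exactly as before, provided $0<(1+\gamma^\star\lambda_i)/(1+\gamma^\star\lambda_1)\le1$ and is monotone in $\lambda_i$. This is where I expect the only real subtlety: we need $\gamma^\star>0$, so that $1+\gamma^\star\lambda_i$ is positive and increasing in $\lambda_i$. Positivity should follow from $\rho\in[0,1)$ in Theorem~\ref{thm:unnorm convergence}, which forces $\gamma^\star\ge0$; I would also check directly that $\gamma^\star>0$ for both choices of target. For the second target this is Cauchy–Schwarz, $\ab^\top\Xb\Xb^\top\ab\le\|\Xb\Xb^\top\ab\|_2$. For the first target it is Cauchy–Schwarz combined with $\|\Xb\Xb^\top\ab\|_2\le\|\Xb\|_2^2\le n$. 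A second small check is that the limit denominator is nonzero; this holds because $\ab_\cZ\neq\mathbf{0}$.
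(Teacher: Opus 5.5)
Your proposal is correct and is essentially the paper's argument: expand $(\Ib_d+\gamma\Sb_{\text{test}})^L\ab$ in the eigenbasis of $\Sb_{\text{test}}$, write $\sin^2$ of the canonical angle as a ratio of weighted sums of the $c_i^2$, and bound the numerator and denominator. The only difference is that you pass to the limit before bounding, whereas the paper bounds at finite $t$ and then lets $t\to\infty$; your order has the minor advantage of showing that the limit exists, which the paper's proof of this theorem takes for granted.

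For the unnormalized case, $\gamma^\star\ge 0$ (supplied by Theorem~\ref{thm:unnorm convergence}) is all you need, and your optional direct check has the two targets mixed up. Plain Cauchy--Schwarz, $\ab^\top\Sb\ab\le\|\Sb\ab\|_2$ (strict almost surely), is exactly what gives $\gamma^\star>0$ for $\yb_{\text{target}}^{(1)}$. For $\yb_{\text{target}}^{(2)}$ that inequality does not help; instead use $\EE[\ab^\top\Sb\ab]=n/d\le\EE[\operatorname{tr}(\Sb^2)]/d=\EE\|\Sb\ab\|_2^2$, which comes from the diagonal terms of $\operatorname{tr}(\Sb^2)=\sum_{i,j}(\xb_i^\top\xb_j)^2$.
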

We defer the detailed proof to Appendix~\ref{proof of theorem 2}. Note that this theorem covers both in-distribution and OOD settings since Assumption \ref{test distribution} subsumes Assumption \ref{assump:distribution}. This confirms that the model learns the power method mechanism, and that the learned update applies to any inference-time data matrix with unit-norm columns. We also comment that the assumption $\ab_{\mathcal{Z}}\neq \mathbf{0}$ is mild and holds with probability one if $\ab$ is randomly chosen from a non-degenerate and continuous distribution over $\SSS^{d-1}$.

A key step in the proof of Theorem~\ref{thm:looped error} is to show that, for a transformer with LN, looping the learned module yields exponential error decay with a per-layer contraction factor $R_t = {(1+\gamma_t\lambda_{r+1})}/{(1+\gamma_t\lambda_1)}$.
As training proceeds ($t \to \infty$), according to Theorem~\ref{thm:convergence}, the learned parameter $\gamma_t = \alpha_t\beta_t\to\infty$, and therefore this factor $R_t$ decreases, eventually converging to the optimal rate of $\lambda_{r+1}/\lambda_1$ established for the power method \citep{Parlett1998symmetric}. This demonstrates that the model learns an increasingly efficient algorithmic update, eventually matching the optimal convergence rate. In contrast, for a transformer without LN, irrespective of the training target in \eqref{training target}, looping the learned module also results in exponential error decay, but with a fixed and suboptimal factor $R^\star = {(1+\gamma^\star\lambda_{r+1})}/{(1+\gamma^\star\lambda_1)} > \lambda_{r+1}/\lambda_1$ that cannot be improved by further training. This performance gap highlights that LN can fundamentally alter the training dynamics, enabling the model to escape a suboptimal equilibrium and approach the theoretically optimal power method limit.

The following theorem shows the OOD performance of the looped transformers in Theorem~\ref{thm:looped_end_to_end}. 

\begin{theorem}
\label{thm:end-to-end looped error}
 Let the $L$-layer looped model \eqref{eq:looped transformer prediction} be trained by gradient descent for $t$ steps under the conditions of Theorem \ref{thm:looped_end_to_end}, and let $\yb_L^{(t)}$ be the model output when applied to the data matrix $\Xb_{\text{test}}$ and the vector $\ab$. Denote by $\psi_L^{(t)} \in [0,\pi/2]$ the canonical angle between $\yb_L^{(t)}$ and $\cZ$. Then under the conditions of Theorem \ref{thm:looped error}, it holds that
    \begin{equation*}
    {|c_{r+1}|}\left(\frac{\lambda_{r+1}}{\lambda_1}\right)^{L}
    \le\lim_{t\to\infty}\sin\psi_L^{(t)}
    \le
    \frac{\|\ab_{\perp}\|_2}{\|\ab_{\cZ}\|_2}\left(\frac{\lambda_{r+1}}{\lambda_1}\right)^{L}.
    \end{equation*}
\end{theorem}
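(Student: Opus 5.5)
By Theorem~\ref{thm:looped_end_to_end}, the parameters $\Wb^{(t)},\Vb^{(t)}$ keep the structure \eqref{eq:WVstructure} for every $t$, with $w_t,v_t=\Theta((\eta\Upsilon_1 t/d)^{1/4})$. In particular, $\rho_t=w_tv_t\to\infty$. The structure is a property of the weights alone, not of the data. So the layer computation shown after Theorem~\ref{thm:looped_end_to_end} carries over verbatim to any $\Xb_{\text{test}}$ with unit-norm columns: the first block of columns stays $\Xb_{\text{test}}$, and the last column is updated by $\ab\mapsto(\Ib+\rho_t\Xb_{\text{test}}\Xb_{\text{test}}^\top)\ab/\|\cdot\|_2$. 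Unit-norm columns are preserved by LN, so the looped recursion is well-defined. This gives
\[
\yb_L^{(t)}=\frac{(\Ib+\rho_t\Xb_{\text{test}}\Xb_{\text{test}}^\top)^L\ab}{\|(\Ib+\rho_t\Xb_{\text{test}}\Xb_{\text{test}}^\top)^L\ab\|_2}.
\]
One point needs checking: LN must never divide by zero. This holds because $\Ib+\rho_t\Xb\Xb^\top\succ 0$ for $\rho_t\ge0$.

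Next I compute the angle explicitly. Expand $\ab=\sum_i c_i\vb_i$, so that $\|\ab_\cZ\|_2^2=\sum_{i\le r}c_i^2$. Set $\mu_i=(1+\rho_t\lambda_i)^L$. Then
\[
\sin^2\psi_L^{(t)}=\frac{\sum_{i>r}\mu_i^2c_i^2}{\sum_{i}\mu_i^2c_i^2}.
\]
Dividing numerator and denominator by $\mu_1^2$ and writing $R_{i}(t)=\big((1+\rho_t\lambda_i)/(1+\rho_t\lambda_1)\big)^{L}$ gives
\[
\sin^2\psi_L^{(t)}=\frac{\sum_{i>r}R_i(t)^2c_i^2}{\|\ab_\cZ\|_2^2+\sum_{i>r}R_i(t)^2c_i^2}.
\]
This is exactly the computation one would do for the normalized model in Theorem~\ref{thm:looped error}, with $\gamma_t$ replaced by $\rho_t$. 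I could simply cite that argument, but I will redo it since it is short.

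Now let $t\to\infty$. Since $\rho_t\to\infty$ and $\lambda_1>0$ (unit columns force $\lambda_1\ge n/d>0$), we get $R_i(t)\to(\lambda_i/\lambda_1)^L$. The limit
\[
\lim_{t\to\infty}\sin^2\psi_L^{(t)}=\frac{S}{\|\ab_\cZ\|_2^2+S},\qquad S=\sum_{i>r}(\lambda_i/\lambda_1)^{2L}c_i^2,
\]
therefore exists. For the upper bound, I drop $S$ from the denominator and use $\lambda_i\le\lambda_{r+1}$ for $i>r$. This gives $S\le(\lambda_{r+1}/\lambda_1)^{2L}\|\ab_\perp\|_2^2$, and hence the stated upper bound after taking square roots.

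For the lower bound, I keep only the $i=r+1$ term in the numerator, so $S\ge c_{r+1}^2(\lambda_{r+1}/\lambda_1)^{2L}$. For the denominator, $(\lambda_i/\lambda_1)^{2L}\le1$ gives $\|\ab_\cZ\|_2^2+S\le\|\ab\|_2^2=1$. Combining these yields $\sin\psi\ge|c_{r+1}|(\lambda_{r+1}/\lambda_1)^L$.

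The only potentially delicate point is the degenerate case $\lambda_{r+1}=0$, where $0^L$ and $\lambda_i/\lambda_1$ must be interpreted properly. The formulas still hold, with both bounds equal to $0$. Everything else is routine once the structural part of Theorem~\ref{thm:looped_end_to_end} is invoked.
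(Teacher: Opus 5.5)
Your proposal is correct and follows essentially the same route as the paper. Both arguments use the preserved block structure and $\rho_t=w_tv_t\to\infty$ from Theorem~\ref{thm:looped_end_to_end}, derive $\yb_L^{(t)}=(\Ib_d+\rho_t\Sb_{\mathrm{test}})^L\ab/\|(\Ib_d+\rho_t\Sb_{\mathrm{test}})^L\ab\|_2$ using unit-norm test columns, expand $\ab$ in the eigenbasis to get the exact ratio for $\sin^2\psi_L^{(t)}$, and bound its numerator and denominator. The only cosmetic difference is the order of steps: the paper proves the two-sided bound at each finite $t$ with ratio $(1+\rho_t\lambda_{r+1})/(1+\rho_t\lambda_1)$ and then passes to the limit, whereas you compute the limit first and bound it directly, which is equally valid.
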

The detailed proof of Theorem~\ref{thm:end-to-end looped error} is given in Appendix~\ref{appendix: proof of end-to-end looped error}. 
% Theorem~\ref{thm:end-to-end looped error} shows that the error, measured by the angular deviation between the looped transformer's output and the leading principal eigenspace, decays exponentially with the number of layers $L$ in both in-distribution and OOD settings.

\section{Numerical Experiments}
% In this section, we conduct numerical experiments to empirically validate our theoretical findings. 
In this section, we conduct numerical experiments to illustrate the training dynamics predicted by our theory and to provide finite-sample evidence for the mechanisms identified in the analysis. We first illustrate the training dynamics of the looped transformer \eqref{eq:looped transformer prediction}, and then highlight the critical role of LN by comparing normalized models \eqref{transformer prediction} and unnormalized models \eqref{unnormalized transformer prediction}. Due to space constraints, we defer visualizations of the training dynamics of one-layer transformers and the OOD performance of looped transformers to Appendix \ref{additional experiment results}.

\noindent\textbf{Experimental Setup.}
We employ the one-layer transformer architecture in \eqref{transformer prediction}, its unnormalized variant in \eqref{unnormalized transformer prediction}, and its looped variant in \eqref{eq:looped transformer prediction}. We consider two data-generation settings.

\noindent\textbf{Task 1.} The data matrices $\mathbf{X} \in \mathbb{R}^{d \times n}$ consist of columns uniformly distributed on the unit sphere $\mathbb{S}^{d-1}$. Specifically, each column $\mathbf{x}_j$ is generated by independently sampling $\mathbf{z}_j \sim \mathcal{N}(0, \mathbf{I}_d)$ from a standard multivariate Gaussian distribution and subsequently normalizing it: $\mathbf{x}_j = \mathbf{z}_j / \|\mathbf{z}_j\|_2$.

\noindent\textbf{Task 2.} Following Example \ref{first example}, we construct data matrices via $\mathbf{X} = \mathbf{R}\mathbf{U}$. Here, $\mathbf{R} \in \mathbb{R}^{d \times d}$ is an orthogonal matrix sampled uniformly from all orthogonal $d\times d$ matrices. The matrix $\mathbf{U} \in \mathbb{R}^{d \times n}$ is constructed using the standard basis vectors $\{\mathbf{e}_1, \dots, \mathbf{e}_d\}$ of $\mathbb{R}^d$. We assign a multiplicity $m_i>0$ to each basis vector $\mathbf{e}_i$ such that $\sum_{i=1}^d m_i = n$. The matrix $\mathbf{U}$ is explicitly formed as:
\begin{equation*}
    \mathbf{U} = \Big[ \underbrace{\mathbf{e}_1, \dots, \mathbf{e}_1}_{m_1 \text{ times}}, \underbrace{\mathbf{e}_2, \dots, \mathbf{e}_2}_{m_2 \text{ times}}, \dots, \underbrace{\mathbf{e}_d, \dots, \mathbf{e}_d}_{m_d \text{ times}} \Big].
\end{equation*}
Consequently, the eigenvalues of $\Ub\Ub^\top$ (and thus of $\Xb\Xb^\top$) are given by $\{m_1,m_2,\ldots,m_d\}$.

\noindent For both tasks, the model is trained on a dataset of size 10000 and an epoch number of 2000. We also independently sample a test dataset of size 2000 and report the test loss of the trained transformer layer. We use SGD with a learning rate of $\eta = 0.1$ and a batch size of $128$ to train the model. We use the same initialization for $\Wb$ and $\Vb$ as in Theorem \ref{thm:looped_end_to_end}.

% To show that our theoretical results hold in a less restrictive setting, we utilize SGD as our optimizer,
% \subsection{Convergence}
\begin{figure}[ht!]
\centering
\subfigure[ $\Wb/ \|\Wb \|_{\mathsf{max}}$, Task 1]{\includegraphics[width=0.3\textwidth]{
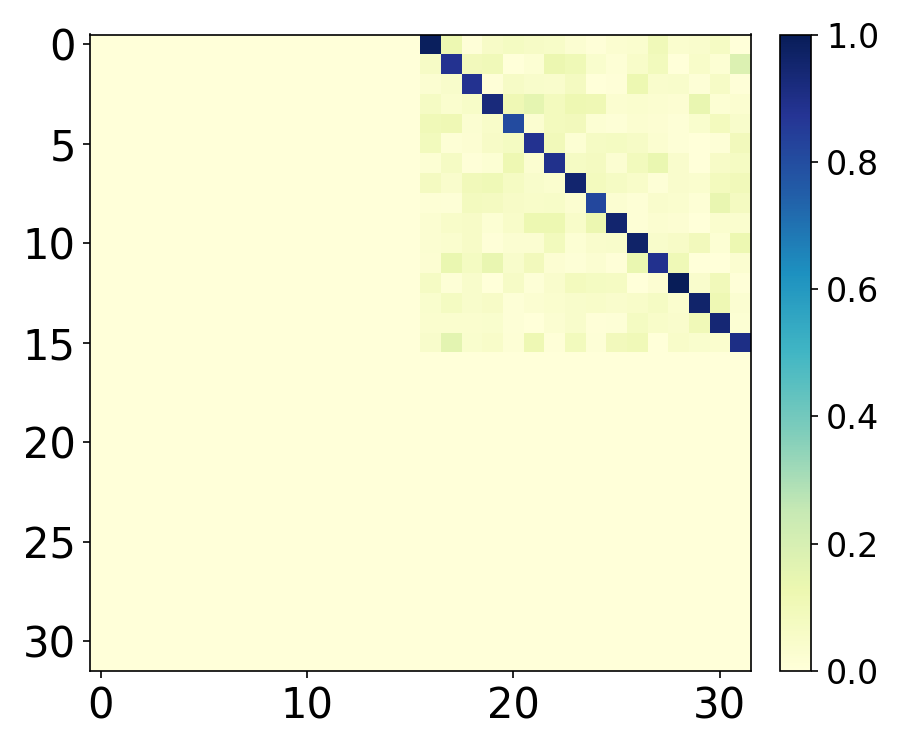}}
\subfigure[ $\Vb/ \|\Vb \|_{\mathsf{max}}$, Task 1]{\includegraphics[width=0.3\textwidth]{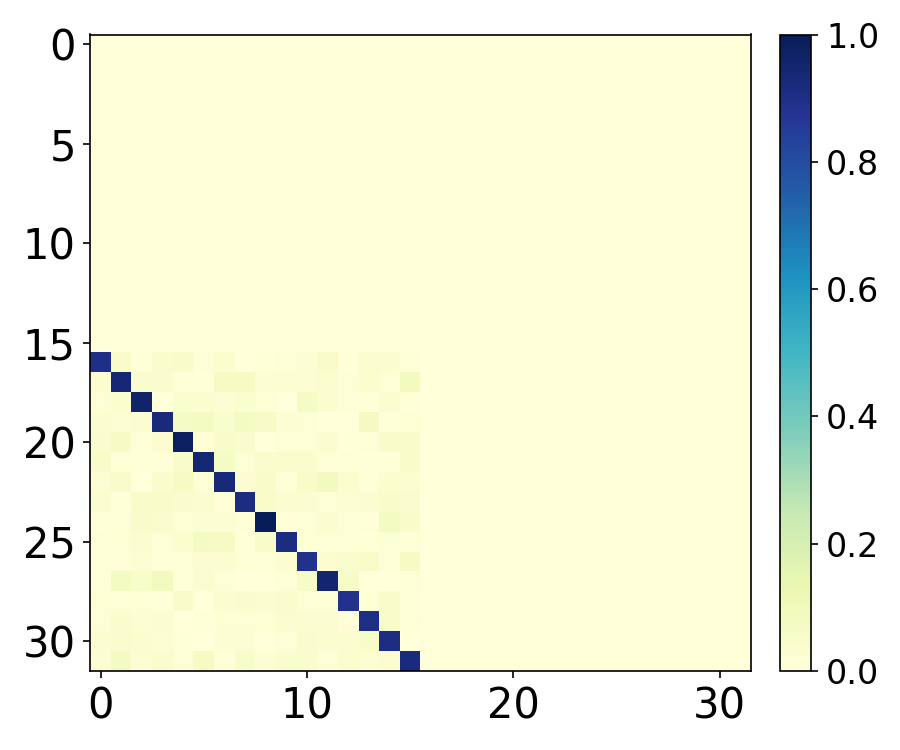}}
\subfigure[$\|\Wb \|_{\mathsf{max}},\|\Vb \|_{\mathsf{max}}$, Task 1]{\includegraphics[width=0.3\textwidth]{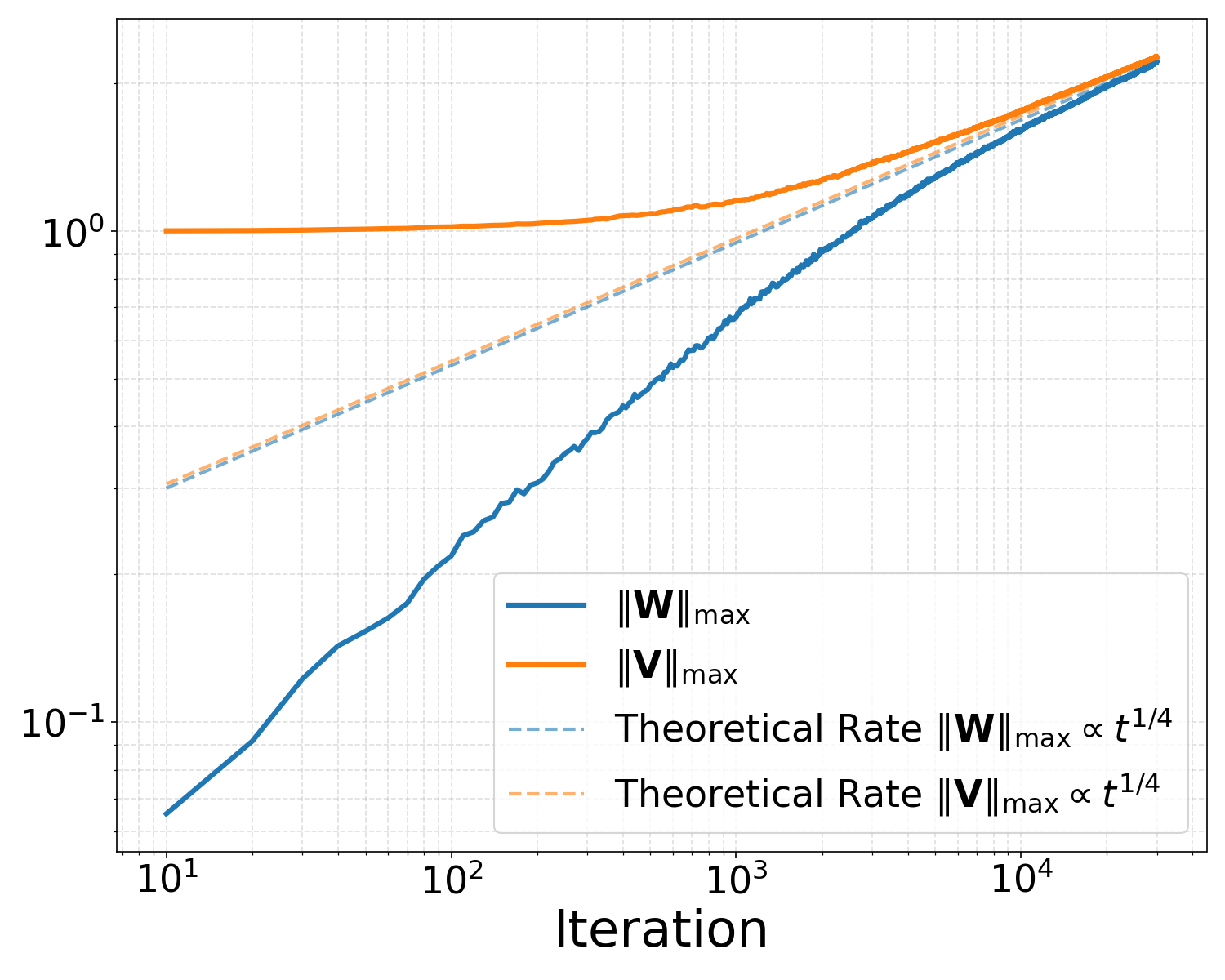}}\\
\subfigure[ $\Wb/ \|\Wb \|_{\mathsf{max}}$, Task 2]{\includegraphics[width=0.3\textwidth]{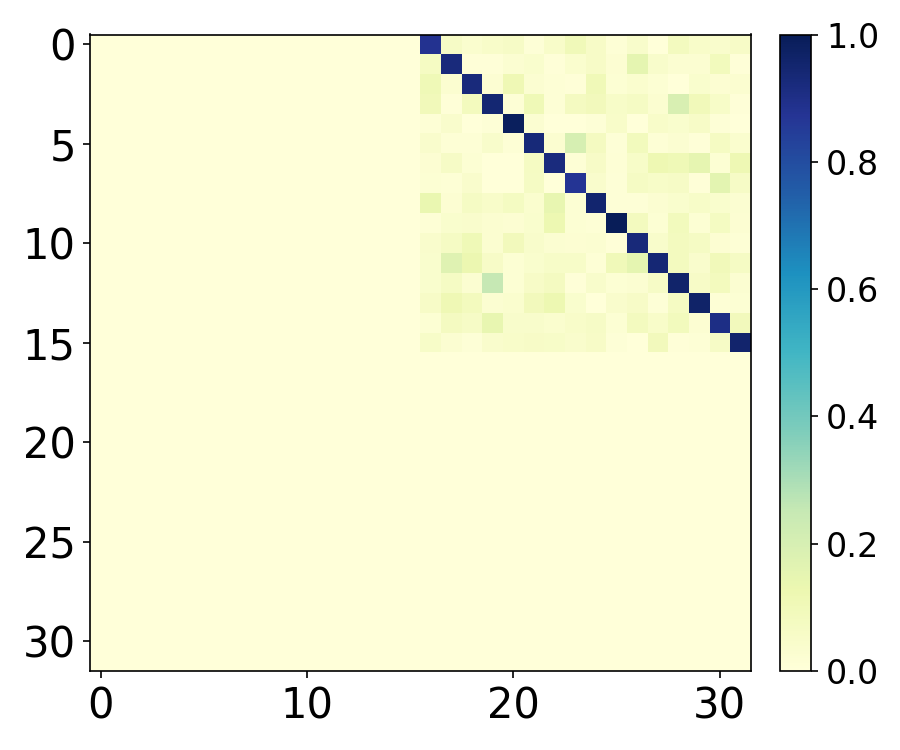}}
\subfigure[$\Vb/ \|\Vb \|_{\mathsf{max}}$, Task 2]{\includegraphics[width=0.3\textwidth]{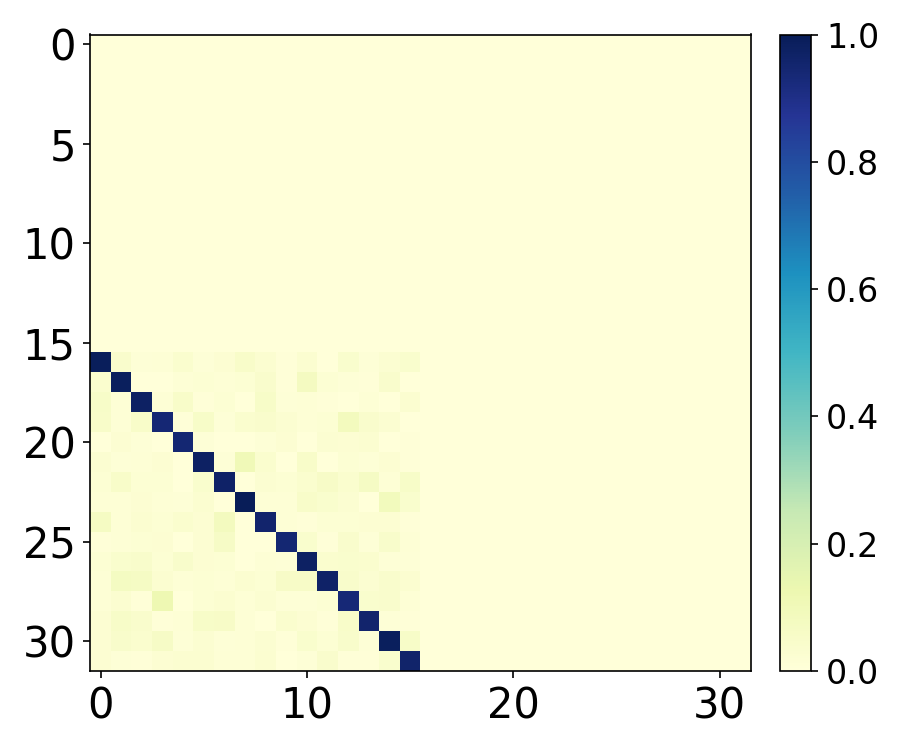}}
\subfigure[$\|\Wb \|_{\mathsf{max}},\|\Vb \|_{\mathsf{max}}$, Task 2]{\includegraphics[width=0.3\textwidth]{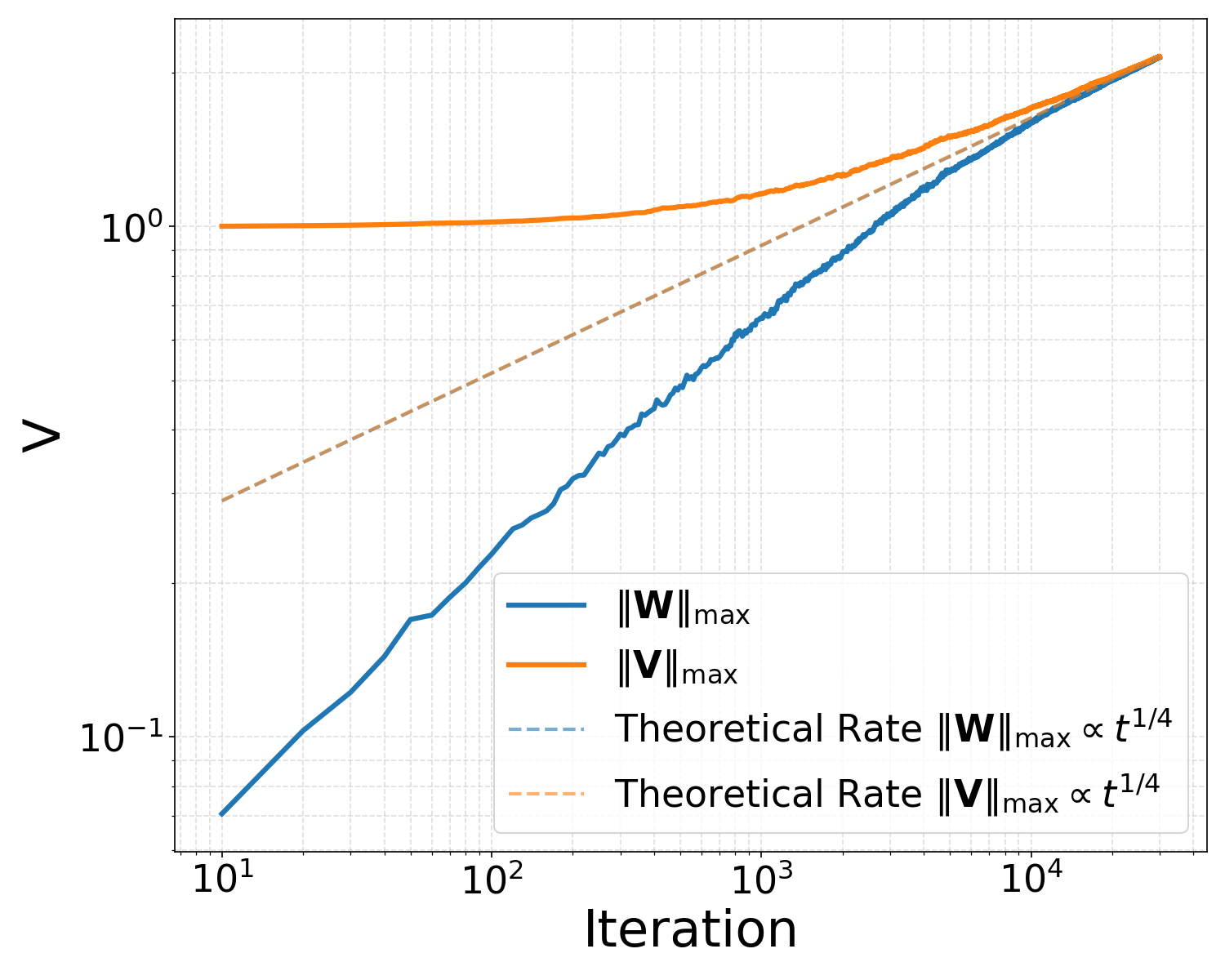}}
\caption{Heatmaps of parameter matrices and evolution of active parameters. Since the active parameters of both matrices diverge, we plot heatmaps to visualize the structures of $\Wb/ \|\Wb \|_{\mathsf{max}}$ and $\Vb/ \|\Vb \|_{\mathsf{max}}$. The dashed lines in the log-log plots represent the theoretical growth rate $\Theta(t^{1/4})$, vertically anchored to the final empirical data points.}
\label{fig:heatmaps}
\end{figure}

% \noindent\textbf{Convergence.} We first validate the main result of Theorem~\ref{thm:looped_end_to_end}, focusing in particular on the structure of the parameter matrices and the parameter growth rate. 

\noindent\textbf{Convergence.} We first examine whether the finite-sample SGD dynamics are qualitatively consistent with Theorem~\ref{thm:looped_end_to_end}, focusing on the structure of the parameter matrices and the parameter growth rate. We set $d=16$, $n=32$, and $L=10$, and plot the heatmaps of the matrices $\mathbf{V}$ and $\mathbf{W}$ after 2000 training iterations together with the evolution of the learned weights. As shown in Figure~\ref{fig:heatmaps}, all entries of both matrices become small except for the diagonal entries in the bottom-left block of $\mathbf{V}$ and the top-right block of $\mathbf{W}$, which is qualitatively consistent with the structure predicted by Theorem~\ref{thm:looped_end_to_end}. The active parameters also display a polynomial growth trend, and the log-log plots are consistent with the $\Theta(t^{1/4})$ scaling predicted by Theorem~\ref{thm:looped_end_to_end}.

% We set $d=16, n=32, L=10$ and plot the heatmaps of the matrices $\mathbf{V}$ and $\mathbf{W}$ after 2000 training iterations and the evolution of the learned weights. As shown in Figure~\ref{fig:heatmaps}, all entries of both matrices converge to near zero except for the elements on the diagonal in the bottom-left block of $\mathbf{V}$ and the top-right block of $\mathbf{W}$, aligning with the structure predicted by our analysis in Theorem \ref{thm:looped_end_to_end}. Furthermore, the evolution of these active parameters reveals a polynomial growth pattern. As shown in the log-log plots, the empirical trajectories asymptotically exhibit a slope that aligns with the theoretical $\Theta(t^{1/4})$ prediction, confirming the specific growth rate in Theorem \ref{thm:looped_end_to_end}.

\begin{figure}[ht!]
\centering
\subfigure[Training loss, Task 1]{\includegraphics[width=0.32\textwidth]{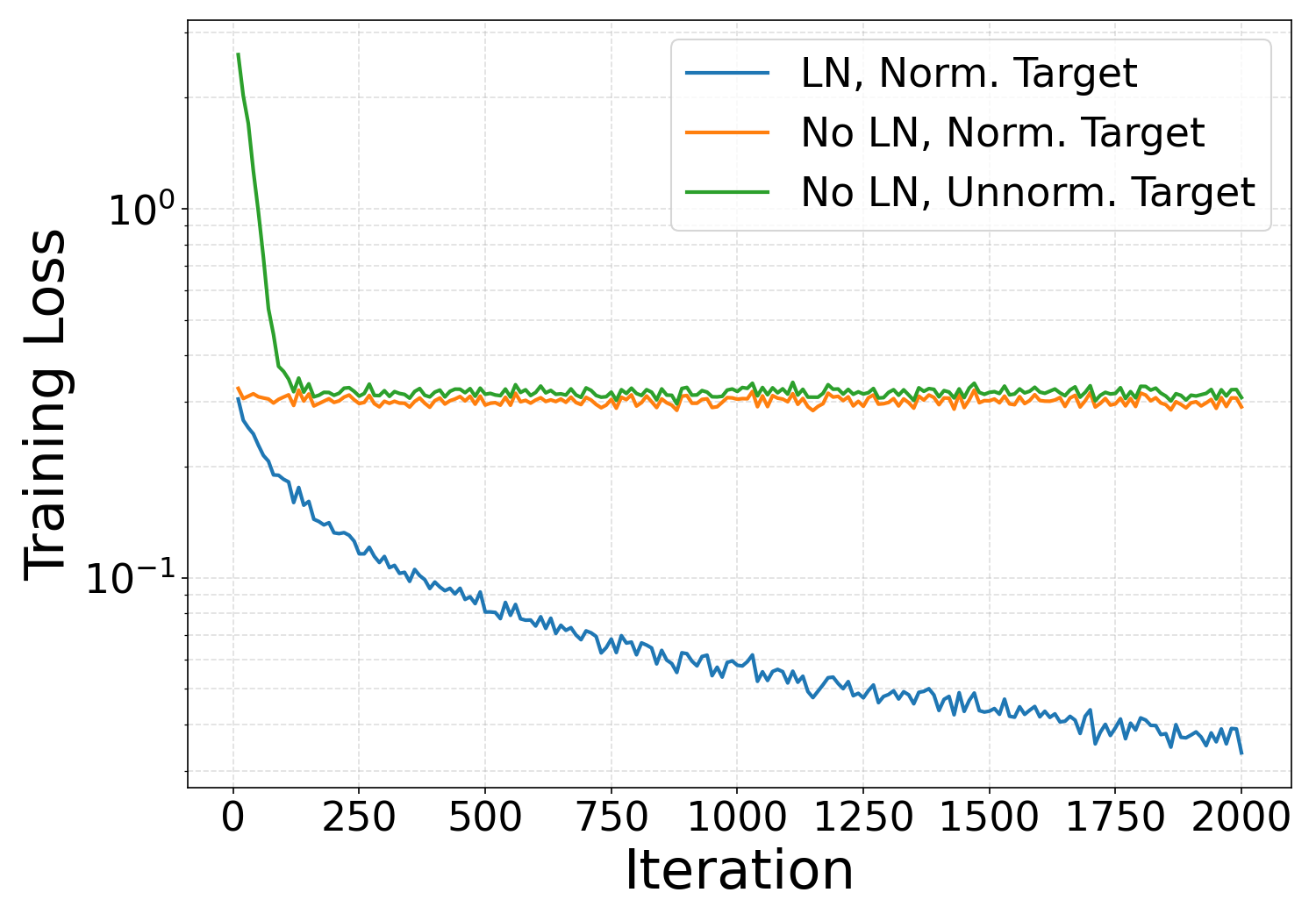}}
\subfigure[Test loss, Task 1]{\includegraphics[width=0.32\textwidth]{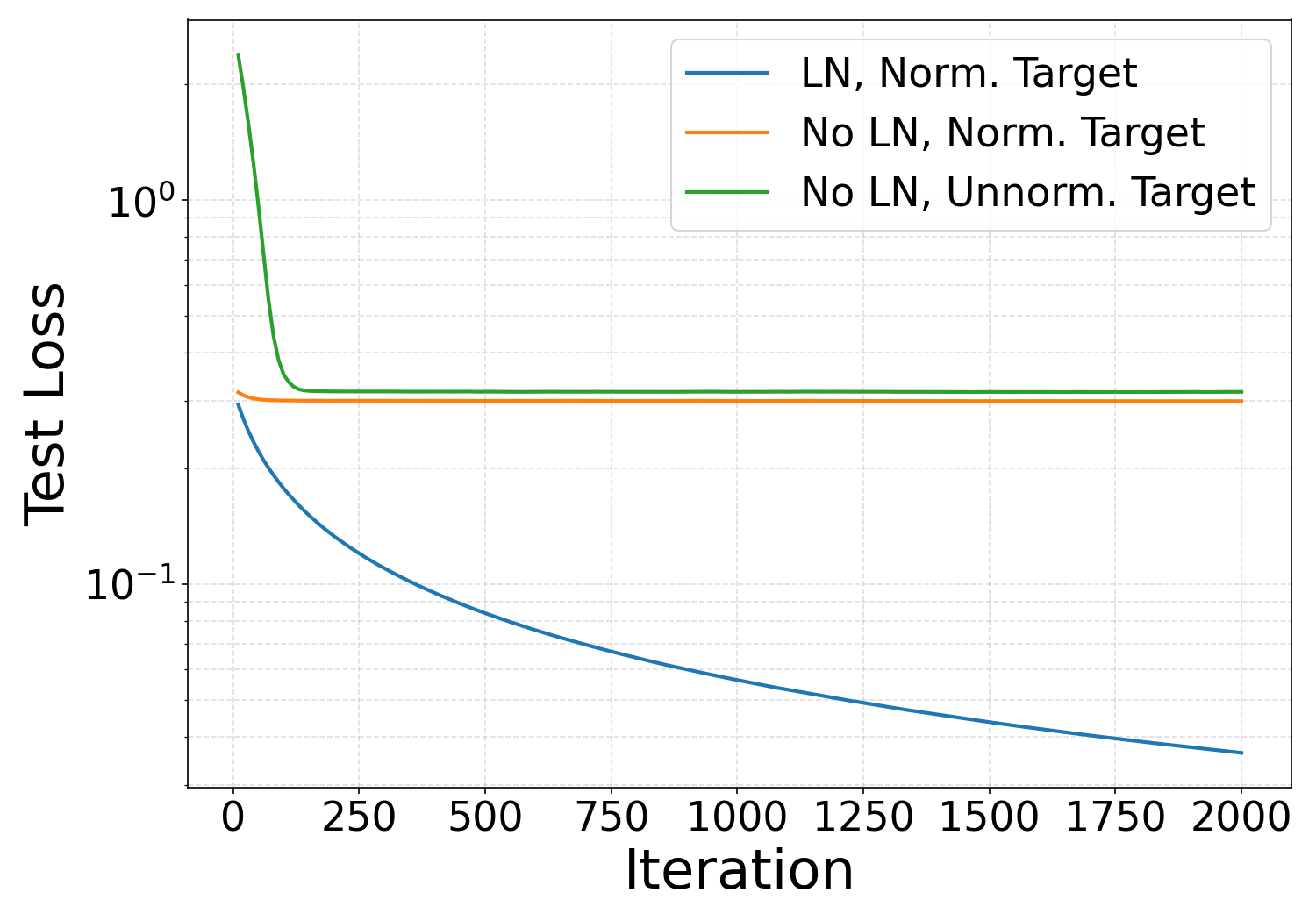}}
\subfigure[Looped transformer, Task 1]{\includegraphics[width=0.32\textwidth]{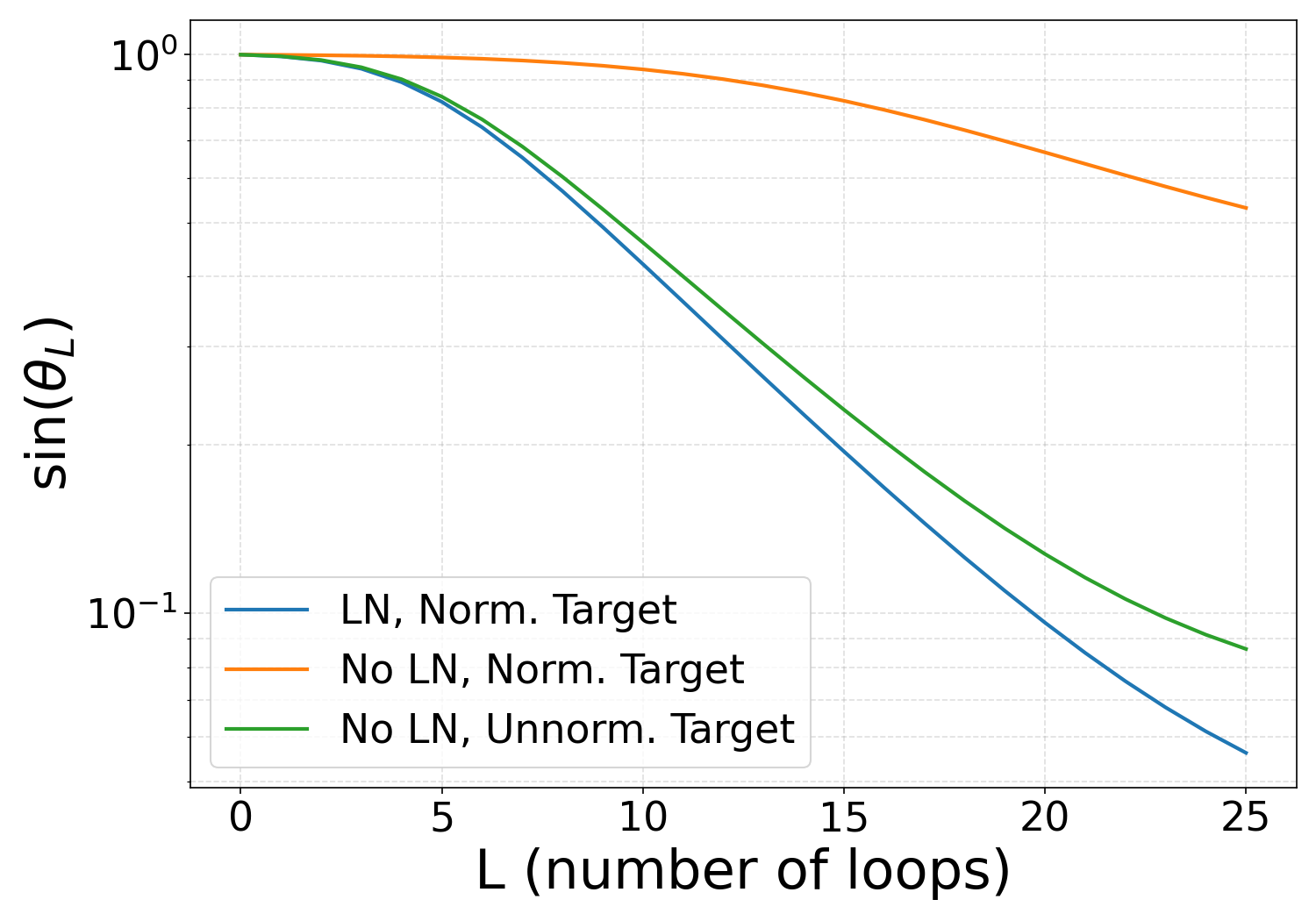}}\\
\subfigure[Training loss, Task 2]{\includegraphics[width=0.32\textwidth]{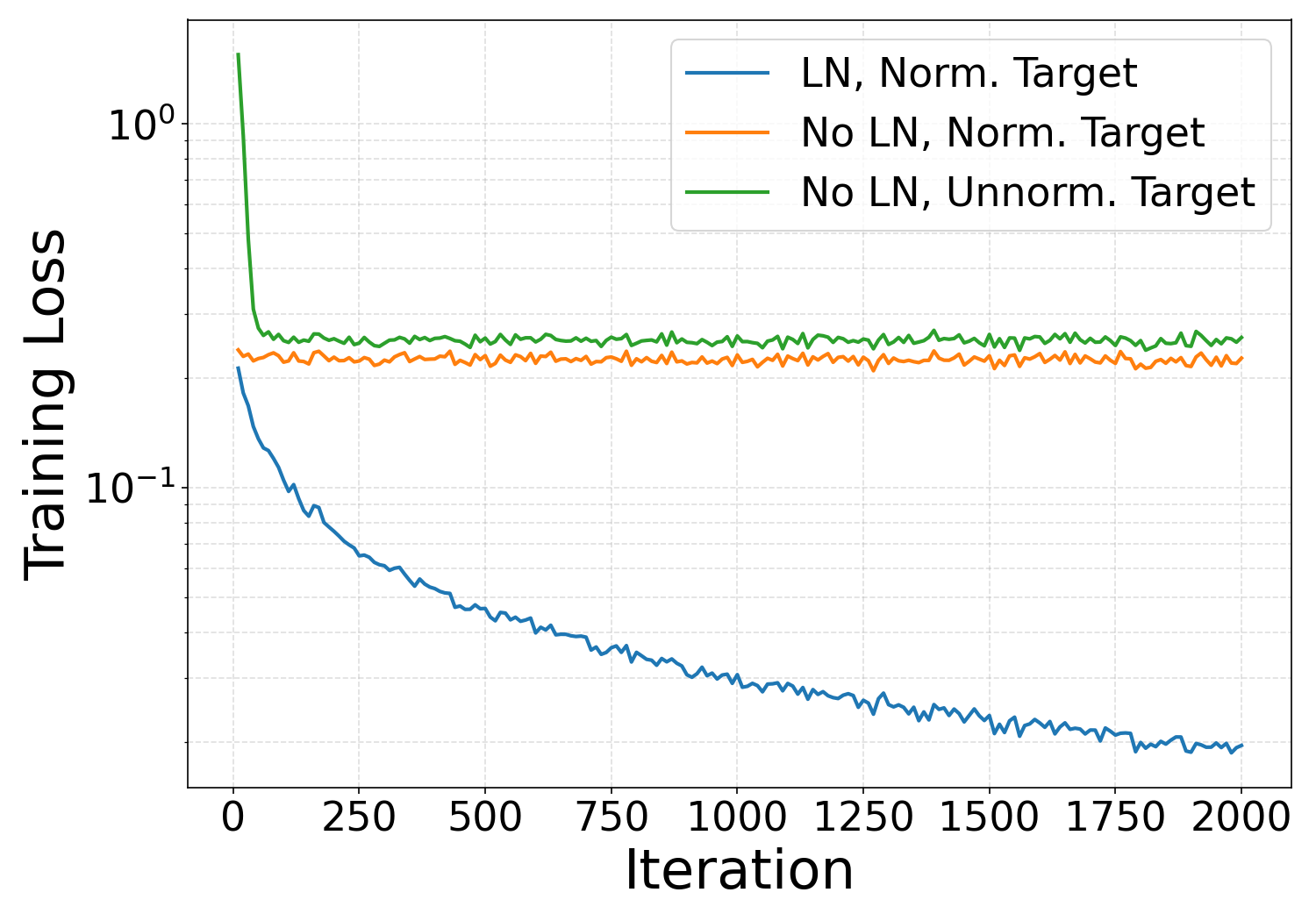}}
\subfigure[Test loss, Task 2]{\includegraphics[width=0.32\textwidth]{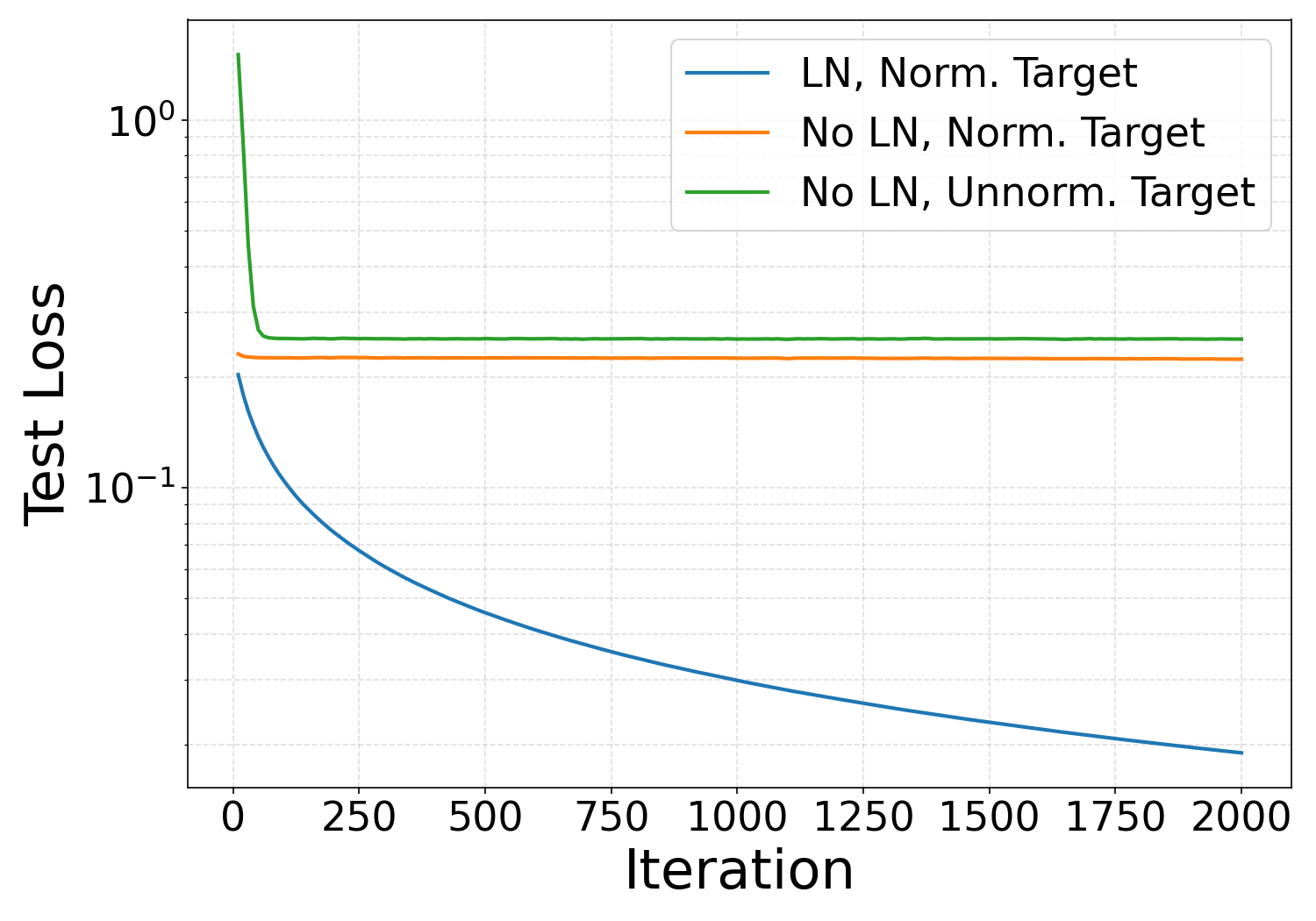}}
\subfigure[Looped transformer, Task 2]{\includegraphics[width=0.32\textwidth]{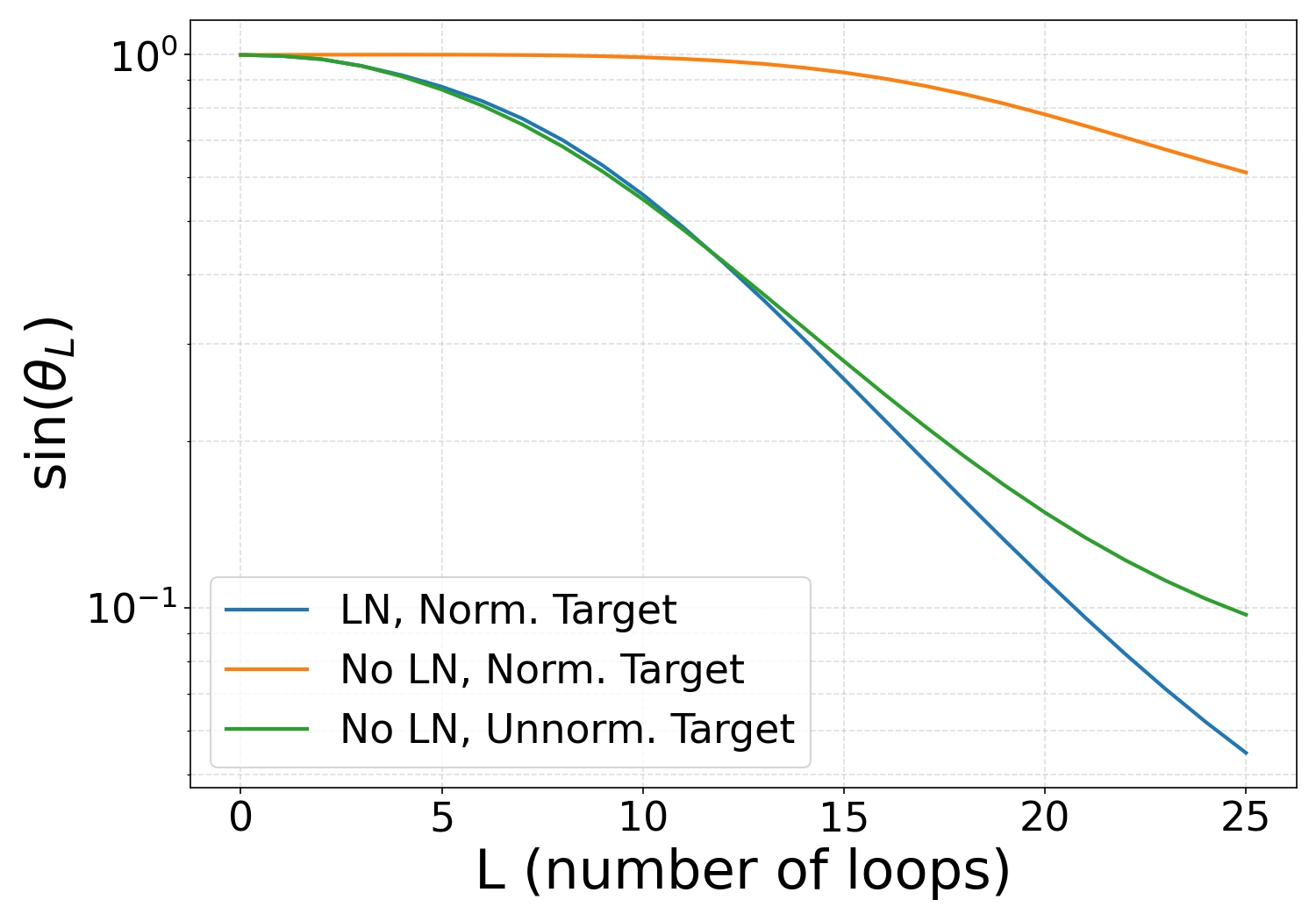}}
\caption{Semi-log plots of training loss, test loss, and the performance of the looped transformer for three models on tasks 1 and 2.}
\label{fig:LN-critical}
\end{figure}
% \subsection{The Critical Role of Layer Normalization}\label{unnormalized comparsion experiment}

\noindent\textbf{The Critical Role of Layer Normalization.} We also compare finite-sample SGD-trained models in order to test whether the qualitative separation predicted by Theorems~\ref{thm:unnorm convergence}, \ref{thm:convergence}, and~\ref{thm:looped error} is visible in practice. Specifically, we compare the following three models:
% \noindent\textbf{The Critical Role of Layer Normalization.} In this subsection, we empirically validate our theoretical claims in Theorems~\ref{thm:unnorm convergence}, \ref{thm:convergence} and~\ref{thm:looped error}, demonstrating that LN modifies training dynamics to facilitate improved algorithmic learning. Specifically, we compare the following three models: 
\begin{enumerate}[leftmargin = *, nosep]
    \item A transformer layer with LN \eqref{transformer prediction} trained via SGD on a normalized (norm.) target.
    \item A transformer layer without LN \eqref{unnormalized transformer prediction} trained via SGD on a normalized target $\yb_{\text{target}}^{(1)}$.
    \item A transformer layer without LN \eqref{unnormalized transformer prediction} trained via SGD on an unnormalized (unnorm.) target  $\yb_{\text{target}}^{(2)}$.
\end{enumerate}
% We set $d=16$, $n=32$. For evaluating the performance of the looped transformer, we take model snapshots at training step $T=2000$, sample a fixed test point $(\mathbf{X}_{\text{test}}, \mathbf{a}_{\text{test}})$, and iteratively feed the model output back as input to the transformer layer for $L\in{1,2,\ldots,25}$ iterations. Figure~\ref{fig:LN-critical} reports the training loss, test loss, and angular error under iterative inference for these three models. For the model with LN, the training and test losses converge to near zero. By contrast, both models without LN fail to converge effectively, with the training loss plateauing strictly above zero. This matches our conclusions in Theorem~\ref{thm:unnorm convergence} that the losses for models without LN will converge to a strictly positive limit. Regarding the performance of the looped transformer, Theorem~\ref{thm:looped error} predicts that the incomplete convergence of unnormalized models during training leads to a slower exponential decay of the angular error. The empirical curves in Figure~\ref{fig:LN-critical} indeed exhibit this slower convergence, thereby confirming the theorem.
We set $d=16$ and $n=32$. To evaluate the performance of the looped transformer, we take model snapshots at training step $T=2000$, sample a fixed test point $(\mathbf{X}_{\text{test}}, \mathbf{a}_{\text{test}})$, and iteratively feed the model output back as input to the transformer layer for $L\in\{1,2,\ldots,25\}$ iterations. Figure~\ref{fig:LN-critical} reports the training loss, test loss, and angular error under iterative inference for these three models. For the model with LN, the training and test losses converge to near zero. By contrast, both models without LN plateau at a strictly positive training loss, qualitatively consistent with the positive limiting loss predicted by Theorem~\ref{thm:unnorm convergence}. Regarding iterative inference, Theorem~\ref{thm:looped error} predicts a slower exponential decay of angular error for the unnormalized models, and the empirical curves in Figure~\ref{fig:LN-critical} show the same qualitative separation.

%\scriptsize

\section{Conclusions and Limitations}
This work studies principal component prediction as a concrete testbed for understanding the training dynamics of transformers with LN. Our analysis shows that a looped linear transformer with LN, trained by gradient descent, converges in the training limit to the $L$-step power method. We further provide a comparison between transformers with and without LN, showing that even when a transformer without LN is trained with layerwise guidance from power iterations, it still cannot exactly learn the power method, whereas the corresponding transformer with LN can. This difference leads to a concrete performance gap in predicting leading principal components. To the best of our knowledge, this paper provides the first theoretical analysis of transformer training dynamics that explicitly incorporates LN, highlighting its structural role in enabling algorithmic learning. Although our analysis focuses on a stylized setting with a simplified transformer architecture and population-loss training, it provides new theoretical insights into the role of LN. A natural next step is to extend the analysis to more realistic transformer models by incorporating components such as MLP layers, as well as to move beyond the population-loss setting toward finite-sample guarantees.

\appendix

\section{Proof of Theorem \ref{thm:looped_end_to_end}}\label{appendix:end-to-end training}

In this section, we prove Theorem~\ref{thm:looped_end_to_end}. The proof proceeds in four main steps:
\begin{enumerate}
    \item We first show that on the two-dimensional manifold
    \[
    \Wb_{12}=w\Ib_d,\qquad \Vb_{21}=v\Ib_d,
    \]
    with all other blocks equal to zero, the looped transformer reduces to the normalized power iteration
    \[
    \yb_L(\Eb;\btheta)
    =
    \frac{(\Ib_d+\rho\Xb\Xb^\top)^L\ab}
    {\|(\Ib_d+\rho\Xb\Xb^\top)^L\ab\|_2},
    \qquad
    \rho=wv.
    \]

    \item We then prove that this manifold is invariant under full-block population gradient descent. More precisely, all inactive blocks have zero population gradient, while the active-block gradients remain scalar multiples of the identity. Hence the full dynamics reduce exactly to scalar updates for \(w_t\) and \(v_t\).

    \item We analyze the resulting scalar recursion. The restricted population loss is a one-dimensional function \(R_L(\rho)\), and its derivative satisfies \(R_L'(\rho)<0\) and
    \[
    R_L'(\rho)
    =
    -\frac{\Upsilon_1}{\rho^2}
    +
    o(\rho^{-2}).
    \]
    This implies
    \[
    \rho_t=w_t v_t
    =
    \left(\frac{4\eta \Upsilon_1}{d}t\right)^{1/2}(1+o(1)),
    \qquad
    w_t,v_t
    =
    \left(\frac{4\eta \Upsilon_1}{d}t\right)^{1/4}(1+o(1)).
    \]

    \item Finally, we use the asymptotics of \(R_L\) to control the population loss and obtain
    \[
    \mathcal L_L(\btheta^{(t)})-\mathcal L_L^{(\infty)}
    =
    \left(\frac{d\Upsilon_1}{4\eta}\right)^{1/2}
    t^{-1/2}(1+o(1)).
    \]
\end{enumerate}

Throughout the proof, write
$
\Sb:=\Xb\Xb^\top
$, $
\boldsymbol{\Pi}_{\Xb}
:=
\sum_{i=1}^{r(\Xb)}
\vb_i(\Xb)\vb_i(\Xb)^\top .
$
Thus
\[
%\min_{\ub\in\operatorname{span}\{\vb_1(\Xb),\ldots,\vb_{r(\Xb)}(\Xb)\}}\|\yb-\ub\|_2^2
\left\|
\yb- \mathrm{Proj}_{\cV(\Xb)}(\yb)
\right\|_2^2
=
\|(\Ib_d-\boldsymbol{\Pi}_{\Xb})\yb\|_2^2.
\]
For \(w,v\in\RR\), define
\[
\Wb(w):=
\begin{pmatrix}
\mathbf 0&w\Ib_d\\
\mathbf 0&\mathbf 0
\end{pmatrix},
\qquad
\Vb(v):=
\begin{pmatrix}
\mathbf 0&\mathbf 0\\
v\Ib_d&\mathbf 0
\end{pmatrix},
\qquad
\rho=wv,
\]
and
\[
\mathcal M_+
:=
\left\{
(\Wb(w),\Vb(v)):w,v\in\RR,\ wv\ge0
\right\}.
\]
For \(\rho\ge0\), define the restricted scalar loss
\[
R_L(\rho)
:=
\EE_{\Xb,\ab}
\left[
\left\|
(\Ib_d-\boldsymbol{\Pi}_{\Xb})
\frac{(\Ib_d+\rho\Xb\Xb^\top)^L\ab}
{\|(\Ib_d+\rho\Xb\Xb^\top)^L\ab\|_2}
\right\|_2^2
\right].
\]

\begin{lemma}%[Forward dynamics on the scalar manifold] 
\label{lem:forward_scalar_manifold}
If \((\Wb,\Vb)=(\Wb(w),\Vb(v))\in\mathcal M_+\), then
\[
\yb_L(\Eb;\btheta)
=
\frac{(\Ib_d+\rho\Xb\Xb^\top)^L\ab}
{\|(\Ib_d+\rho\Xb\Xb^\top)^L\ab\|_2},
\qquad
\rho=wv.
\]
Consequently,
\[
\mathcal L_L(\Wb(w),\Vb(v))
=
R_L(wv).
\]
\end{lemma}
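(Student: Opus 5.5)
We will argue by induction on the layer index $\ell$, showing that for every $\ell\in\{0,\dots,L\}$ the iterate keeps the block form
\[
\Eb^{(\ell)}=\begin{pmatrix}\Xb&\mathbf 0_{d\times 1}\\ \mathbf 0_{d\times n}&\ab_\ell\end{pmatrix},\qquad \ab_\ell=\frac{(\Ib_d+\rho\Sb)^\ell\ab}{\|(\Ib_d+\rho\Sb)^\ell\ab\|_2},
\]
with $\ab_0=\ab$. The base case is the definition of $\Eb$. Reading off the last $d$ rows of the last column at $\ell=L$ then gives the formula for $\yb_L$.

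For the inductive step, assume $\Eb^{(\ell-1)}$ has the stated form with a unit vector $\ab_{\ell-1}$. The first step is to compute $\Eb^{(\ell-1)}\Eb^{(\ell-1)\top}=\mathrm{diag}(\Sb,\ab_{\ell-1}\ab_{\ell-1}^\top)$. Multiplying by $\Wb(w)$ gives the matrix whose only nonzero block is the top-right block $w\Sb$. Multiplying by $\Eb^{(\ell-1)}$ on the right gives the matrix whose only nonzero entry is the top-right $d\times 1$ block $w\Sb\ab_{\ell-1}$. Left-multiplying by $\Vb(v)$ moves this block into the bottom-right position, where it equals $wv\,\Sb\ab_{\ell-1}=\rho\Sb\ab_{\ell-1}$.

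Adding the residual, the pre-normalization matrix has first $n$ columns $(\xb_j^\top,\mathbf 0^\top)^\top$ and last column $(\mathbf 0^\top,((\Ib_d+\rho\Sb)\ab_{\ell-1})^\top)^\top$. Since each $\xb_j$ has unit norm by (A1), LN leaves the first $n$ columns unchanged. It rescales the last column to $(\Ib_d+\rho\Sb)\ab_{\ell-1}/\|(\Ib_d+\rho\Sb)\ab_{\ell-1}\|_2$. Because $\ab_{\ell-1}$ is a positive multiple of $(\Ib_d+\rho\Sb)^{\ell-1}\ab$, the normalizations telescope and this equals $\ab_\ell$.

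The only point that needs care is that every normalizing denominator is nonzero. Here the restriction to $\mathcal M_+$ is used: $\rho\ge0$ and $\Sb\succeq0$ imply $\Ib_d+\rho\Sb\succeq\Ib_d$. Hence $(\Ib_d+\rho\Sb)^\ell$ is invertible and $\|(\Ib_d+\rho\Sb)^\ell\ab\|_2\ge\|\ab\|_2=1>0$ for every $\ab\in\mathbb S^{d-1}$, with no assumption on $\Xb$ beyond (A1). This also makes the output a well-defined measurable function of $(\Xb,\ab)$, bounded by $1$ in norm.

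The consequence $\mathcal L_L(\Wb(w),\Vb(v))=R_L(wv)$ follows by substituting the formula into the loss and using $\|\yb-\mathrm{Proj}_{\cV(\Xb)}\yb\|_2^2=\|(\Ib_d-\boldsymbol\Pi_{\Xb})\yb\|_2^2$. The integrand is bounded by $1$, so the expectation is finite. No step is a real obstacle; the only subtlety is the nonvanishing denominator, which is handled above.
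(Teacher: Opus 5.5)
Your proposal is correct and follows essentially the same route as the paper: induction over layers, data columns fixed by LN because of unit norm and zero attention update, the query column becoming $(\Ib_d+\rho\Sb)\ab_{\ell-1}$, and the bound $\|(\Ib_d+\rho\Sb)\ab_{\ell-1}\|_2\ge 1$ from $\rho\ge0$ and $\Sb\succeq0$ keeping the normalization well defined. The only cosmetic difference is that you compute the full product $\Vb\Eb\Eb^\top\Wb\Eb$ at once, whereas the paper computes it token by token.
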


\begin{proof}
Suppose that at layer \(\ell\),
\[
\Eb^{(\ell)}
=
\begin{pmatrix}
\Xb&\mathbf 0\\
\mathbf 0&\ab_\ell
\end{pmatrix},
\qquad
\|\ab_\ell\|_2=1.
\]
For a data token \(\binom{\xb_j}{0}\),
\[
\Wb(w)\binom{\xb_j}{0}=0.
\]
Hence its attention update is zero, and since \(\|\xb_j\|_2=1\), column-wise normalization leaves it unchanged.

For the last token,
\[
\Wb(w)\binom{0}{\ab_\ell}
=
\binom{w\ab_\ell}{0}.
\]
Moreover,
\[
\Eb^{(\ell)}\Eb^{(\ell)\top}
=
\begin{pmatrix}
\Sb&\mathbf 0\\
\mathbf 0&\ab_\ell\ab_\ell^\top
\end{pmatrix},
\]
so
\[
\Eb^{(\ell)}\Eb^{(\ell)\top}
\Wb(w)\binom{0}{\ab_\ell}
=
\binom{w\Sb\ab_\ell}{0}.
\]
Multiplying by \(\Vb(v)\) gives
\[
\Vb(v)\Eb^{(\ell)}\Eb^{(\ell)\top}
\Wb(w)\binom{0}{\ab_\ell}
=
\binom{0}{wv\Sb\ab_\ell}
=
\binom{0}{\rho\Sb\ab_\ell}.
\]
Thus the last pre-normalization column is
\[
\binom{0}{(\Ib_d+\rho\Sb)\ab_\ell}.
\]
Since \(\rho\ge0\) and \(\Sb\succeq0\),
\[
\|(\Ib_d+\rho\Sb)\ab_\ell\|_2\ge \|\ab_\ell\|_2=1.
\]
Therefore
\[
\ab_{\ell+1}
=
\frac{(\Ib_d+\rho\Sb)\ab_\ell}
{\|(\Ib_d+\rho\Sb)\ab_\ell\|_2}.
\]
Induction over \(\ell=0,\ldots,L-1\) gives
\[
\yb_L(\Eb;\btheta)
=
\frac{(\Ib_d+\rho\Sb)^L\ab}
{\|(\Ib_d+\rho\Sb)^L\ab\|_2}.
\]
Substituting this expression into the squared-distance loss gives
\[
\mathcal L_L(\Wb(w),\Vb(v))
=
R_L(wv).
\]
\end{proof}

\begin{lemma}%[Inactive blocks have zero population gradient] 
\label{lem:inactive_zero_gradient}
Fix \((\Wb,\Vb)=(\Wb(w),\Vb(v))\in\mathcal M_+\). At this point,
\[
\nabla_{\Wb_{11}}\mathcal L_L
=
\nabla_{\Wb_{21}}\mathcal L_L
=
\nabla_{\Wb_{22}}\mathcal L_L
=
\mathbf 0,
\]
and
\[
\nabla_{\Vb_{11}}\mathcal L_L
=
\nabla_{\Vb_{12}}\mathcal L_L
=
\nabla_{\Vb_{22}}\mathcal L_L
=
\mathbf 0.
\]
\end{lemma}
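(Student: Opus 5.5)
The plan is to show that every directional derivative of $\mathcal L_L$ at $(\Wb(w),\Vb(v))\in\mathcal M_+$ along an inactive block vanishes. I will use two different mechanisms. For $\Wb_{21}$ and $\Vb_{12}$ the forward pass is unchanged to first order, pathwise in $(\Xb,\ab)$. For the diagonal blocks $\Wb_{11},\Wb_{22},\Vb_{11},\Vb_{22}$ I will use a sign-symmetry of the loss.

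\emph{Step 0 (differentiation under $\EE$).} Near a point of $\mathcal M_+$, every column's pre-normalization norm is bounded below by a positive constant. For data tokens this holds because $\|\xb_j\|_2=1$ and the attention update is $O(\epsilon)$. For the last token it holds because $\|(\Ib_d+\rho\Sb)\ab_\ell\|_2\ge 1$ when $\rho\ge0$, as in Lemma~\ref{lem:forward_scalar_manifold}. The network output is therefore smooth in the parameters, with derivatives bounded by polynomials in $\|\Sb\|_2\le n$, which is deterministic by (A1). Dominated convergence then lets me write $\langle\nabla\mathcal L_L,\Delta\rangle=\EE[\tfrac{d}{d\epsilon}\|(\Ib_d-\boldsymbol\Pi_{\Xb})\yb_L\|_2^2|_{\epsilon=0}]$. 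The projector $\boldsymbol\Pi_{\Xb}$ does not depend on the parameters, so nothing singular appears.

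\emph{Step 1 ($\Wb_{21}$ and $\Vb_{12}$: exact invariance).} I perturb $\Wb_{21}$ by $\epsilon\Delta$ for arbitrary $\Delta$ and argue by induction that each $\Eb^{(\ell)}$ keeps the form $\bigl(\begin{smallmatrix}\Xb&0\\0&\ab_\ell\end{smallmatrix}\bigr)$.
\begin{itemize}
\item Data token $\binom{\xb_j}{0}$: $\Wb$ sends it to $\binom{0}{\epsilon\Delta\xb_j}$. Applying $\Eb\Eb^\top$ gives $\binom{0}{\epsilon\ab_\ell\ab_\ell^\top\Delta\xb_j}$. This vector has zero top part, and $\Vb(v)$ only reads the top part, so the update is zero.
\item Last token $\binom{0}{\ab_\ell}$: $\Wb_{21}$ acts on its top part, which is zero, so this token is untouched.
\end{itemize}
The perturbation of $\Vb_{12}$ by $\epsilon\Delta$ is handled the same way. $\Vb_{12}$ reads the bottom part of $\Eb\Eb^\top\Wb\eb_j$. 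That bottom part is zero for data tokens, since $\Wb(w)\binom{\xb_j}{0}=0$, and also for the last token, since $\Eb\Eb^\top\binom{w\ab_\ell}{0}=\binom{w\Sb\ab_\ell}{0}$. In both cases the output, and hence the loss, is constant in $\epsilon$, so these two gradients vanish.

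\emph{Step 2 (diagonal blocks: parity).} I will establish two invariances of the loss.
\begin{itemize}
\item \textbf{Rescaling.} The map $(\Wb,\Vb)\mapsto(c\Wb,\Vb/c)$ leaves every layer unchanged, because the update depends only on the product $\Vb(\cdot)\Wb$. Taking $c=-1$ gives $\mathcal L_L(\Wb,\Vb)=\mathcal L_L(-\Wb,-\Vb)$.
\item \textbf{Block reflection.} Let $\Tb=\mathrm{diag}(\Ib_d,-\Ib_d)$. Since $\mathrm{LN}$ commutes with orthogonal maps, $\mathrm{TF}(\Tb\Eb;\Wb,\Vb)=\Tb\,\mathrm{TF}(\Eb;\Tb\Wb\Tb,\Tb\Vb\Tb)$, and this iterates over the $L$ layers. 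Now $\Tb\Eb$ is the input built from $(\Xb,-\ab)$, and $-\ab\overset{d}{=}\ab$. The output is only negated, and $\mathrm{Dist}(-\yb,\cV(\Xb))=\mathrm{Dist}(\yb,\cV(\Xb))$. Hence $\mathcal L_L(\Wb,\Vb)=\mathcal L_L(\Tb\Wb\Tb,\Tb\Vb\Tb)$.
\end{itemize}
Composing the two, $\mathcal L_L(\Wb,\Vb)=\mathcal L_L(-\Tb\Wb\Tb,-\Tb\Vb\Tb)$. The base point is fixed by this map, since $-\Tb\Wb(w)\Tb=\Wb(w)$ and $-\Tb\Vb(v)\Tb=\Vb(v)$. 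A perturbation $\Delta$ supported on a diagonal block satisfies $-\Tb\Delta\Tb=-\Delta$. Therefore $\epsilon\mapsto\mathcal L_L(\btheta+\epsilon\Delta)$ is even, and its derivative at $0$ vanishes.

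I expect the main obstacle to be Step 0 rather than the symmetry arguments. Near $\mathcal M_+$ one must justify a uniform lower bound on the normalization denominators in every layer, and an integrable bound on the derivatives; the algebra in Steps 1--2 is exact. If a pathwise bound proves awkward, I would restrict to $|\epsilon|$ small enough that the $O(\epsilon)$ perturbations, which are uniformly bounded since $\|\Xb\|_2\le\sqrt n$ and $\|\ab_\ell\|_2=1$, cannot push any norm below $1/2$.
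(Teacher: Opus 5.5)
Your proof is correct, but it takes a different route from the paper's.

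\textbf{The paper's route.} It treats all six inactive blocks at once by a pathwise first-variation induction. Writing $\dot{\Eb}^{(\ell)}=\bigl(\begin{smallmatrix}\Pb_\ell&\pb_\ell\\ \Qb_\ell&\qb_\ell\end{smallmatrix}\bigr)$, it shows that $\Pb_\ell=\mathbf 0$ and $\qb_\ell=\mathbf 0$ at every layer, while $\Qb_\ell$ and $\pb_\ell$ may be nonzero. The key fact is that the LN Jacobian is block diagonal at the baseline columns $\binom{\xb_j}{0}$ and $\binom{0}{(\Ib_d+\rho\Sb)\ab_\ell}$. This gives $\dot{\yb}_L=\mathbf 0$ for every sample, and dominated convergence passes the conclusion to $\mathcal L_L$.

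\textbf{Your route.} You obtain exact, zeroth-order invariance of the forward pass for $\Wb_{21}$ and $\Vb_{12}$. For the diagonal blocks you use the involution $(\Wb,\Vb)\mapsto(-\Tb\Wb\Tb,-\Tb\Vb\Tb)$. This map fixes $\mathcal M_+$ pointwise and acts as $-1$ on diagonal-block perturbations. It preserves $\mathcal L_L$ because $-\ab$ has the same law as $\ab$ and $\mathrm{Dist}(\cdot,\cV(\Xb))$ is even.

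\textbf{What each buys.} Your version avoids linearizing LN through $L$ layers, and it mirrors the symmetry (Schur) argument the paper uses for the active blocks. The paper's version proves a stronger, sample-wise fact: the per-sample directional derivative vanishes for every $(\Xb,\ab)$, with no appeal to the distributional symmetry of $\ab$. Your parity argument only gives vanishing after averaging. As a side remark, perturbing $\Wb_{22}$ or $\Vb_{22}$ alone also leaves the forward pass exactly unchanged, so only $\Wb_{11}$ and $\Vb_{11}$ truly need the parity argument.

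Both routes need the same uniform lower bound on the normalization denominators to justify differentiability of $\mathcal L_L$ at the base point. Your Step 0 supplies it.
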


\begin{proof}
Consider an arbitrary perturbation
\[
\dot{\Wb}
=
\begin{pmatrix}
\dot{\Wb}_{11}&\dot{\Wb}_{12}\\
\dot{\Wb}_{21}&\dot{\Wb}_{22}
\end{pmatrix},
\qquad
\dot{\Vb}
=
\begin{pmatrix}
\dot{\Vb}_{11}&\dot{\Vb}_{12}\\
\dot{\Vb}_{21}&\dot{\Vb}_{22}
\end{pmatrix}
\]
satisfying
\[
\dot{\Wb}_{12}=\mathbf 0,
\qquad
\dot{\Vb}_{21}=\mathbf 0.
\]
It suffices to show that the first variation of \(\yb_L\) is zero.

Let
\[
\Wb_\varepsilon=\Wb+\varepsilon\dot{\Wb},
\qquad
\Vb_\varepsilon=\Vb+\varepsilon\dot{\Vb},
\]
and let \(\Eb_\varepsilon^{(\ell)}\) be the corresponding iterates. Denote the baseline iterates by
\[
\bar{\Eb}^{(\ell)}
=
\begin{pmatrix}
\Xb&\mathbf 0\\
\mathbf 0&\ab_\ell
\end{pmatrix},
\]
and write the first variation as
\[
\dot{\Eb}^{(\ell)}
=
\left.\frac{d}{d\varepsilon}\Eb_\varepsilon^{(\ell)}\right|_{\varepsilon=0}
=
\begin{pmatrix}
\Pb_\ell&\pb_\ell\\
\Qb_\ell&\qb_\ell
\end{pmatrix}.
\]
We prove by induction that
\[
\Pb_\ell=\mathbf 0,
\qquad
\qb_\ell=\mathbf 0
\]
for all \(\ell\). This is true at \(\ell=0\), since \(\Eb^{(0)}\) is independent of \(\Wb,\Vb\).

Let
\[
\Ub(\Eb;\Wb,\Vb)
=
\Eb+\Vb\Eb\Eb^\top\Wb\Eb .
\]
At the baseline point,
\[
\dot{\Ub}
=
\dot{\Eb}
+
\dot{\Vb}\bar{\Eb}\bar{\Eb}^{\top}\Wb\bar{\Eb}
+
\Vb(\dot{\Eb}\bar{\Eb}^{\top}+\bar{\Eb}\dot{\Eb}^{\top})\Wb\bar{\Eb}
+
\Vb\bar{\Eb}\bar{\Eb}^{\top}\dot{\Wb}\bar{\Eb}
+
\Vb\bar{\Eb}\bar{\Eb}^{\top}\Wb\dot{\Eb}.
\]

For a data token \(\bar{\eb}_j=\binom{\xb_j}{0}\), we have
\[
\Wb\bar{\eb}_j=0.
\]
Thus the terms
\[
\dot{\Vb}\bar{\Eb}\bar{\Eb}^{\top}\Wb\bar{\eb}_j
\qquad\text{and}\qquad
\Vb(\dot{\Eb}\bar{\Eb}^{\top}+\bar{\Eb}\dot{\Eb}^{\top})\Wb\bar{\eb}_j
\]
vanish. The remaining terms multiplied by \(\Vb\) have zero top block because the top block row of \(\Vb\) is zero. Hence
\[
[\dot{\Ub}_j]_{\rm top}=(\Pb_\ell)_{:,j}.
\]
Since the baseline pre-normalization data token is \(\binom{\xb_j}{0}\), the derivative of column-wise normalization gives
\[
(\Pb_{\ell+1})_{:,j}
=
(\Ib_d-\xb_j\xb_j^\top)(\Pb_\ell)_{:,j}.
\]
By the induction hypothesis, this is zero. Hence
\[
\Pb_{\ell+1}=\mathbf 0.
\]

For the last token \(\bar{\eb}_q=\binom{0}{\ab_\ell}\), we have
\[
\Wb\bar{\eb}_q=\binom{w\ab_\ell}{0},
\qquad
\bar{\Eb}\bar{\Eb}^{\top}\Wb\bar{\eb}_q
=
\binom{w\Sb\ab_\ell}{0}.
\]
The bottom block of the pre-normalization first variation is
\[
[\dot{\Ub}_q]_{\rm bottom}
=
\qb_\ell
+
w\dot{\Vb}_{21}\Sb\ab_\ell
+
wv(\Pb_\ell\Xb^\top+\Xb\Pb_\ell^\top)\ab_\ell
+
v\Sb\dot{\Wb}_{12}\ab_\ell
+
wv\Sb\qb_\ell.
\]
Since \(\dot{\Vb}_{21}=0\), \(\dot{\Wb}_{12}=0\), and by induction
\[
\Pb_\ell=0,\qquad \qb_\ell=0,
\]
we obtain
\[
[\dot{\Ub}_q]_{\rm bottom}=0.
\]
The baseline last pre-normalization column is
\[
\binom{0}{(\Ib_d+\rho\Sb)\ab_\ell},
\]
whose bottom block has norm at least \(1\). Therefore the derivative of column-wise normalization gives
\[
\qb_{\ell+1}=0.
\]
This completes the induction. Hence
\[
\dot{\yb}_L=\qb_L=0.
\]

Therefore the sample directional derivative of
\[
\ell_{\Xb}(\yb)
=
\|(\Ib_d-\boldsymbol{\Pi}_{\Xb})\yb\|_2^2
\]
is zero for every inactive perturbation.

It remains to justify differentiating under the expectation. Since the sample space
\[
(\SSS^{d-1})^n\times\SSS^{d-1}
\]
is compact, and all baseline pre-normalization columns have norm at least \(1\), there exists a deterministic \(\varepsilon_0>0\) such that, for all \(|\varepsilon|\le\varepsilon_0\), all pre-normalization columns along the perturbed trajectory have norm at least \(1/2\), uniformly over all samples. Therefore all layer-normalization derivatives are uniformly bounded in this neighborhood.

Moreover,
\[
\nabla_\yb \ell_{\Xb}(\yb)
=
2(\Ib_d-\boldsymbol{\Pi}_{\Xb})\yb,
\qquad
\|\boldsymbol{\Pi}_{\Xb}\|_2\le1.
\]
Since every output column after layer normalization has norm \(1\), the bottom prediction has norm at most \(1\). A layer-by-layer chain-rule bound therefore gives a deterministic bound on the directional derivative of the sample loss, depending only on
\[
d,n,L,w,v,\dot{\Wb},\dot{\Vb}.
\]
Dominated convergence allows differentiation under the expectation. Hence the population directional derivative is zero in every inactive direction, proving the claim.
\end{proof}

\begin{lemma}%[Active gradients reduce to scalar gradients] 
\label{lem:active_scalar_gradient}
At every point \((\Wb(w),\Vb(v))\in\mathcal M_+\), there exist scalars
\(c_A,c_B\) such that
\[
\nabla_{\Wb}\mathcal L_L
=
\begin{pmatrix}
\mathbf 0&c_A\Ib_d\\
\mathbf 0&\mathbf 0
\end{pmatrix},
\qquad
\nabla_{\Vb}\mathcal L_L
=
\begin{pmatrix}
\mathbf 0&\mathbf 0\\
c_B\Ib_d&\mathbf 0
\end{pmatrix}.
\]
Moreover,
\[
c_A=\frac{v}{d}R_L'(wv),
\qquad
c_B=\frac{w}{d}R_L'(wv).
\]
\end{lemma}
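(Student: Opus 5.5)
By Lemma~\ref{lem:inactive_zero_gradient}, all inactive blocks already have zero gradient, so only the active blocks $\nabla_{\Wb_{12}}\mathcal L_L$ and $\nabla_{\Vb_{21}}\mathcal L_L$ need to be identified. The plan has two parts. First, use rotation invariance (Assumption~\ref{assump:distribution}(A2), together with the rotation invariance of $\ab$) and Schur's lemma to show that each active block is a scalar multiple of $\Ib_d$. Second, identify the scalar by differentiating along the identity direction and using Lemma~\ref{lem:forward_scalar_manifold}.

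\textbf{Equivariance.} For an orthogonal $\Pb\in\RR^{d\times d}$, set $\tilde\Pb=\mathrm{diag}(\Pb,\Pb)$. One has $\tilde\Pb\Eb=\begin{pmatrix}\Pb\Xb&\mathbf 0\\ \mathbf 0&\Pb\ab\end{pmatrix}$. For general $(\Wb,\Vb)$, the attention update and the columnwise LN are equivariant:
\[
\mathrm{TF}(\tilde\Pb\Eb;\tilde\Pb\Vb\tilde\Pb^\top,\tilde\Pb\Wb\tilde\Pb^\top)=\tilde\Pb\,\mathrm{TF}(\Eb;\Vb,\Wb).
\]
This holds because $\tilde\Pb^\top\tilde\Pb=\Ib$ and LN commutes with orthogonal maps. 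Iterating gives $\yb_L(\tilde\Pb\Eb;\tilde\Pb\btheta\tilde\Pb^\top)=\Pb\,\yb_L(\Eb;\btheta)$. The loss is also invariant, since $\cV(\Pb\Xb)=\Pb\cV(\Xb)$ and hence $\mathrm{Dist}(\Pb\yb,\cV(\Pb\Xb))=\mathrm{Dist}(\yb,\cV(\Xb))$. Because $(\Pb\Xb,\Pb\ab)$ has the same law as $(\Xb,\ab)$, we get $\mathcal L_L(\tilde\Pb\btheta\tilde\Pb^\top)=\mathcal L_L(\btheta)$ for every $\btheta$. Differentiating this identity gives $\nabla\mathcal L_L(\tilde\Pb\btheta\tilde\Pb^\top)=\tilde\Pb\,\nabla\mathcal L_L(\btheta)\,\tilde\Pb^\top$. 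Points of $\mathcal M_+$ are fixed by this conjugation, since $\Pb(w\Ib_d)\Pb^\top=w\Ib_d$. Hence at such points $\Gb:=\nabla_{\Wb_{12}}\mathcal L_L$ satisfies $\Pb\Gb\Pb^\top=\Gb$ for all orthogonal $\Pb$, and the same holds for the $\Vb_{21}$ block.

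\textbf{Schur step.} The defining representation of $O(d)$ on $\RR^d$ is absolutely irreducible. Equivalently and more elementarily, taking $\Pb$ to be sign-flip diagonal matrices forces $\Gb$ to be diagonal, and taking $\Pb$ to be permutation matrices forces its diagonal entries to be equal. Hence $\Gb=c_A\Ib_d$ and likewise $\nabla_{\Vb_{21}}\mathcal L_L=c_B\Ib_d$. The differentiation under the expectation is justified exactly as in the proof of Lemma~\ref{lem:inactive_zero_gradient}: the data live on a compact sample space, the pre-normalization norms are bounded below by $1/2$ in a neighborhood, and dominated convergence applies. I expect the main obstacle to be getting this differentiability and the invariance bookkeeping right, including for $(\Xb,\ab)$ jointly. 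The argument requires the full, unrestricted loss $\mathcal L_L$ to be differentiable at points of $\mathcal M_+$, not merely the restriction to the manifold.

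\textbf{Identifying the scalars.} Consider the curve $\varepsilon\mapsto(\Wb(w+\varepsilon),\Vb(v))$. It stays in $\mathcal M_+$ for small $\varepsilon$ when $wv>0$. In the boundary case $wv=0$, I would instead use that $R_L$ extends smoothly to a neighborhood of $\rho=0$, since $\|(\Ib_d+\rho\Sb)^L\ab\|_2$ stays bounded below for small $|\rho|$; Lemma~\ref{lem:forward_scalar_manifold}'s computation then works verbatim. Along this curve the directional derivative is $\langle\nabla_{\Wb}\mathcal L_L,\dot\Wb\rangle$ with $\dot\Wb$ equal to $\Ib_d$ in the $(1,2)$ block, which evaluates to $\mathrm{tr}(c_A\Ib_d)=d\,c_A$. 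By Lemma~\ref{lem:forward_scalar_manifold}, the same derivative equals $\frac{d}{d\varepsilon}R_L((w+\varepsilon)v)=vR_L'(wv)$. Therefore $c_A=\frac{v}{d}R_L'(wv)$, and symmetrically $c_B=\frac{w}{d}R_L'(wv)$. Differentiability of $R_L$ itself follows by the same dominated-convergence bound.
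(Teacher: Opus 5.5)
Your proposal is correct and follows essentially the same route as the paper. The loss is invariant under $(\Wb,\Vb)\mapsto(\Tb_\Qb\Wb\Tb_\Qb^\top,\Tb_\Qb\Vb\Tb_\Qb^\top)$ (the paper uses the loss restricted to the active blocks, you use the full loss); a Schur-type argument then forces each active-block gradient to be a multiple of $\Ib_d$, and $c_A,c_B$ are read off by differentiating $R_L(wv)$ in $w$ or $v$. Your elementary sign-flip/permutation replacement for Schur's lemma (valid since (A2) covers all orthogonal matrices) and your explicit handling of the boundary case $wv=0$, where the one-parameter curve can leave $\mathcal M_+$, are small refinements that the paper's proof leaves implicit.
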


\begin{proof}
By Lemma~\ref{lem:inactive_zero_gradient}, all inactive block gradients vanish. It remains to treat the active blocks. Write
\[
\Ab=\Wb_{12},
\qquad
\Bb=\Vb_{21}.
\]
Let \(\mathcal F(\Ab,\Bb)\) denote the population loss restricted to the subspace where all inactive blocks vanish. In a sufficiently small neighborhood of
\[
(\Ab,\Bb)=(w\Ib_d,v\Ib_d),
\]
all layer-normalization denominators are uniformly bounded away from zero, as in Lemma~\ref{lem:inactive_zero_gradient}; hence \(\mathcal F\) is differentiable there.

For any orthogonal matrix \(\Qb\), define
\[
\Tb_\Qb=
\begin{pmatrix}
\Qb&\mathbf 0\\
\mathbf 0&\Qb
\end{pmatrix}.
\]
If
\[
\Xb'=\Qb\Xb,
\qquad
\ab'=\Qb\ab,
\]
then
\[
\Xb'\Xb'^\top
=
\Qb\Xb\Xb^\top\Qb^\top,
\]
and the principal eigenspace is rotated by \(\Qb\). Hence
\[
\boldsymbol{\Pi}_{\Xb'}=\Qb\boldsymbol{\Pi}_{\Xb}\Qb^\top.
\]
The transformed active blocks are
\[
\Ab'=\Qb\Ab\Qb^\top,
\qquad
\Bb'=\Qb\Bb\Qb^\top.
\]
By induction over layers,
\[
\Eb'^{(\ell)}=\Tb_\Qb\Eb^{(\ell)}
\]
for all \(\ell\), because the pre-normalization update is equivariant under \(\Tb_\Qb\) and column-wise normalization commutes with orthogonal transformations. Therefore
\[
\yb_L(\Eb';\Ab',\Bb')
=
\Qb\yb_L(\Eb;\Ab,\Bb).
\]
Consequently,
\[
\|(\Ib_d-\boldsymbol{\Pi}_{\Xb'})
\yb_L(\Eb';\Ab',\Bb')\|_2^2
=
\|(\Ib_d-\boldsymbol{\Pi}_{\Xb})
\yb_L(\Eb;\Ab,\Bb)\|_2^2.
\]
Since \((\Xb,\ab)\) and \((\Qb\Xb,\Qb\ab)\) have the same distribution, we get
\[
\mathcal F(\Ab,\Bb)
=
\mathcal F(\Qb\Ab\Qb^\top,\Qb\Bb\Qb^\top).
\]

Evaluate at
\[
\Ab=w\Ib_d,
\qquad
\Bb=v\Ib_d.
\]
If
\[
\Gb_A=\nabla_{\Ab}\mathcal F(w\Ib_d,v\Ib_d),
\]
then for every \(\Hb\),
\[
\langle \Gb_A,\Hb\rangle_F
=
\langle \Gb_A,\Qb\Hb\Qb^\top\rangle_F
=
\langle \Qb^\top\Gb_A\Qb,\Hb\rangle_F.
\]
Thus
\[
\Gb_A=\Qb^\top\Gb_A\Qb
\qquad
\text{for every orthogonal }\Qb,
\]
by Lemma \ref{Schur's lemma} we have
\[
\Gb_A=c_A\Ib_d
\]
for some scalar \(c_A\). Similarly,
\[
\nabla_{\Bb}\mathcal F(w\Ib_d,v\Ib_d)
=
c_B\Ib_d
\]
for some scalar \(c_B\).

By Lemma~\ref{lem:forward_scalar_manifold},
\[
\mathcal L_L(\Wb(w),\Vb(v))
=
R_L(wv).
\]
Therefore
\[
\frac{\partial}{\partial w}\mathcal L_L(\Wb(w),\Vb(v))
=
v R_L'(wv),
\]
while also
\[
\frac{\partial}{\partial w}\mathcal L_L(\Wb(w),\Vb(v))
=
\langle c_A\Ib_d,\Ib_d\rangle_F
=
dc_A.
\]
Hence
\[
c_A=\frac{v}{d}R_L'(wv).
\]
The same argument gives
\[
c_B=\frac{w}{d}R_L'(wv).
\]
Combining this with Lemma~\ref{lem:inactive_zero_gradient} proves the claim.
\end{proof}

\begin{lemma}%[Derivative of the scalar objective]
\label{lem:scalar_derivative}
Let \(\Sb=\sum_{i=1}^d\lambda_i\vb_i\vb_i^\top\), 
\(\ab=\sum_{i=1}^d c_i\vb_i\), and \(r=r(\Xb)\). 
For \(\rho\ge0\), define
\(p_i(\rho):=c_i^2(1+\rho\lambda_i)^{2L}/\sum_{k=1}^d c_k^2(1+\rho\lambda_k)^{2L}\). 
Then the sample scalar loss is \(\ell_L(\rho)=1-\sum_{i\le r}p_i(\rho)\), and
\[
\ell_L'(\rho)
=
-2L\sum_{i\le r}\sum_{j>r}
p_i(\rho)p_j(\rho)
\frac{\lambda_1-\lambda_j}
{(1+\rho\lambda_1)(1+\rho\lambda_j)}.
\]
Consequently, \(R_L\in C^1([0,\infty))\), \(R_L'(\rho)<0\) for all \(\rho\ge0\), and 
\(\sup_{\rho\ge0}|R_L'(\rho)|\le 2Ln\). 
Moreover, under Assumption~\ref{assump:distribution}(A3),
\[
R_L'(\rho)=-\frac{\Upsilon_1}{\rho^2}+o(\rho^{-2})
\qquad \text{as } \rho\to\infty,
\]
where \(0<\Upsilon_1<\infty\) is given by
\[
\Upsilon_1
:=
2L\,\EE_{\Xb,\ab}
\left[
\sum_{i\le r(\Xb)}\sum_{j>r(\Xb)}
p_i^{(\infty)}p_j^{(\infty)}
\frac{\lambda_1(\Xb)-\lambda_j(\Xb)}
{\lambda_1(\Xb)\lambda_j(\Xb)}
\right],
\qquad
p_i^{(\infty)}
:=
\frac{c_i^2\lambda_i(\Xb)^{2L}}
{\sum_{k=1}^d c_k^2\lambda_k(\Xb)^{2L}}.
\]
\end{lemma}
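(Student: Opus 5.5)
Everything reduces to the spectral coordinates of $\Sb$. Write $f_i(\rho)=c_i^2(1+\rho\lambda_i)^{2L}$ and $F=\sum_k f_k$, so that $p_i=f_i/F$. Since $\boldsymbol{\Pi}_{\Xb}$ projects onto $\mathrm{span}\{\vb_1,\dots,\vb_r\}$, the sample loss is $\ell_L(\rho)=\sum_{j>r}p_j=1-\sum_{i\le r}p_i$. This is well defined whenever $\ab\neq\mathbf 0$, because $\lambda_i\ge0$ and $\rho\ge0$ give $F\ge\sum c_k^2=1$.

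\textbf{Derivative formula.} The logarithmic derivative of each weight is
\[
f_i'/f_i=2L\lambda_i/(1+\rho\lambda_i),
\]
so $p_i'=p_i\big(g_i-\sum_k p_k g_k\big)$ with $g_i=2L\lambda_i/(1+\rho\lambda_i)$. Summing over $i\le r$, where every $g_i=g_1$, gives
\[
\sum_{i\le r}p_i'=\Big(\sum_{i\le r}p_i\Big)\sum_{j>r}p_j\,(g_1-g_j),
\]
using $\sum_k p_k=1$. Then
\[
g_1-g_j=\frac{2L(\lambda_1-\lambda_j)}{(1+\rho\lambda_1)(1+\rho\lambda_j)},
\]
which yields the stated expression for $\ell_L'$.

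\textbf{Regularity, sign and the uniform bound.} Each summand of $\ell_L'$ is nonnegative and bounded by
\[
2L\,p_ip_j\,\frac{\lambda_1-\lambda_j}{1+\rho\lambda_1}\le 2L\lambda_1\le 2Ln,
\]
since $\lambda_1\le\mathrm{tr}\,\Sb=n$ by (A1) and $\sum_{i,j}p_ip_j\le 1$. So $|\ell_L'|\le 2Ln$ uniformly in $\rho$ and in the sample. Dominated convergence, applied to difference quotients through the mean value theorem, then gives $R_L'=\EE[\ell_L']$ and continuity of $R_L'$, so $R_L\in C^1$ and $\sup|R_L'|\le 2Ln$.

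For strict negativity, two facts hold on a positive-probability event:
\begin{itemize}
\item $\lambda_1>\lambda_{r+1}$ always, by Definition~\ref{def:principal_component}.
\item Some $c_i\neq0$ with $i\le r$ and some $c_j\neq0$ with $j>r$. This holds almost surely because $\ab$ is uniform on the sphere and independent of $\Xb$, so $\ab$ lies in no fixed proper subspace.
\end{itemize}
On that event the summand with $i\le r$, $j>r$ is strictly positive, so $R_L'(\rho)<0$.

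\textbf{Asymptotics.} Compute
\[
\rho^2\ell_L'(\rho)=-2L\sum_{i\le r}\sum_{j>r}p_ip_j\,\frac{\lambda_1-\lambda_j}{(\rho^{-1}+\lambda_1)(\rho^{-1}+\lambda_j)}.
\]
Pointwise, $p_i(\rho)\to p_i^{(\infty)}$; this uses $\lambda_d>0$, which holds a.s.\ by (A3). So the integrand converges to the corresponding $\Upsilon_1$ integrand. The main obstacle is domination, because the factor $1/(\rho^{-1}+\lambda_j)$ blows up when $\lambda_j$ is small. This is exactly where (A3) is used: the factor is at most $1/\lambda_j\le 1/\lambda_d$, and $(\lambda_1-\lambda_j)/(\rho^{-1}+\lambda_1)\le1$, so
\[
|\rho^2\ell_L'|\le 2L\lambda_d^{-1},
\]
which is integrable with expectation $2L\Upsilon(n,d)$. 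Dominated convergence then gives $\rho^2R_L'(\rho)\to-\Upsilon_1$, i.e.\ $R_L'=-\Upsilon_1\rho^{-2}+o(\rho^{-2})$. The same bound shows $\Upsilon_1\le 2L\Upsilon(n,d)<\infty$. Positivity of $\Upsilon_1$ follows as before: a.s.\ some $p_i^{(\infty)}p_j^{(\infty)}>0$, and $\lambda_1-\lambda_{r+1}>0$.
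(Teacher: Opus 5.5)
Your proposal is correct and takes essentially the same approach as the paper. The paper likewise computes $\ell_L'$ with the quotient rule for $p_i$, bounds it uniformly by $2L\lambda_1\le 2Ln$ to differentiate under the expectation, and dominates $\rho^2\ell_L'$ by $2L/\lambda_d$ so that (A3) yields $\rho^2R_L'(\rho)\to-\Upsilon_1$; your mean-value-theorem justification and the explicit bound $\Upsilon_1\le 2L\Upsilon(n,d)$ are only minor elaborations.
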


\begin{proof}
On \(\mathcal M_+\),
\[
\yb_L
=
\frac{(\Ib_d+\rho\Sb)^L\ab}
{\|(\Ib_d+\rho\Sb)^L\ab\|_2}.
\]
Since
\[
(\Ib_d+\rho\Sb)^L\ab
=
\sum_{i=1}^d c_i(1+\rho\lambda_i)^L\vb_i,
\]
the squared distance to the principal eigenspace is
\[
\ell_L(\rho)
=
1-
\frac{
\sum_{i\le r}c_i^2(1+\rho\lambda_i)^{2L}
}{
\sum_{k=1}^d c_k^2(1+\rho\lambda_k)^{2L}
}
=
1-\sum_{i\le r}p_i(\rho).
\]
The quotient rule gives
\[
p_i'(\rho)
=
p_i(\rho)
\left(
\frac{2L\lambda_i}{1+\rho\lambda_i}
-
\sum_{k=1}^d p_k(\rho)
\frac{2L\lambda_k}{1+\rho\lambda_k}
\right).
\]
Since \(\lambda_i=\lambda_1\) for \(i\le r\),
\[
\ell_L'(\rho)
=
-\sum_{i\le r}p_i'(\rho)
=
-2L
\sum_{i\le r}\sum_{j>r}
p_i(\rho)p_j(\rho)
\frac{\lambda_1-\lambda_j}
{(1+\rho\lambda_1)(1+\rho\lambda_j)}.
\]
Because \(r<d\), \(\lambda_1>\lambda_j\) for \(j>r\), and a uniform \(\ab\) has nonzero projections onto both the principal eigenspace and its orthogonal complement almost surely, we have
\[
\ell_L'(\rho)<0
\]
almost surely. The bound
\[
|\ell_L'(\rho)|\le 2Ln
\]
proved below is uniform in \(\rho\), so dominated convergence permits differentiation under the expectation and yields
\[
R_L'(\rho)=\EE_{\Xb,\ab}[\ell_L'(\rho)]<0.
\]
The same bound and pointwise continuity of \(\ell_L'(\rho)\) imply, again by dominated convergence, that \(R_L'\) is continuous on \([0,\infty)\).

It remains to prove the uniform bound. Since
\[
\frac{\lambda_1-\lambda_j}
{(1+\rho\lambda_1)(1+\rho\lambda_j)}
\le \lambda_1\le \operatorname{tr}(\Sb)=n,
\]
and
\[
\sum_{i\le r}\sum_{j>r}p_i(\rho)p_j(\rho)\le1,
\]
we obtain
\[
|R_L'(\rho)|\le2Ln.
\]

For the asymptotics, as \(\rho\to\infty\),
\[
p_i(\rho)\to p_i^{(\infty)}
\]
and
\[
\rho^2
\frac{\lambda_1-\lambda_j}
{(1+\rho\lambda_1)(1+\rho\lambda_j)}
\to
\frac{\lambda_1-\lambda_j}{\lambda_1\lambda_j}.
\]
Moreover,
\[
0\le
\rho^2
\frac{\lambda_1-\lambda_j}
{(1+\rho\lambda_1)(1+\rho\lambda_j)}
\le
\frac{\lambda_1-\lambda_j}{\lambda_1\lambda_j}
\le
\frac1{\lambda_j}
\le
\frac1{\lambda_d}.
\]
Since
\[
\sum_{i\le r}\sum_{j>r}p_i(\rho)p_j(\rho)\le1
\]
and by Assumption~\ref{assump:distribution}(A3) \(\EE[\lambda_d(\Xb)^{-1}]<\infty\), dominated convergence gives
\[
\rho^2R_L'(\rho)\to -\Upsilon_1.
\]
The same domination gives \(\Upsilon_1<\infty\). Positivity follows from \(r(\Xb)<d\), the strict spectral gap between \(\lambda_1\) and non-principal eigenvalues, and the almost sure nonzero projections of \(\ab\) onto both subspaces.
\end{proof}

\begin{lemma}%[Scalar gradient dynamics]
\label{lem:scalar_dynamics}
Let \(w_0=0\), \(v_0=1\), and suppose
\[
w_{t+1}
=
w_t-\frac{\eta}{d}v_tR_L'(\rho_t),
\qquad
v_{t+1}
=
v_t-\frac{\eta}{d}w_tR_L'(\rho_t),
\qquad
\rho_t=w_t v_t.
\]
If \(\eta\le d/(2Ln)\), then
\[
w_t\ge0,\qquad v_t\ge1,\qquad \rho_t\uparrow\infty,
\qquad
\frac{w_t}{v_t}\to1.
\]
Moreover,
\[
\rho_t
=
\left(\frac{4\eta \Upsilon_1}{d}t\right)^{1/2}(1+o(1)),
\qquad
w_t,v_t
=
\left(\frac{4\eta \Upsilon_1}{d}t\right)^{1/4}(1+o(1)).
\]
\end{lemma}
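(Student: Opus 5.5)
Write $g_t:=-R_L'(\rho_t)/d$. By Lemma~\ref{lem:scalar_derivative}, $g_t>0$ whenever $\rho_t\ge0$, and $g_t\le 2Ln/d$. The recursion then takes the form $w_{t+1}=w_t+\eta g_t v_t$ and $v_{t+1}=v_t+\eta g_t w_t$.

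\textbf{Monotonicity.} By induction, if $w_t\ge0$ and $v_t\ge1$, then both coordinates are nondecreasing. Moreover, $w_1=\eta g_0>0$, so from then on $w_t>0$ and $v_t\ge1$. Hence $\rho_t$ is increasing.

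\textbf{Diagonal variables.} Set $s_t=w_t+v_t$ and $\delta_t=v_t-w_t$. Then
\[
s_{t+1}=(1+\eta g_t)s_t,\qquad \delta_{t+1}=(1-\eta g_t)\delta_t .
\]
The step-size condition gives $\eta g_t\le 1/2$, so $\delta_t\in(0,1]$ stays positive and bounded.

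\textbf{Divergence of $\rho_t$.} Suppose instead that $\rho_t\uparrow\rho_\infty<\infty$. By continuity of $R_L'$ on $[0,\rho_\infty]$ and $R_L'<0$, we get $g_t\ge c>0$. Then $s_t$ grows geometrically while $\delta_t$ stays bounded. Since $4\rho_t=s_t^2-\delta_t^2$, this forces $\rho_t\to\infty$, a contradiction. Therefore $\rho_t\to\infty$, so $s_t\to\infty$ while $\delta_t$ stays bounded, and $w_t/v_t=(s_t-\delta_t)/(s_t+\delta_t)\to1$.

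\textbf{Rates.} I will compute the increment of $\rho^2$. Expanding gives
\[
\rho_{t+1}=\rho_t\bigl(1+\eta^2g_t^2\bigr)+\eta g_t\bigl(w_t^2+v_t^2\bigr),
\qquad w_t^2+v_t^2=2\rho_t+\delta_t^2 .
\]
By Lemma~\ref{lem:scalar_derivative}, $g_t=\Upsilon_1/(d\rho_t^2)\,(1+o(1))$. Substituting,
\[
\rho_{t+1}-\rho_t=\frac{2\eta\Upsilon_1}{d\rho_t}\bigl(1+o(1)\bigr),
\]
because the $\delta_t^2$ and $\eta^2g_t^2$ contributions are $O(\rho_t^{-2})$. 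The increment therefore tends to zero, and
\[
\rho_{t+1}^2-\rho_t^2=(\rho_{t+1}-\rho_t)(\rho_{t+1}+\rho_t)=\frac{4\eta\Upsilon_1}{d}\bigl(1+o(1)\bigr).
\]
Cesàro summation (Stolz) gives $\rho_t^2/t\to 4\eta\Upsilon_1/d$, which is the stated rate for $\rho_t$.

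Finally, $w_t/v_t\to1$ together with $w_tv_t=\rho_t$ gives $w_t,v_t=\rho_t^{1/2}(1+o(1))$, which is the stated rate for $w_t$ and $v_t$.

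\textbf{Main obstacle.} The delicate step is excluding a finite limit of $\rho_t$ and then controlling the error terms uniformly as $\rho_t\to\infty$. Both rely only on the continuity, sign and asymptotics of $R_L'$ from Lemma~\ref{lem:scalar_derivative}, together with the bound $\eta g_t\le1/2$ coming from $\eta\le d/(2Ln)$.
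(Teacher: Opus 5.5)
Your proof is correct and follows the paper's argument step for step: sign induction, divergence of $\rho_t$ by contradiction using a uniform lower bound on $-R_L'$ over a compact interval, a near-conserved quantity giving $w_t/v_t\to1$, and $\rho_{t+1}^2-\rho_t^2\to 4\eta\Upsilon_1/d$ summed over $t$; the only cosmetic difference is that you use the linear variables $s_t,\delta_t$ where the paper uses $\Delta_t=v_t^2-w_t^2=s_t\delta_t$. One minor slip: $\eta\le d/(2Ln)$ together with the stated bound $|R_L'|\le 2Ln$ gives only $\eta g_t\le 1$, not $1/2$, but $|1-\eta g_t|\le 1$ still keeps $\delta_t\in[0,1]$, and you never actually use strict positivity of $\delta_t$.
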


\begin{proof}
Let
\[
\kappa=\frac{\eta}{d},
\qquad
G_t=R_L'(\rho_t).
\]
By Lemma~\ref{lem:scalar_derivative},
\[
G_t<0.
\]
Since
\[
w_0=0,\qquad v_0=1,
\]
induction gives
\[
w_t\ge0,
\qquad
v_t\ge1,
\qquad
\rho_t\ge0.
\]
Expanding
\[
\rho_{t+1}
=
(w_t-\kappa v_tG_t)(v_t-\kappa w_tG_t),
\]
we obtain
\[
\rho_{t+1}-\rho_t
=
-\kappa G_t(w_t^2+v_t^2)
+
\kappa^2\rho_tG_t^2>0.
\]
Hence \(\rho_t\) is increasing. If it were bounded by \(B\), continuity and strict negativity of \(R_L'\) on \([0,B]\) would give \(c_B>0\) such that
\[
-R_L'(\rho)\ge c_B
\qquad
\text{on }[0,B].
\]
Then
\[
\rho_{t+1}-\rho_t
\ge
\kappa c_B(w_t^2+v_t^2)
\ge
\kappa c_B,
\]
contradicting boundedness. Thus
\[
\rho_t\to\infty.
\]

Next define
\[
\Delta_t=v_t^2-w_t^2.
\]
Then
\[
\Delta_{t+1}
=
\Delta_t(1-\kappa^2G_t^2).
\]
Since \(\eta\le d/(2Ln)\) and
\[
\sup_{\rho\ge0}|R_L'(\rho)|\le2Ln,
\]
we have
\[
0\le\kappa|G_t|\le1.
\]
Thus
\[
0\le\Delta_t\le\Delta_0=1.
\]
Since \(\rho_t=w_t v_t\to\infty\), both \(w_t,v_t\to\infty\), and
\[
1-\left(\frac{w_t}{v_t}\right)^2
=
\frac{\Delta_t}{v_t^2}\to0.
\]
Hence
\[
\frac{w_t}{v_t}\to1.
\]

Finally, Lemma~\ref{lem:scalar_derivative} gives
\[
G_t
=
-\frac{\Upsilon_1}{\rho_t^2}
+
o(\rho_t^{-2}).
\]
Since \(w_t/v_t\to1\) and \(w_t v_t=\rho_t\),
\[
w_t^2+v_t^2
=
2\rho_t+o(\rho_t).
\]
Therefore
\[
\rho_{t+1}-\rho_t
=
\frac{2\kappa \Upsilon_1}{\rho_t}
+
o(\rho_t^{-1}).
\]
Hence
\[
\rho_{t+1}^2-\rho_t^2
=
2\rho_t(\rho_{t+1}-\rho_t)
+
(\rho_{t+1}-\rho_t)^2
=
4\kappa \Upsilon_1+o(1).
\]
Summing over \(t\) yields
\[
\rho_t^2
=
4\kappa \Upsilon_1t(1+o(1)).
\]
Thus
\[
\rho_t
=
\left(\frac{4\eta \Upsilon_1}{d}t\right)^{1/2}(1+o(1)).
\]
Since \(w_t/v_t\to1\) and \(w_t v_t=\rho_t\),
\[
w_t,v_t
=
\rho_t^{1/2}(1+o(1))
=
\left(\frac{4\eta \Upsilon_1}{d}t\right)^{1/4}(1+o(1)).
\]
\end{proof}

\begin{proof}[Proof of Theorem~\ref{thm:looped_end_to_end}]
The initialization satisfies
\[
\Wb^{(0)}=\Wb(0),
\qquad
\Vb^{(0)}=\Vb(1).
\]
We prove by induction that the iterates remain in \(\mathcal M_+\). Suppose
\[
\Wb^{(t)}=\Wb(w_t),
\qquad
\Vb^{(t)}=\Vb(v_t),
\qquad
w_t\ge0,\quad v_t\ge1.
\]
By Lemma~\ref{lem:inactive_zero_gradient} and Lemma~\ref{lem:active_scalar_gradient}, the full population gradient is tangent to \(\mathcal M_+\), and the scalar parameters obey
\[
w_{t+1}
=
w_t-\frac{\eta}{d}v_tR_L'(\rho_t),
\qquad
v_{t+1}
=
v_t-\frac{\eta}{d}w_tR_L'(\rho_t),
\qquad
\rho_t=w_t v_t.
\]
Since \(R_L'(\rho_t)<0\), we have
\[
w_{t+1}\ge0,
\qquad
v_{t+1}\ge1.
\]
Thus the induction closes. Consequently, for every \(t\ge0\),
\[
\Wb^{(t)}
=
\begin{pmatrix}
\mathbf 0&w_t\Ib_d\\
\mathbf 0&\mathbf 0
\end{pmatrix},
\qquad
\Vb^{(t)}
=
\begin{pmatrix}
\mathbf 0&\mathbf 0\\
v_t\Ib_d&\mathbf 0
\end{pmatrix}.
\]
The claims on \(w_t,v_t,\rho_t\) now follow from Lemma~\ref{lem:scalar_dynamics}.

It remains to identify the limiting loss. For almost every \((\Xb,\ab)\),
\[
\frac{(\Ib_d+\rho\Sb)^L\ab}{\rho^L}
=
(\rho^{-1}\Ib_d+\Sb)^L\ab
\to
\Sb^L\ab.
\]
Since by Assumption~\ref{assump:distribution}(A3) \(\EE[\lambda_d(\Xb)^{-1}]<\infty\), we have \(\lambda_d(\Xb)>0\) almost surely, so
\[
\Sb^L\ab\neq0
\]
almost surely. Therefore, by dominated convergence,
\[
R_L(\rho)
\to
\mathcal L_L^{(\infty)},
\]
where
\[
\mathcal L_L^{(\infty)}
=
\EE_{\Xb,\ab}\!\left[
\mathrm{Dist}^2\left(
\frac{(\Xb\Xb^\top)^L\ab}{\|(\Xb\Xb^\top)^L\ab\|_2},
\cV(\Xb)
\right)
\right].
\]
Equivalently,
\[
\mathcal L_L^{(\infty)}
=
\EE_{\Xb,\ab}
\left[
1-
\frac{
((\Xb\Xb^\top)^L\ab)^\top
\boldsymbol{\Pi}_{\Xb}
((\Xb\Xb^\top)^L\ab)
}{
\|(\Xb\Xb^\top)^L\ab\|_2^2
}
\right].
\]

Since Lemma~\ref{lem:scalar_derivative} gives \(R_L\in C^1([0,\infty))\) and
\[
R_L'(\rho)
=
-\frac{\Upsilon_1}{\rho^2}
+
o(\rho^{-2}),
\]
we have
\[
R_L(\rho)-\mathcal L_L^{(\infty)}
=
\int_\rho^\infty -R_L'(s)\,ds
=
\frac{\Upsilon_1}{\rho}
+
o(\rho^{-1}).
\]
Taking \(\rho=\rho_t\) and using Lemma~\ref{lem:scalar_dynamics},
\[
\mathcal L_L(\btheta^{(t)})
-
\mathcal L_L^{(\infty)}
=
\frac{\Upsilon_1}{\rho_t}(1+o(1))
=
\left(\frac{d\Upsilon_1}{4\eta}\right)^{1/2}
t^{-1/2}(1+o(1)).
\]

Moreover, \(\mathcal L_L^{(\infty)}>0\). Indeed, for almost every \(\Xb\), we have
\(r(\Xb)<d\) and \(\lambda_j(\Xb)>0\) for every \(j\). Writing
\[
\ab=\sum_{i=1}^d c_i\vb_i(\Xb),
\]
a uniformly random \(\ab\in\SSS^{d-1}\) has nonzero projection onto the orthogonal complement of the principal eigenspace almost surely. Hence
\[
\mathrm{Dist}^2\left(
\frac{(\Xb\Xb^\top)^L\ab}{\|(\Xb\Xb^\top)^L\ab\|_2},
\cV(\Xb)
\right)
>0
\]
almost surely, and therefore \(\mathcal L_L^{(\infty)}>0\). This completes the proof.
\end{proof}

\section{Proof of Theorem \ref{thm:unnorm convergence}}
\label{sec:appendix:noLN-full}

To highlight the critical role of normalization, we present a detailed analysis of the training dynamics when LN is omitted. This ``unnormalized model'' provides a baseline that exhibits qualitatively different—and provably suboptimal—learning dynamics. As we will demonstrate, its training dynamics converge not to the ideal power-iteration limit, but to a fixed point, regardless of whether the learning target is normalized or not.

\noindent We adopt the previous notation:
\begin{equation*}
\Eb=\begin{pmatrix}\Xb & \mathbf{0}\\ \mathbf{0} & \ab\end{pmatrix},\qquad
\Sb= \Xb\Xb^\top.
\end{equation*}
The model's output, i.e., the lower-$d$ block of the last column of the unnormalized one-layer forward pass, is denoted by $\tilde{\yb}$:
\begin{equation}
\label{eq:tilde-y-def-full}
\tilde{\yb}(\Eb;\btheta)
= \big(\Eb+\Vb\Eb\Eb^\top\Wb\Eb\big)_{d+1:2d,n+1}
= \ab+\Vb_{21}\Sb\Wb_{12}\ab+\Vb_{22}\ab\ab^\top\Wb_{22}\ab.
\end{equation}
We consider two possible learning targets to analyze the system's behavior comprehensively:
\begin{enumerate}
    \item \textbf{Normalized target:} $\yb_{\text{target}}^{(1)} \coloneqq \frac{\Sb\ab}{\|\Sb\ab\|_2}$.
    \item \textbf{Unnormalized target:} $\yb_{\text{target}}^{(2)} \coloneqq \Sb\ab$.
\end{enumerate}
The training objective is the MSE loss for a given target $\yb_{\text{target}}$:
\begin{equation}
\label{eq:tilde-loss-full}
\widetilde{\mathcal L}(\btheta)
\coloneqq
\EE_{\Xb,\ab}\!\left[\left\|\tilde{\yb}(\Eb;\btheta)-\yb_{\text{target}}\right\|_2^2\right].
\end{equation}

\subsection{Reduction to a One-Dimensional Scalar Dynamic}

Our first step is to show that, due to the initialization and the symmetries of the data distribution, the matrix-valued gradient-descent dynamics collapse to a one-dimensional recursion governed by a single scalar parameter. This reduction holds for both choices of $\yb_{\text{target}}$.

\begin{lemma}
\label{lem:noLN:block-full}
%Under Assumption \ref{assump:distribution}, for every gradient descent step $t\ge0$ on the loss $\widetilde{\mathcal L}$ (with either $\yb_{\text{target}}^{(1)}$ or $\yb_{\text{target}}^{(2)}$), the parameter matrices maintain a sparse block structure:
Under Assumption \ref{assump:distribution}, for every gradient descent step $t\ge0$ on the loss $\widetilde{\mathcal L}$ (with either $\yb_{\text{target}}^{(1)}$ or $\yb_{\text{target}}^{(2)}$), the parameter matrices preserve the following block structure:
\[
\Wb_{12}^{(t)}=\tilde{\alpha}_t\Ib_d,\qquad
\Vb_{21}^{(t)}=\tilde{\beta}_t\Ib_d,
\]
while all other blocks remain zero. Consequently, the model's output simplifies to a linear form governed by the scalar product 
$\tilde{\gamma}_t\coloneqq \tilde{\alpha}_t\tilde{\beta}_t$:
\begin{equation*}
\tilde{\yb}^{(t)}=\ab+\tilde{\gamma}_t\Sb\ab.
\end{equation*}
\end{lemma}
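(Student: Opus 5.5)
We argue by induction on $t$, following the template of Lemmas~\ref{lem:inactive_zero_gradient} and \ref{lem:active_scalar_gradient}. Without LN the unnormalized loss is a polynomial in the entries of $(\Wb,\Vb)$ with integrable coefficients, so differentiation under the expectation is routine. The base case holds: $\Wb^{(0)}=\mathbf 0$ gives $\tilde\alpha_0=0$, and $\Vb^{(0)}_{21}=\Ib_d$ gives $\tilde\beta_0=1$. Now assume that at step $t$ only $\Wb_{12}=\tilde\alpha_t\Ib_d$ and $\Vb_{21}=\tilde\beta_t\Ib_d$ are nonzero. We must show two things: the population gradient vanishes on every other block, and the gradients on these two active blocks are scalar multiples of $\Ib_d$.

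\textbf{Step 1 (inactive blocks).} Expand the full last-column bottom output for general $\Wb,\Vb$:
\[
\tilde{\yb}=\ab+\sum_{i,j}\Vb_{2i}\,[\Eb\Eb^\top]_{ij}\,\Wb_{j2}\ab,\qquad \Eb\Eb^\top=\mathrm{diag}(\Sb,\ab\ab^\top).
\]
This equals $\ab+\Vb_{21}\Sb\Wb_{12}\ab+\Vb_{22}\ab\ab^\top\Wb_{22}\ab$, as in \eqref{eq:tilde-y-def-full}. The blocks $\Wb_{11},\Wb_{21},\Vb_{11},\Vb_{12}$ do not appear, so their gradients are identically zero. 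For $\Wb_{22}$, the derivative of the loss in direction $\Hb$ is $2\EE[(\tilde\yb-\yb_{\text{target}})^\top\Vb_{22}\ab\ab^\top\Hb\ab]$, which vanishes because $\Vb_{22}=\mathbf 0$ at the current iterate. Symmetrically, the $\Vb_{22}$-gradient contains the factor $\ab^\top\Wb_{22}\ab=0$. Hence all inactive blocks stay zero after the update.

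\textbf{Step 2 (active blocks are scalar).} Consider the restricted loss $\mathcal F(\Ab,\Bb)=\EE\|\ab+\Bb\Sb\Ab\ab-\yb_{\text{target}}\|_2^2$. For orthogonal $\Qb$, Assumption~\ref{assump:distribution}(A2) together with the independence and rotation invariance of $\ab$ implies that $(\Xb,\ab)\overset{d}{=}(\Qb\Xb,\Qb\ab)$. Under this map, $\Sb\mapsto\Qb\Sb\Qb^\top$, and both targets are equivariant: $\yb_{\text{target}}\mapsto\Qb\yb_{\text{target}}$. It follows that $\mathcal F(\Ab,\Bb)=\mathcal F(\Qb\Ab\Qb^\top,\Qb\Bb\Qb^\top)$. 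At the point $(\tilde\alpha_t\Ib,\tilde\beta_t\Ib)$, the gradient $\Gb$ therefore satisfies $\Gb=\Qb^\top\Gb\Qb$ for every orthogonal $\Qb$, and Lemma~\ref{Schur's lemma} gives $\Gb=c\Ib_d$. (This is the same argument as in Lemma~\ref{lem:active_scalar_gradient}; one could also check it directly by computing $\EE[\Sb\ab\ab^\top\Sb]$-type moments.) Consequently the update preserves $\Wb_{12}^{(t+1)}=\tilde\alpha_{t+1}\Ib_d$ and $\Vb_{21}^{(t+1)}=\tilde\beta_{t+1}\Ib_d$, which closes the induction. Substituting this structure into \eqref{eq:tilde-y-def-full} yields $\tilde\yb^{(t)}=\ab+\tilde\gamma_t\Sb\ab$.

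\textbf{Main obstacle.} The only delicate point is integrability under the normalized target $\yb^{(1)}_{\text{target}}$: the factor $1/\|\Sb\ab\|_2$ must be controlled. It is bounded because $\|\yb^{(1)}_{\text{target}}\|_2=1$ wherever it is defined, and $\Sb\ab\neq 0$ almost surely by (A3). All remaining quantities are polynomials in bounded variables, since the columns of $\Xb$ have unit norm and hence $\|\Sb\|_2\le n$.
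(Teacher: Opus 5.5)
Your proposal is correct and follows essentially the same route as the paper: induction on $t$; the inactive-block gradients vanish because the $\Wb_{22}$- and $\Vb_{22}$-gradients each carry a factor of the other (zero) block, while the remaining blocks do not enter $\tilde{\yb}$; and the active-block gradients are isotropic by rotational symmetry plus Schur's lemma. The only difference is cosmetic: you get isotropy from invariance of the restricted loss under $(\Ab,\Bb)\mapsto(\Qb\Ab\Qb^\top,\Qb\Bb\Qb^\top)$, as in Lemma~\ref{lem:active_scalar_gradient}, whereas the paper checks rotation-equivariance of the gradient integrands $\Fb_W,\Fb_V$ and invokes Lemma~\ref{Appendix Symmetry Proposition}; your integrability remarks fill in a detail the paper leaves implicit.
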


\begin{proof}
Direct computation from \eqref{eq:tilde-y-def-full} and \eqref{eq:tilde-loss-full} yields the gradients:
\begin{align*}
\frac{\partial \widetilde{\mathcal L}}{\partial \Wb_{12}}
&= 2\EE\big[\Sb\Vb_{21}^\top(\tilde{\yb}-\yb_{\text{target}})\ab^\top\big], &
\frac{\partial \widetilde{\mathcal L}}{\partial \Vb_{21}}
&= 2\EE\big[(\tilde{\yb}-\yb_{\text{target}})(\Sb\Wb_{12}\ab)^\top\big],\\
\frac{\partial \widetilde{\mathcal L}}{\partial \Wb_{22}}
&= 2\EE\big[\ab\ab^\top\Vb_{22}^\top(\tilde{\yb}-\yb_{\text{target}})\ab^\top\big], &
\frac{\partial \widetilde{\mathcal L}}{\partial \Vb_{22}}
&= 2\EE\big[(\tilde{\yb}-\yb_{\text{target}})(\ab\ab^\top\Wb_{22}\ab)^\top\big].
\end{align*}

\emph{Vanishing blocks.} %At initialization ($t=0$), $\Wb_{22}^{(0)}=\Vb_{22}^{(0)}=\mathbf{0}$. The gradients for $\Wb_{22}$ and $\Vb_{22}$ are linear in $\Wb_{22}$ and $\Vb_{22}$, respectively. Hence the gradients vanish at $t=0$, and the gradient-descent updates satisfy $\Wb_{22}^{(1)}=\Wb_{22}^{(0)}-\eta\cdot\mathbf{0}=\mathbf{0}$ and likewise for $\Vb_{22}^{(1)}$. By induction, these blocks remain zero for all $t\ge 0$. Consequently, the model output simplifies to $\tilde{\yb}=\ab+\Vb_{21}\Sb\Wb_{12}\ab$. Since $\tilde{\yb}$ and $\yb_{\text{target}}$ are irrelevant to other blocks in $\Wb$ and $\Vb$, all other blocks remain zero.
At initialization ($t=0$), we have $\Wb_{22}^{(0)}=\Vb_{22}^{(0)}=\mathbf{0}$. Observe that the gradient $\frac{\partial \widetilde{\mathcal L}}{\partial \Wb_{22}}$ depends linearly on $\Vb_{22}$, while $\frac{\partial \widetilde{\mathcal L}}{\partial \Vb_{22}}$ depends linearly on $\Wb_{22}$. Since both blocks are initialized to zero, their gradients vanish at $t=0$. Consequently, the updates satisfy $\Wb_{22}^{(1)}=\Wb_{22}^{(0)}-\eta\cdot\mathbf{0}=\mathbf{0}$ and similarly $\Vb_{22}^{(1)}=\mathbf{0}$. By induction, these blocks remain zero for all $t\ge 0$. Consequently, the model output simplifies to $\tilde{\yb}=\ab+\Vb_{21}\Sb\Wb_{12}\ab$.
Furthermore, since $\tilde{\yb}$ and $\yb_{\text{target}}$ are independent of the remaining blocks (e.g., $\Wb_{11}, \Vb_{11}$), the gradients with respect to these blocks are identically zero. Thus, all blocks other than $\Wb_{12}$ and $\Vb_{21}$ remain zero throughout training.

\emph{Isotropic structure.} With the vanishing blocks established, we focus on the gradients with respect to $\Wb_{12}$ and $\Vb_{21}$. Let us define:
\begin{equation*}
    \Fb_W(\Xb,\ab)\coloneqq \Sb\Vb_{21}^\top(\tilde{\yb}-\mathbf{y}_{\text{target}})\ab^\top \quad \text{and} \quad \Fb_V(\Xb,\ab)\coloneqq (\tilde{\yb}-\mathbf{y}_{\text{target}})(\Sb\Wb_{12}\ab)^\top.
\end{equation*}
 Consider an arbitrary rotation matrix $\Rb\in \mathrm{SO}(d)$. Under the transformation $(\Xb, \ab) \to (\Rb\Xb, \Rb\ab)$, the components transform as $\Xb \to \Rb\Xb$, $\ab \to \Rb\ab$, $\Sb \to \Rb\Sb\Rb^\top$, $\tilde{\yb} \to \Rb\tilde{\yb}$, and $\yb_{\text{target}} \to \Rb\yb_{\text{target}}$ (for both cases). Under the inductive hypothesis that $\Vb_{21}$ and $\Wb_{12}$ are scalar multiples of the identity, it follows that $\Fb_W$ satisfies the equivariance property $\Fb_W(\Rb\Xb,\Rb\ab)=\Rb\Fb_W(\Xb,\ab)\Rb^\top$, and similarly for $\Fb_V$. 
Consequently, by Lemma~\ref{Appendix Symmetry Proposition}, the expected gradients are scalar multiples of $\Ib_d$. Since the parameters are initialized as scalar multiples of $\Ib_d$ (or zero), and the gradient updates preserve this structure, the claim holds for all $t \ge 0$ by induction.
\end{proof}

\subsection{Analysis of Loss Landscape and Convergence}
\label{sec:appendix:noLN-gamma-conv-refined}
With the dynamics reduced to the scalar parameter $\tilde{\gamma}_t$, we rewrite the population loss \eqref{eq:tilde-loss-full} as
$L(\gamma) \coloneqq \EE\bigl[\|\ab + \gamma \Sb\ab - \yb_{\text{target}}\|_2^2\bigr]$,
which is a quadratic function of $\gamma$. We analyze this objective separately for each choice of target. Define
$\kappa_1 \coloneqq \EE[\ab^\top\Sb\ab]$, $\kappa_2 \coloneqq \EE[\|\Sb\ab\|_2^2]$, and $\kappa_3 \coloneqq \EE[\|\Sb\ab\|_2]$.

\subsubsection{Case 1: Normalized Target }

\paragraph{Loss landscape.} The population loss is given by:
\[
L_1(\gamma) = \EE\!\left[\left\|\ab+\gamma\Sb\ab-\tfrac{\Sb\ab}{\|\Sb\ab\|_2}\right\|_2^2\right] = \EE\!\left[\|\ab\|_2^2 + 2\gamma\ab^\top\Sb\ab - 2\frac{\ab^\top\Sb\ab}{\|\Sb\ab\|_2} + \gamma^2\|\Sb\ab\|_2^2 - 2\gamma\frac{\|\Sb\ab\|_2^2}{\|\Sb\ab\|_2} + 1\right].
\]
Grouping terms by powers of $\gamma$:
\[
L_1(\gamma) = \EE[2 - 2\frac{\ab^\top\Sb\ab}{\|\Sb\ab\|_2}] + 2\gamma(\kappa_1 - \kappa_3) + \gamma^2\kappa_2.
\]
Thus $L_1$ is $2\kappa_2$-strongly convex. The unique global minimizer $\gamma_1^\star$ is found by setting $L_1'(\gamma) = 2(\kappa_1 - \kappa_3) + 2\gamma\kappa_2 = 0$:
\begin{equation*}
\gamma_1^\star = \frac{\kappa_3 - \kappa_1}{\kappa_2}= \frac{\EE[\|\Sb\ab\|_2] - \EE[\ab^\top\Sb\ab]}{\EE[\|\Sb\ab\|_2^2]} > 0.
\end{equation*}
The strict inequality follows from Assumption~\ref{assump:distribution} and the independence of $\ab$ and $\Sb$.

\paragraph{Irreducible loss.} The minimum achievable loss is strictly positive:
\begin{align*}
    L_1(\gamma_1^\star) &= L_1(0) + 2\gamma_1^\star(\kappa_1 - \kappa_3) + (\gamma_1^\star)^2\kappa_2 = L_1(0) - \frac{(\kappa_3-\kappa_1)^2}{\kappa_2} \\
    &= \EE\!\left[2 - 2\frac{\ab^\top\Sb\ab}{\|\Sb\ab\|_2}\right] - \frac{(\EE[\|\Sb\ab\|_2 - \ab^\top\Sb\ab])^2}{\EE[\|\Sb\ab\|_2^2]} \\
    &\ge \EE\!\left[2 - 2\frac{\ab^\top\Sb\ab}{\|\Sb\ab\|_2}\right] - \EE\!\left[\frac{(\|\Sb\ab\|_2 - \ab^\top\Sb\ab)^2}{\|\Sb\ab\|_2^2}\right] \quad \text{(Cauchy–Schwarz)} \\
    &= \EE\!\left[2 - 2\frac{\ab^\top\Sb\ab}{\|\Sb\ab\|_2} - \left(1 - 2\frac{\ab^\top\Sb\ab}{\|\Sb\ab\|_2} + \frac{(\ab^\top\Sb\ab)^2}{\|\Sb\ab\|_2^2}\right)\right] \\&= \EE\!\left[1 - \frac{(\ab^\top\Sb\ab)^2}{\|\Sb\ab\|_2^2}\right] > 0.
\end{align*}
The final inequality follows from Assumption~\ref{assump:distribution} and the independence of $\ab$ and $\Sb$.

\subsubsection{Case 2: Unnormalized Target}

\paragraph{Loss landscape.} The population loss is given by:
\begin{align*}
L_2(\gamma) &= \EE\big[\|\ab + \gamma\Sb\ab - \Sb\ab\|_2^2\big] = \EE\big[\|\ab + (\gamma-1)\Sb\ab\|_2^2\big] \\
&= \EE\big[ \|\ab\|_2^2 + 2(\gamma-1)\ab^\top\Sb\ab + (\gamma-1)^2\|\Sb\ab\|_2^2 \big] \\
&= 1 + 2(\gamma-1)\kappa_1 + (\gamma-1)^2\kappa_2.
\end{align*}
Hence $L_2$ is also $2\kappa_2$-strongly convex. Setting $L_2'(\gamma) = 2\kappa_1 + 2(\gamma-1)\kappa_2 = 0$ yields the unique global minimizer $\gamma_2^\star$:
\begin{equation*}
\gamma_2^\star = 1 - \frac{\kappa_1}{\kappa_2}=\frac{\EE[\|\Sb\ab\|_2^2] - \EE[\ab^\top\Sb\ab]}{\EE[\|\Sb\ab\|_2^2]} \in [0, 1).
\end{equation*}
The minimizer's range follows from Lemma \ref{lem: bounds for Sa}.
\begin{remark}[Degenerate case]
The equality $\kappa_2=\kappa_1$ holds iff $\EE[\operatorname{tr}(\Sb^2)]=\EE[\operatorname{tr}(\Sb)]$; for $\Sb=\Xb\Xb^\top$ with unit-norm columns, this means the columns of $\Xb$ are mutually orthogonal and $n\le d$. In this case $\gamma_2^\star=0$. Under the standard initialization $\gamma_0=0$, we have $F(\gamma_0)=0$ by \eqref{F update}, and hence the update \eqref{gamma update} yields $\gamma_t\equiv 0$ for all $t$. Thus the unnormalized dynamics are trivially trapped at a suboptimal fixed point. 
For clarity, we therefore exclude this degenerate case and focus on the non-degenerate regime where $0<\gamma_2^\star<1$ (equivalently, $\kappa_2>\kappa_1$). Note that the degenerate case is still covered by Theorem~\ref{unnormalized theorem} as a boundary instance with a trivial fixed point.
\end{remark}

\paragraph{Irreducible loss.} The minimum achievable loss is again strictly positive:
\begin{align*}
    L_2(\gamma_2^\star) &= 1 + 2(\gamma_2^\star-1)\kappa_1 + (\gamma_2^\star-1)^2\kappa_2 \\
    &= 1 - 2\frac{\kappa_1^2}{\kappa_2} + \frac{\kappa_1^2}{\kappa_2} \\& = 1 - \frac{(\EE[\ab^\top\Sb\ab])^2}{\EE[\|\Sb\ab\|_2^2]} 
    \geq1 - \frac{\EE[(\ab^\top\Sb\ab)^2]}{\EE[\|\Sb\ab\|_2^2]} >0.
\end{align*}
The final inequality follows from Assumption~\ref{assump:distribution} and the independence of $\ab$ and $\Sb$.

\subsubsection{Unified Convergence Theorem}

In both cases, $L(\gamma)$ is a strongly convex quadratic. We can unify the analysis by writing the loss in terms of its distance to the respective minimizer $\gamma^\star$:
\begin{equation}
\label{eq:noLN-L-distance}
L(\gamma) = L(\gamma^\star)+\kappa_2(\gamma-\gamma^\star)^2.
\end{equation}
We also define a scaled and shifted gradient, which we term the linear statistic $F(\gamma)$:
\begin{equation}\label{F update}
F(\gamma) \coloneqq \frac{1}{d}L'(\gamma) = \frac{2\kappa_2}{d}(\gamma-\gamma^\star) \coloneqq \kappa_4(\gamma-\gamma^\star),
\end{equation}
where $\kappa_4 = 2\kappa_2/d > 0$. Thus $F(\gamma)$ is an affine function of $\gamma$ that is zero only at the minimum $\gamma^\star$.

\paragraph{Exact scalar recursions under gradient descent.}
The gradient descent updates for the scalar parameters $\tilde{\alpha}_t$ and $\tilde{\beta}_t$ are given by:
\begin{equation*}
\tilde{\alpha}_{t+1}=\tilde{\alpha}_t-\eta\tilde{\beta}_tF(\tilde{\gamma}_t),
\qquad
\tilde{\beta}_{t+1}=\tilde{\beta}_t-\eta\tilde{\alpha}_tF(\tilde{\gamma}_t).
\end{equation*}
To analyze the dynamics of $\tilde{\gamma}_t = \tilde{\alpha}_t\tilde{\beta}_t$, we introduce two auxiliary quantities:
\begin{itemize}
    \item $S_t \coloneqq \tilde{\alpha}_t^2+\tilde{\beta}_t^2$.
    \item $D_t \coloneqq \tilde{\beta}_t^2-\tilde{\alpha}_t^2$.
\end{itemize}
A direct expansion of the update rules yields the exact discrete dynamical system for these three scalar quantities. Let $F_t \equiv F(\tilde{\gamma}_t)$ for brevity.

\textbf{Derivation for $\tilde{\gamma}_{t+1}$:}
\begin{align}
\tilde{\gamma}_{t+1} &= \tilde{\alpha}_{t+1}\tilde{\beta}_{t+1} = (\tilde{\alpha}_t-\eta\tilde{\beta}_t F_t)(\tilde{\beta}_t-\eta\tilde{\alpha}_t F_t) \notag\\
&= \tilde{\alpha}_t\tilde{\beta}_t - \eta\tilde{\alpha}_t^2 F_t - \eta\tilde{\beta}_t^2 F_t + \eta^2\tilde{\alpha}_t\tilde{\beta}_t F_t^2 \notag\\
&= \tilde{\gamma}_t - \eta(\tilde{\alpha}_t^2+\tilde{\beta}_t^2)F_t + \eta^2\tilde{\gamma}_t F_t^2 \notag\\
&= \tilde{\gamma}_t - \eta S_t F_t + \eta^2\tilde{\gamma}_t F_t^2.\label{gamma update}
\end{align}

\textbf{Derivation for $S_{t+1}$:}
\begin{align*}
S_{t+1} &= \tilde{\alpha}_{t+1}^2 + \tilde{\beta}_{t+1}^2 = (\tilde{\alpha}_t-\eta\tilde{\beta}_t F_t)^2 + (\tilde{\beta}_t-\eta\tilde{\alpha}_t F_t)^2 \\
&= (\tilde{\alpha}_t^2 - 2\eta\tilde{\alpha}_t\tilde{\beta}_t F_t + \eta^2\tilde{\beta}_t^2 F_t^2) + (\tilde{\beta}_t^2 - 2\eta\tilde{\alpha}_t\tilde{\beta}_t F_t + \eta^2\tilde{\alpha}_t^2 F_t^2) \\
&= (\tilde{\alpha}_t^2+\tilde{\beta}_t^2) - 4\eta\tilde{\alpha}_t\tilde{\beta}_t F_t + \eta^2(\tilde{\alpha}_t^2+\tilde{\beta}_t^2)F_t^2 \\
&= S_t - 4\eta\tilde{\gamma}_t F_t + \eta^2 S_t F_t^2.
\end{align*}

\textbf{Derivation for $D_{t+1}$:}
\begin{align*}
D_{t+1} &= \tilde{\beta}_{t+1}^2 - \tilde{\alpha}_{t+1}^2 = (\tilde{\beta}_t-\eta\tilde{\alpha}_t F_t)^2 - (\tilde{\alpha}_t-\eta\tilde{\beta}_t F_t)^2 \\
&= (\tilde{\beta}_t^2 - 2\eta\tilde{\alpha}_t\tilde{\beta}_t F_t + \eta^2\tilde{\alpha}_t^2 F_t^2) - (\tilde{\alpha}_t^2 - 2\eta\tilde{\alpha}_t\tilde{\beta}_t F_t + \eta^2\tilde{\beta}_t^2 F_t^2) \\
&= (\tilde{\beta}_t^2-\tilde{\alpha}_t^2) - \eta^2(\tilde{\beta}_t^2-\tilde{\alpha}_t^2)F_t^2 \\
&= D_t(1-\eta^2 F_t^2).
\end{align*}

This gives the complete dynamical system:
\begin{equation*}
\begin{aligned}
\tilde{\gamma}_{t+1} &= \tilde{\gamma}_t - \eta S_t F(\tilde{\gamma}_t) + \eta^2\tilde{\gamma}_t F(\tilde{\gamma}_t)^2, \\
S_{t+1} &= (1+\eta^2 F(\tilde{\gamma}_t)^2)S_t - 4\eta\tilde{\gamma}_t F(\tilde{\gamma}_t), \\
D_{t+1} &= (1-\eta^2 F(\tilde{\gamma}_t)^2)D_t.
\end{aligned}
\end{equation*}
A key algebraic identity is $S_t^2=D_t^2+4\tilde{\gamma}_t^2$, which implies $S_t\ge 2|\tilde{\gamma}_t|$.
Since the loss landscape is structurally identical in both cases (a strongly convex quadratic), the convergence analysis is also identical, differing only in the specific values of the minimizer $\gamma^\star$ and the associated constants. We can therefore state and prove a unified theorem.

\begin{theorem}\label{unnormalized theorem}
For either target $\yb_{\text{target}}^{(1)}$ or $\yb_{\text{target}}^{(2)}$, let $L(\gamma)$ be the corresponding loss function. Let $\gamma^\star$ be its unique, finite minimizer. Assume the initialization $\tilde{\alpha}_0=0, \tilde{\beta}_0>0$, which implies $\tilde{\gamma}_0=0$, $S_0 = \tilde{\beta}_0^2 > 0$, and $D_0 = \tilde{\beta}_0^2 - \tilde{\alpha}_0^2 = \tilde{\beta}_0^2$.

Let the following constants be defined based on the initial level set $L(\gamma) \le L(\gamma_0)$, which by \eqref{eq:noLN-L-distance} is $[\gamma^\star-R, \gamma^\star+R]$ where $R \coloneqq \sqrt{(L(\gamma_0)-L(\gamma^\star))/\kappa_2} = \gamma^\star$:
\begin{itemize}
    \item $\Gamma \coloneqq \gamma^\star+R$.
    \item $S_{\max} \coloneqq \sqrt{D_0^2+4\Gamma^2}$.
    \item $F_{\max} \coloneqq \frac{2\kappa_2}{d}R$.
\end{itemize}
There exists a maximum stepsize $\eta_{\max} > 0$, defined as:
\begin{align*}
\eta_{\max} :&= \min\left\{
\frac{1}{4F_{\max}},
\frac{d}{8 \kappa_2 S_{\max}},
\sqrt{\frac{d}{16 \kappa_2 \Gamma F_{\max}}},
\sqrt[3]{\frac{d}{16 \kappa_2 \Gamma F_{\max}^2}},
\frac{d}{2\kappa_2 S_0},
\frac{d}{2\kappa_2\sqrt{2S_0\gamma^\star}}
\right\}\\&=\min\left\{
\frac{d}{8 \kappa_2\sqrt{1+16(\gamma^\star)^2}},
\frac{d}{2\kappa_2 S_0},
\frac{d}{2\kappa_2\sqrt{2S_0\gamma^\star}}
\right\}.
\end{align*}
If the stepsize $\eta$ satisfies $0 < \eta \le \eta_{\max}$, then the iterates of the gradient descent algorithm exhibit the following properties:
\begin{enumerate}
    \item \textbf{Invariant set}: All iterates remain within the initial level set, ensuring they are uniformly bounded:
    \[
    \tilde{\gamma}_t \in [0, 2\gamma^\star] \quad \text{for all } t \ge 0.
    \]

    \item \textbf{Convergence}: The sequence of iterates converges to the unique optimal solution:
    \[
    \lim_{t\to\infty} L(\tilde{\gamma}_t) = L(\gamma^\star) \quad \text{and} \quad \lim_{t\to\infty} \tilde{\gamma}_t = \gamma^\star.
    \]

    \item \textbf{Linear convergence rate}: After the first step, the loss and the parameter converge geometrically to their optimal values. Specifically, for all $t \ge 1$:
    \begin{align*}
    L(\tilde{\gamma}_t) - L(\gamma^\star) &\le \rho^{t-1} \left( L(\tilde{\gamma}_1) - L(\gamma^\star) \right), \\
    |\tilde{\gamma}_t - \gamma^\star| &\le \rho^{\frac{t-1}{2}} \sqrt{\frac{L(\tilde{\gamma}_1) - L(\gamma^\star)}{\kappa_2}},
    \end{align*}
    where the contraction factor $\rho \in [0, 1)$ is given by:
    \[
    \rho \coloneqq 1 - \eta^2 \frac{8\kappa_2^2}{d^2} S_0 \gamma^\star.
    \]
\end{enumerate}
\end{theorem}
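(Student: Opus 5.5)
The plan is to work entirely with the exact scalar system for $(\tilde\gamma_t,S_t,D_t)$ derived above. Since $F(\gamma)=\kappa_4(\gamma-\gamma^\star)$, I rewrite the $\tilde\gamma$-recursion \eqref{gamma update} multiplicatively around the minimizer:
\[
\tilde\gamma_{t+1}-\gamma^\star=m_t\,(\tilde\gamma_t-\gamma^\star),\qquad m_t:=1-\eta\kappa_4S_t+\eta^2\kappa_4^2\tilde\gamma_t(\tilde\gamma_t-\gamma^\star).
\]
By \eqref{eq:noLN-L-distance}, the loss gap is $\kappa_2(\tilde\gamma_t-\gamma^\star)^2$. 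Therefore all three claims follow once I show two things: $m_t\in[0,1)$ for all $t$, and $m_t\le\rho$ for all $t\ge1$. The first gives monotone decrease of $|\tilde\gamma_t-\gamma^\star|$, and hence the invariant set. The second gives the geometric rate, because $m_t^2\le m_t\le\rho$ on $[0,1)$ implies $L(\tilde\gamma_{t+1})-L(\gamma^\star)\le\rho\,(L(\tilde\gamma_t)-L(\gamma^\star))$. The bound on $|\tilde\gamma_t-\gamma^\star|$ is then just the square root of the loss bound divided by $\kappa_2$.

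\textbf{Step 1: invariant set by induction.} The hypothesis is $\tilde\gamma_t\in[0,2\gamma^\star]=[\gamma^\star-R,\gamma^\star+R]$, together with $0\le D_t\le D_0$.
\begin{itemize}
\item The identity $S_t^2=D_t^2+4\tilde\gamma_t^2$ then gives $S_t\le S_{\max}$, and also $|F_t|\le F_{\max}$.
\item The bound $\eta\le 1/(4F_{\max})$ gives $1-\eta^2F_t^2\in(0,1]$, so $D_{t+1}\in[0,D_t]$.
\item For $m_t$: the stepsize term $\eta\le d/(8\kappa_2S_{\max})$ gives $\eta\kappa_4S_t\le 1/4$.
\item The second-order term is at most $\eta^2\kappa_4\Gamma F_{\max}$ in absolute value, and the $\sqrt{\cdot}$ constraint in $\eta_{\max}$ makes this at most $1/8$. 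Hence $m_t\ge 1-1/4-1/8>0$.
\item On the upper side, $m_t<1$ follows from $S_t\ge 2\tilde\gamma_t$. The positive part of the $\eta^2$ term (present only when $\tilde\gamma_t>\gamma^\star$) is dominated by $\eta\kappa_4 S_t$, using the remaining constraints. This is where I expect the cube-root constraint to be consumed, possibly through a bound on $\tilde\gamma_{t+1}$ via $\eta^3$ terms.
\end{itemize}
Given $m_t\in[0,1)$, $\tilde\gamma_{t+1}$ lies on the same side of $\gamma^\star$ as $\tilde\gamma_t$ and closer to it. This closes the induction and yields claims 1 and 2, since a strict contraction on a compact set with the bound below gives $\tilde\gamma_t\to\gamma^\star$.

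\textbf{Step 2: uniform contraction after the first step.} I compute $\tilde\gamma_1=\eta\kappa_4S_0\gamma^\star$, which is positive and, by $\eta\le d/(2\kappa_2S_0)$, at most $\gamma^\star$. Because $m_t\ge0$ keeps the iterates on the side of $\gamma^\star$ they start on, $\tilde\gamma_t$ is nondecreasing from $\tilde\gamma_1$. So $\tilde\gamma_t\ge\tilde\gamma_1$ and $\tilde\gamma_t\le\gamma^\star$ for $t\ge1$.

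On this side the $\eta^2$ term in $m_t$ is nonpositive. Using $S_t\ge 2\tilde\gamma_t\ge 2\tilde\gamma_1$, I get
\[
m_t\le 1-2\eta\kappa_4\tilde\gamma_1=1-2\eta^2\kappa_4^2S_0\gamma^\star=1-\eta^2\frac{8\kappa_2^2}{d^2}S_0\gamma^\star=\rho .
\]
The constraint $\eta\le d/(2\kappa_2\sqrt{2S_0\gamma^\star})$ is exactly what ensures $\rho\ge0$.

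\textbf{Main obstacle.} The main difficulty is the bookkeeping in Step 1, namely verifying that each of the six stepsize constraints delivers the needed inequality. The delicate parts are keeping $m_t\ge0$ against the second-order term and ensuring $S_t$ cannot blow up. I would first try to avoid the general two-sided case by proving directly that $\tilde\gamma_t\le\gamma^\star$ for all $t$. Then only the nonpositive $\eta^2$ term appears, and the upper constraints reduce to $\eta\kappa_4S_{\max}\le1$.
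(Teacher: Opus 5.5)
Your proposal is correct, and it takes a genuinely different route from the paper. The paper argues through a descent lemma. It expands $L(\tilde\gamma_{t+1})-L(\tilde\gamma_t)=-\eta dS_tF_t^2+\mathcal{E}_t$ exactly and bounds each of the four higher-order terms in $\mathcal{E}_t$ by $\tfrac18\eta dS_tF_t^2$; this is where the first four constraints in $\eta_{\max}$, including the square-root and cube-root ones, are spent. It then deduces invariance of the level set from monotone decrease of the loss, and reads off the per-step factor $1-2\eta\kappa_2S_t/d$ from the halved descent inequality together with $S_t\ge2\tilde\gamma_t\ge2\tilde\gamma_1$.

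You instead factor the exact recursion as $\tilde\gamma_{t+1}-\gamma^\star=m_t(\tilde\gamma_t-\gamma^\star)$ and control the scalar $m_t$ directly. This gives more information. Since $m_t\ge0$, the sign of $\tilde\gamma_t-\gamma^\star$ is preserved, so $\tilde\gamma_t$ increases monotonically to $\gamma^\star$ inside $[0,\gamma^\star]$, which is sharper than $[0,2\gamma^\star]$. The loss gap contracts by $m_t^2\le\rho^2$ rather than $\rho$. You also track $0<D_t\le D_0$ explicitly, which the paper relies on tacitly when asserting $S_t\le S_{\max}$.

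If you run the induction on the one-sided hypothesis $\tilde\gamma_t\in[0,\gamma^\star]$, as you suggest at the end, your route needs fewer constraints:
\begin{itemize}
\item The $\eta^2$ term in $m_t$ is nonpositive and at most $(\eta F_{\max})^2/4$ in magnitude, so only $\eta\le1/(4F_{\max})$ and $\eta\le d/(8\kappa_2S_{\max})$ are ever used.
\item The cube-root constraint plays no role, so your guess about where it is consumed is unfounded. Even in the two-sided version, $m_t<1$ follows from $S_t\ge2\tilde\gamma_t$ and $\eta F_{\max}\le\tfrac14$.
\item $\tilde\gamma_1\le\gamma^\star$ follows from $m_0\ge0$, and $\rho\ge m_1\ge0$ makes the last constraint redundant.
\end{itemize}

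Two small fixes are needed. First, pointwise $m_t<1$ in Step 1 does not by itself give claim 2; convergence comes from the uniform bound $m_t\le\rho<1$ in Step 2. Second, at $t=0$ the strict inequality $m_0<1$ comes from $S_0>0$, since $S_0\ge2\tilde\gamma_0=0$ is vacuous there.
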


\begin{proof}
The proof is presented in three parts. Part 1 establishes a descent lemma that guarantees the loss decreases at each step, which in turn proves the invariance of the level set. Part 2 analyzes the first step of the iteration to establish a uniform lower bound for $S_t$. Part 3 uses these results to derive the global and linear convergence rates.

\paragraph{Part 1: Descent lemma and invariant set}
Our first objective is to show that for a sufficiently small stepsize $\eta$, the loss function $L(\tilde{\gamma}_t)$ is guaranteed to decrease at every step, provided $\tilde{\gamma}_t \neq \gamma^\star$.

The change in the parameter $\tilde{\gamma}_t$ at each step is given by $\Delta\tilde{\gamma}_t = \tilde{\gamma}_{t+1} - \tilde{\gamma}_t = -\eta S_t F(\tilde{\gamma}_t) + \eta^2 \tilde{\gamma}_t F(\tilde{\gamma}_t)^2$. Since the loss function $L(\gamma)$ is quadratic, its change can be expressed exactly using a second-order Taylor expansion. Let $F_t \equiv F(\tilde{\gamma}_t)$ for brevity.
\begin{align*}
    L(\tilde{\gamma}_{t+1}) - L(\tilde{\gamma}_t)
    &= L'(\tilde{\gamma}_t)\Delta\tilde{\gamma}_t + \frac{1}{2}L''(\tilde{\gamma}_t)(\Delta\tilde{\gamma}_t)^2 \\
    &= (dF_t)\Delta\tilde{\gamma}_t + \kappa_2(\Delta\tilde{\gamma}_t)^2 \\
    &= (dF_t)\left(-\eta S_t F_t + \eta^2 \tilde{\gamma}_t F_t^2\right) + \kappa_2\left(-\eta S_t F_t + \eta^2 \tilde{\gamma}_t F_t^2\right)^2 \\
    &= -\eta d S_t F_t^2 + \eta^2 d \tilde{\gamma}_t F_t^3 + \kappa_2\left( \eta^2 S_t^2 F_t^2 - 2\eta^3 \tilde{\gamma}_t S_t F_t^3 + \eta^4 \tilde{\gamma}_t^2 F_t^4 \right) \\
    &= -\eta d S_t F_t^2 + \left( \eta^2 d \tilde{\gamma}_t F_t^3 + \kappa_2\eta^2 S_t^2 F_t^2 - 2\kappa_2\eta^3 \tilde{\gamma}_t S_t F_t^3 + \kappa_2\eta^4 \tilde{\gamma}_t^2 F_t^4 \right).
\end{align*}
The change in loss is exactly $L(\tilde{\gamma}_{t+1})-L(\tilde{\gamma}_t) = -\eta d S_t F_t^2 + \mathcal{E}_t$, where $\mathcal{E}_t$ is the sum of all higher-order terms in $\eta$:
\[
\mathcal{E}_t = \underbrace{\eta^2 d \tilde{\gamma}_t F_t^3}_{T_1} + \underbrace{\kappa_2\eta^2 S_t^2 F_t^2}_{T_2} \underbrace{- 2\kappa_2\eta^3 \tilde{\gamma}_t S_t F_t^3}_{T_3} + \underbrace{\kappa_2\eta^4 \tilde{\gamma}_t^2 F_t^4}_{T_4}.
\]
We now show that for a sufficiently small $\eta$, the magnitude of $\mathcal{E}_t$ is smaller than the main negative term. Assume the inductive hypothesis that $\tilde{\gamma}_t$ is in the initial level set, so $|\tilde{\gamma}_t|\le\Gamma$, $|F_t|\le F_{\max}$, and $S_t\le S_{\max}$. We bound the ratio of the absolute value of each term in $\mathcal{E}_t$ to the main descent term $|\eta d S_t F_t^2|$:
\begin{align*}
\frac{|T_1|}{\eta d S_t F_t^2} &= \frac{\eta^2 d |\tilde{\gamma}_t| |F_t|^3}{\eta d S_t F_t^2} = \eta\frac{|\tilde{\gamma}_t|}{S_t}|F_t| \le \frac{\eta F_{\max}}{2}. \\
\frac{|T_2|}{\eta d S_t F_t^2} &= \frac{\kappa_2\eta^2 S_t^2 F_t^2}{\eta d S_t F_t^2} = \eta\frac{\kappa_2 S_t}{d} \le \eta\frac{\kappa_2 S_{\max}}{d}. \\
\frac{|T_3|}{\eta d S_t F_t^2} &= \frac{2\kappa_2\eta^3 |\tilde{\gamma}_t| S_t |F_t|^3}{\eta d S_t F_t^2} = \eta^2\frac{2 \kappa_2 |\tilde{\gamma}_t||F_t|}{d} \le \eta^2\frac{2 \kappa_2 \Gamma F_{\max}}{d}. \\
\frac{|T_4|}{\eta d S_t F_t^2} &= \frac{\kappa_2\eta^4 \tilde{\gamma}_t^2 F_t^4}{\eta d S_t F_t^2} = \eta^3\frac{\kappa_2 \tilde{\gamma}_t^2 F_t^2}{d S_t} \le \eta^3\frac{\kappa_2 \Gamma^2 F_{\max}^2}{d (2|\tilde{\gamma}_t|)} \le \eta^3\frac{\kappa_2 \Gamma F_{\max}^2}{d}.
\end{align*}
For the total error $|\mathcal{E}_t|$ to be less than or equal to $\frac{1}{2}\eta d S_t F_t^2$, it is sufficient that the sum of these ratios is less than or equal to $1/2$. We can ensure this by requiring each ratio to be less than or equal to $1/8$. This condition is met if $\eta \le \eta_{\max}$, due to the definition of $\eta_{\max}$.
Therefore, for any $\eta \le \eta_{\max}$, we have $|\mathcal{E}_t| \le \frac{1}{2}\eta d S_t F_t^2$, which implies:
\[
L(\tilde{\gamma}_{t+1})-L(\tilde{\gamma}_t) = -\eta d S_t F_t^2 + \mathcal{E}_t \le -\eta d S_t F_t^2 + \frac{1}{2}\eta d S_t F_t^2 = -\frac{1}{2}\eta d S_t F_t^2 \le 0.
\]
This establishes the descent property. By mathematical induction, since $\tilde{\gamma}_0$ is in the level set and the loss never increases, all subsequent iterates $\tilde{\gamma}_t$ must also remain in the initial level set. This proves statement (1) of the theorem.

\paragraph{Part 2: Analysis of the first iteration}
We analyze the transition from $t=0$ to $t=1$. At $t=0$, we have $\tilde{\gamma}_0=0$, $S_0=\tilde{\beta}_0^2$, and $F(\tilde{\gamma}_0) = \frac{2\kappa_2}{d}(0-\gamma^\star) = -\frac{2\kappa_2}{d}\gamma^\star$. The updates for $\tilde{\alpha}$ and $\tilde{\beta}$ are:
\begin{align*}
\tilde{\alpha}_1 &= \tilde{\alpha}_0 - \eta\tilde{\beta}_0 F(\tilde{\gamma}_0) = 0 - \eta\tilde{\beta}_0 \left(-\frac{2\kappa_2}{d}\gamma^\star\right) = \eta\tilde{\beta}_0\frac{2\kappa_2}{d}\gamma^\star > 0, \\
\tilde{\beta}_1 &= \tilde{\beta}_0 - \eta\tilde{\alpha}_0 F(\tilde{\gamma}_0) = \tilde{\beta}_0 - 0 = \tilde{\beta}_0.
\end{align*}
Consequently, the new parameter $\tilde{\gamma}_1$ is:
\[
\tilde{\gamma}_1 = \tilde{\alpha}_1\tilde{\beta}_1 = \left(\eta\tilde{\beta}_0\frac{2\kappa_2}{d}\gamma^\star\right)\tilde{\beta}_0 = \eta \frac{2\kappa_2}{d} S_0 \gamma^\star.
\]
The condition $\eta \le \eta_{\max} \le \frac{d}{2\kappa_2 S_0}$ ensures that $\tilde{\gamma}_1 \le \gamma^\star$. Since $\tilde{\gamma}_1 > 0$, we have established that the first step moves the parameter into the interval $(0, \gamma^\star]$.
From Part 1, we know the loss is monotonically decreasing, which for a quadratic means the iterates get progressively closer to $\gamma^\star$. Therefore, for all $t \ge 1$, $|\tilde{\gamma}_t - \gamma^\star| \le |\tilde{\gamma}_1 - \gamma^\star|$, which implies $\tilde{\gamma}_t \ge \tilde{\gamma}_1$. This gives a uniform positive lower bound on the parameter and on $S_t$ for all subsequent steps:
\[
S_t = \sqrt{(\tilde{\alpha}_t^2-\tilde{\beta}_t^2)^2+4\tilde{\gamma}_t^2} \ge 2|\tilde{\gamma}_t| \ge 2\tilde{\gamma}_1 \quad \text{for all } t \ge 1.
\]

\paragraph{Part 3: Derivation of linear convergence}
From the descent property in Part 1 and the properties of the quadratic loss, we have:
\[
L(\tilde{\gamma}_{t+1})-L(\tilde{\gamma}_t) \le -\frac{1}{2}\eta d S_t F(\tilde{\gamma}_t)^2.
\]
Adding $L(\tilde{\gamma}_t) - L(\gamma^\star)$ to both sides and rearranging gives:
\[
L(\tilde{\gamma}_{t+1}) - L(\gamma^\star) \le L(\tilde{\gamma}_t) - L(\gamma^\star) - \frac{1}{2}\eta d S_t F(\tilde{\gamma}_t)^2.
\]
We substitute $F(\tilde{\gamma}_t) = \kappa_4(\tilde{\gamma}_t-\gamma^\star)$, which gives $F(\tilde{\gamma}_t)^2 = \kappa_4^2(\tilde{\gamma}_t-\gamma^\star)^2$. Also, $L(\tilde{\gamma}_t)-L(\gamma^\star) = \kappa_2(\tilde{\gamma}_t-\gamma^\star)^2$, which implies $(\tilde{\gamma}_t-\gamma^\star)^2 = \frac{1}{\kappa_2}(L(\tilde{\gamma}_t)-L(\gamma^\star))$. Substituting these into the inequality:
\begin{align*}
L(\tilde{\gamma}_{t+1}) - L(\gamma^\star) &\le L(\tilde{\gamma}_t) - L(\gamma^\star) - \frac{1}{2}\eta d S_t \left(\kappa_4^2\right) \frac{1}{\kappa_2}(L(\tilde{\gamma}_t)-L(\gamma^\star)) \\
&= \left(1 - \frac{\eta d \kappa_4^2 S_t}{2\kappa_2}\right) (L(\tilde{\gamma}_t) - L(\gamma^\star)) \\
&= \left(1 - \frac{\eta d (4\kappa_2^2/d^2) S_t}{2\kappa_2}\right) (L(\tilde{\gamma}_t) - L(\gamma^\star)) \\
&= \left(1 - \frac{2\eta \kappa_2 S_t}{d}\right) (L(\tilde{\gamma}_t) - L(\gamma^\star)).
\end{align*}
Using the lower bound $S_t \ge 2\tilde{\gamma}_1$ established in Part 2 for all $t \ge 1$:
\[
1 - \frac{2\eta \kappa_2 S_t}{d} \le 1 - \frac{2\eta \kappa_2 (2\tilde{\gamma}_1)}{d} = 1 - \frac{4\eta \kappa_2}{d}\tilde{\gamma}_1 = 1 - \frac{4\eta \kappa_2}{d}\left(\eta \frac{2\kappa_2}{d} S_0 \gamma^\star\right) = 1 - \eta^2 \frac{8\kappa_2^2}{d^2} S_0 \gamma^\star\coloneqq \rho.
\]
The condition $\eta \le \eta_{\max} \le \frac{d}{2\kappa_2\sqrt{2S_0\gamma^\star}}$ ensures that $\eta^2 \frac{8\kappa_2^2}{d^2} S_0 \gamma^\star \le 1$, so $\rho \in [0, 1)$.
Thus, for all $t \ge 1$, we have the contraction:
\[
L(\tilde{\gamma}_{t+1}) - L(\gamma^\star) \le \rho (L(\tilde{\gamma}_t) - L(\gamma^\star)).
\]
By recursively applying this inequality from $t$ down to $1$, we get:
\[
L(\tilde{\gamma}_t) - L(\gamma^\star) \le \rho^{t-1} (L(\tilde{\gamma}_1) - L(\gamma^\star)).
\]
Since $\rho \in [0, 1)$, this proves that the loss converges to $L(\gamma^\star)$ at a linear (geometric) rate.
Finally, to find the convergence rate for the parameter $\tilde{\gamma}_t$, we use the relation $L(\tilde{\gamma}_t) - L(\gamma^\star) = \kappa_2(\tilde{\gamma}_t - \gamma^\star)^2$:
\[
\kappa_2(\tilde{\gamma}_t - \gamma^\star)^2 \le \rho^{t-1} (L(\tilde{\gamma}_1) - L(\gamma^\star)).
\]
Taking the square root of both sides yields:
\[
|\tilde{\gamma}_t - \gamma^\star| \le \sqrt{\frac{L(\tilde{\gamma}_1) - L(\gamma^\star)}{\kappa_2}} \cdot (\sqrt{\rho})^{t-1} = \sqrt{\frac{L(\tilde{\gamma}_1) - L(\gamma^\star)}{\kappa_2}} \cdot \rho^{\frac{t-1}{2}}.
\]
This establishes the convergence of the parameter $\tilde{\gamma}_t$ to $\gamma^\star$ and completes the proof of all statements in the theorem.
\end{proof}
\begin{corollary}
    Noting that $\kappa_2 = \Upsilon_2$, $\tilde{\alpha}_0 = 0$, and $\tilde{\beta}_0 = 1$ under the conditions of Theorem~\ref{thm:unnorm convergence}, we obtain that for any stepsize $\eta$ satisfying
\begin{equation*}
    0 < \eta \leq\frac{d}{8 \Upsilon_2\sqrt{1+16(\gamma^\star)^2}},
\end{equation*}
the following statements hold:
    \begin{enumerate}[leftmargin = *,itemsep=4pt]
    \item \textbf{Convergence:} The sequence of iterates converges to the unique optimal solution:
    \[
    \lim_{t\to\infty} \tilde{\mathcal L}(\btheta^{(t)}) = \tilde{\mathcal L}(\btheta^\star) > 0
    \quad \text{and} \quad
    \lim_{t\to\infty} \tilde{\gamma}_t = \gamma^\star.
    \]

    \item \textbf{Linear convergence rate:} After the first step, both the loss and the parameter converge geometrically to their optimal values. Specifically, for all $t \ge 1$,
    \begin{align*}
     \tilde{\mathcal L}(\btheta^{(t)}) - \tilde{\mathcal L}^\star
    &\le \rho^{t-1} \bigl(  \tilde{\mathcal L}(\btheta^{(1)}) - \tilde{\mathcal L}^\star \bigr),\\
    \bigl|\tilde{\gamma}_t - \gamma^\star\bigr|
    &\le \sqrt{\frac{\rho^{t-1}}{\Upsilon_2}}\,
    \sqrt{\tilde{\mathcal L}(\btheta^{(1)}) - \tilde{\mathcal L}^\star},
    \end{align*}
    where     $\rho \coloneqq 1 - 8\eta^2 {\Upsilon_2^2\gamma^\star}/{d^2}  \in [0, 1)$.
    \end{enumerate}
\end{corollary}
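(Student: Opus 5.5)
The corollary is an instantiation of Theorem~\ref{unnormalized theorem}, so the plan is to match its constants and check that the stepsize hypothesis of Theorem~\ref{thm:unnorm convergence} implies $\eta\le\eta_{\max}$.

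\textbf{Matching constants.} Lemma~\ref{lem:noLN:block-full} reduces full-matrix gradient descent to the scalar recursion for $(\tilde{\alpha}_t,\tilde{\beta}_t)$ with $F(\gamma)=L'(\gamma)/d$. It also gives $\tilde{\mathcal L}(\btheta^{(t)})=L(\tilde{\gamma}_t)$. By definition, $\kappa_2=\EE[\|\Sb\ab\|_2^2]=\Upsilon_2$. The initialization of Theorem~\ref{thm:looped_end_to_end} gives $\tilde{\alpha}_0=0$ and $\tilde{\beta}_0=1$, hence $S_0=D_0=1$. For either target, $\gamma^\star$ is the minimizer $\gamma_1^\star$ or $\gamma_2^\star$ computed above, which is exactly the $\gamma^\star$ in Theorem~\ref{thm:unnorm convergence}. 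Substituting these values, the contraction factor becomes $\rho=1-8\eta^2\Upsilon_2^2\gamma^\star/d^2$.

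\textbf{Stepsize condition.} With $S_0=1$,
\[
\eta_{\max}=\min\Bigl\{\tfrac{d}{8\Upsilon_2\sqrt{1+16(\gamma^\star)^2}},\ \tfrac{d}{2\Upsilon_2},\ \tfrac{d}{2\Upsilon_2\sqrt{2\gamma^\star}}\Bigr\}.
\]
The first term is the smallest of the three:
\begin{itemize}
\item $8\sqrt{1+16(\gamma^\star)^2}\ge 8>2$, so the first term is below the second.
\item $8\sqrt{1+16(\gamma^\star)^2}\ge 2\sqrt{2\gamma^\star}$ holds because $64(1+16(\gamma^\star)^2)\ge 8\gamma^\star$ for all $\gamma^\star\ge0$, so the first term is below the third.
\end{itemize}
Hence the hypothesis $\eta\le d/(8\Upsilon_2\sqrt{1+16(\gamma^\star)^2})$ gives $\eta\le\eta_{\max}$, and Theorem~\ref{unnormalized theorem} applies.

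\textbf{Conclusion.} Theorem~\ref{unnormalized theorem} then gives $\tilde{\gamma}_t\to\gamma^\star$, convergence of the loss, and the two geometric bounds with $\kappa_2$ replaced by $\Upsilon_2$. Strict positivity of the limit $\tilde{\mathcal L}^\star=L(\gamma^\star)$ follows from the irreducible-loss computations in Case~1 and Case~2.

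\textbf{Main obstacle.} The only delicate point is the degenerate case $\gamma^\star=0$ (possible for the unnormalized target). There $\rho=1$, and the iterates stay at $\tilde{\gamma}_t\equiv0=\gamma^\star$. The bounds then hold trivially, since both right-hand sides involve $L(\tilde{\gamma}_1)-L(\gamma^\star)=0$. The positive-loss claim still holds because the irreducible-loss bound does not use $\gamma^\star>0$. I would note this explicitly, following the earlier remark on the degenerate case.
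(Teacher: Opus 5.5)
Your proposal is correct and follows the same route as the paper's (implicit) argument: the corollary is Theorem~\ref{unnormalized theorem} specialized to $\kappa_2=\Upsilon_2$ and $S_0=D_0=1$. Your check that $d/(8\Upsilon_2\sqrt{1+16(\gamma^\star)^2})$ is the smallest of the three terms in $\eta_{\max}$ supplies the one step the paper leaves unstated.

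One correction to your side remark: $\gamma^\star=0$ cannot occur under the standing hypotheses. For the unnormalized target it requires $\kappa_2=\kappa_1$, i.e.\ a.s.\ mutually orthogonal columns. When $n<d$ this forces $\lambda_d(\Xb)=0$, violating (A3); when $n=d$ it forces $\Sb=\Ib_d$, violating $r(\Xb)\le d-1$, and there the limiting loss is in fact $0$ rather than positive. So your claim that the positive-loss conclusion survives the degenerate case is not right, but the case needs no separate treatment anyway.
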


%\begin{remark}[Structural Inevitability of Suboptimal Convergence]
%The unified analysis yields a central conclusion: the failure of the unnormalized model is \emph{structural} and does not depend on the choice of learning target. In both cases, the training dynamics converge to a unique, finite fixed point $\gamma^\star$ that is strictly suboptimal.

%This behavior contrasts sharply with the normalized model analyzed in the main text, where the corresponding parameter $\gamma_t$ diverges. This divergence is not pathological but essential: as $\gamma_t\to\infty$, the pre-normalization update $(\Ib_d+\gamma_t\Sb)\ab$ becomes asymptotically collinear with $\Sb\ab$, thereby recovering the one-step Power Method and driving the normalized loss to zero.

%By comparison, the unnormalized model is confined to an optimization landscape whose minimizer corresponds to a fixed affine combination of the identity and the power update. Training therefore settles at $(\Ib_d+\gamma^\star\Sb)\ab$, which cannot exactly isolate the principal eigenvector, yielding a strictly positive irreducible loss $L(\gamma^\star)>0$.

%Hence, normalization is not merely a numerical stabilizer or a representational choice; it is a \textbf{structural requirement} that reshapes the optimization landscape, enabling escape from a suboptimal fixed point and convergence toward the correct algorithmic limit.
%\end{remark}

\section{Proof of Theorem \ref{thm:convergence}}\label{proof of theorem 1}
In this section, we detail the proof of Theorem \ref{thm:convergence}. The proof proceeds in three main steps:
\begin{enumerate}
    \item By induction, we show that under gradient descent, the matrices $\Wb^{(t)}$ and $\Vb^{(t)}$ maintain the following block structure:
\begin{equation*}
    \Wb^{(t)}_{12}=\alpha_t \Ib_d,\qquad \Vb^{(t)}_{21}=\beta_t \Ib_d,
    \end{equation*}
where $\alpha_t$ and $\beta_t$ are scalar parameters, and all other blocks remain zero.
    \item By analyzing the update dynamics of $\alpha_t$ and $\beta_t$, we demonstrate that their product $\gamma_t = \alpha_t\beta_t$ diverges, and more precisely, that both $\alpha_t$ and $\beta_t$ grow at a rate of $\Theta((\eta\Upsilon_3t/d)^{1/6})$.
    \item With the behavior of $\alpha_t$ and $\beta_t$ determined, we further control the loss function $\mathcal{L}(\btheta^{(t)})$ and establish the convergence of the loss function.
\end{enumerate}
\subsection{Dynamics of Gradient Descent}
We begin by computing the gradients of the loss with respect to $\Wb$ and $\Vb$. Subsequently, we prove by induction that the block structure is preserved:
$\Wb^{(t)}_{12}=\alpha_t \Ib_d, \Vb^{(t)}_{21}=\beta_t \Ib_d$,
and all other blocks remain zero.
To facilitate the computation of gradients and for notational convenience, we first derive a simplified expression for the transformer's output $\yb$, as defined in \eqref{transformer prediction}. Let $\Sb=\Xb\Xb^\top$. We introduce the auxiliary vectors $\pb, \hb$ and the scalar $r$ in the following lemma.
\begin{lemma}
   The prediction of the transformer, $\yb$, as defined in \eqref{transformer prediction}, can be expressed as:
\begin{equation*}
  \yb(\Eb; \btheta)= \mathrm{LN}\Big(\Eb + \Vb\Eb\Eb^{\top}\Wb\Eb\Big)_{d+1:2d, n+1}=\frac{\hb}{r},
\end{equation*}
where
\begin{align*}
        \pb &= \Vb_{11}\Sb\Wb_{12}\ab + \Vb_{12}\ab\ab^\top\Wb_{22}\ab, \\
        \hb &= \ab + \Vb_{21}\Sb\Wb_{12}\ab + \Vb_{22}\ab\ab^\top\Wb_{22}\ab,\\
        r &= \sqrt{\|\pb\|_2^2 + \|\hb\|_2^2}.
\end{align*}
\begin{proof}
   The proof begins by computing the last column of the matrix before applying LN. Let this pre-normalization vector be $\mathbf{c} \in \RR^{2d}$. It is given by:
\begin{equation*}
\mathbf{c} = \left( \Eb + \Vb\Eb\Eb^{\top}\Wb\Eb \right)_{:, n+1} = \Eb_{:, n+1} + \Vb(\Eb\Eb^\top)(\Wb\Eb_{:, n+1}).
\end{equation*}
The input term is $\Eb_{:, n+1} = [\mathbf{0}_{d}^\top, \ab^\top]^\top$. The product term is computed sequentially. First, the matrix product $\Eb\Eb^{\top}$ is:
\begin{align*}
\Eb\Eb^{\top} &= \begin{pmatrix} \Xb & \mathbf{0} \\ \mathbf{0} & \ab \end{pmatrix} \begin{pmatrix} \Xb^\top & \mathbf{0} \\ \mathbf{0} & \ab^\top \end{pmatrix} \\&= \begin{pmatrix} \Xb\Xb^\top & \mathbf{0} \\ \mathbf{0} & \ab\ab^\top \end{pmatrix} = \begin{pmatrix} \Sb & \mathbf{0} \\ \mathbf{0} & \ab\ab^\top \end{pmatrix}.
\end{align*}
The product of $\Wb$ with the last column of $\Eb$ yields:
\begin{equation*}
\Wb\Eb_{:, n+1} = \begin{pmatrix} \Wb_{11} & \Wb_{12} \\ \Wb_{21} & \Wb_{22} \end{pmatrix} \begin{pmatrix} \mathbf{0} \\ \ab \end{pmatrix} = \begin{pmatrix} \Wb_{12}\ab \\ \Wb_{22}\ab \end{pmatrix}.
\end{equation*}
Pre-multiplying this result by $\Eb\Eb^{\top}$ and subsequently by $\Vb$ gives:
\begin{align*}
\Vb \left( \Eb\Eb^{\top} \Wb\Eb_{:, n+1} \right) &= \begin{pmatrix} \Vb_{11} & \Vb_{12} \\ \Vb_{21} & \Vb_{22} \end{pmatrix} \begin{pmatrix} \Sb\Wb_{12}\ab \\ \ab\ab^\top\Wb_{22}\ab \end{pmatrix} \\&= \begin{pmatrix} \Vb_{11}\Sb\Wb_{12}\ab + \Vb_{12}\ab\ab^\top\Wb_{22}\ab \\ \Vb_{21}\Sb\Wb_{12}\ab + \Vb_{22}\ab\ab^\top\Wb_{22}\ab \end{pmatrix}.
\end{align*}
The full expression for $\mathbf{c}$ is the sum of the input and product terms. Its upper and lower blocks define $\pb$ and $\hb$, respectively:
\begin{align*}
\mathbf{c} &= \begin{pmatrix} \mathbf{0} \\ \ab \end{pmatrix} + \begin{pmatrix} \Vb_{11}\Sb\Wb_{12}\ab + \Vb_{12}\ab\ab^\top\Wb_{22}\ab \\ \Vb_{21}\Sb\Wb_{12}\ab + \Vb_{22}\ab\ab^\top\Wb_{22}\ab \end{pmatrix} \\&=\begin{pmatrix} \Vb_{11}\Sb\Wb_{12}\ab + \Vb_{12}\ab\ab^\top\Wb_{22}\ab \\ \ab+\Vb_{21}\Sb\Wb_{12}\ab + \Vb_{22}\ab\ab^\top\Wb_{22}\ab \end{pmatrix}= \begin{pmatrix} \pb \\ \hb \end{pmatrix}.
\end{align*}
Finally, applying the LN to $\mathbf{c}$ and extracting the lower $d$-dimensional block yields the prediction:
\begin{equation*}
    \yb = \frac{\mathbf{c}_{d+1:2d}}{\|\mathbf{c}\|_2} = \frac{\hb}{\sqrt{\|\pb\|_2^2 + \|\hb\|_2^2}} = \frac{\hb}{r}.
\end{equation*}
This completes the proof.
\end{proof}
\end{lemma}

To analyze the training dynamics, we compute the gradient of the MSE loss function, $\mathcal{L}(\btheta) = \mathbb{E}[\|\yb - \mathbf{y}_{\text{target}}\|_2^2]$, with respect to the parameters $\btheta = (\Vb, \Wb)$. The target vector $\mathbf{y}_{\text{target}}$ is the ideal output of the power method's single-step update:
\begin{equation*}
\mathbf{y}_{\text{target}} \coloneqq \frac{\Xb\Xb^\top\ab}{\|\Xb\Xb^\top\ab\|_2} = \frac{\Sb\ab}{\|\Sb\ab\|_2}.
\end{equation*}
\subsubsection{Gradient with Respect to \texorpdfstring{$\Wb$}{W} and \texorpdfstring{$\Vb$}{V}}
We calculate the gradients $ \frac{\partial \mathcal{L}(\btheta)}{\partial \Wb_{ij}}$ and $\frac{\partial \mathcal{L}(\btheta)}{\partial \Vb_{ij}}$, respectively, using the chain rule. The core idea is to first compute the gradient of the loss with respect to the intermediate vectors $\pb$ and $\hb$, and then propagate these gradients backward to the parameters.
Finally, we have the following closed-form formulas:
\begin{align*}
 \frac{\partial \mathcal{L}(\btheta)}{\partial \Wb_{11}} &= \mathbf{0},\\
    \frac{\partial \mathcal{L}(\btheta)}{\partial \Wb_{12}} &= \mathbb{E}_{\Xb, \ab} \left[\left(\frac{\partial \pb}{\partial \Wb_{12}}\right)^\top \left(-\frac{\pb\hb^\top}{r^3} 2(\mathbf{y} - \mathbf{y}_{\text{target}})\right) + \left(\frac{\partial \hb}{\partial \Wb_{12}}\right)^\top \frac{1}{r}\left(\Ib - \frac{\hb\hb^\top}{r^2}\right) 2(\mathbf{y} - \mathbf{y}_{\text{target}}) \right]\\
    &= \mathbb{E}_{\Xb, \ab} \left[(\Vb_{11}\Sb)^\top \left(-\frac{\pb\hb^\top}{r^3} 2(\mathbf{y} - \mathbf{y}_{\text{target}})\right) \ab^\top + (\Vb_{21}\Sb)^\top \frac{1}{r}\left(\Ib - \frac{\hb\hb^\top}{r^2}\right) 2(\mathbf{y} - \mathbf{y}_{\text{target}}) \ab^\top \right]\\
    &= \mathbb{E}_{\Xb, \ab} \left[\Sb^\top \left(\Vb_{11}^\top \left(-\frac{\pb\hb^\top}{r^3} 2(\mathbf{y} - \mathbf{y}_{\text{target}})\right) + \Vb_{21}^\top \frac{1}{r}\left(\Ib - \frac{\hb\hb^\top}{r^2}\right) 2(\mathbf{y} - \mathbf{y}_{\text{target}})\right) \ab^\top\right]\\
    &=\mathbb{E}_{\Xb, \ab} \left[ \Sb^\top \left( \Vb_{21}^\top \frac{1}{r}\left(\Ib - \frac{\hb\hb^\top}{r^2}\right) - \Vb_{11}^\top \frac{\pb\hb^\top}{r^3} \right) 2(\mathbf{y} - \mathbf{y}_{\text{target}}) \ab^\top \right],\\
    \frac{\partial \mathcal{L}(\btheta)}{\partial \Wb_{21}} &= \mathbf{0},\\
    \frac{\partial \mathcal{L}(\btheta)}{\partial \Wb_{22}} &= \mathbb{E}_{\Xb, \ab} \left[\left(\frac{\partial \pb}{\partial \Wb_{22}}\right)^\top \left(-\frac{\pb\hb^\top}{r^3} 2(\mathbf{y} - \mathbf{y}_{\text{target}})\right) + \left(\frac{\partial \hb}{\partial \Wb_{22}}\right)^\top \frac{1}{r}\left(\Ib - \frac{\hb\hb^\top}{r^2}\right) 2(\mathbf{y} - \mathbf{y}_{\text{target}})\right] \\
    &= \mathbb{E}_{\Xb, \ab} \left[(\Vb_{12}\ab\ab^\top)^\top \left(-\frac{\pb\hb^\top}{r^3} 2(\mathbf{y} - \mathbf{y}_{\text{target}})\right) \ab^\top + (\Vb_{22}\ab\ab^\top)^\top \frac{1}{r}\left(\Ib - \frac{\hb\hb^\top}{r^2}\right) 2(\mathbf{y} - \mathbf{y}_{\text{target}}) \ab^\top \right]\\
    &=\mathbb{E}_{\Xb, \ab} \left[ (\ab\ab^\top)^\top (\Vb_{12}^\top \left(-\frac{\pb\hb^\top}{r^3} 2(\mathbf{y} - \mathbf{y}_{\text{target}})\right) + \Vb_{22}^\top \frac{1}{r}\left(\Ib - \frac{\hb\hb^\top}{r^2}\right) 2(\mathbf{y} - \mathbf{y}_{\text{target}})) \ab^\top\right]\\
    &= \mathbb{E}_{\Xb, \ab} \left[ \ab\ab^\top \left( \Vb_{22}^\top \frac{1}{r}\left(\Ib - \frac{\hb\hb^\top}{r^2}\right) - \Vb_{12}^\top \frac{\pb\hb^\top}{r^3} \right) 2(\mathbf{y} - \mathbf{y}_{\text{target}}) \ab^\top \right],\\
\frac{\partial \mathcal{L}(\btheta)}{\partial \Vb_{11}} &=  \mathbb{E}_{\Xb, \ab} \left[\left(\frac{\partial \pb}{\partial \Vb_{11}}\right)^\top \left(-\frac{\pb\hb^\top}{r^3} 2(\mathbf{y} - \mathbf{y}_{\text{target}})\right) + \left(\frac{\partial \hb}{\partial \Vb_{11}}\right)^\top \frac{1}{r}\left(\Ib - \frac{\hb\hb^\top}{r^2}\right) 2(\mathbf{y} - \mathbf{y}_{\text{target}}) \right]\\&=\mathbb{E}_{\Xb, \ab} \left[ \Ib_d^\top\left(-\frac{\pb\hb^\top}{r^3} 2(\mathbf{y} - \mathbf{y}_{\text{target}})\right) (\Sb\Wb_{12}\ab)^\top \right]
\\&=\mathbb{E}_{\Xb, \ab} \left[ \left(-\frac{\pb\hb^\top}{r^3} 2(\mathbf{y} - \mathbf{y}_{\text{target}})\right) (\Sb\Wb_{12}\ab)^\top \right], \\
\frac{\partial \mathcal{L}(\btheta)}{\partial \Vb_{12}} &=  \mathbb{E}_{\Xb, \ab} \left[\left(\frac{\partial \pb}{\partial \Vb_{12}}\right)^\top \left(-\frac{\pb\hb^\top}{r^3} 2(\mathbf{y} - \mathbf{y}_{\text{target}})\right) + \left(\frac{\partial \hb}{\partial \Vb_{12}}\right)^\top \frac{1}{r}\left(\Ib - \frac{\hb\hb^\top}{r^2}\right) 2(\mathbf{y} - \mathbf{y}_{\text{target}}) \right]\\&=\mathbb{E}_{\Xb, \ab} \left[ \Ib_d^\top\left(-\frac{\pb\hb^\top}{r^3} 2(\mathbf{y} - \mathbf{y}_{\text{target}})\right) (\ab\ab^\top\Wb_{22}\ab)^\top \right]\\&=\mathbb{E}_{\Xb, \ab} \left[ \left(-\frac{\pb\hb^\top}{r^3} 2(\mathbf{y} - \mathbf{y}_{\text{target}})\right) (\ab\ab^\top\Wb_{22}\ab)^\top \right], \\
\frac{\partial \mathcal{L}(\btheta)}{\partial \Vb_{21}} &=  \mathbb{E}_{\Xb, \ab} \left[\left(\frac{\partial \pb}{\partial \Vb_{21}}\right)^\top \left(-\frac{\pb\hb^\top}{r^3} 2(\mathbf{y} - \mathbf{y}_{\text{target}})\right) + \left(\frac{\partial \hb}{\partial \Vb_{21}}\right)^\top \frac{1}{r}\left(\Ib - \frac{\hb\hb^\top}{r^2}\right) 2(\mathbf{y} - \mathbf{y}_{\text{target}}) \right]\\&=\mathbb{E}_{\Xb, \ab} \left[ \Ib_d^\top\left(\frac{1}{r}\left(\Ib - \frac{\hb\hb^\top}{r^2}\right) 2(\mathbf{y} - \mathbf{y}_{\text{target}})\right) (\Sb\Wb_{12}\ab)^\top \right]\\&=\mathbb{E}_{\Xb, \ab} \left[ \left(\frac{1}{r}\left(\Ib - \frac{\hb\hb^\top}{r^2}\right) 2(\mathbf{y} - \mathbf{y}_{\text{target}})\right) (\Sb\Wb_{12}\ab)^\top \right], \\
\frac{\partial \mathcal{L}(\btheta)}{\partial \Vb_{22}} &=  \mathbb{E}_{\Xb, \ab} \left[\left(\frac{\partial \pb}{\partial \Vb_{22}}\right)^\top \left(-\frac{\pb\hb^\top}{r^3} 2(\mathbf{y} - \mathbf{y}_{\text{target}})\right) + \left(\frac{\partial \hb}{\partial \Vb_{22}}\right)^\top \frac{1}{r}\left(\Ib - \frac{\hb\hb^\top}{r^2}\right) 2(\mathbf{y} - \mathbf{y}_{\text{target}}) \right]\\&=\mathbb{E}_{\Xb, \ab} \left[ \Ib_d^\top\left(\frac{1}{r}\left(\Ib - \frac{\hb\hb^\top}{r^2}\right)2(\mathbf{y} - \mathbf{y}_{\text{target}})\right) (\ab\ab^\top\Wb_{22}\ab)^\top \right]\\&=\mathbb{E}_{\Xb, \ab} \left[ \left(\frac{1}{r}\left(\Ib - \frac{\hb\hb^\top}{r^2}\right) 2(\mathbf{y} - \mathbf{y}_{\text{target}})\right) (\ab\ab^\top\Wb_{22}\ab)^\top \right].
\end{align*}

\subsubsection{Gradient Descent Analysis}

Before analyzing the gradient descent steps, we prove a general lemma that forms the backbone of our entire argument. It shows why the expectation of certain matrix-valued functions becomes isotropic. 

\begin{lemma}\label{Appendix Symmetry Proposition}
Let $\Fb(\Xb, \ab)$ be a matrix-valued function of random variables $\Xb$ that satisfies Assumption \ref{assump:distribution}, where $\ab$ is uniformly distributed on the unit sphere $\mathbb{S}^{d-1}$ and independent of $\Xb$. If $\Fb$ exhibits rotational equivariance, meaning that, for any rotation matrix $\Rb$,
\begin{equation*}
\Fb(\Rb\Xb, \Rb\ab) = \Rb \Fb(\Xb, \ab) \Rb^\top,
\end{equation*}
then its expectation $\Mb = \EE_{\Xb, \ab}[\Fb(\Xb, \ab)]$ is a scalar multiple of the identity matrix.
\end{lemma}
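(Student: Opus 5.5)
The plan has two parts: first show that $\Mb$ commutes with every rotation, then apply a Schur-type argument to conclude $\Mb=c\Ib_d$. The argument implicitly needs $\Fb$ to be integrable, so that $\Mb$ is well defined; in every application this will be checked separately.

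\emph{Invariance of $\Mb$.} Fix any $\Rb\in\mathrm{SO}(d)$. By Assumption~\ref{assump:distribution}(A2), $\Rb\Xb$ has the same law as $\Xb$. The uniform law on $\mathbb{S}^{d-1}$ is rotation invariant, so $\Rb\ab$ has the same law as $\ab$. Since $\ab$ is independent of $\Xb$, the pair $(\Rb\Xb,\Rb\ab)$ therefore has the same joint law as $(\Xb,\ab)$. Combining this with equivariance gives
\[
\Mb=\EE[\Fb(\Rb\Xb,\Rb\ab)]=\EE[\Rb\Fb(\Xb,\ab)\Rb^\top]=\Rb\Mb\Rb^\top,
\]
where linearity of expectation moves $\Rb$ outside. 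Hence $\Rb\Mb=\Mb\Rb$ for every $\Rb\in\mathrm{SO}(d)$.

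\emph{Schur-type step.} Here I would use explicit test rotations rather than citing general representation theory, since it is quick and self-contained for $d\ge3$. For distinct indices $i,j$, pick $k\notin\{i,j\}$ (this is where $d\ge3$ is needed). Let $\Db$ be the diagonal matrix with entries $-1$ at positions $i$ and $k$ and $+1$ elsewhere; it has determinant $1$, so $\Db\in\mathrm{SO}(d)$. The identity $\Db\Mb\Db=\Mb$ gives $M_{ij}=-M_{ij}$, so every off-diagonal entry vanishes. Next, let $\Pb$ be the permutation matrix swapping $i$ and $j$, composed with a sign flip on coordinate $k$ so that its determinant is $1$. Conjugation by this matrix swaps $M_{ii}$ and $M_{jj}$, because $\Mb$ is already diagonal and the sign flip only acts on the $(k,k)$ entry, where it has no effect. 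Therefore $M_{ii}=M_{jj}$ for all $i,j$, and $\Mb=c\Ib_d$ with $c=\mathrm{tr}(\Mb)/d$.

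\emph{Main obstacle.} The only delicate point is restricting to proper rotations $\Rb\in\mathrm{SO}(d)$ instead of all orthogonal matrices, which is what forces the sign-fix tricks above. If the full orthogonal group were allowed, which (A2) in fact grants, the argument would go through even for $d=2$ using reflections directly. Everything else is a change of variables under the invariant joint law.
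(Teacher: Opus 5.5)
Your proof is correct. Its first half, showing $\Mb=\Rb\Mb\Rb^\top$ for all $\Rb\in\mathrm{SO}(d)$ from the invariance of the joint law of $(\Xb,\ab)$, is the paper's argument. You state it more carefully: you justify the joint-law invariance via independence and flag the integrability of $\Fb$, both of which the paper leaves implicit.

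The difference is in the last step. The paper appeals to its Lemma~\ref{Schur's lemma}, i.e.\ irreducibility of the defining representation of $\mathrm{SO}(d)$ for $d\ge3$ plus a general form of Schur's lemma. You instead verify $\Mb=c\Ib_d$ by hand, using paired sign flips to kill the off-diagonal entries and a sign-corrected transposition to equalize the diagonal entries. Your route is elementary and self-contained. It also avoids a subtlety in the cited version of Schur's lemma: over $\RR$, irreducibility alone does not make an intertwiner scalar. The rotation representation of $\mathrm{SO}(2)$ on $\RR^2$ is irreducible yet commutes with the $90^\circ$ rotation, so the citation really requires absolute irreducibility. The paper's route is shorter and extends to other group actions once that representation-theoretic fact is available.

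One correction to your closing remark. (A2) gives invariance of the \emph{law} of $\Xb$ under reflections, but the lemma only assumes $\Fb$ is $\mathrm{SO}(d)$-equivariant. Using reflections would therefore additionally require $\mathrm{O}(d)$-equivariance of $\Fb$; this holds for every $\Fb$ the paper applies the lemma to, but it is not part of the hypothesis. Indeed, for $d=2$ with only rotation-equivariance the conclusion fails, e.g.\ for $\Fb$ constantly equal to the $90^\circ$ rotation matrix.
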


\begin{proof}
We will show that $\Mb$ satisfies the condition of Lemma \ref{Schur's lemma}. Let $\Rb$ be an arbitrary rotation matrix.
\begin{enumerate}
    \item \textbf{Transform the expectation}: By the linearity of expectation,
    \begin{equation*}
    \Rb\Mb\Rb^\top = \Rb \left( \EE_{\Xb, \ab}[\Fb(\Xb, \ab)] \right) \Rb^\top = \EE_{\Xb, \ab}[\Rb\Fb(\Xb, \ab)\Rb^\top].
    \end{equation*}
    \item \textbf{Change of variables}: Let us define new random variables $\Xb' = \Rb\Xb$ and $\ab' = \Rb\ab$. Due to the rotational invariance of the distributions, the expectation over the original variables is equal to the expectation over the new (primed) variables:
    \begin{equation*}
    \Mb = \EE_{\Xb, \ab}[\Fb(\Xb, \ab)] = \EE_{\Xb', \ab'}[\Fb(\Xb', \ab')].
    \end{equation*}
    \item \textbf{Apply the equivariance property}: By the hypothesis of this proposition, we know $\Fb(\Xb', \ab') = \Fb(\Rb\Xb, \Rb\ab) = \Rb\Fb(\Xb, \ab)\Rb^\top$. Substituting this into the equation above:
    \begin{equation*}
    \Mb = \EE_{\Xb', \ab'}[\Rb\Fb(\Xb, \ab)\Rb^\top].
    \end{equation*}
    Since the distributions of $(\Xb', \ab')$ and $(\Xb, \ab)$ are identical, we can switch the expectation back to be over the original variables:
    \begin{equation*}
    \Mb = \EE_{\Xb, \ab}[\Rb\Fb(\Xb, \ab)\Rb^\top].
    \end{equation*}
    \item \textbf{Conclude}: By comparing the first and last equations, we have established that $\Mb = \Rb\Mb\Rb^\top$. Since this holds for any rotation matrix $\Rb$, we conclude by Lemma \ref{Schur's lemma} that $\Mb = c\Ib_d$ for some scalar $c$.
\end{enumerate}
\end{proof}
This lemma is our main tool. The rest of the proof involves showing that the relevant gradient expressions satisfy the rotational equivariance condition.

\begin{lemma}
Under Assumption \ref{assump:distribution}, for any step $t \ge 0$ of gradient descent, the parameter matrices will have the following structure for some scalar coefficients $\alpha_t$ and $\beta_t$:
\begin{itemize}
    \item $\Wb_{12}^{(t)} = \alpha_t \Ib$ and $\Vb_{21}^{(t)} = \beta_t \Ib$.
    \item All other blocks are zero matrices.
\end{itemize}
\end{lemma}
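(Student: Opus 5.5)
We argue by induction on $t$. The base case is the initialization $\Wb^{(0)}=\mathbf 0$, $\Vb_{21}^{(0)}=\Ib_d$, with all other blocks of $\Vb^{(0)}$ zero. This has the claimed form with $\alpha_0=0$ and $\beta_0=1$. For the inductive step, suppose $\Wb_{12}^{(t)}=\alpha_t\Ib$ and $\Vb_{21}^{(t)}=\beta_t\Ib$, with all other blocks zero. We show that every block gradient listed in the closed-form formulas above is either zero or a scalar multiple of $\Ib_d$. Gradient descent then preserves the structure, with $\alpha_{t+1}=\alpha_t-\eta c_A$ and $\beta_{t+1}=\beta_t-\eta c_B$.

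\textbf{Vanishing blocks.} Under the inductive hypothesis, $\Vb_{11}=\Vb_{12}=\Vb_{22}=\mathbf 0$ and $\Wb_{22}=\mathbf 0$. Hence $\pb=\mathbf 0$, $\hb=(\Ib+\gamma_t\Sb)\ab$ and $r=\|\hb\|_2$, where $\gamma_t=\alpha_t\beta_t$. We now read off the gradient formulas.
\begin{itemize}
\item $\partial\mathcal L/\partial\Wb_{11}$ and $\partial\mathcal L/\partial\Wb_{21}$ are identically zero.
\item $\partial\mathcal L/\partial\Vb_{11}$ and $\partial\mathcal L/\partial\Vb_{12}$ carry the factor $\pb=\mathbf 0$, so they vanish.
\item $\partial\mathcal L/\partial\Wb_{22}$ involves only $\Vb_{22}^\top$ and $\Vb_{12}^\top\pb$, both zero.
\item $\partial\mathcal L/\partial\Vb_{22}$ carries the factor $\Wb_{22}\ab=\mathbf 0$.
\end{itemize}
Thus the inactive blocks stay zero after the update.

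\textbf{Active blocks.} Since $\pb=\mathbf 0$, the remaining gradients reduce to
\[
\Fb_W=2\beta_t\,\Sb\,\tfrac1r\bigl(\Ib-\tfrac{\hb\hb^\top}{r^2}\bigr)(\yb-\yb_{\text{target}})\ab^\top,
\qquad
\Fb_V=2\,\tfrac1r\bigl(\Ib-\tfrac{\hb\hb^\top}{r^2}\bigr)(\yb-\yb_{\text{target}})(\alpha_t\Sb\ab)^\top .
\]
We check rotational equivariance under $(\Xb,\ab)\mapsto(\Rb\Xb,\Rb\ab)$. The relevant quantities transform as follows:
\begin{itemize}
\item $\Sb\mapsto\Rb\Sb\Rb^\top$ and $\hb\mapsto\Rb\hb$;
\item $r$ is unchanged;
\item $\yb=\hb/r\mapsto\Rb\yb$ and $\yb_{\text{target}}\mapsto\Rb\yb_{\text{target}}$.
\end{itemize}
Hence $\Fb_W(\Rb\Xb,\Rb\ab)=\Rb\Fb_W(\Xb,\ab)\Rb^\top$, and likewise for $\Fb_V$. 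Here we use that $\alpha_t\Ib$ and $\beta_t\Ib$ commute with $\Rb$. The data distribution is invariant under $\Rb\Xb$ by (A2), and $\ab$ is independent and uniform on the sphere. So Lemma~\ref{Appendix Symmetry Proposition} gives $\EE[\Fb_W]=c_A\Ib_d$ and $\EE[\Fb_V]=c_B\Ib_d$.

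\textbf{Main obstacle: integrability.} We must show the expectations are finite, so that differentiation under $\EE$ and the symmetry lemma are legitimate. Two factors need control.
\begin{itemize}
\item \emph{The factor $1/r$.} Since $r=\|(\Ib+\gamma_t\Sb)\ab\|_2\ge1$ whenever $\gamma_t\ge0$, this factor is bounded by $1$. We therefore also carry $\gamma_t\ge0$ in the induction hypothesis. This holds at $t=0$, and we would check it is preserved using the sign of the gradient, as in Lemma~\ref{lem:scalar_dynamics}.
\item \emph{The target term $\Sb\ab/\|\Sb\ab\|_2$.} This is a unit vector whenever $\Sb\ab\neq\mathbf 0$, which holds almost surely because (A3) forces $\lambda_d>0$ a.s.
\end{itemize}
All remaining factors are bounded: $\|\Sb\|_2\le n$ and $\|\yb\|_2=1$. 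So the integrands are uniformly bounded and dominated convergence applies.

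If the sign of $\gamma_t$ cannot be established inside this lemma, the fallback is to prove it jointly with the scalar recursion. We would show that $\alpha_{t+1},\beta_{t+1}\ge0$ by checking the sign of $c_A$ and $c_B$ via the one-dimensional reduction $c_A=\beta_t\,\ell'(\gamma_t)/d$. This reduction follows exactly as in Lemma~\ref{lem:active_scalar_gradient}.
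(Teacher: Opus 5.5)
Your proposal is correct and follows the paper's proof. Both argue by induction on $t$: the inactive-block gradients vanish because $\pb=\mathbf 0$ and $\Wb_{22}=\Vb_{12}=\Vb_{22}=\mathbf 0$, and the $\Wb_{12}$ and $\Vb_{21}$ integrands are rotationally equivariant, so Lemma~\ref{Appendix Symmetry Proposition} makes their expectations isotropic. Two minor differences: the paper treats the step $t=0\to1$ separately instead of folding $\alpha_0=0$ into the general step, and it omits your integrability/differentiation-under-$\EE$ discussion. The sign condition $\alpha_t,\beta_t\ge0$ that your integrability argument relies on follows from the paper's pointwise bound $\ab^\top\Sb\mathbf{g}_{\hb}^{(t)}\le 0$, i.e.\ $F_t\le0$, which your sketch defers.
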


\begin{proof}
We use mathematical induction.

\paragraph{Base case ($t=0 \to t=1$):}
At $t=0$, the parameters are initialized as $\alpha_0=0, \beta_0=1$. The forward pass gives $\pb^{(0)}=\mathbf{0}$ and $\hb^{(0)}=\ab$. 
\begin{align*}
\pb^{(0)} &= \Vb_{11}^{(0)}\Sb\Wb_{12}^{(0)}\ab + \Vb_{12}^{(0)}\ab\ab^\top\Wb_{22}^{(0)}\ab = \mathbf{0}, \\
\hb^{(0)} &= \ab + \Vb_{21}^{(0)}\Sb\Wb_{12}^{(0)}\ab + \Vb_{22}^{(0)}\ab\ab^\top\Wb_{22}^{(0)}\ab = \ab + (\beta_0 \Ib)\Sb(\alpha_0 \Ib)\ab = \ab.
\end{align*}
Below, we calculate the gradients for $\Wb$ and  $\Vb$ at $t=0$, respectively:
\begin{align*}
 \frac{\partial \mathcal{L}(\btheta)}{\partial \Wb_{11}}\bigg|_{t=0} &= \mathbf{0},\\
    \frac{\partial \mathcal{L}(\btheta)}{\partial \Wb_{12}}\bigg|_{t=0}
    &=\mathbb{E}_{\Xb, \ab} \left[ \Sb^\top \left( \Vb_{21}^{(0)\top} \frac{1}{r^{(0)}}\left(\Ib - \frac{\hb^{(0)}\hb^{(0)^\top}}{(r^{(0)})^2}\right) - \Vb_{11}^{(0)\top} \frac{\pb^{(0)}\hb^{(0)\top}}{(r^{(0)})^3} \right) 2(\mathbf{y}^{(0)} - \mathbf{y}_{\text{target}}) \ab^\top \right]\\
    &=\EE_{\Xb, \ab} \left[ \frac{2}{\|\Sb\ab\|_2} \Sb \left( (\ab^\top\Sb\ab)\ab - \Sb\ab \right) \ab^\top \right],\\
    \frac{\partial \mathcal{L}(\btheta)}{\partial \Wb_{21}} \bigg|_{t=0}&= \mathbf{0},\\
    \frac{\partial \mathcal{L}(\btheta)}{\partial \Wb_{22}} \bigg|_{t=0}
    &= \mathbb{E}_{\Xb, \ab} \left[ \ab\ab^\top \left( \Vb_{22}^{(0)\top} \frac{1}{r^{(0)}}\left(\Ib - \frac{\hb^{(0)}\hb^{(0)\top}}{(r^{(0)})^2}\right) - \Vb_{12}^{(0)\top} \frac{\pb^{(0)}\hb^{(0)\top}}{(r^{(0)})^3} \right) 2(\mathbf{y}^{(0)} - \mathbf{y}_{\text{target}}) \ab^\top \right]\\&= \mathbf{0},\\
\frac{\partial \mathcal{L}(\btheta)}{\partial \Vb_{11}} \bigg|_{t=0}&=\mathbb{E}_{\Xb, \ab} \left[ \left(-\frac{\pb^{(0)}\hb^{(0)\top}}{(r^{(0)})^3} 2(\mathbf{y}^{(0)} - \mathbf{y}_{\text{target}})\right) (\Sb\Wb_{12}^{(0)}\ab)^\top \right]\\&= \mathbf{0}, \\
\frac{\partial \mathcal{L}(\btheta)}{\partial \Vb_{12}}\bigg|_{t=0} &=\mathbb{E}_{\Xb, \ab} \left[ \left(-\frac{\pb^{(0)}\hb^{(0)\top}}{(r^{(0)})^3} 2(\mathbf{y}^{(0)} - \mathbf{y}_{\text{target}})\right) (\ab\ab^\top\Wb_{22}^{(0)}\ab)^\top \right]\\&= \mathbf{0}, \\
\frac{\partial \mathcal{L}(\btheta)}{\partial \Vb_{21}} \bigg|_{t=0}&=\mathbb{E}_{\Xb, \ab} \left[ \left(\frac{1}{r^{(0)}}\left(\Ib - \frac{\hb^{(0)}\hb^{(0)\top}}{(r^{(0)})^2}\right) 2(\mathbf{y}^{(0)} - \mathbf{y}_{\text{target}})\right) (\Sb\Wb_{12}^{(0)}\ab)^\top \right]\\&= \mathbf{0}, \\
\frac{\partial \mathcal{L}(\btheta)}{\partial \Vb_{22}} \bigg|_{t=0}&=\mathbb{E}_{\Xb, \ab} \left[ \left(\frac{1}{r^{(0)}}\left(\Ib - \frac{\hb^{(0)}\hb^{(0)\top}}{(r^{(0)})^2}\right) 2(\mathbf{y}^{(0)} - \mathbf{y}_{\text{target}})\right) (\ab\ab^\top\Wb_{22}^{(0)}\ab)^\top \right]\\&= \mathbf{0}.
\end{align*}
It follows that $\Wb_{12}$ is the only block with a non-zero gradient at $t=0$. Explicitly, this gradient is given by:
\begin{equation*}
\frac{\partial \mathcal{L}(\btheta)}{\partial \Wb_{12}} \bigg|_{t=0} = \EE_{\Xb, \ab} \left[ \frac{2}{\|\Sb\ab\|_2} \Sb \left( (\ab^\top\Sb\ab)\ab - \Sb\ab \right) \ab^\top \right].
\end{equation*}
We now verify that the term inside the expectation satisfies the conditions of Lemma \ref{Appendix Symmetry Proposition} regarding rotational equivariance. Let us define: 
\begin{equation*}  
\Fb_0(\Xb, \ab) = \frac{2}{\|\Sb\ab\|_2} \Sb ( (\ab^\top\Sb\ab)\ab - \Sb\ab ) \ab^\top. \end{equation*}
%We define $\Xb'=\Rb\Xb, \ab'=\Rb\ab$. We use the facts that $\Sb'=\Rb\Sb\Rb^\top$, $\|\Sb'\ab'\|_2=\|\Sb\ab\|_2$, and $(\ab')^\top\Sb'\ab' = \ab^\top\Sb\ab$.
Consider an arbitrary rotation matrix $\Rb$ and define the transformed inputs $\Xb'=\Rb\Xb$ and $\ab'=\Rb\ab$. Utilizing the identities $\Sb'=\Rb\Sb\Rb^\top$, $\|\Sb'\ab'\|_2=\|\Sb\ab\|_2$, and $(\ab')^\top\Sb'\ab' = \ab^\top\Sb\ab$, we have:
%\begin{align*}
    %\Fb_0(\Xb', \ab') &= \frac{2}{\|\Sb'\ab'\|_2} \Sb' ( (\ab'^\top\Sb'\ab')\ab' - \Sb'\ab' ) \ab'^\top \\
    %&= \frac{2}{\|\Sb\ab\|_2} (\Rb\Sb\Rb^\top) ( (\ab^\top\Sb\ab)(\Rb\ab) - \Rb\Sb\ab ) (\Rb\ab)^\top \\
    %&= \frac{2}{\|\Sb\ab\|_2} (\Rb\Sb\Rb^\top) \Rb ( (\ab^\top\Sb\ab)\ab - \Sb\ab ) \ab^\top\Rb^\top \\
    %&= \Rb \left( \frac{2}{\|\Sb\ab\|_2} \Sb ( (\ab^\top\Sb\ab)\ab - \Sb\ab ) \ab^\top \right) \Rb^\top \\&= \Rb\Fb_0(\Xb, \ab)\Rb^\top.
%\end{align*}
\begin{align*}
    \Fb_0(\Xb', \ab') &= \frac{2}{\|\Sb'\ab'\|_2} \Sb' \Big( (\ab'^\top\Sb'\ab')\ab' - \Sb'\ab' \Big) \ab'^\top \\
    &= \frac{2}{\|\Sb\ab\|_2} (\Rb\Sb\Rb^\top) \Big( (\ab^\top\Sb\ab)(\Rb\ab) - \Rb\Sb\ab \Big) (\Rb\ab)^\top \\
    &= \frac{2}{\|\Sb\ab\|_2} (\Rb\Sb\Rb^\top) \Rb \Big( (\ab^\top\Sb\ab)\ab - \Sb\ab \Big) \ab^\top\Rb^\top \\
    &= \Rb \left[ \frac{2}{\|\Sb\ab\|_2} \Sb \Big( (\ab^\top\Sb\ab)\ab - \Sb\ab \Big) \ab^\top \right] \Rb^\top \\
    &= \Rb\Fb_0(\Xb, \ab)\Rb^\top.
\end{align*}
This confirms that $\Fb_0$ satisfies the rotational equivariance condition of Lemma \ref{Appendix Symmetry Proposition}. Consequently, the gradient is isotropic, meaning $\frac{\partial \mathcal{L}}{\partial \Wb_{12}} \big|_{t=0} = c_0 \Ib$ for some scalar constant $c_0$. Thus, one step of gradient descent yields the following updates for $t=1$:
%\begin{itemize}
    %\item $\Wb_{12}^{(1)} = \mathbf{0} - \eta c_0 \Ib = \alpha_1 \Ib$.
    %\item $\Vb_{21}^{(1)} = \Ib - \eta \cdot \mathbf{0} = \beta_1 \Ib$ (with $\beta_1=1$).
    %\item All other blocks remain zero.
%\end{itemize}
\begin{itemize}
    \item $\Wb_{12}^{(1)} = \Wb_{12}^{(0)} - \eta c_0 \Ib = -\eta c_0 \Ib = \alpha_1 \Ib$.
    \item $\Vb_{21}^{(1)} = \Vb_{21}^{(0)} - \eta \cdot \mathbf{0} = \Ib = \beta_1 \Ib$ (where $\beta_1=1$).
    \item All other blocks remain zero.
\end{itemize}
Thus, the base case for the induction holds at $t=1$.

\paragraph{Inductive hypothesis:} %Assume that at the end of step $t$, the lemma's statement holds for scalars $\alpha_t, \beta_t$. 
Assume that at the end of step $t$, the block structure holds, i.e., $\Wb^{(t)}_{12}=\alpha_t \Ib_d$ and $\Vb^{(t)}_{21}=\beta_t \Ib_d$, while all other blocks are zero.

\paragraph{Inductive step ($t \to t+1$):}
We now demonstrate that this structure is preserved at step $t+1$.
First, we compute the forward pass variables at step $t$ under the inductive hypothesis:
\begin{align*}
\pb^{(t)} &= \Vb_{11}^{(t)}\Sb\Wb_{12}^{(t)}\ab + \Vb_{12}^{(t)}\ab\ab^\top\Wb_{22}^{(t)}\ab = \mathbf{0}, \\
\hb^{(t)} &= \ab + \Vb_{21}^{(t)}\Sb\Wb_{12}^{(t)}\ab + \Vb_{22}^{(t)}\ab\ab^\top\Wb_{22}^{(t)}\ab = \ab + (\beta_t \Ib)\Sb(\alpha_t \Ib)\ab = \ab + \alpha_t \beta_t \Sb\ab.
\end{align*}
%We now show that the statement also holds for step $t+1$.
Below, we calculate the gradients for $\Wb$ and  $\Vb$ at $t$, respectively:
\begin{align*}
 \frac{\partial \mathcal{L}(\btheta)}{\partial \Wb_{11}}\bigg|_t &= \mathbf{0},\\
    \frac{\partial \mathcal{L}(\btheta)}{\partial \Wb_{12}}\bigg|_t
    &=\mathbb{E}_{\Xb, \ab} \left[ \Sb^\top \left( \Vb_{21}^{(t)\top} \frac{1}{r^{(t)}}\left(\Ib - \frac{\hb^{(t)}\hb^{(t)^\top}}{(r^{(t)})^2}\right) - \Vb_{11}^{(t)\top} \frac{\pb^{(t)}\hb^{(t)\top}}{(r^{(t)})^3} \right) 2(\mathbf{y}^{(t)} - \mathbf{y}_{\text{target}}) \ab^\top \right]\\
    &=\beta_t \cdot \EE \left[ \Sb \frac{2}{r^{(t)}}\left(\Ib - \frac{\hb^{(t)}\hb^{(t)\top}}{(r^{(t)})^2}\right) (\mathbf{y}^{(t)} - \mathbf{y}_{\text{target}}) \ab^\top \right],\\
    \frac{\partial \mathcal{L}(\btheta)}{\partial \Wb_{21}} \bigg|_t&= \mathbf{0},\\
    \frac{\partial \mathcal{L}(\btheta)}{\partial \Wb_{22}} \bigg|_t
    &= \mathbb{E}_{\Xb, \ab} \left[ \ab\ab^\top \left( \Vb_{22}^{(t)\top} \frac{1}{r^{(t)}}\left(\Ib - \frac{\hb^{(t)}\hb^{(t)\top}}{(r^{(t)})^2}\right) - \Vb_{12}^{(t)\top} \frac{\pb^{(t)}\hb^{(t)\top}}{(r^{(t)})^3} \right) 2(\mathbf{y}^{(t)} - \mathbf{y}_{\text{target}}) \ab^\top \right]\\&= \mathbf{0},
    \end{align*}
    \begin{align*}
\frac{\partial \mathcal{L}(\btheta)}{\partial \Vb_{11}} \bigg|_t&=\mathbb{E}_{\Xb, \ab} \left[ \left(-\frac{\pb^{(t)}\hb^{(t)\top}}{(r^{(t)})^3} 2(\mathbf{y}^{(t)} - \mathbf{y}_{\text{target}})\right) (\Sb\Wb_{12}^{(t)}\ab)^\top \right]\\&= \mathbf{0}, \\
\frac{\partial \mathcal{L}(\btheta)}{\partial \Vb_{12}}\bigg|_t &=\mathbb{E}_{\Xb, \ab} \left[ \left(-\frac{\pb^{(t)}\hb^{(t)\top}}{(r^{(t)})^3} 2(\mathbf{y}^{(t)} - \mathbf{y}_{\text{target}})\right) (\ab\ab^\top\Wb_{22}^{(t)}\ab)^\top \right]\\&= \mathbf{0}, \\
\frac{\partial \mathcal{L}(\btheta)}{\partial \Vb_{21}} \bigg|_t&=\mathbb{E}_{\Xb, \ab} \left[ \left(\frac{1}{r^{(t)}}\left(\Ib - \frac{\hb^{(t)}\hb^{(t)\top}}{(r^{(t)})^2}\right) 2(\mathbf{y}^{(t)} - \mathbf{y}_{\text{target}})\right) (\Sb\Wb_{12}^{(t)}\ab)^\top \right]\\&= \alpha_t \cdot \EE \left[ \frac{2}{r^{(t)}}\left(\Ib - \frac{\hb^{(t)}\hb^{(t)\top}}{(r^{(t)})^2}\right) (\mathbf{y}^{(t)} - \mathbf{y}_{\text{target}}) (\Sb\ab)^\top \right], \\
\frac{\partial \mathcal{L}(\btheta)}{\partial \Vb_{22}} \bigg|_t&=\mathbb{E}_{\Xb, \ab} \left[ \left(\frac{1}{r^{(t)}}\left(\Ib - \frac{\hb^{(t)}\hb^{(t)\top}}{(r^{(t)})^2}\right) 2(\mathbf{y}^{(t)} - \mathbf{y}_{\text{target}})\right) (\ab\ab^\top\Wb_{22}^{(t)}\ab)^\top \right]\\&= \mathbf{0}.
\end{align*}

It is evident that the only non-zero gradients are those for $\Wb_{12}$ and $\Vb_{21}$. Consequently, all other blocks remain zero after the gradient update.

To simplify the gradient expressions, let us define: \begin{equation*}
\mathbf{g}_{\hb}^{(t)}=\frac{2}{r^{(t)}}\left(\Ib - \frac{\hb^{(t)}\hb^{(t)\top}}{(r^{(t)})^2}\right) (\mathbf{y}^{(t)} - \mathbf{y}_{\text{target}}).\end{equation*}
Now, consider the gradients for $\Vb_{21}$ and $\Wb_{12}$:
\begin{align*}
    \frac{\partial \mathcal{L}(\btheta)}{\partial \Vb_{21}} \bigg|_{t} &= \EE \left[ \mathbf{g}_{\hb}^{(t)} (\Sb\Wb_{12}^{(t)}\ab)^\top \right] = \alpha_t \cdot \EE \left[ \mathbf{g}_{\hb}^{(t)} (\Sb\ab)^\top \right], \\
    \frac{\partial \mathcal{L}(\btheta)}{\partial \Wb_{12}} \bigg|_{t} &= \EE \left[ \Sb \Vb_{21}^{(t)\top} \mathbf{g}_{\hb}^{(t)} \ab^\top \right] = \beta_t \cdot \EE \left[ \Sb \mathbf{g}_{\hb}^{(t)} \ab^\top \right].
\end{align*}
Let the expressions inside the expectations be $\Fb_{vt}(\Xb, \ab)$ and $\Fb_{wt}(\Xb, \ab)$. 
%We first show that $\mathbf{g}_{\hb}^{(t)}$ has rotational covariance, i.e., $\mathbf{g}_{\hb}^{(t)}(\Xb', \ab') = \Rb \mathbf{g}_{\hb}^{(t)}(\Xb, \ab)$. We have shown that $\hb^{(t)'} = \Rb\hb^{(t)}$. It follows that $r^{(t)'}=r^{(t)}$, $\mathbf{y}^{(t)'} = \Rb\mathbf{y}^{(t)}$, $\mathbf{y}_{\text{target}}'=\Rb\mathbf{y}_{\text{target}}$.
%\begin{align*}
   % \mathbf{g}_{\hb}^{(t)'} &= \frac{2}{r^{(t)'}}\left(\Ib - \frac{\hb^{(t)'}\hb^{(t)'\top}}{(r^{(t)'})^2}\right)(\mathbf{y}^{(t)'}-\mathbf{y}_{\text{target}}') \\
   % &= \frac{2}{r^{(t)}}\left(\Ib - \Rb\frac{\hb^{(t)}\hb^{(t)\top}}%{r^{(t)2}}\Rb^\top\right)\Rb(\mathbf{y}^{(t)}-\mathbf{y}_{\text{target}}) \\&= \Rb \left( \frac{2}{r^{(t)}}\left(\Ib - \frac{\hb^{(t)}\hb^{(t)\top}}{r^{(t)2}}\right) (\mathbf{y}^{(t)}-\mathbf{y}_{\text{target}}) \right) \\&= \Rb \mathbf{g}_{\hb}^{(t)}.
%\end{align*}
%Now we can check the equivariance of $\Fb_{wt}(\Xb, \ab) = \beta_t \Sb \mathbf{g}_{\hb}^{(t)} \ab^\top$.
%\begin{equation*}
%\Fb_{wt}(\Xb', \ab') = \beta_t \Sb' \mathbf{g}_{\hb}^{(t)'} (\ab')^\top = \beta_t (\Rb\Sb\Rb^\top)(\Rb\mathbf{g}_{\hb}^{(t)})(\Rb\ab)^\top = \Rb(\beta_t \Sb \mathbf{g}_{\hb}^{(t)} \ab^\top)\Rb^\top = \Rb\Fb_{wt}(\Xb, \ab)\Rb^\top.
%\end{equation*}
%The proof for $\Fb_{vt}$ is analogous. 
First, we establish the equivariance of $\mathbf{g}_{\hb}^{(t)}$. Consider an arbitrary rotation matrix $\Rb$. Since $\hb^{(t)'} = \Rb\hb^{(t)}$, it follows that $r^{(t)'}=r^{(t)}$, $\mathbf{y}^{(t)'} = \Rb\mathbf{y}^{(t)}$, and $\mathbf{y}_{\text{target}}'=\Rb\mathbf{y}_{\text{target}}$. Thus:
\begin{align*}
    \mathbf{g}_{\hb}^{(t)'} &= \frac{2}{r^{(t)'}}\left(\Ib - \frac{\hb^{(t)'}\hb^{(t)'\top}}{(r^{(t)'})^2}\right)(\mathbf{y}^{(t)'}-\mathbf{y}_{\text{target}}') \\
    &= \frac{2}{r^{(t)}}\left(\Ib - \Rb\frac{\hb^{(t)}\hb^{(t)\top}}{(r^{(t)})^2}\Rb^\top\right)\Rb(\mathbf{y}^{(t)}-\mathbf{y}_{\text{target}}) \\
    &= \Rb \left[ \frac{2}{r^{(t)}}\left(\Ib - \frac{\hb^{(t)}\hb^{(t)\top}}{(r^{(t)})^2}\right) (\mathbf{y}^{(t)}-\mathbf{y}_{\text{target}}) \right] \\&= \Rb \mathbf{g}_{\hb}^{(t)}.
\end{align*}
Using this property, we check $\Fb_{wt}(\Xb, \ab)$:
\begin{equation*}
\Fb_{wt}(\Xb', \ab') = \beta_t \Sb' \mathbf{g}_{\hb}^{(t)'} (\ab')^\top = \beta_t (\Rb\Sb\Rb^\top)(\Rb\mathbf{g}_{\hb}^{(t)})(\Rb\ab)^\top = \Rb(\beta_t \Sb \mathbf{g}_{\hb}^{(t)} \ab^\top)\Rb^\top = \Rb\Fb_{wt}(\Xb, \ab)\Rb^\top.
\end{equation*}
A similar argument applies to $\Fb_{vt}(\Xb, \ab)$.
Therefore, by Lemma \ref{Appendix Symmetry Proposition}, both gradients are isotropic:
\begin{equation*}
\frac{\partial \mathcal{L}(\btheta)}{\partial \Vb_{21}} \bigg|_{t} = c_{vt} \cdot \Ib_d \quad \text{and} \quad \frac{\partial \mathcal{L}(\btheta)}{\partial \Wb_{12}} \bigg|_{t} = c_{wt} \cdot \Ib_d,
\end{equation*}
for some scalars $c_{vt}$ and $c_{wt}$. The parameter updates for step $t+1$ are:
\begin{align*}
\Vb_{21}^{(t+1)} &= \Vb_{21}^{(t)} - \eta (c_{vt} \Ib) = (\beta_t - \eta c_{vt}) \Ib \equiv \beta_{t+1} \Ib, \\
\Wb_{12}^{(t+1)} &= \Wb_{12}^{(t)} - \eta (c_{wt} \Ib) = (\alpha_t - \eta c_{wt}) \Ib \equiv \alpha_{t+1} \Ib.
\end{align*}
This confirms that the block structure is preserved at step $t+1$, completing the inductive proof.
\end{proof}

\subsection{Evolution of \texorpdfstring{$\gamma_t=\alpha_t\beta_t$}{gammat=alphat betat}}
This section characterizes the training dynamics of the product term $\gamma_t = \alpha_t\beta_t$ under gradient descent. The analysis proceeds in two main steps:
\begin{enumerate}
   \item First, we establish the qualitative behavior of the sequence $\{\gamma_t\}$. We prove that it is monotonically increasing and, by contradiction, that it diverges to positive infinity.
    \item Second, we determine the precise growth rates. We show that the sequence of cubic differences, $\{\gamma_{t+1}^3 - \gamma_t^3\}$, converges to a positive constant. This key result implies that $\gamma_t = \Theta((\eta\Upsilon_3t/d)^{1/3})$. Furthermore, by proving that the ratio $\alpha_t/\beta_t$ converges to 1, we deduce the individual growth rates for the original parameters: $\alpha_t = \Theta((\eta\Upsilon_3t/d)^{1/6})$ and $\beta_t = \Theta((\eta\Upsilon_3t/d)^{1/6})$.
\end{enumerate}
\subsubsection{Divergence of \texorpdfstring{$\gamma_t$}{gammat}}\label{first dominated theorem}

The proof will proceed in four sequential, detailed steps:
\begin{enumerate}
    \item Derive symmetric iterative formulas for $\alpha_t$ and $\beta_t$ by defining a common gradient factor, $F_t$, that captures the essential dynamics.
    \item Prove that this common factor $F_t$ is non-positive.
    \item Show that the sequence $\{\gamma_t\}$ is monotonically increasing.
    \item Finally, by contradiction, we demonstrate that $\{\gamma_t\}$ cannot be bounded from above and must therefore diverge to infinity.
\end{enumerate}

\paragraph{Step 1: Derivation of the symmetric iterative formulas}

Applying the trace to the update rules for the scalar parameters $\alpha_t$ and $\beta_t$, we have:
\begin{align*}
    \alpha_{t+1} &= \alpha_t - \frac{\eta}{d} \tr\left( \grad{\mathcal{L}(\btheta)}{\Wb_{12}}\bigg|_t \right), \\
    \beta_{t+1} &= \beta_t - \frac{\eta}{d} \tr\left( \grad{\mathcal{L}(\btheta)}{\Vb_{21}}\bigg|_t \right).
\end{align*}
Let us analyze the trace terms.
For the $\alpha_t$ update, the trace term is:
\begin{align*}
    \tr\left( \grad{\mathcal{L}(\btheta)}{\Wb_{12}}\bigg|_t \right) &= \tr\left( \EE \left[ \Sb \Vb_{21}^{(t)\top} \mathbf{g}_{\hb}^{(t)} \ab^\top \right] \right) \\
    &= \tr\left( \EE \left[ \Sb (\beta_t \Ib) \mathbf{g}_{\hb}^{(t)} \ab^\top \right] \right) \\
    &= \beta_t \cdot \EE \left[ \tr \left( \Sb \mathbf{g}_{\hb}^{(t)} \ab^\top \right) \right] && \text{(Linearity of trace and expectation)} \\
    &= \beta_t \cdot \EE \left[ \ab^\top \Sb \mathbf{g}_{\hb}^{(t)} \right] && \text{(Cyclic property of trace)}.
\end{align*}

For the $\beta_t$ update, the trace term is:
\begin{align*}
    \tr\left( \grad{\mathcal{L}(\btheta)}{\Vb_{21}}\bigg|_t \right) &= \tr\left( \EE \left[ \mathbf{g}_{\hb}^{(t)} (\Sb\Wb_{12}^{(t)}\ab)^\top \right] \right) \\
    &= \tr\left( \EE \left[ \mathbf{g}_{\hb}^{(t)} (\alpha_t \Sb \ab)^\top \right] \right) \\
    &= \alpha_t \cdot \EE \left[ \tr \left( \mathbf{g}_{\hb}^{(t)} \ab^\top \Sb^\top \right) \right] \\
    &= \alpha_t \cdot \EE \left[ \tr \left( \Sb \mathbf{g}_{\hb}^{(t)} \ab^\top \right) \right] && \text{(Symmetry of } \Sb \text{ and cyclic property)} \\
    &= \alpha_t \cdot \EE \left[ \ab^\top \Sb \mathbf{g}_{\hb}^{(t)} \right] && \text{(Cyclic property again)}.
\end{align*}

We see that both trace calculations hinge on the exact same scalar expectation, $\EE \left[ \ab^\top \Sb \mathbf{g}_{\hb}^{(t)} \right]$. This confirms the underlying symmetry of the dynamics. We can now define this common factor.

Let $F_t$ be the scalar defined as:
\begin{equation*} F_t \equiv \frac{1}{d} \EE \left[ \ab^\top \Sb \mathbf{g}_{\hb}^{(t)} \right]. \end{equation*}

Substituting this definition back into the update rules yields:
\begin{align*}
    \alpha_{t+1} &= \alpha_t - \frac{\eta}{d} \left( \beta_t \cdot \EE \left[ \ab^\top \Sb \mathbf{g}_{\hb}^{(t)} \right] \right) = \alpha_t - \frac{\eta}{d} \left( \beta_t \cdot (d F_t) \right) = \alpha_t - \eta \beta_t F_t, \\
    \beta_{t+1} &= \beta_t - \frac{\eta}{d} \left( \alpha_t \cdot \EE \left[ \ab^\top \Sb \mathbf{g}_{\hb}^{(t)} \right] \right) = \beta_t - \frac{\eta}{d} \left( \alpha_t \cdot (d F_t) \right) = \beta_t - \eta \alpha_t F_t.
\end{align*}

This yields the desired symmetric system of iterative equations:
\begin{align}
\alpha_{t+1} &= \alpha_t - \eta \beta_t F_t \label{eq:alpha_update}, \\
\beta_{t+1} &= \beta_t - \eta \alpha_t F_t \label{eq:beta_update}.
\end{align}

\paragraph{Step 2: Common factor \texorpdfstring{$F_t \leq 0$}{Ft <= 0}}
From Step 1, we know
\begin{align*}
d \cdot F_t = \EE \left[ \ab^\top \Sb \mathbf{g}_{\hb}^{(t)} \right].
\end{align*}
Let us analyze the expression inside the expectation. Substituting the definition of $\mathbf{g}_{\hb}^{(t)}$:
\begin{equation*}
\mathbf{g}_{\hb}^{(t)} = \frac{2}{r^{(t)}}\left(\Ib - \frac{\hb^{(t)}\hb^{(t)\top}}{(r^{(t)})^2}\right) (\mathbf{y}^{(t)} - \mathbf{y}_{\text{target}}) = \frac{2}{r^{(t)}}\left(\Ib - \mathbf{y}^{(t)}\mathbf{y}^{(t)\top}\right) (\mathbf{y}^{(t)} - \mathbf{y}_{\text{target}}).
\end{equation*}
Note that the orthogonal projection operator $(\Ib - \mathbf{y}^{(t)}\mathbf{y}^{(t)\top})$ maps any vector collinear with $\mathbf{y}^{(t)}$ to zero. Specifically:
\begin{equation*}
\left(\Ib - \mathbf{y}^{(t)}\mathbf{y}^{(t)\top}\right) \mathbf{y}^{(t)} = \mathbf{y}^{(t)} - \mathbf{y}^{(t)}(\|\mathbf{y}^{(t)}\|_2^2) = \mathbf{y}^{(t)} - \mathbf{y}^{(t)} = \mathbf{0}.
\end{equation*}
Thus, the expression for $\mathbf{g}_{\hb}^{(t)}$ can be simplified:
\begin{equation*}
\mathbf{g}_{\hb}^{(t)} = -\frac{2}{r^{(t)}}\left(\Ib - \mathbf{y}^{(t)}\mathbf{y}^{(t)\top}\right) \mathbf{y}_{\text{target}}.
\end{equation*}
Substituting this simplified expression back into the term we are analyzing:
\begin{equation*}
\ab^\top \Sb \mathbf{g}_{\hb}^{(t)} = -\frac{2}{r^{(t)}} \ab^\top \Sb \left(\Ib - \mathbf{y}^{(t)}\mathbf{y}^{(t)\top}\right) \mathbf{y}_{\text{target}}.
\end{equation*}
Expanding the term in the parenthesis:
\begin{equation*}
\ab^\top \Sb \mathbf{g}_{\hb}^{(t)} = -\frac{2}{r^{(t)}} \left( \ab^\top \Sb \mathbf{y}_{\text{target}} - \ab^\top \Sb \mathbf{y}^{(t)} \mathbf{y}^{(t)\top} \mathbf{y}_{\text{target}} \right).
\end{equation*}
To simplify the analysis, we define an auxiliary vector $\ub = \Sb\ab$. By definition, the target vector is 
\begin{equation*}
    \mathbf{y}_{\text{target}} = \frac{\Sb\ab}{\|\Sb\ab\|_2} = \frac{\ub}{\|\ub\|_2}.
\end{equation*}
Substituting $\ub$ and $\mathbf{y}_{\text{target}}$ into the equation above:
\begin{align*}
\ab^\top \Sb \mathbf{g}_{\hb}^{(t)} &= -\frac{2}{r^{(t)}} \left( \ub^\top \frac{\ub}{\|\ub\|_2} - \ub^\top \mathbf{y}^{(t)} \mathbf{y}^{(t)\top} \frac{\ub}{\|\ub\|_2} \right) \\
&= -\frac{2}{r^{(t)}\|\ub\|_2} \left( \|\ub\|_2^2 - (\ub^\top \mathbf{y}^{(t)}) (\mathbf{y}^{(t)\top} \ub) \right) \\
&= -\frac{2}{r^{(t)}\|\ub\|_2} \left( \|\ub\|_2^2 - (\ub^\top \mathbf{y}^{(t)})^2 \right).
\end{align*}
Now we apply the Cauchy-Schwarz inequality, which states that for any vectors $\ub$ and $\mathbf{y}^{(t)}$:
\begin{equation*}
(\ub^\top \mathbf{y}^{(t)})^2 \le \|\ub\|_2^2 \|\mathbf{y}^{(t)}\|_2^2.
\end{equation*}
Since $\|\mathbf{y}^{(t)}\|_2^2 = 1$, the inequality becomes:
\begin{equation*}
(\ub^\top \mathbf{y}^{(t)})^2 \le \|\ub\|_2^2.
\end{equation*}
Thus, the expression $\left( \|\ub\|_2^2 - (\ub^\top \mathbf{y}^{(t)})^2 \right)$ is necessarily non-negative.
Equality holds if and only if $\mathbf{y}^{(t)}$ is collinear with $\ub$ (i.e., with $\Sb\ab$). Since $\|\mathbf{y}^{(t)}\|_2=1$, this means that $\mathbf{y}^{(t)}$ must be equal to $+ \mathbf{y}_{\text{target}}$ or $- \mathbf{y}_{\text{target}}$.
%\begin{itemize}
   % \item If $\mathbf{y}^{(t)} = \mathbf{y}_{\text{target}}$, the loss $L'$ is 0, and the gradient is 0.
   % \item If $\mathbf{y}^{(t)} = -\mathbf{y}_{\text{target}}$, the loss is maximal, and the gradient is also 0.
%\end{itemize}
Since $r^{(t)} > 0$ and $\|\ub\|_2 = \|\Sb\ab\|_2 > 0$, we can conclude that for any $t$:
\begin{equation*}
\ab^\top \Sb \mathbf{g}_{\hb}^{(t)} = -\frac{2}{r^{(t)}\|\ub\|_2} \underbrace{\left( \|\ub\|_2^2 - (\ub^\top \mathbf{y}^{(t)})^2 \right)}_{\geq 0} \leq 0.
\end{equation*}
Because this expression is non-positive for any $\Xb$ and $\ab$, its expectation must also be non-positive:
\begin{equation*}
\EE \left[ \ab^\top \Sb \mathbf{g}_{\hb}^{(t)} \right] \leq 0.
\end{equation*}

\paragraph{Step 3: \texorpdfstring{$\gamma_t$}{gammat} is monotonically increasing}

We analyze the change in $\gamma_t$ from one step to the next, $\Delta \gamma_t = \gamma_{t+1} - \gamma_t$. Using the symmetric update rules from (\ref{eq:alpha_update}) and (\ref{eq:beta_update}):
\begin{align*}
\gamma_{t+1} &= \alpha_{t+1}\beta_{t+1} = \left( \alpha_t - \eta \beta_t F_t \right) \left( \beta_t - \eta \alpha_t F_t \right) \\
&= \alpha_t\beta_t - \eta \alpha_t^2 F_t - \eta \beta_t^2 F_t + \eta^2 \alpha_t \beta_t F_t^2 \\
&= \gamma_t - \eta F_t (\alpha_t^2 + \beta_t^2) + \eta^2 \gamma_t F_t^2.
\end{align*}
The increment is therefore:
\begin{equation*}
\Delta \gamma_t = \eta \left[ -(\alpha_t^2 + \beta_t^2) F_t + \eta \gamma_t F_t^2 \right].
\end{equation*}
For $\Delta \gamma_t \geq 0$, we need the term in the brackets to be non-negative. We know $F_t \leq 0$, so the term $-(\alpha_t^2 + \beta_t^2) F_t$ is non-negative. The second term, $\eta \gamma_t F_t^2$, is also non-negative. Thus, their sum is non-negative for any $\eta>0$.
\begin{equation*}
\Delta \gamma_t \geq 0.
\end{equation*}
Therefore, $\{\gamma_t\}$ is a monotonically increasing sequence.

\paragraph{Step 4: \texorpdfstring{$\gamma_t \to \infty$}{gammat diverges}}

\begin{lemma}\label{lemma divergence}
The sequence $\{\gamma_t\}$ diverges to positive infinity.
\end{lemma}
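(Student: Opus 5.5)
We argue by contradiction, following the template used for Lemma~\ref{lem:scalar_dynamics}. By Step 3, $\{\gamma_t\}$ is nondecreasing. It is also strictly positive from $t=1$ on. Indeed $\gamma_0=0$, and $\gamma_1=\alpha_1\beta_1=-\eta\beta_0^2F_0=-\eta F_0$. At $t=0$ the output is $\yb^{(0)}=\ab$, so
\[
-dF_0=\EE\bigl[2\|\Sb\ab\|_2^{-1}\bigl(\|\Sb\ab\|_2^2-(\ab^\top\Sb\ab)^2\bigr)\bigr]>0 .
\]
Positivity holds because, under Assumption~\ref{assump:distribution}, $\Sb$ is almost surely not a multiple of the identity ($r(\Xb)<d$), and a uniform $\ab$ is almost surely not an eigenvector of $\Sb$.

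Now suppose, for contradiction, that $\gamma_t\le B$ for all $t$, so $\gamma_t\in[\gamma_1,B]$ for $t\ge1$. The key step is a uniform negative upper bound on $F_t$ over this range. Along the trajectory $\hb^{(t)}=(\Ib+\gamma_t\Sb)\ab$, $\pb^{(t)}=\mathbf 0$, and $r^{(t)}=\|(\Ib+\gamma_t\Sb)\ab\|_2$. Define, for $\gamma\ge0$,
\[
\Phi(\gamma):=\EE\left[\frac{2}{\|(\Ib+\gamma\Sb)\ab\|_2\,\|\Sb\ab\|_2}\Bigl(\|\Sb\ab\|_2^2-\bigl(\ub^\top\yb_\gamma\bigr)^2\Bigr)\right],\qquad \ub=\Sb\ab,\ \ \yb_\gamma=\frac{(\Ib+\gamma\Sb)\ab}{\|(\Ib+\gamma\Sb)\ab\|_2},
\]
so that $-dF_t=\Phi(\gamma_t)$. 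The integrand is continuous in $\gamma$. It is dominated by $2\|\Sb\ab\|_2\le 2n$, since $\|(\Ib+\gamma\Sb)\ab\|_2\ge1$ for $\gamma\ge0$ and $\Sb\succeq0$. Hence $\Phi$ is continuous on $[0,\infty)$ by dominated convergence.

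The integrand is also strictly positive almost surely for each fixed $\gamma\ge0$. Equality in Cauchy--Schwarz would force $(\Ib+\gamma\Sb)\ab\parallel\Sb\ab$. Writing $\ab=\sum_i c_i\vb_i$, this requires $(1+\gamma\lambda_i)/\lambda_i$ to be constant over all $i$ with $c_i\ne0$. Almost surely every $c_i\neq0$ and $\lambda_d>0$, so this would force all $\lambda_i$ equal, contradicting $r(\Xb)<d$. Hence $\Phi(\gamma)>0$ for every $\gamma$. By continuity and compactness, $c_B:=\min_{[\gamma_1,B]}\Phi>0$, which gives $F_t\le -c_B/d$ for all $t\ge1$.

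Plugging this into the increment formula from Step 3 gives
\[
\Delta\gamma_t\ge \eta(\alpha_t^2+\beta_t^2)\,c_B/d\ge 2\eta\gamma_t c_B/d\ge 2\eta\gamma_1c_B/d>0,
\]
using $\alpha_t^2+\beta_t^2\ge2\gamma_t$. Summing over $t$ shows $\gamma_t\to\infty$, which contradicts boundedness. A monotone sequence that is not bounded above diverges to $+\infty$, which proves the lemma.

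The main obstacle is the strict positivity and continuity of $\Phi$, which is needed to get the uniform bound $c_B$. This rests on the almost-sure non-collinearity argument and on Assumption~(A3), which guarantees $\lambda_d>0$ almost surely. The domination itself is easy here because the integrand stays bounded for $\gamma\ge0$.
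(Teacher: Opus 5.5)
Your proof is correct and uses the same two ingredients as the paper's. The first is dominated convergence, which makes $-dF_t$ a continuous function $\Phi(\gamma_t)$ of $\gamma_t$. The second is the almost-sure fact that $\ab$ is not an eigenvector of $\Sb$, which makes that quantity strictly positive. The only difference is packaging: the paper passes to the limit $\gamma_\infty$, deduces $F_t\to 0$ and hence $\Phi(\gamma_\infty)=0$, whereas you take a uniform minimum of $\Phi$ over the compact interval $[\gamma_1,B]$, as in Lemma~\ref{lem:scalar_dynamics}, which also gives a quantitative per-step increment.
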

\begin{proof}
We proceed by contradiction.

Assume that the sequence $\{\gamma_t\}$ does not diverge to infinity. Since we have established that $\{\gamma_t\}$ is monotonically increasing, if it does not diverge, it must be bounded from above and therefore must converge to a finite limit. Let this limit be $\gamma_\infty$.
\begin{equation*}
\lim_{t \to \infty} \gamma_t = \gamma_\infty < \infty.
\end{equation*}
For the sequence $\{\gamma_t\}$ to converge, the updates must eventually cease. This means the difference between consecutive terms must converge to zero:
\begin{equation*}
\lim_{t \to \infty} \Delta \gamma_t = \lim_{t \to \infty} (\gamma_{t+1} - \gamma_t) = 0.
\end{equation*}
Looking at the expression for $\Delta \gamma_t$:
\begin{equation*}
\lim_{t \to \infty} \left( - \eta F_t (\alpha_t^2 + \beta_t^2) + \eta^2 \gamma_t F_t^2 \right) = 0.
\end{equation*}
So
\begin{equation*}
\lim_{t \to \infty}  - \eta F_t (\alpha_t^2 + \beta_t^2)  = 0.
\end{equation*}
Since $\eta>0$ and $\alpha_t^2 + \beta_t^2\geq 1$, the only way for this limit to be zero is if the common gradient factor itself converges to zero.
\begin{equation*}
\lim_{t \to \infty} F_t = 0.
\end{equation*}
This implies that
\begin{equation*}
    \lim_{t\to \infty}\EE \left[ \ab^\top \Sb \mathbf{g}_{\hb}^{(t)} \right]=0.
\end{equation*}

Let $Z_t(\ab, \Xb) = -\ab^\top \Sb \mathbf{g}_{\hb}^{(t)}$ be the non-negative random variable inside the expectation. To infer the properties of the limit system, we must justify interchanging the limit and the expectation.

\paragraph{Justification for exchanging limit and expectation.}
We use the Dominated Convergence Theorem, which requires two conditions to be met.
\begin{enumerate}
    \item \textbf{Pointwise convergence:} For a fixed data sample $(\ab, \Xb)$, the term $Z_t$ is a continuous function of $\gamma_t$. Since we assumed $\gamma_t \to \gamma_\infty$, it follows that $Z_t$ converges pointwise to a limit $Z_\infty(\ab, \Xb) = -\ab^\top \Sb \mathbf{g}_{\hb}^{(\infty)}$.
    \item \textbf{Domination:} We need to find an integrable function $Y$, independent of $t$, such that $|Z_t| \le Y$. We have $|Z_t| \le \|\Sb\ab\|_2 \|\mathbf{g}_{\hb}^{(t)}\|_2$. The norm of the gradient is $||\mathbf{g}_{\hb}^{(t)}||_2 = ||-\frac{2}{r^{(t)}}(\Ib - \mathbf{y}^{(t)}\mathbf{y}^{(t)\top})\mathbf{y}_{\text{target}}||_2 \le \frac{2}{r^{(t)}}$, where $(r^{(t)})^2 = ||\ab + \gamma_t \Sb\ab||_2^2$. Since $\gamma_t \ge 0$ and $\Sb$ is positive semi-definite, $(r^{(t)})^2 \ge ||\ab||_2^2$, so $r^{(t)} \ge ||\ab||_2$. This gives a bound independent of $t$: $||\mathbf{g}_{\hb}^{(t)}||_2 \le \frac{2}{||\ab||_2}$. Therefore, $|Z_t| \le \|\Sb\ab\|_2 \frac{2}{||\ab||_2}$. We can set the dominating function $Y = 2\frac{\|\Sb\ab\|_2}{\|\ab\|_2}$. By Assumption \ref{assump:distribution} and Lemma \ref{lem: bounds for Sa}, we have
    \begin{equation*}
        \EE[Y]=2\EE\left[\frac{\|\Sb\ab\|_2}{\|\ab\|_2}\right]=2\EE\left[\|\Sb\ab\|_2\right]\leq\frac{2n}{\sqrt{d}}<\infty,
    \end{equation*}
    which ensures that $\EE[Y]$ is finite.
\end{enumerate}
With both conditions satisfied, we can interchange the limit and expectation:
\begin{equation*} \EE \left[ \lim_{t \to \infty} Z_t \right] = \lim_{t \to \infty} \EE [Z_t] = 0. \end{equation*}
%Since $\lim_{t\to \infty} Z_t$ is a non-negative function whose expectation is zero, it must be zero for almost all data samples. This means that for almost every sample, 
%\begin{equation*}\lim_{t \to \infty} \ab^\top \Sb\mathbf{g}_{\hb}^{(t)} = -\ab^\top \Sb \mathbf{g}_{\hb}^{(\infty)}=0.\end{equation*}    
%From the proof of Step 2, the result implies that almost surely
%\begin{equation*}
    %\mathbf{y}_\infty = \frac{\ab + \gamma_\infty \Sb\ab}{\|\ab + \gamma_\infty \Sb\ab\|_2}=\frac{\Sb\ab}{\|\Sb\ab\|_2},
%\end{equation*}
%or
%\begin{equation*}
    %\mathbf{y}_\infty = \frac{\ab + \gamma_\infty \Sb\ab}{\|\ab + \gamma_\infty \Sb\ab\|_2}=-\frac{\Sb\ab}{\|\Sb\ab\|_2}.
%\end{equation*}
%This collinearity implies that the vector $\ab + \gamma_\infty \Sb\ab$ must itself be collinear with $\Sb\ab$. This is only possible if the vector $\ab$ is also collinear with $\Sb\ab$, which occurs if and only if $\ab$ is an eigenvector of the matrix $\Sb$. Thus, we need $\ab$ to be an eigenvector of the matrix $\Sb$ almost surely.
Since $\lim_{t\to\infty} Z_t$ is non-negative and $\EE[\lim_{t\to\infty} Z_t]=0$, we must have
\begin{equation*}
\lim_{t\to\infty} Z_t = \lim_{t\to\infty}\bigl(-\ab^\top \Sb\gb_{\hb}^{(t)}\bigr)
= -\ab^\top \Sb\gb_{\hb}^{(\infty)} = 0
\quad\text{for almost every }(\ab,\Xb).
\end{equation*}
By Step~2, this implies that, almost surely,
\begin{equation*}
\yb_{\infty}
=
\frac{\ab+\gamma_\infty \Sb\ab}{\|\ab+\gamma_\infty \Sb\ab\|_2}
=
\sigma\frac{\Sb\ab}{\|\Sb\ab\|_2},
\qquad \sigma\in\{+1,-1\}.
\end{equation*}
Hence $\ab+\gamma_\infty \Sb\ab$ is collinear with $\Sb\ab$, i.e., there exists $c\in\RR$ such that
\begin{equation*}
\ab+\gamma_\infty \Sb\ab = c\Sb\ab
\Longrightarrow
\ab = (c-\gamma_\infty)\Sb\ab .
\end{equation*}
Whenever $\Sb\ab\neq \mathbf{0}$ (which holds almost surely under our assumptions), this shows that $\ab$ is collinear with $\Sb\ab$, hence is an eigenvector of $\Sb$:
\begin{equation*}
\Sb\ab = \lambda\ab \quad\text{for some }\lambda\in\RR.
\end{equation*}
Therefore, the above collinearity would require $\ab$ to be an eigenvector of $\Sb$ almost surely.

However, under Assumption~\ref{assump:distribution} and the independence of $\ab$ and $\Sb$, this event has probability zero: conditioning on $\Sb$, the eigenvector set on $\mathbb{S}^{d-1}$ is either finite (simple spectrum) or a lower-dimensional submanifold (eigenvalue multiplicity), hence has measure zero; the only degenerate case $\Sb\propto \Ib_d$ occurs with probability zero for random $\Sb=\Xb\Xb^\top$.

\paragraph{Final conclusion.}
The initial assumption—that $\{\gamma_t\}$ converges to a finite limit—is false. Since $\{\gamma_t\}$ is a monotonically increasing sequence, the only remaining possibility is that it is unbounded. Therefore, it must diverge to positive infinity.
\begin{equation*}
\lim_{t\to\infty} \gamma_t = \infty.
\end{equation*}
\end{proof}

\subsubsection{Growth Rate for \texorpdfstring{$\gamma_t$}{gammat}}\label{appendix cubic difference}

The proof starts from the established symmetric update rules for the parameters $\alpha_t$ and $\beta_t$:
\begin{align*}
\alpha_{t+1} &= \alpha_t - \eta \beta_t F_t \\
\beta_{t+1} &= \beta_t - \eta \alpha_t F_t
\end{align*}
where $F_t$ is the common factor. From these, the exact update rule for $\gamma_t$ is derived:
\begin{equation*}
\gamma_{t+1} = \gamma_t - \eta F_t (\alpha_t^2 + \beta_t^2) + \eta^2 \gamma_t F_t^2.
\end{equation*}
We also rely on the previously established rigorous result that $\gamma_t$ diverges to infinity. The proof proceeds in three steps.

\paragraph{Step 1: The exact increment of \texorpdfstring{$\gamma_t^3$}{gammat3}}
Let $\Delta_t = \gamma_{t+1} - \gamma_t$. The exact expression for $\Delta_t$ is:
\begin{equation*}
\Delta_t = - \eta F_t (\alpha_t^2 + \beta_t^2) + \eta^2 \gamma_t F_t^2.
\end{equation*}
We calculate the expansion for $(\gamma_t + \Delta_t)^3$:
\begin{align*}
\gamma_{t+1}^3 - \gamma_t^3 &= (\gamma_t + \Delta_t)^3 - \gamma_t^3 = 3\gamma_t^2\Delta_t + 3\gamma_t\Delta_t^2 + \Delta_t^3.
\end{align*}
Substituting the full expression for $\Delta_t$ yields the complete, exact expression for the increment:
\begin{align*}
\gamma_{t+1}^3 - \gamma_t^3 = & \ 3\gamma_t^2 \left( - \eta F_t (\alpha_t^2 + \beta_t^2) + \eta^2 \gamma_t F_t^2 \right) \\
& + 3\gamma_t \left( - \eta F_t (\alpha_t^2 + \beta_t^2) + \eta^2 \gamma_t F_t^2 \right)^2 \\
& + \left( - \eta F_t (\alpha_t^2 + \beta_t^2) + \eta^2 \gamma_t F_t^2 \right)^3.
\end{align*}

\paragraph{Step 2: Limit analysis of the increment as \texorpdfstring{$t \to \infty$}{t tends to infinity}}

To compute $\lim_{t \to \infty} (\gamma_{t+1}^3 - \gamma_t^3)$, we must first rigorously establish the limits of the key sequences that appear in the expression above.
\paragraph{Key Results.}
\begin{enumerate}
    \item[(1)] $\lim_{t \to \infty} \gamma_t = \infty$.
    \item[(2)] $\lim_{t \to \infty} \frac{\alpha_t^2 + \beta_t^2}{2\gamma_t} = 1$.
    \item[(3)] $\lim_{t \to \infty} \gamma_t^3 F_t = -C_1$, where $C_1$ is a strictly positive constant.
\end{enumerate}
\subparagraph{Proof of Result (1).}
Previously established.

\subparagraph{Proof of Result (2).}
First, we establish that both $\alpha_t$ and $\beta_t$ diverge to infinity. From the update rules, we can derive the quasi-conservation law \begin{align*}
\beta_{t+1}^2 - \alpha_{t+1}^2 &= (\beta_t^2 - \alpha_t^2) + (\eta^2 \alpha_t^2 F_t^2 - \eta^2 \beta_t^2 F_t^2) \\
&= (\beta_t^2 - \alpha_t^2) - \eta^2 F_t^2 (\beta_t^2 - \alpha_t^2) \\
&= (\beta_t^2 - \alpha_t^2) (1 - \eta^2 F_t^2).
\end{align*} Since $\beta_0^2 - \alpha_0^2 = 1$ and $F_t^2 \ge 0$, the sequence $\{\beta_t^2 - \alpha_t^2\}$ is non-increasing and bounded below by 0, thus it is a bounded sequence. We also have $\gamma_t = \alpha_t \beta_t \to \infty$. Therefore, both $\alpha_t$ and $\beta_t$ must diverge to infinity.

Now we prove the main result. Consider the ratio $\alpha_t/\beta_t$. We can write:
\begin{equation*}
1 - \left(\frac{\alpha_t}{\beta_t}\right)^2 = \frac{\beta_t^2 - \alpha_t^2}{\beta_t^2}.
\end{equation*}
Since the numerator, $\beta_t^2 - \alpha_t^2$, is a bounded sequence and the denominator $\beta_t^2 \to \infty$, the right-hand side converges to 0.
\begin{equation*}
\lim_{t \to \infty} \left( 1 - \left(\frac{\alpha_t}{\beta_t}\right)^2 \right) = 0 \implies \lim_{t \to \infty} \left(\frac{\alpha_t}{\beta_t}\right)^2 = 1.
\end{equation*}
For all $t$, $\alpha_t$ and $\beta_t$ have the same sign (both positive), so we can take the positive square root:
\begin{equation}\label{alpha/beta limit}
\lim_{t \to \infty} \frac{\alpha_t}{\beta_t} = 1.
\end{equation}
Now we can evaluate the limit in question:
\begin{equation*}
\lim_{t \to \infty} \frac{\alpha_t^2 + \beta_t^2}{2\gamma_t} = \lim_{t \to \infty} \frac{\alpha_t^2 + \beta_t^2}{2\alpha_t\beta_t} = \lim_{t \to \infty} \frac{1}{2}\left( \frac{\alpha_t}{\beta_t} + \frac{\beta_t}{\alpha_t} \right).
\end{equation*}
Since $\lim \alpha_t/\beta_t = 1$, we also have $\lim \beta_t/\alpha_t = 1$. By the algebra of limits:
\begin{equation*}
\lim_{t \to \infty} \frac{\alpha_t^2 + \beta_t^2}{2\gamma_t} = \frac{1}{2}(1 + 1) = 1. 
\end{equation*}

\subparagraph{Proof of Result (3).}
We have $F_t = \frac{1}{d} \EE[\ab^\top\Sb\mathbf{g}_\hb^{(t)}]$. Our goal is to find the limit of $\EE[\gamma_t^3 \ab^\top\Sb\mathbf{g}_\hb^{(t)}]$. We use the Dominated Convergence Theorem to swap the limit and expectation.
Let $Z_t(\ab, \Xb) = \gamma_t^3 \ab^\top\Sb\mathbf{g}_\hb^{(t)}$; we simply write $\Sb\ab$ as $\ub $.
\begin{enumerate}
    \item \textbf{Pointwise convergence:} For a fixed sample $(\ab, \Xb)$, we analyze $Z_t$. 
    \begin{align*}
        Z_t &= \gamma_t^3 \cdot \frac{-2}{r^{(t)}\|\ub\|_2} \left( \|\ub\|_2^2 - (\ub^\top \mathbf{y}^{(t)})^2 \right) \\
        &= \gamma_t^3 \cdot \frac{-2}{r^{(t)}\|\ub\|_2} \left( \frac{\|\ub\|_2^2\|\hb^{(t)}\|_2^2 - (\ub^\top \hb^{(t)})^2}{\|\hb^{(t)}\|_2^2} \right) \\
        &= \gamma_t^3 \cdot \frac{-2}{r^{(t)}\|\ub\|_2} \left( \frac{\|\ub\|_2^2\|\ab + \gamma_t\ub\|_2^2 - (\ub^\top (\ab + \gamma_t\ub))^2}{(r^{(t)})^2} \right) \\
        &= \frac{-2\gamma_t^3}{(r^{(t)})^3 \|\ub\|_2} \left( \|\ub\|_2^2(\|\ab\|_2^2 + 2\gamma_t(\ab^\top\ub) + \gamma_t^2\|\ub\|_2^2) - ((\ab^\top\ub) + \gamma_t\|\ub\|_2^2)^2 \right) \\
        &= \frac{-2\gamma_t^3}{(r^{(t)})^3 \|\ub\|_2} \left( \|\ub\|_2^2\|\ab\|_2^2 - (\ab^\top\ub)^2 \right).
    \end{align*}
    As $t \to \infty$, we have $\lim_{t \to \infty} \frac{\gamma_t}{r^{(t)}} = \lim_{t \to \infty} \frac{\gamma_t}{\|\ab + \gamma_t\ub\|_2} = \frac{1}{\|\ub\|_2}$. Thus, $\lim_{t \to \infty} \frac{\gamma_t^3}{(r^{(t)})^3} = \frac{1}{\|\ub\|_2^3}$.
    The pointwise limit is:
    \begin{equation*}
    \lim_{t \to \infty} Z_t = \frac{-2}{\|\ub\|_2^4} (\|\ub\|_2^2\|\ab\|_2^2 - (\ab^\top\ub)^2).
    \end{equation*}
    \item \textbf{Domination:} The term ${\gamma_t}/{r^{(t)}}$ is uniformly bounded for $t \ge 1$: \begin{equation*}\frac{\gamma_t}{r^{(t)}} = \sqrt{\frac{\gamma_t^2}{\|\ab + \gamma_t\ub\|_2^2}}= \sqrt{\frac{\gamma_t^2}{\|\ab\|_2^2 + \gamma_t^2\|\ub\|_2^2+2\gamma_t\ab^\top \Sb\ab}}\leq \frac{1}{\|\ub\|_2}.\end{equation*} Thus, $|Z_t|$ is bounded by $\frac{2}{\|\ub\|_2^4} |\|\ub\|_2^2\|\ab\|_2^2 - (\ab^\top\ub)^2|$, which is a function independent of $t$. By Assumption \ref{assump:distribution} and Lemma~\ref{lem:lambda_min_to_directional_inverse}, we have
    \begin{equation*}
        \EE\left[\frac{2}{\|\ub\|_2^4} |\|\ub\|_2^2\|\ab\|_2^2 - (\ab^\top\ub)^2|\right]\leq \EE\left[\frac{2}{\|\ub\|_2^2} \right]=\EE\left[\frac{2}{\|\Xb\Xb^\top \eb_1\|_2^2} \right]<\infty.
    \end{equation*}
    The equality holds because of the rotational invariance property. Thus, this dominating function is integrable.
\end{enumerate}
By the Dominated Convergence Theorem, we can swap the limit and expectation:
\begin{equation*}
\lim_{t \to \infty} \gamma_t^3 F_t
= \frac{1}{d}\EE \left[ \lim_{t \to \infty} Z_t \right]
= \frac{1}{d}\EE \left[
\frac{-2\bigl(\|\Sb\ab\|_2^2\|\ab\|_2^2 - (\ab^\top\Sb\ab)^2\bigr)}{\|\Sb\ab\|_2^4}
\right].
\end{equation*}
Since $\|\ab\|_2=1$, the numerator simplifies via the projection identity
\begin{equation*}
\|\Sb\ab\|_2^2\|\ab\|_2^2 - (\ab^\top\Sb\ab)^2
= \|\Sb\ab\|_2^2 - (\ab^\top\Sb\ab)^2
= \bigl\|(\Ib_d-\ab\ab^\top)\Sb\ab\bigr\|_2^2 \ge 0.
\end{equation*}
Moreover, this quantity equals $0$ if and only if $(\Ib_d-\ab\ab^\top)\Sb\ab=\mathbf{0}$, i.e., $\Sb\ab$ is collinear with $\ab$, which happens if and only if $\ab$ is an eigenvector of $\Sb$. By Assumption~\ref{assump:distribution}, the independence of $\ab$ and $\Sb$, and for the same reason stated in Lemma \ref{lemma divergence}, this event occurs with probability zero. Hence
\begin{equation*}
\bigl\|(\Ib_d-\ab\ab^\top)\Sb\ab\bigr\|_2^2 > 0
\quad\text{almost surely},
\end{equation*}
and since the denominator $\|\Sb\ab\|_2^4>0$ almost surely, the integrand inside the expectation is strictly negative almost surely. Therefore the expectation is strictly negative, and we may define
\begin{equation*}
\lim_{t \to \infty} \gamma_t^3 F_t = -C_1,
\qquad C_1 \coloneqq \frac{2}{d}\EE \left[
\frac{\|\Sb\ab\|_2^2\|\ab\|_2^2 - (\ab^\top\Sb\ab)^2}{\|\Sb\ab\|_2^4}
\right] > 0.
\end{equation*}

\paragraph{Limit of the increment expression.}
We now return to the full expression for $\gamma_{t+1}^3 - \gamma_t^3$ and analyze the limit of each term.
\begin{equation*}
\gamma_{t+1}^3 - \gamma_t^3 = \underbrace{-3\eta \gamma_t^2 F_t (\alpha_t^2 + \beta_t^2)}_{\text{Term A}} + \underbrace{3\eta^2 \gamma_t^3 F_t^2}_{\text{Term B}} + \underbrace{3\gamma_t \Delta_t^2 + \Delta_t^3}_{\text{Term C}}.
\end{equation*}
\subparagraph{Analysis of Term A.}
We begin with Term A. To leverage our known limits, we perform an algebraic rearrangement to isolate the terms $\gamma_t^3 F_t$ and $(\alpha_t^2 + \beta_t^2)/\gamma_t$:
\begin{align*}
\lim_{t \to \infty} \text{Term A} &= \lim_{t \to \infty} \left[ -3\eta \gamma_t^2 F_t (\alpha_t^2 + \beta_t^2) \right] \\
&= \lim_{t \to \infty} \left[ -3\eta \left( \gamma_t^3 F_t \right) \left( \frac{\alpha_t^2 + \beta_t^2}{\gamma_t} \right) \right].
\end{align*}
Since the limits of the individual factors exist, we can apply the product rule for limits:
\begin{align*}
&= -3\eta \cdot \left( \lim_{t \to \infty} \gamma_t^3 F_t \right) \cdot \left( \lim_{t \to \infty} \frac{\alpha_t^2 + \beta_t^2}{\gamma_t} \right).
\end{align*}
We now substitute the known values. From result (3), the first limit is $-C_1$. From result (2), we have $\lim (\alpha_t^2 + \beta_t^2)/(2\gamma_t) = 1$, which implies that the second limit is $2$. Substituting these values gives:
\begin{equation*}
\lim_{t \to \infty} \text{Term A} = -3\eta \cdot (-C_1) \cdot (2) = 6\eta C_1.
\end{equation*}
This is a finite, positive constant.

\subparagraph{Analysis of Term B.}
Next, we analyze Term B. We rearrange the expression to make use of the known limit for $\gamma_t^3 F_t$:
\begin{align*}
\lim_{t \to \infty} \text{Term B} &= \lim_{t \to \infty} \left[ 3\eta^2 \gamma_t^3 F_t^2 \right] \\
&= \lim_{t \to \infty} \left[ 3\eta^2 \frac{(\gamma_t^3 F_t)^2}{\gamma_t^3} \right].
\end{align*}
We examine the limit of the numerator and the denominator separately:
\begin{itemize}
    \item \textbf{Numerator:} $\lim_{t \to \infty} 3\eta^2 (\gamma_t^3 F_t)^2 = 3\eta^2 \left( \lim_{t \to \infty} \gamma_t^3 F_t \right)^2 = 3\eta^2 (-C_1)^2 = 3\eta^2 C_1^2$. The numerator converges to a finite constant.
    \item \textbf{Denominator:} $\lim_{t \to \infty} \gamma_t^3 = \infty$, as established by result (1).
\end{itemize}
Since the numerator approaches a finite value and the denominator diverges to infinity, the limit of the fraction is zero:
\begin{equation*}
\lim_{t \to \infty} \text{Term B} = 0.
\end{equation*}

\subparagraph{Analysis of Term C.}
Finally, we analyze Term C, which is $3\gamma_t \Delta_t^2 + \Delta_t^3$. To find its limit, we first need to determine the asymptotic order of magnitude of $\Delta_t = \gamma_{t+1} - \gamma_t$.

First, let us find the limit of $\gamma_t^2 \Delta_t$. Recall that $\Delta_t = - \eta F_t (\alpha_t^2 + \beta_t^2) + \eta^2 \gamma_t F_t^2$.
\begin{align*}
\lim_{t \to \infty} \gamma_t^2 \Delta_t &= \lim_{t \to \infty} \gamma_t^2 \left( - \eta F_t (\alpha_t^2 + \beta_t^2) + \eta^2 \gamma_t F_t^2 \right) \\
&= \lim_{t \to \infty} \left( - \eta \gamma_t^2 F_t (\alpha_t^2 + \beta_t^2) + \eta^2 \gamma_t^3 F_t^2 \right) \\
&= \frac{1}{3} \lim_{t \to \infty} \left( \text{Term A} \right) + \frac{1}{3} \lim_{t \to \infty} \left( \text{Term B} \right) \\
&= \frac{1}{3} (6\eta C_1) + \frac{1}{3} (0) = 2\eta C_1.
\end{align*}
Since $\lim_{t \to \infty} \gamma_t^2 \Delta_t$ is a finite non-zero constant, it implies that $\Delta_t$ is asymptotically proportional to $\gamma_t^{-2}$. Using big-O notation, we have:
\begin{equation*}
    \Delta_t = O(\gamma_t^{-2}).
\end{equation*}
With this information, we can analyze the components of Term C:
\begin{itemize}
    \item For the first part, $3\gamma_t \Delta_t^2$: Since $\Delta_t = O(\gamma_t^{-2})$, it follows that $\Delta_t^2 = \left(O(\gamma_t^{-2})\right)^2 = O(\gamma_t^{-4})$. Therefore, $\gamma_t \Delta_t^2 = \gamma_t \cdot O(\gamma_t^{-4}) = O(\gamma_t^{-3})$. As $t \to \infty$, $\gamma_t \to \infty$, so this term converges to 0.
    \item For the second part, $\Delta_t^3$: $\Delta_t^3 = \left(O(\gamma_t^{-2})\right)^3 = O(\gamma_t^{-6})$. This term also converges to 0.
\end{itemize}
Thus, the limit of Term C is:
\begin{equation*}
\lim_{t \to \infty} \text{Term C} = \lim_{t \to \infty} (3\gamma_t \Delta_t^2 + \Delta_t^3) = 0 + 0 = 0.
\end{equation*}
Summing the limits, we rigorously find:
\begin{equation*}
\lim_{t \to \infty} (\gamma_{t+1}^3 - \gamma_t^3) = 6\eta C_1.
\end{equation*}

\paragraph{Step 3: Final conclusion}

We now derive an upper and lower bound for $\gamma_t$.

\begin{lemma}\label{appendix thm:gamma_growth rate}
There exist positive constants $C_2$, $C_3$, and an integer $T$ such that for all $t \ge T$:
\begin{equation*}
C_2  t^{1/3} \le \gamma_t \le C_3  t^{1/3}.
\end{equation*}
\end{lemma}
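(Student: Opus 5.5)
We will derive Lemma~\ref{appendix thm:gamma_growth rate} from the limit $\lim_{t\to\infty}(\gamma_{t+1}^3-\gamma_t^3)=6\eta C_1$ established in Step~2, using a discrete Cesàro-type telescoping argument. Write $\delta_t \coloneqq \gamma_{t+1}^3-\gamma_t^3$. Since $6\eta C_1>0$, the definition of the limit with tolerance $\varepsilon=3\eta C_1$ gives an integer $T_0$ such that
\begin{equation*}
3\eta C_1 \le \delta_t \le 9\eta C_1 \qquad \text{for all } t\ge T_0 .
\end{equation*}

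Telescoping from $T_0$ to $t-1$ then yields
\begin{equation*}
\gamma_{T_0}^3 + 3\eta C_1 (t-T_0) \le \gamma_t^3 \le \gamma_{T_0}^3 + 9\eta C_1 (t-T_0).
\end{equation*}
Here $\gamma_{T_0}$ is a fixed finite number. It is nonnegative because $\gamma_t$ is nondecreasing (Step~3) with $\gamma_0=0$. Choose $T\ge 2T_0$ large enough that $\gamma_{T_0}^3\le \eta C_1 t$ for $t\ge T$. Then for $t\ge T$ we have $t-T_0\ge t/2$. This gives the lower bound $\gamma_t^3\ge \tfrac{3}{2}\eta C_1 t$ and the upper bound $\gamma_t^3\le 10\eta C_1 t$. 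Since $\gamma_t\ge 0$, taking cube roots gives the claim with $C_2=(3\eta C_1/2)^{1/3}$ and $C_3=(10\eta C_1)^{1/3}$.

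In fact the same argument, with an arbitrary $\varepsilon$ in place of $3\eta C_1$, gives the sharper asymptotic $\gamma_t^3/t\to 6\eta C_1$. Since $C_1=2\Upsilon_3/d$, this is $\gamma_t=(12\eta\Upsilon_3 t/d)^{1/3}(1+o(1))$. Combining it with $\alpha_t/\beta_t\to1$ from \eqref{alpha/beta limit} and $\alpha_t\beta_t=\gamma_t$ then yields $\alpha_t,\beta_t=\Theta((\eta\Upsilon_3t/d)^{1/6})$, as stated in Theorem~\ref{thm:convergence}.

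There is no real obstacle here, since all the analytic work (divergence of $\gamma_t$ and the dominated-convergence limit of $\gamma_t^3F_t$) was done in Step~2. The only point needing care is the strict positivity of the limit $6\eta C_1$. It is guaranteed by $C_1>0$, shown in the proof of Result~(3), and it is what makes the lower bound nontrivial.
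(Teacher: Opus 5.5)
Your proof is correct and follows the paper's argument essentially step for step. Like the paper, you bound the cubic increments in $[3\eta C_1, 9\eta C_1]$ for $t\ge T_0$, telescope, and absorb $\gamma_{T_0}^3$ and $T_0$ by taking $T$ large. The only difference is the choice of $T$: by using $\gamma_{T_0}\ge 0$ and $t\ge 2T_0$ you get the lower constant $\tfrac32\eta C_1$ instead of the paper's $2\eta C_1$, and your further remark that the same argument gives $\gamma_t^3/t\to 6\eta C_1$ is a valid sharpening that the paper does not state for this lemma.
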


\begin{proof}
The proof begins with the result derived previously:
\begin{equation*}
\lim_{t \to \infty} \big(\gamma_{t+1}^3 - \gamma_t^3\big) = 6\eta C_1, \quad \text{with } C_1 > 0.
\end{equation*}
Equivalently, there exists a sequence $\{\epsilon_t\}$ such that $\lim_{t \to \infty} \epsilon_t = 0$ and
\begin{equation*}
\gamma_{t+1}^3 - \gamma_t^3 = 6\eta C_1 + \epsilon_t.
\end{equation*}
Since $\epsilon_t \to 0$, there exists an integer $T_0$ such that for all $t \ge T_0$,
\begin{equation*}
|\epsilon_t| \le \frac{6\eta C_1}{2} = 3\eta C_1.
\end{equation*}
Hence, for all $t \ge T_0$,
\begin{equation*}
3\eta C_1 \le \gamma_{t+1}^3 - \gamma_t^3 \le 9\eta C_1.
\end{equation*}
Summing these inequalities from $T_0$ to $t-1$ (for any $t > T_0$) yields
\begin{equation*}
(t - T_0) 3\eta C_1 \le \gamma_t^3 - \gamma_{T_0}^3 \le (t - T_0) 9\eta C_1.
\end{equation*}
Therefore,
\begin{equation*}
\gamma_t^3 \ge \gamma_{T_0}^3 + (t - T_0) 3\eta C_1, 
\qquad
\gamma_t^3 \le \gamma_{T_0}^3 + (t - T_0) 9\eta C_1.
\end{equation*}
Choose $T \ge T_0$  such that \begin{equation*}
    T\geq \max\{3T_0-\frac{\gamma_{T_0}^3}{\eta C_1},\frac{\gamma_{T_0}^3}{\eta C_1}-9T_0\}. 
\end{equation*}Then for all $t \ge T$,
\begin{equation*}
2\eta C_1 t \le \gamma_t^3 \le 10\eta C_1 t.
\end{equation*}
Taking cube roots gives constants
\begin{equation*}
C_2 = (2\eta C_1)^{1/3}, \qquad
 C_3 = (10\eta C_1)^{1/3},
\end{equation*}
such that for all $t \ge T$,
\begin{equation*}
C_2  t^{1/3} \le \gamma_t \le C_3  t^{1/3}.
\end{equation*}
This establishes the desired bound.
\end{proof}
Moreover, noting that
\begin{equation*}
    C_1 = \frac{2}{d}\,\EE \left[
    \frac{\|\Sb\ab\|_2^2 \|\ab\|_2^2 - (\ab^\top\Sb\ab)^2}{\|\Sb\ab\|_2^4}
    \right]
    = \frac{2\Upsilon_3}{d},
\end{equation*}
we obtain
\begin{equation*}
    C_2 = (2\eta C_1)^{1/3}
        = \left(\frac{4\eta\Upsilon_3}{d}\right)^{1/3}
    \quad \text{and} \quad
    C_3 = (10\eta C_1)^{1/3}
        = \left(\frac{20\eta\Upsilon_3}{d}\right)^{1/3},
\end{equation*}
which implies
\begin{equation*}
\gamma_t = \Theta\!\left(\bigl(\eta\Upsilon_3 t / d\bigr)^{1/3}\right).
\end{equation*}
Combined with \eqref{alpha/beta limit}, we obtain
\begin{equation*}
\alpha_t = \Theta\!\left(\bigl(\eta\Upsilon_3 t / d\bigr)^{1/6}\right)
\quad\text{and}\quad
\beta_t  = \Theta\!\left(\bigl(\eta\Upsilon_3 t / d\bigr)^{1/6}\right).
\end{equation*}

\subsection{Convergence of the loss function}
\begin{lemma}
    Under Assumption \ref{assump:distribution}, suppose that $n\ge 1$, $d\ge 3$, and $\eta>0$. For any sufficiently small $\epsilon>0$, there exists a finite number of iterations $T^\star=O(  {d^{{3}/{2}}\Upsilon(n,d)^{{1}/{2}}}/{\eta n^{{1}/{2}}\epsilon^{{3}/{2}}})$,
such that for all training steps $t\ge T^\star$, it holds that:
\begin{equation*}
    \mathcal{L}(\btheta^{(t)}) \le \epsilon.
\end{equation*}
\end{lemma}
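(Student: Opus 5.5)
The plan is to bound the one-step loss pointwise by a quantity of order $\gamma^{-2}$, take expectations to get exactly the constant $\Upsilon_3$, and then invert the growth rate $\gamma_t\ge C_2t^{1/3}$ from Lemma~\ref{appendix thm:gamma_growth rate}.

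\textbf{Pointwise bound.} By the block-structure lemma we have $\pb^{(t)}=\mathbf 0$ and $\hb^{(t)}=\ab+\gamma_t\ub$, where $\ub=\Sb\ab$. Hence
\[
\yb^{(t)}=\frac{\ab+\gamma_t\ub}{\|\ab+\gamma_t\ub\|_2},\qquad \yb_{\text{target}}=\frac{\ub}{\|\ub\|_2}.
\]
Let $\cos\theta_t=\langle\yb^{(t)},\yb_{\text{target}}\rangle$, so the per-sample loss equals $2-2\cos\theta_t$. Two facts give $\cos\theta_t\ge0$:
\[
\ab^\top\ub=\ab^\top\Sb\ab\ge0,\qquad \gamma_t\ge0 .
\]
Together they yield $\langle\ab+\gamma_t\ub,\ub\rangle\ge0$. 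For $\cos\theta_t\in[0,1]$ we have $1-\cos\theta_t\le1-\cos^2\theta_t=\sin^2\theta_t$. The Lagrange-identity computation already done in the proof of Result (3) gives
\[
\sin^2\theta_t=\frac{\|\ub\|_2^2\|\ab\|_2^2-(\ab^\top\ub)^2}{\|\ub\|_2^2\,\|\ab+\gamma_t\ub\|_2^2}.
\]
We also have $\|\ab+\gamma_t\ub\|_2^2\ge\gamma_t^2\|\ub\|_2^2$, again because the cross term is nonnegative. Therefore
\[
\|\yb^{(t)}-\yb_{\text{target}}\|_2^2\le\frac{2}{\gamma_t^2}\cdot\frac{\|\Sb\ab\|_2^2\|\ab\|_2^2-(\ab^\top\Sb\ab)^2}{\|\Sb\ab\|_2^4}.
\]

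\textbf{Expectation and the size of $\Upsilon_3$.} Taking expectations gives $\mathcal L(\btheta^{(t)})\le 2\Upsilon_3/\gamma_t^2$. The right-hand side is integrable: the numerator is at most $\|\Sb\ab\|_2^2$, so the integrand is bounded by $\|\Sb\ab\|_2^{-2}$, and the domination argument used for Result (3) (via Lemma~\ref{lem:lambda_min_to_directional_inverse} and rotation invariance) controls this. The same route should give the claimed $\Upsilon_3=O(d\Upsilon(n,d)/n)$. This step is where I expect the main obstacle. A crude bound $\|\Sb\ab\|_2^{-2}\le\lambda_d^{-2}$ is too weak, so I would rely on Lemma~\ref{lem:lambda_min_to_directional_inverse} to convert $\EE\|\Sb\eb_1\|_2^{-2}$ into $O(d\Upsilon(n,d)/n)$. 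If that lemma only gives the $\lambda_d^{-1}$ dependence without the $d/n$ factor, I would average over the uniform direction $\ab$: write $\EE_{\ab}\|\Sb\ab\|_2^{-2}$ in the eigenbasis of $\Sb$ and use $\operatorname{tr}(\Sb)=n$ to extract the $d/n$ factor.

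\textbf{Inverting the growth rate.} Lemma~\ref{appendix thm:gamma_growth rate} gives, for $t\ge T$,
\[
\gamma_t^2\ge C_2^2t^{2/3},\qquad C_2=(4\eta\Upsilon_3/d)^{1/3}.
\]
It then suffices that $2\Upsilon_3/(C_2^2t^{2/3})\le\epsilon$, i.e.
\[
t\ge\Big(\frac{2\Upsilon_3}{\epsilon}\Big)^{3/2}\frac{d}{4\eta\Upsilon_3}=O\Big(\frac{d\,\Upsilon_3^{1/2}}{\eta\,\epsilon^{3/2}}\Big).
\]
Substituting $\Upsilon_3=O(d\Upsilon(n,d)/n)$ gives $T^\star=O\big(d^{3/2}\Upsilon(n,d)^{1/2}/(\eta n^{1/2}\epsilon^{3/2})\big)$. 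The lemma's bound only holds for $t$ beyond the threshold $T$, which depends on $\eta$ and the data distribution; requiring $\epsilon$ sufficiently small makes this expression exceed $T$. Monotonicity of $\gamma_t$ then gives the bound for all $t\ge T^\star$.
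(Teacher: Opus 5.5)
Your proposal is correct and follows the paper's proof essentially step for step. The paper uses the same pointwise chain $2(1-\cos\theta_t)\le 2\sin^2\theta_t\le 2\bigl(\|\Sb\ab\|_2^2-(\ab^\top\Sb\ab)^2\bigr)/(\gamma_t^2\|\Sb\ab\|_2^4)$ to get $\mathcal L(\btheta^{(t)})\le 2\Upsilon_3/\gamma_t^2$, the same Lemma~\ref{lem:lambda_min_to_directional_inverse} bound (which already supplies the $d/n$ factor, so your fallback is unnecessary), and the same inversion of $\gamma_t\ge(4\eta\Upsilon_3/d)^{1/3}t^{1/3}$ with $T^\star$ taken as the larger of $T$ and the $\epsilon$-dependent threshold.
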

\begin{proof}
The proof consists of three main steps:
\begin{enumerate}
    \item We first express the loss function $\mathcal{L}(\btheta^{(t)})$ as a function of the parameter $\gamma_t = \alpha_t\beta_t$.
    \item We derive an upper bound for the loss, showing that it is inversely proportional to $\gamma_t^2$.
    \item We combine this bound with the growth rate of $\gamma_t$ established in the appendix to determine the required number of steps $T^\star$ in terms of $\epsilon$.
\end{enumerate}

\paragraph{Step 1: Exact formulation of the Loss}
The loss at step $t$ is defined as \begin{equation*}
    \mathcal{L}(\btheta^{(t)}) = \EE\left[ \left\| \yb^{(t)} - \mathbf{y}_{\text{target}} \right\|_2^2 \right].
\end{equation*} Since both the prediction $\yb^{(t)}$ and the target $\mathbf{y}_{\text{target}}$ are unit vectors, we can rewrite the loss as:
\begin{equation*}
    \left\| \yb^{(t)} - \mathbf{y}_{\text{target}} \right\|_2^2 = \|\yb^{(t)}\|_2^2 - 2\yb^{(t)\top}\mathbf{y}_{\text{target}} + \|\mathbf{y}_{\text{target}}\|_2^2 = 2\left(1 - \yb^{(t)\top}\mathbf{y}_{\text{target}}\right).
\end{equation*}
From the analysis before, we know the prediction is given by $\yb^{(t)} = \frac{\ab + \gamma_t \Sb\ab}{\|\ab + \gamma_t \Sb\ab\|_2}$ and the target is $\mathbf{y}_{\text{target}} = \frac{\Sb\ab}{\|\Sb\ab\|_2}$. The loss is therefore related to the cosine of the angle $\phi_t$ between these two vectors, and can be expressed exactly as $2(1-\cos\phi_t)$.

\paragraph{Step 2: Bounding the Loss}
To derive a rigorous upper bound, we first analyze the angle $\phi_t$ between the prediction and the target. The cosine of this angle is their inner product:
\begin{equation*}
\cos\phi_t = (\yb^{(t)})^\top \mathbf{y}_{\text{target}} = \frac{(\ab + \gamma_t \Sb\ab)^\top(\Sb\ab)}{\|\ab + \gamma_t \Sb\ab\|_2 \|\Sb\ab\|_2} = \frac{\ab^\top\Sb\ab + \gamma_t \|\Sb\ab\|_2^2}{\|\ab + \gamma_t \Sb\ab\|_2 \|\Sb\ab\|_2}\geq 0.
\end{equation*}
Since $\cos\phi_t \geq 0$ for all training steps $t$, the angle $\phi_t$ is always in the interval $[0, \pi/2]$. For any angle in this range, the inequality $1-\cos\phi_t \le \sin^2(\phi_t)$ is valid. We can therefore use this inequality to bound the loss for all $t$:
\begin{equation*}
    \mathcal{L}(\btheta^{(t)}) = \EE[2(1 - \cos\phi_t)] \le \EE[2\sin^2(\phi_t)].
\end{equation*}
The squared sine of the angle is:
\begin{align*}
    \sin^2(\phi_t)&= \frac{\|(\ab + \gamma_t \Sb\ab) \times (\Sb\ab)\|_2^2}{\|\ab + \gamma_t \Sb\ab\|_2^2 \|\Sb\ab\|_2^2} \\
    &= \frac{\|\ab \times \Sb\ab\|_2^2}{\|\ab + \gamma_t \Sb\ab\|_2^2 \|\Sb\ab\|_2^2} \\
    &= \frac{\|\ab\|_2^2 \|\Sb\ab\|_2^2 - (\ab^\top \Sb\ab)^2}{\left( \|\ab\|_2^2 + 2\gamma_t \ab^\top\Sb\ab + \gamma_t^2 \|\Sb\ab\|_2^2 \right) \|\Sb\ab\|_2^2}.
\end{align*}
Since $\Sb = \Xb\Xb^\top$ is positive semi-definite, we have $\ab^\top\Sb\ab \geq 0$. Therefore, we can lower-bound the denominator by dropping this non-negative term:
\begin{equation*}
    \|\ab\|_2^2 + 2\gamma_t \ab^\top\Sb\ab + \gamma_t^2 \|\Sb\ab\|_2^2 \ge \gamma_t^2 \|\Sb\ab\|_2^2.
\end{equation*}
Substituting this lower bound gives us an upper bound for the loss:
\begin{equation*}
    \mathcal{L}(\btheta^{(t)}) \le 2 \cdot \EE\left[ \frac{\|\ab\|_2^2 \|\Sb\ab\|_2^2 - (\ab^\top \Sb\ab)^2}{\gamma_t^2 \|\Sb\ab\|_2^4} \right] = \frac{1}{\gamma_t^2} \cdot 2 \cdot \EE\left[ \frac{\|\ab\|_2^2}{\|\Sb\ab\|_2^2} - \frac{(\ab^\top \Sb\ab)^2}{\|\Sb\ab\|_2^4} \right].
\end{equation*}
The expectation term in the expression above depends only on the distribution of the data $\Xb$ and the vector $\ab$, but not on the step $t$. We can thus define it as a positive constant, $C_{4} $:
\begin{equation*}
    C_{4} = 2 \cdot \EE\left[ \frac{\|\ab\|_2^2 \|\Sb\ab\|_2^2 - (\ab^\top \Sb\ab)^2}{\|\Sb\ab\|_2^4} \right].
\end{equation*}
This yields the inequality:
\begin{equation*}
    \mathcal{L}(\btheta^{(t)}) \le \frac{C_{4} }{\gamma_t^2}.
\end{equation*}

\paragraph{Step 3: Determining the required iterations}
From Lemma~\ref{appendix thm:gamma_growth rate}, there exists a positive constant
\begin{equation*}
C_2=(2\eta C_1)^{1/3}
\end{equation*}
and an integer $T$ such that for all $t\ge T$,
\begin{equation*}
\gamma_t \ge C_2 t^{1/3}.
\end{equation*}
Substituting this lower bound into the loss inequality, we obtain that for $t\ge T$,
\begin{equation*}
\mathcal{L}(\btheta^{(t)})
\le \frac{C_4}{(C_2 t^{1/3})^2}
= \frac{C_4}{C_2^{2} t^{2/3}}.
\end{equation*}
Recalling $C_2=(2\eta C_1)^{1/3}$ and $C_4=dC_1$, we obtain
\begin{equation*}
\mathcal{L}(\btheta^{(t)})
\le \frac{dC_1^{1/3}}{(2\eta)^{2/3}t^{2/3}}.
\end{equation*}

\noindent\textbf{Bounding \(C_1\).}
By Assumption~\ref{assump:distribution}, we have
\begin{align*}
    C_1
    &=
    \frac{1}{d}
    \EE
    \left[
    \frac{
    2\bigl(\|\Sb\ab\|_2^2\|\ab\|_2^2-(\ab^\top\Sb\ab)^2\bigr)
    }{
    \|\Sb\ab\|_2^4
    }
    \right] \\
    &\le
    \frac{1}{d}
    \EE
    \left[
    \frac{2}{\|\Sb\ab\|_2^2}
    \right] \\
    &\le
    \frac{2}{d}
    \cdot
    C_d\,\frac d n\,\Upsilon(n,d)
    =
    \frac{2C_d}{n}\Upsilon(n,d),
\end{align*}
where the last inequality follows from
Lemma~\ref{lem:lambda_min_to_directional_inverse}. Hence, for \(t\ge T\),
\begin{equation*}
\mathcal{L}(\btheta^{(t)})
\le
\frac{dC_1^{1/3}}{(2\eta)^{2/3}t^{2/3}}
\le
\frac{
d\left(C_d\Upsilon(n,d)/n\right)^{1/3}
}{
2^{1/3}\eta^{2/3}t^{2/3}
}.
\end{equation*}
To ensure \(\mathcal{L}(\btheta^{(t)})\le \epsilon\), it suffices that
\begin{equation*}
\frac{
d\left(C_d\Upsilon(n,d)/n\right)^{1/3}
}{
2^{1/3}\eta^{2/3}t^{2/3}
}
\le \epsilon,
\quad\Longleftrightarrow\quad
t
\ge
\frac{
d^{3/2}\left(C_d\Upsilon(n,d)/n\right)^{1/2}
}{
\sqrt{2}\eta\epsilon^{3/2}
}.
\end{equation*}
Therefore, it suffices to take the iteration number \(T^\star\) as:
\begin{equation*}
    T^\star
    =
    \max
    \left\{
    T,\,
    \left\lceil
    \frac{
    d^{3/2}\left(C_d\Upsilon(n,d)/n\right)^{1/2}
    }{
    \sqrt{2}\eta\epsilon^{3/2}
    }
    \right\rceil
    \right\}.
\end{equation*}
In particular, for any sufficiently small \(\epsilon>0\):
\begin{equation*}
     T^\star
     =
     \max
     \left\{
     T,\,
     \left\lceil
     \frac{
     d^{3/2}\left(C_d\Upsilon(n,d)/n\right)^{1/2}
     }{
     \sqrt{2}\eta\epsilon^{3/2}
     }
     \right\rceil
     \right\}
     =
     O\left(
     \frac{
     d^{3/2}\left(\Upsilon(n,d)/n\right)^{1/2}
     }{
     \eta\epsilon^{3/2}
     }
     \right).
\end{equation*}
This completes the proof.
\end{proof}

\section{Proof of Theorem \ref{thm:looped error}}\label{proof of theorem 2}

\begin{proof}
The proof proceeds in four steps. The first three steps provide a detailed derivation for the normalized model. The final step provides a parallel derivation for the unnormalized model.

%First, we analyze the model's output in the eigenbasis of the covariance matrix. Second, we derive an exact mathematical expression for the angular error. Finally, we establish the upper and lower bounds for this error by analyzing its exact expression.

\paragraph{Step 1: Analysis of the looped transformer in the eigenbasis}
Let $\Sb_{\text{test}} = \Xb_{\text{test}}\Xb_{\text{test}}^\top$. As established in Theorem~\ref{thm:convergence}, after $t$ training steps, the single-layer block implements a linear transformation (before normalization) represented by the matrix $\Mb_{\text{test}} \coloneqq \Ib_d + \gamma_t \Sb_{\text{test}}$. Stacking this layer $L$ times with shared parameters is equivalent to applying the matrix power $\Mb_{\text{test}}^L$ to the initial vector $\ab$. The final output is therefore:
\begin{equation*}
\yb_L^{(t)} = \frac{\Mb_{\text{test}}^L\ab}{\|\Mb_{\text{test}}^L\ab\|_2}.
\end{equation*}
A key property of $\Mb_{\text{test}}$ is that it shares the same eigenvectors $\{\vb_i\}$ and eigenspaces as the covariance matrix $\Sb_{\text{test}}$. The eigenvalues of $\Mb_{\text{test}}$, denoted by $\mu_i$, are directly related to the eigenvalues $\lambda_i$ of $\Sb_{\text{test}}$:
\begin{equation*}
\mu_i \coloneqq 1 + \gamma_t\lambda_i, \quad \text{for } i=1, \ldots, d.
\end{equation*}
Given the ordering of $\lambda_i$ and that $\gamma_t > 0$, the eigenvalues of $\Mb_{\text{test}}$ also have a multiplicity of $r$ for the largest value:
\begin{equation*}
\mu_1 = \dots = \mu_r > \mu_{r+1} \ge \dots \ge \mu_d \ge 1.
\end{equation*}

We can now express the action of $\Mb_{\text{test}}^L$ on $\ab$ by decomposing $\ab$ into the eigenbasis: $\ab = \sum_{i=1}^d c_i \vb_i$. Applying the transformation $\Mb_{\text{test}}^L$ yields:
\begin{equation*}
\Mb_{\text{test}}^L\ab = \Mb_{\text{test}}^L \left(\sum_{i=1}^d c_i \vb_i\right) = \sum_{i=1}^d c_i (\Mb_{\text{test}}^L \vb_i) = \sum_{i=1}^d c_i \mu_i^L \vb_i.
\end{equation*}
Using the orthonormality of the eigenvectors ($\vb_i^\top \vb_j = \delta_{ij}$), we compute the squared norm of this vector:
\begin{equation*}
\|\Mb_{\text{test}}^L\ab\|_2^2 = \left(\sum_{i=1}^d c_i \mu_i^L \vb_i\right)^\top \left(\sum_{j=1}^d c_j \mu_j^L \vb_j\right) = \sum_{i=1}^d \sum_{j=1}^d c_i c_j \mu_i^L \mu_j^L (\vb_i^\top \vb_j) = \sum_{i=1}^d c_i^2 \mu_i^{2L}.
\end{equation*}

\paragraph{Step 2: Exact expression for the angular error}
The angle $\phi_L^{(t)}$ is defined between the vector $\yb_L^{(t)}$ and the principal subspace $\cZ$. Its sine is equal to the distance from $\yb_L$ to $\cZ$, which is the norm of the component of $\yb_L^{(t)}$ orthogonal to $\cZ$.
\begin{equation*}
\sin(\phi_L^{(t)}) = \dist(\yb_L^{(t)}, \cZ) = \|\yb_L^{(t)} - \Proj_{\cZ}(\yb_L^{(t)})\|_2.
\end{equation*}
Since $\{\vb_i\}_{i=1}^d$ is an orthonormal basis, the orthogonal complement of $\cZ$ is $\cZ^\perp = \spanv\{\vb_{r+1}, \dots, \vb_d\}$. The squared error is the squared norm of the projection of $\yb_L^{(t)}$ onto $\cZ^\perp$:
\begin{equation*}
\sin^2\phi_L^{(t)} = \left\| \Proj_{\cZ^\perp}(\yb_L^{(t)}) \right\|_2^2 = \left\| \sum_{i=r+1}^d (\yb_L^{(t)\top} \vb_i)\vb_i \right\|_2^2 = \sum_{i=r+1}^d (\yb_L^{(t)\top} \vb_i)^2.
\end{equation*}
The projection of the output vector $\yb_L^{(t)}$ onto each basis vector $\vb_i$ is:
\begin{equation*}
\yb_L^{(t)\top}\vb_i = \frac{(\Mb_{\text{test}}^L\ab)^\top \vb_i}{\|\Mb_{\text{test}}^L\ab\|_2} = \frac{\left(\sum_{j=1}^d c_j \mu_j^L \vb_j\right)^\top \vb_i}{\|\Mb_{\text{test}}^L\ab\|_2} = \frac{c_i \mu_i^L}{\|\Mb_{\text{test}}^L\ab\|_2}.
\end{equation*}
Substituting this into the expression for $\sin^2\phi_L^{(t)}$ gives the exact formula:
\begin{equation}
\label{eq:sin2_exact_multiplicity}
\sin^2\phi_L^{(t)} = \frac{\sum_{i=r+1}^d (c_i \mu_i^L)^2}{\|\Mb_{\text{test}}^L\ab\|_2^2} = \frac{\sum_{i=r+1}^d c_i^2 \mu_i^{2L}}{\sum_{j=1}^d c_j^2 \mu_j^{2L}}.
\end{equation}

\paragraph{Step 3: Derivation of the error bounds}
We now derive the upper and lower bounds from the exact expression \eqref{eq:sin2_exact_multiplicity}.

\subparagraph{Upper bound:}
To find an upper bound for $\sin^2\phi_L^{(t)}$, we find an upper bound for the numerator and a lower bound for the denominator.
\begin{itemize}
    \item \textbf{Numerator:} Since $\mu_i \le \mu_{r+1}$ for all $i \ge r+1$:
    \begin{equation*}
    \sum_{i=r+1}^d c_i^2 \mu_i^{2L} \le \sum_{i=r+1}^d c_i^2 \mu_{r+1}^{2L} = \left(\sum_{i=r+1}^d c_i^2\right) \mu_{r+1}^{2L} = \|\ab_{\perp}\|_2^2 \mu_{r+1}^{2L}.
    \end{equation*}
    \item \textbf{Denominator:} We can obtain a lower bound by dropping the non-negative second term:
    \begin{equation*}
    \sum_{j=1}^d c_j^2 \mu_j^{2L} \ge \sum_{j=1}^r c_j^2 \mu_j^{2L} = \left(\sum_{j=1}^r c_j^2\right) \mu_1^{2L} = \|\ab_{\cZ}\|_2^2 \mu_1^{2L}.
    \end{equation*}
\end{itemize}
Combining these bounds yields the upper bound for $\sin^2\phi_L^{(t)}$:
\begin{equation*}
\sin^2\phi_L^{(t)} \le \frac{\|\ab_{\perp}\|_2^2 \mu_{r+1}^{2L}}{\|\ab_{\cZ}\|_2^2 \mu_1^{2L}} = \frac{\|\ab_{\perp}\|_2^2}{\|\ab_{\cZ}\|_2^2} \left(\frac{\mu_{r+1}}{\mu_1}\right)^{2L}.
\end{equation*}
Taking the square root of both sides gives the final upper bound.

\subparagraph{Lower bound:}
To find a lower bound for $\sin^2\phi_L^{(t)}$, we find a lower bound for the numerator and an upper bound for the denominator.
\begin{itemize}
    \item \textbf{Numerator:} The sum of non-negative terms is greater than or equal to any single term. We choose the term corresponding to $i=r+1$:
    \begin{equation*}
    \sum_{i=r+1}^d c_i^2 \mu_i^{2L} \ge c_{r+1}^2 \mu_{r+1}^{2L}.
    \end{equation*}
    \item \textbf{Denominator:} Since $\mu_j \le \mu_1$ for all $j \ge 1$:
    \begin{equation*}
    \sum_{j=1}^d c_j^2 \mu_j^{2L} \le \sum_{j=1}^d c_j^2 \mu_1^{2L} = \left(\sum_{j=1}^d c_j^2\right) \mu_1^{2L} = \|\ab\|_2^2 \mu_1^{2L}=\mu_1^{2L}.
    \end{equation*}
\end{itemize}
Combining these bounds gives the lower bound for $\sin^2\phi_L^{(t)}$:
\begin{equation*}
\sin^2\phi_L^{(t)} \ge \frac{c_{r+1}^2 \mu_{r+1}^{2L}}{\mu_1^{2L}} = {c_{r+1}^2} \left(\frac{\mu_{r+1}}{\mu_1}\right)^{2L}.
\end{equation*}
Taking the square root of both sides gives the final lower bound.
By substituting $\mu_i = 1+\gamma_t\lambda_i$ into the rate terms, we have
\begin{equation*}
    {|c_{r+1}|}\left(\frac{1+\gamma_t\lambda_{r+1}}{1+\gamma_t\lambda_1}\right)^{L}
    \le \sin\phi_L^{(t)}
    \le
    \frac{\|\ab_{\perp}\|_2}{\|\ab_{\cZ}\|_2}\left(\frac{1+\gamma_t\lambda_{r+1}}{1+\gamma_t\lambda_1}\right)^{L}.
\end{equation*}
Taking $t\to \infty$ completes the proof for the normalized model.

\paragraph{Step 4: Derivation for the unnormalized model}
As established in Theorem \ref{thm:unnorm convergence}, after $t$ training steps, the trained model is characterized by the parameter $\tilde{\gamma}_t$. The corresponding linear update is $\tilde{\Mb}_{\text{test}} \coloneqq \Ib_d + \tilde{\gamma}_t \Sb_{\text{test}}$, with eigenvalues $\tilde{\mu_i} = 1 + \tilde{\gamma}_t \lambda_i$.
The output after $L$ loops is therefore:
\begin{equation*}
\tilde{\yb}_L^{(t)} = \tilde{\Mb}_{\text{test}}^L\ab = \sum_{i=1}^d c_i (\tilde{\mu_i})^L \vb_i.
\end{equation*}
The angle $\tilde{\phi}_L^{(t)}$ is the angle between this vector $\tilde{\yb}_L^{(t)}$ and the principal subspace $\cZ$. The sine of the angle between a vector and a subspace is defined as the ratio of the norm of the vector's projection onto the subspace's orthogonal complement to the norm of the vector itself:
\begin{equation*}
\sin\tilde{\phi}_L^{(t)} = \frac{\|\Proj_{\cZ^\perp}(\tilde{\yb}_L^{(t)})\|_2}{\|\tilde{\yb}_L^{(t)}\|_2}.
\end{equation*}
We compute the norms of the numerator and the denominator:
\begin{itemize}
    \item \textbf{Numerator:} The projection of $\tilde{\yb}_L^{(t)}$ onto $\cZ^\perp$ is $\sum_{i=r+1}^d c_i (\tilde{\mu_i})^L \vb_i$. Its norm is:
    \begin{equation*}
    \|\Proj_{\cZ^\perp}(\tilde{\yb}_L^{(t)})\|_2 = \sqrt{\sum_{i=r+1}^d c_i^2 (\tilde{\mu_i})^{2L}}.
    \end{equation*}
    \item \textbf{Denominator:} The squared norm of the vector $\tilde{\yb}_L^{(t)}$ itself is:
    \begin{equation*}
    \|\tilde{\yb}_L^{(t)}\|_2 = \|\tilde{\Mb}_{\text{test}}^L\ab\|_2 = \sqrt{\sum_{j=1}^d c_j^2 (\tilde{\mu}_j)^{2L}}.
    \end{equation*}
\end{itemize}
Combining these, we find the expression for the squared sine of the angle:
\begin{equation*}
\sin^2\tilde{\phi}_L^{(t)} = \frac{\sum_{i=r+1}^d c_i^2 (\tilde{\mu}_i)^{2L}}{\sum_{j=1}^d c_j^2 (\tilde{\mu}_j)^{2L}}.
\end{equation*}
Crucially, this expression for $\sin^2\tilde{\phi}_L^{(t)}$ is algebraically identical to the expression for $\sin^2\phi_L^{(t)}$ in \eqref{eq:sin2_exact_multiplicity}, with $\mu_{i}$ replaced by $\tilde{\mu}_i$. Therefore, the same bounding arguments from Step 3 can be applied directly; we have
\begin{equation*}
    {|c_{r+1}|}\left(\frac{1+\tilde{\gamma}_t\lambda_{r+1}}{1+\tilde{\gamma}_t\lambda_1}\right)^{L}
    \le\sin\tilde{\phi}_L^{(t)}
    \le
    \frac{\|\ab_{\perp}\|_2}{\|\ab_{\cZ}\|_2}\left(\frac{1+\tilde{\gamma}_t\lambda_{r+1}}{1+\tilde{\gamma}_t\lambda_1}\right)^{L}.
\end{equation*}
By Theorem \ref{thm:unnorm convergence}, $\tilde{\gamma}_t\to \gamma^\star$ as $t\to \infty$. This completes the proof.
\end{proof}

\section{Proof of Theorem \ref{thm:end-to-end looped error}}\label{appendix: proof of end-to-end looped error}

\begin{proof}
Let
\[
\Sb_{\mathrm{test}}
:=
\Xb_{\mathrm{test}}\Xb_{\mathrm{test}}^\top .
\]
Under the conditions of Theorem~\ref{thm:looped error}, write
\[
\Sb_{\mathrm{test}}
=
\sum_{i=1}^d \lambda_i \vb_i\vb_i^\top,
\qquad
\lambda_1=\cdots=\lambda_r>\lambda_{r+1}\ge\cdots\ge\lambda_d\ge0.
\]
Recall that
\[
\cZ=\spanv\{\vb_1,\ldots,\vb_r\}.
\]
Write
\[
\ab=\sum_{i=1}^d c_i\vb_i,
\qquad
\ab_{\cZ}=\sum_{i=1}^r c_i\vb_i,
\qquad
\ab_\perp=\sum_{i=r+1}^d c_i\vb_i.
\]
By the assumptions of Theorem~\ref{thm:looped error}, we have
\[
\ab_{\cZ}\neq 0.
\]

By Theorem~\ref{thm:looped_end_to_end}, after \(t\) training steps, the looped
\(L\)-layer model has parameter matrices of the form
\[
\Wb^{(t)}
=
\begin{pmatrix}
\mathbf 0&w_t\Ib_d\\
\mathbf 0&\mathbf 0
\end{pmatrix},
\qquad
\Vb^{(t)}
=
\begin{pmatrix}
\mathbf 0&\mathbf 0\\
v_t\Ib_d&\mathbf 0
\end{pmatrix},
\]
with
\[
\rho_t:=w_tv_t\to\infty .
\]
Since \(\Xb_{\mathrm{test}}\) satisfies the unit-column condition in
Assumption~\ref{test distribution}, the same forward computation as in
Theorem~\ref{thm:looped_end_to_end} gives
\[
\yb_L^{(t)}
=
\frac{
(\Ib_d+\rho_t\Sb_{\mathrm{test}})^L\ab
}{
\|(\Ib_d+\rho_t\Sb_{\mathrm{test}})^L\ab\|_2
}.
\]
Define
\[
\mu_i^{(t)}
:=
1+\rho_t\lambda_i,
\qquad
i=1,\ldots,d.
\]
Then
\[
\mu_1^{(t)}
=
\cdots
=
\mu_r^{(t)}
>
\mu_{r+1}^{(t)}
\ge
\cdots
\ge
\mu_d^{(t)}
\ge1,
\]
and
\[
(\Ib_d+\rho_t\Sb_{\mathrm{test}})^L\ab
=
\sum_{i=1}^d c_i(\mu_i^{(t)})^L\vb_i.
\]
Therefore,
\[
\|(\Ib_d+\rho_t\Sb_{\mathrm{test}})^L\ab\|_2^2
=
\sum_{j=1}^d c_j^2(\mu_j^{(t)})^{2L}.
\]
Since \(\psi_L^{(t)}\) is the canonical angle between
\(\yb_L^{(t)}\) and \(\cZ\), we have
\[
\sin^2\psi_L^{(t)}
=
\left\|
\Proj_{\cZ^\perp}\yb_L^{(t)}
\right\|_2^2.
\]
Using the above eigenbasis expansion, this gives the exact identity
\[
\sin^2\psi_L^{(t)}
=
\frac{
\sum_{i=r+1}^d c_i^2(\mu_i^{(t)})^{2L}
}{
\sum_{j=1}^d c_j^2(\mu_j^{(t)})^{2L}
}.
\]

We first prove the upper bound. Since
\[
\mu_i^{(t)}\le \mu_{r+1}^{(t)}
\qquad
\text{for all }i\ge r+1,
\]
the numerator satisfies
\[
\sum_{i=r+1}^d c_i^2(\mu_i^{(t)})^{2L}
\le
(\mu_{r+1}^{(t)})^{2L}
\sum_{i=r+1}^d c_i^2
=
(\mu_{r+1}^{(t)})^{2L}\|\ab_\perp\|_2^2.
\]
Also, since
\[
\mu_1^{(t)}=\cdots=\mu_r^{(t)},
\]
the denominator satisfies
\[
\sum_{j=1}^d c_j^2(\mu_j^{(t)})^{2L}
\ge
\sum_{j=1}^r c_j^2(\mu_j^{(t)})^{2L}
=
(\mu_1^{(t)})^{2L}
\|\ab_{\cZ}\|_2^2.
\]
Hence
\[
\sin^2\psi_L^{(t)}
\le
\frac{\|\ab_\perp\|_2^2}{\|\ab_{\cZ}\|_2^2}
\left(
\frac{\mu_{r+1}^{(t)}}{\mu_1^{(t)}}
\right)^{2L}.
\]
Taking square roots yields
\[
\sin\psi_L^{(t)}
\le
\frac{\|\ab_\perp\|_2}{\|\ab_{\cZ}\|_2}
\left(
\frac{\mu_{r+1}^{(t)}}{\mu_1^{(t)}}
\right)^L.
\]

Next, we prove the lower bound. Since the numerator is a sum of nonnegative
terms,
\[
\sum_{i=r+1}^d c_i^2(\mu_i^{(t)})^{2L}
\ge
c_{r+1}^2(\mu_{r+1}^{(t)})^{2L}.
\]
On the other hand, since \(\mu_j^{(t)}\le\mu_1^{(t)}\) for every \(j\),
\[
\sum_{j=1}^d c_j^2(\mu_j^{(t)})^{2L}
\le
(\mu_1^{(t)})^{2L}
\sum_{j=1}^d c_j^2
=
(\mu_1^{(t)})^{2L},
\]
where we used \(\|\ab\|_2=1\). Therefore,
\[
\sin^2\psi_L^{(t)}
\ge
c_{r+1}^2
\left(
\frac{\mu_{r+1}^{(t)}}{\mu_1^{(t)}}
\right)^{2L}.
\]
Taking square roots gives
\[
\sin\psi_L^{(t)}
\ge
|c_{r+1}|
\left(
\frac{\mu_{r+1}^{(t)}}{\mu_1^{(t)}}
\right)^L.
\]

Combining the two bounds and substituting
\[
\mu_i^{(t)}=1+\rho_t\lambda_i
\]
gives
\[
|c_{r+1}|
\left(
\frac{1+\rho_t\lambda_{r+1}}{1+\rho_t\lambda_1}
\right)^L
\le
\sin\psi_L^{(t)}
\le
\frac{\|\ab_\perp\|_2}{\|\ab_{\cZ}\|_2}
\left(
\frac{1+\rho_t\lambda_{r+1}}{1+\rho_t\lambda_1}
\right)^L.
\]
Since \(\rho_t\to\infty\), we have
\[
\frac{1+\rho_t\lambda_{r+1}}{1+\rho_t\lambda_1}
\to
\frac{\lambda_{r+1}}{\lambda_1}.
\]

It remains only to note that the limit
\[
\lim_{t\to\infty}\sin\psi_L^{(t)}
\]
exists. Indeed, dividing the exact formula for
\(\sin^2\psi_L^{(t)}\) by \((\mu_1^{(t)})^{2L}\), we obtain
\[
\sin^2\psi_L^{(t)}
=
\frac{
\sum_{i=r+1}^d c_i^2
\left(
\frac{\mu_i^{(t)}}{\mu_1^{(t)}}
\right)^{2L}
}{
\sum_{j=1}^d c_j^2
\left(
\frac{\mu_j^{(t)}}{\mu_1^{(t)}}
\right)^{2L}
}.
\]
For each \(i\),
\[
\frac{\mu_i^{(t)}}{\mu_1^{(t)}}
=
\frac{1+\rho_t\lambda_i}{1+\rho_t\lambda_1}
\to
\frac{\lambda_i}{\lambda_1}.
\]
Moreover, the denominator limit is nonzero because
\[
\sum_{j=1}^r c_j^2=\|\ab_{\cZ}\|_2^2>0.
\]
Thus \(\sin^2\psi_L^{(t)}\), and hence \(\sin\psi_L^{(t)}\), has a limit.

Taking \(t\to\infty\) in the two-sided bound yields
\[
{|c_{r+1}|}
\left(
\frac{\lambda_{r+1}}{\lambda_1}
\right)^L
\le
\lim_{t\to\infty}\sin\psi_L^{(t)}
\le
\frac{\|\ab_\perp\|_2}{\|\ab_{\cZ}\|_2}
\left(
\frac{\lambda_{r+1}}{\lambda_1}
\right)^L.
\]
This completes the proof.
\end{proof}

\section{Auxiliary Lemmas}

\begin{lemma}[Schur's Lemma, simplified for $\RR^d$]\label{Schur's lemma}
Let $d\geq3$ and $\Mb \in \RR^{d\times d}$ be a matrix. If for every rotation matrix $\Rb \in \mathrm{SO}(d)$, the matrix $\Mb$ satisfies 
\begin{equation*}
    \Mb = \Rb\Mb\Rb^\top,
\end{equation*}
then $\Mb$ is a scalar multiple of the identity matrix:
\begin{equation*}\Mb = c\Ib_d.\end{equation*}
\end{lemma}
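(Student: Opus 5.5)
The plan is to test the invariance $\Mb=\Rb\Mb\Rb^\top$ against two explicit families of rotations in $\mathrm{SO}(d)$. The first family forces $\Mb$ to be diagonal, and the second forces its diagonal entries to be equal. No representation theory is needed beyond these two direct computations in the standard basis $\eb_1,\ldots,\eb_d$ of $\RR^d$.

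\emph{Step 1 (off-diagonal entries vanish).} Fix $i\neq j$. Because $d\ge 3$, we can choose an index $k\notin\{i,j\}$. Let $\Db$ be the diagonal matrix with $\Db_{ii}=\Db_{kk}=-1$ and all other diagonal entries equal to $1$. Then $\Db$ is orthogonal with $\det\Db=1$, so $\Db\in\mathrm{SO}(d)$ and $\Db^\top=\Db$. Since $\Db$ is diagonal, $(\Db\Mb\Db)_{ij}=\Db_{ii}\Mb_{ij}\Db_{jj}=-\Mb_{ij}$. Invariance then gives $\Mb_{ij}=-\Mb_{ij}$, hence $\Mb_{ij}=0$. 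This is the only place where $d\ge 3$ is used. It is genuinely needed: for $d=2$ the group $\mathrm{SO}(2)$ is abelian, so any rotation matrix commutes with every rotation and need not be scalar. I would also remark on this in the write-up.

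\emph{Step 2 (diagonal entries coincide).} Now $\Mb=\mathrm{diag}(m_1,\ldots,m_d)$. Fix $i\neq j$ and let $\Rb$ be the rotation by $\pi/2$ in the $(\eb_i,\eb_j)$-plane. That is, $\Rb\eb_i=\eb_j$, $\Rb\eb_j=-\eb_i$, and $\Rb\eb_l=\eb_l$ for $l\notin\{i,j\}$. This matrix is orthogonal with determinant $1$. We have
\[
\eb_j^\top\Rb\Mb\Rb^\top\eb_j=(\Rb^\top\eb_j)^\top\Mb(\Rb^\top\eb_j)=\eb_i^\top\Mb\eb_i=m_i,
\]
using $\Rb^\top\eb_j=\eb_i$. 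Invariance gives $m_j=(\Rb\Mb\Rb^\top)_{jj}=m_i$. Since $i,j$ were arbitrary, all $m_i$ equal a common value $c$, and so $\Mb=c\Ib_d$.

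There is no real obstacle here. The only points needing care are that the chosen sign-flip matrix has determinant $+1$, which requires flipping an even number of signs and is why the auxiliary index $k$ is used, and the bookkeeping $\Rb^\top\eb_j=\eb_i$ in Step 2.
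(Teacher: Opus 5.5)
Your proof is correct, and it takes a different route from the paper. The paper argues abstractly. It views $\mathbf{M}$ as an intertwining operator for the defining representation of $\mathrm{SO}(d)$ on $\mathbb{R}^d$, cites irreducibility of that representation for $d\ge 3$, and applies Schur's lemma.

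You instead test the invariance against two explicit families in $\mathrm{SO}(d)$. Diagonal sign changes with exactly two entries equal to $-1$ kill the off-diagonal entries, and quarter-turns in coordinate planes equalize the diagonal. This keeps the argument self-contained and shows exactly where $d\ge 3$ enters, namely the auxiliary index $k$.

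Your route also avoids a subtlety in the paper's version. The form of Schur's lemma it quotes says that an irreducible representation over a field $\mathbb{F}$ has only scalar intertwiners. That is valid over algebraically closed fields, but over $\mathbb{R}$ irreducibility only makes the commutant a division algebra. Your $d=2$ remark is precisely the counterexample: $\mathrm{SO}(2)$ acts irreducibly on $\mathbb{R}^2$, yet it commutes with non-scalar rotations. The paper's conclusion is still true for $d\ge 3$, but it really rests on absolute irreducibility, i.e.\ irreducibility of the complexified representation. That is a stronger fact than the one cited, and your computation makes it unnecessary.

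You also use only finitely many matrices, all of them signed permutation matrices of determinant one. So you actually prove that invariance under this finite subgroup already forces $\mathbf{M}=c\mathbf{I}_d$, which is slightly stronger than the lemma. What the paper's route buys in exchange is brevity and a conceptual explanation that transfers to other groups and representations.
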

\begin{proof}
We prove this by applying the general representation-theoretic version of Schur's Lemma to the natural representation of $\mathrm{SO}(d)$ on $\mathbb{R}^d$.

\medskip

\noindent \textbf{Step 1: Establish the representation-theoretic framework.}

Let
\begin{itemize}
    \item $G = \mathrm{SO}(d)$
    \item $V = \mathbb{R}^d$
    \item $\pi: G \to \mathrm{GL}(V)$ be the defining representation: $\pi(\mathbf{R}) = \mathbf{R}$
    \item $T: V \to V$ be the linear map $T(\mathbf{x}) = \mathbf{M} \mathbf{x}$
\end{itemize}

The commutativity condition $\mathbf{M} \mathbf{R} = \mathbf{R} \mathbf{M}$ for all $\mathbf{R} \in \mathrm{SO}(d)$ is equivalent to:
\begin{equation*}
T \circ \pi(g) = \pi(g) \circ T \quad \forall g \in G,
\end{equation*}
identifying $T$ as an intertwining operator of the representation $(\pi, V)$.
\medskip

\noindent \textbf{Step 2: Irreducibility of the representation.}

A classical result in representation theory \citep{Fulton2013Representation} asserts that the defining representation of $\mathrm{SO}(d)$ on $\mathbb{R}^d$ is irreducible for all $d \geq 3$. 

\medskip

\noindent \textbf{Step 3: Apply Schur's Lemma.}

The general form of Schur's Lemma states that if $(\pi, V)$ is an irreducible representation of a group $G$ over a field $\mathbb{F}$, and $T: V \to V$ is an intertwining operator, then $T = \lambda \cdot \mathrm{Id}_V$ for some $\lambda \in \mathbb{F}$.

In the present context:
\begin{itemize}
    \item $G = \mathrm{SO}(d)$, $V = \mathbb{R}^d$, $\mathbb{F} = \mathbb{R}$
    \item $(\pi, V)$ is irreducible (by Step 2)
    \item $T$ is an intertwining operator (by Step 1)
\end{itemize}

Therefore, $T = c \cdot \mathrm{Id}_{\mathbb{R}^d}$ for some $c \in \mathbb{R}$, which means $\mathbf{M} = c \mathbf{I}_d$.

\end{proof}

\begin{lemma}
\label{lem: bounds for Sa}
Under Assumption \ref{assump:distribution}, define the sample covariance matrix $\Sb \coloneqq \Xb\Xb^\top$. Let $\ab$ be a random vector uniformly distributed on the unit sphere $\mathbb{S}^{d-1}$ and independent of $\Xb$. Then, the expected norm of $\Sb\ab$ satisfies
\begin{equation*}
    \frac{1}{d}\EE\big[\tr(\Sb)\big] \le \EE\big[\|\Sb\ab\|_2\big] \le \left(\frac{1}{d}\EE\big[\tr(\Sb^2)\big]\right)^{1/2}.
\end{equation*}
In particular, under the unit-column property of $\Xb$, these bounds simplify to
\begin{equation*}
    \frac{n}{d} \le \EE\big[\|\Sb\ab\|_2\big]  \le \frac{n}{\sqrt{d}}.
\end{equation*}
\end{lemma}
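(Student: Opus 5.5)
The plan is to compute $\EE[\|\Sb\ab\|_2^2]$ exactly by conditioning on $\Xb$. Then I will get the upper bound from Jensen's inequality and the lower bound from a Cauchy--Schwarz-type inequality that uses only the quadratic form $\ab^\top\Sb\ab$. Throughout, I use that $\ab$ is uniform on $\mathbb{S}^{d-1}$ and independent of $\Xb$, so $\EE[\ab\ab^\top]=\Ib_d/d$.

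\textbf{Upper bound.} By Jensen (concavity of the square root),
\[
\EE\bigl[\|\Sb\ab\|_2\bigr]\le \bigl(\EE[\|\Sb\ab\|_2^2]\bigr)^{1/2}.
\]
Conditioning on $\Xb$, we have $\EE[\ab^\top\Sb^2\ab\mid\Xb]=\tr(\Sb^2\,\EE[\ab\ab^\top])=\tr(\Sb^2)/d$. Taking the outer expectation gives the claimed bound $(\EE[\tr(\Sb^2)]/d)^{1/2}$.

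\textbf{Lower bound.} Since $\|\ab\|_2=1$, Cauchy--Schwarz gives $\|\Sb\ab\|_2\ge \ab^\top\Sb\ab$ pointwise. Hence
\[
\EE\bigl[\|\Sb\ab\|_2\bigr]\ge \EE[\ab^\top\Sb\ab]=\EE[\tr(\Sb)]/d,
\]
again by conditioning on $\Xb$ and using $\EE[\ab\ab^\top]=\Ib_d/d$. No rotation invariance (A2) is actually needed here; only the independence of $\ab$ and $\Xb$ and the uniformity of $\ab$ are used.

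\textbf{Specialization under unit columns.} We have $\tr(\Sb)=\tr(\Xb^\top\Xb)=\sum_j\|\xb_j\|_2^2=n$, so the lower bound becomes $n/d$. For the upper bound, $\tr(\Sb^2)=\|\Xb^\top\Xb\|_F^2=\sum_{i,j}(\xb_i^\top\xb_j)^2\le n^2$, since each Gram entry has absolute value at most $1$. This gives $(\EE[\tr(\Sb^2)]/d)^{1/2}\le n/\sqrt d$.

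There is no real obstacle. The only point requiring care is justifying the conditional expectation swaps; these hold because all quantities are bounded on the compact sample space.
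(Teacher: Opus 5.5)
Your proposal is correct and follows the paper's proof step for step. The lower bound uses $\ab^\top\Sb\ab\le\|\Sb\ab\|_2$ together with $\EE[\ab\ab^\top]=\Ib_d/d$. The upper bound uses $\EE[\|\Sb\ab\|_2]\le(\EE[\|\Sb\ab\|_2^2])^{1/2}$ after conditioning on $\Xb$; the paper calls this Cauchy--Schwarz rather than Jensen, but it is the same inequality. The unit-column specialization uses $\tr(\Sb)=n$ and $\tr(\Sb^2)=\sum_{i,j}(\xb_i^\top\xb_j)^2\le n^2$.
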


\begin{proof}
\medskip
\noindent{(i) Lower bound via Rayleigh quotient.}
Since $\ab$ is a unit vector and $\Sb \succeq 0$, we have
\begin{equation*}
\ab^\top \Sb\ab = \langle \Sb\ab,\ab\rangle \le \|\Sb\ab\|_2\cdot\|\ab\|_2 = \|\Sb\ab\|_2.
\end{equation*}
Taking the expectation and using $\EE[\ab\ab^\top]=\frac{1}{d}\Ib_d$,
\begin{equation*}
\EE\big[\|\Sb\ab\|_2\big]  \ge \EE\big[\ab^\top \Sb\ab\big]
= \EE\big[\tr(\Sb\EE[\ab\ab^\top])\big]
= \frac{1}{d}\EE\big[\tr(\Sb)\big].
\end{equation*}
With unit columns, $\tr(\Sb)=\tr(\Xb\Xb^\top)=\sum_{i=1}^n\|\xb_i\|_2^2=n$, hence
\begin{equation*}
\EE\big[\|\Sb\ab\|_2\big]  \ge \frac{n}{d}.
\end{equation*}

\medskip
\noindent{(ii) Upper bound via Cauchy--Schwarz.}
Conditioning on $\Xb$,
\begin{equation*}
\EE\big[\|\Sb\ab\|_2^2\mid \Xb\big]
=\EE\big[\ab^\top \Sb^2 \ab\mid \Xb\big]
=\tr\Big(\Sb^2\EE[\ab\ab^\top]\Big)
=\frac{1}{d}\tr(\Sb^2).
\end{equation*}
By Cauchy--Schwarz,
\begin{equation*}
\EE\big[\|\Sb\ab\|_2\big]  \le \Big(\EE\big[\|\Sb\ab\|_2^2\big] \Big)^{1/2}
=\Big(\frac{1}{d}\EE\big[\tr(\Sb^2)\big]\Big)^{1/2}.
\end{equation*}
Observing that $(\xb_i^\top \xb_j)^2\le 1$ and $\tr(\Sb^2)=\sum_{i,j}(\xb_i^\top \xb_j)^2\le n^2$, we obtain
\begin{equation*}
\EE\big[\|\Sb\ab\|_2\big]  \le \sqrt{\frac{n^2}{d}} = \frac{n}{\sqrt{d}}.
\end{equation*}
This completes the proof.
\end{proof}

\begin{remark}
Bounds on $\EE[\ab^\top \Sb \ab],\EE[\|\Sb\ab\|_2^2]$ under Assumption~\ref{assump:distribution}: We established above that $\EE[\ab^\top \Sb \ab]={\EE[\tr(\Sb)]}/{d}={n}/{d}$. 
%Since $\tr(\Sb)=n$ and
%$\tr(\Sb^2)=\sum_{i,j}(\xb_i^\top \xb_j)^2\in[n,n^2]$ for the columns $\{\xb_j\}_{j=1}^n$, and using $\sum_{i=1}^d \lambda_i^2\ge \frac{1}{d}\big(\sum_{i=1}^d \lambda_i\big)^2$ for eigenvalues $\{\lambda_i\}$ of $\Sb$, we obtain
%\begin{equation*}
 %\max\!\Big\{\frac{n}{d},\ \frac{n^2}{d^2}\Big\}\ \le\ \EE[\|\Sb\ab\|_2^2]\ =\ \frac{1}{d}\EE[\tr(\Sb^2)]\ \le\ \frac{n^2}{d}.
%\end{equation*}
For $\EE[\|\Sb\ab\|_2^2]$, since $\tr(\Sb)=n$ and $\tr(\Sb^2)=\sum_{i,j}(\xb_i^\top \xb_j)^2 \in [n, n^2]$, and utilizing the inequality $\sum_{i=1}^d \lambda_i^2 \ge \frac{1}{d}\big(\sum_{i=1}^d \lambda_i\big)^2$ for the eigenvalues $\{\lambda_i\}$ of $\Sb$, we obtain the following bounds:
\begin{equation*}
 \max\!\left\{\frac{n}{d},\ \frac{n^2}{d^2}\right\} \le \EE[\|\Sb\ab\|_2^2] = \frac{1}{d}\EE[\tr(\Sb^2)] \le \frac{n^2}{d}.
\end{equation*}
\end{remark}

\begin{lemma}
\label{lem:lambda_min_to_directional_inverse}
Under Assumption~\ref{assump:distribution}, with
\[
\Upsilon(n,d)
:=
\EE_{\Xb\sim\PP_{\Xb}}
\left[
\lambda_d(\Xb\Xb^\top)^{-1}
\right]
<\infty,
\]
where \(\lambda_d(\Xb\Xb^\top)\) is defined in Definition \ref{def:principal_component}. Let
\(\ab\sim\mathrm{Unif}(\SSS^{d-1})\) be independent of \(\Xb\). Then, for
\(d\ge3\), there exists a constant \(C_d<\infty\), depending only on \(d\),
such that
\[
\EE_{\Xb,\ab}
\left[
\frac{1}{\|\Xb\Xb^\top\ab\|_2^2}
\right]
\le
C_d\,\frac d n\,\Upsilon(n,d).
\]
\end{lemma}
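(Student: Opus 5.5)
Condition on $\Xb$ and work in the eigenbasis of $\Sb=\Xb\Xb^\top=\sum_{i=1}^d\lambda_i\vb_i\vb_i^\top$, writing $\ab=\sum_i c_i\vb_i$. Because $\ab$ is uniform on $\SSS^{d-1}$ and independent of $\Xb$, the conditional law of $(c_1,\dots,c_d)$ is again uniform on the sphere. The overall plan is to prove the conditional bound
\[
\EE_{\ab}\!\left[\|\Sb\ab\|_2^{-2}\,\middle|\,\Xb\right]\le \frac{C_d}{\lambda_1\lambda_d},
\]
and then bound $\lambda_1$ from below and integrate over $\Xb$. The unit-column condition (A1) gives $\operatorname{tr}(\Sb)=n$, hence $\lambda_1\ge n/d$. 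The conditional bound then becomes $C_d\,(d/n)\,\lambda_d^{-1}$, and taking expectations with (A3) yields exactly $C_d\,\frac dn\,\Upsilon(n,d)$.

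A naive route does not work. The bound $\|\Sb\ab\|_2^2\ge\lambda_d^2$ would require $\EE[\lambda_d^{-2}]<\infty$, which is not assumed. The bound $\|\Sb\ab\|_2^2\ge\lambda_d\lambda_1c_1^2$ fails because $\EE[c_1^{-2}]=\infty$. The remedy is to keep both pieces:
\[
\|\Sb\ab\|_2^2=\sum_i\lambda_i^2c_i^2\ \ge\ \lambda_1^2c_1^2+\lambda_d^2(1-c_1^2)=:A u+B(1-u),
\]
where $u=c_1^2$, $A=\lambda_1^2$, and $B=\lambda_d^2$. Here $u\sim\mathrm{Beta}(1/2,(d-1)/2)$, with density proportional to $u^{-1/2}(1-u)^{(d-3)/2}$. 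For $d\ge3$ the factor $(1-u)^{(d-3)/2}$ is at most $1$.

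The main step is the one-dimensional integral estimate
\[
\int_0^1\frac{u^{-1/2}}{Au+B(1-u)}\,du\ \lesssim\ \frac{1}{\sqrt{AB}}.
\]
To prove it, split at $u_0=B/A\le 1$. On $[0,u_0]$ the denominator is at least $B(1-u)+Au\ge cB$, up to handling $u$ near $1$ separately, and $\int_0^{u_0}u^{-1/2}\,du=2\sqrt{B/A}$. This piece therefore contributes $O(1/\sqrt{AB})$. On $[u_0,1]$ the denominator is at least $Au$, and $\int_{u_0}^1u^{-3/2}\,du\le 2u_0^{-1/2}$. This piece contributes $O(\sqrt{A/B}/A)=O(1/\sqrt{AB})$. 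If $A\gg B$ is violated, i.e. $A\approx B$, the bound is trivial, since the denominator is then at least $B$.

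Substituting $\sqrt{AB}=\lambda_1\lambda_d$ gives the conditional bound with a constant $C_d$ that comes from the Beta normalizing constant. This constant depends only on $d$. Finally, combining with $\lambda_1\ge n/d$ and taking expectation over $\Xb$ completes the proof. I expect the only delicate point to be this splitting argument, specifically getting uniform constants near the endpoints $u=0$ and $u=1$. The rest is bookkeeping.
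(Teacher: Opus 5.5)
Your proposal is correct and is essentially the paper's proof: the same eigenbasis lower bound $\lambda_1^2u_1^2+\lambda_d^2(1-u_1^2)$, the same $\mathrm{Beta}(1/2,(d-1)/2)$ marginal, the same conditional bound $C_d/(\lambda_1\lambda_d)$, and the same use of $\lambda_1\ge n/d$ from $\operatorname{tr}(\Sb)=n$ before integrating against (A3). Your split at $u_0=B/A$ proves the one-dimensional estimate that the paper only asserts, and there is no endpoint delicacy: since $A\ge B$ the denominator equals $B+(A-B)u\ge B$ on all of $[0,1]$ and $(1-u)^{(d-3)/2}\le 1$, so each piece is at most $2/\sqrt{AB}$ and one can take $C_d=4\,\Gamma(d/2)/(\sqrt{\pi}\,\Gamma((d-1)/2))$.
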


\begin{proof}
Condition on \(\Xb\). Let
\[
\Xb\Xb^\top
=
\sum_{i=1}^d \lambda_i(\Xb)\vb_i(\Xb)\vb_i(\Xb)^\top,
\qquad
\lambda_1(\Xb)\ge\cdots\ge\lambda_d(\Xb)>0.
\]
The positivity of \(\lambda_d(\Xb)\) holds almost surely because
\[
\EE\!\left[\lambda_d(\Xb\Xb^\top)^{-1}\right]<\infty.
\]
Since \(\ab\sim\mathrm{Unif}(\SSS^{d-1})\) is independent of \(\Xb\), the
coordinates of \(\ab\) in the eigenbasis
\(\{\vb_i(\Xb)\}_{i=1}^d\) are distributed as a uniform vector
\(\ub=(u_1,\ldots,u_d)\in\SSS^{d-1}\). Hence
\[
\|\Xb\Xb^\top\ab\|_2^2
=
\sum_{i=1}^d \lambda_i(\Xb)^2 u_i^2.
\]
Using only the largest and smallest eigenvalues,
\[
\sum_{i=1}^d \lambda_i(\Xb)^2u_i^2
\ge
\lambda_1(\Xb)^2u_1^2
+
\lambda_d(\Xb)^2(1-u_1^2).
\]
For \(d\ge3\), the one-dimensional marginal \(u_1^2\) has a
\(\mathrm{Beta}(1/2,(d-1)/2)\) distribution. Therefore there exists a finite
constant \(C_d\), depending only on \(d\), such that for all
\(\lambda_1\ge\lambda_d>0\),
\[
\EE_{\ub}
\left[
\frac{1}{
\lambda_1^2u_1^2+\lambda_d^2(1-u_1^2)}
\right]
\le
\frac{C_d}{\lambda_1\lambda_d}.
\]
Consequently,
\[
\EE_{\ab}
\left[
\frac{1}{\|\Xb\Xb^\top\ab\|_2^2}
\,\middle|\,\Xb
\right]
\le
\frac{C_d}{
\lambda_1(\Xb\Xb^\top)\lambda_d(\Xb\Xb^\top)
}.
\]
By Assumption~\ref{assump:distribution}(A1), every column of \(\Xb\) has unit
norm. Hence
\[
\operatorname{tr}(\Xb\Xb^\top)=n,
\]
and therefore
\[
\lambda_1(\Xb\Xb^\top)\ge \frac nd.
\]
Thus
\[
\EE_{\ab}
\left[
\frac{1}{\|\Xb\Xb^\top\ab\|_2^2}
\,\middle|\,\Xb
\right]
\le
C_d\,\frac d n\,\lambda_d(\Xb\Xb^\top)^{-1}.
\]
Taking expectation over \(\Xb\), we obtain
\[
\EE_{\Xb,\ab}
\left[
\frac{1}{\|\Xb\Xb^\top\ab\|_2^2}
\right]
\le
C_d\,\frac d n\,
\EE_{\Xb}
\left[
\lambda_d(\Xb\Xb^\top)^{-1}
\right]
=
C_d\,\frac d n\,\Upsilon(n,d).
\]
This proves the claim.
\end{proof}

\section{Additional Experimental Results}\label{additional experiment results}
\subsection{Training Dynamics and Looped Performance of One-layer Transformers with LN}

In this subsection, we empirically validate the main result of Theorem~\ref{thm:convergence}, as well as the performance of the trained model in predicting the leading principal component when looped at inference time.

\noindent\textbf{Loss convergence.}
We train the model with various configurations of $d\in\{8,16\}$ and $n\in\{16,32,64\}$. Figure~\ref{fig:loss_curves} plots the training and test losses against the number of iterations. The losses are observed to decay consistently across all configurations. Notably, there is no discernible gap between the training loss and the test loss, which is uncommon in the literature. We emphasize that this occurs because our model learns precisely the update of the power method, as visualized in Figure~\ref{fig:heatmaps}. Consequently, the training and test loss curves are nearly identical.

%Moreover, the ratio of samples to dimension, $n/d$, appears to be a key indicator of task difficulty. As observed in the plot, the configuration with the highest ratio ($d=8, n=64$, giving $n/d=8$) achieves the fastest convergence and lowest final MSE, while the configuration with the lowest ratio ($d=16, n=16$, giving $n/d=1$) performs the worst. 

\begin{figure*}[ht!]
\centering
\subfigure[Training loss, Task 1]{\includegraphics[width=0.4\textwidth]{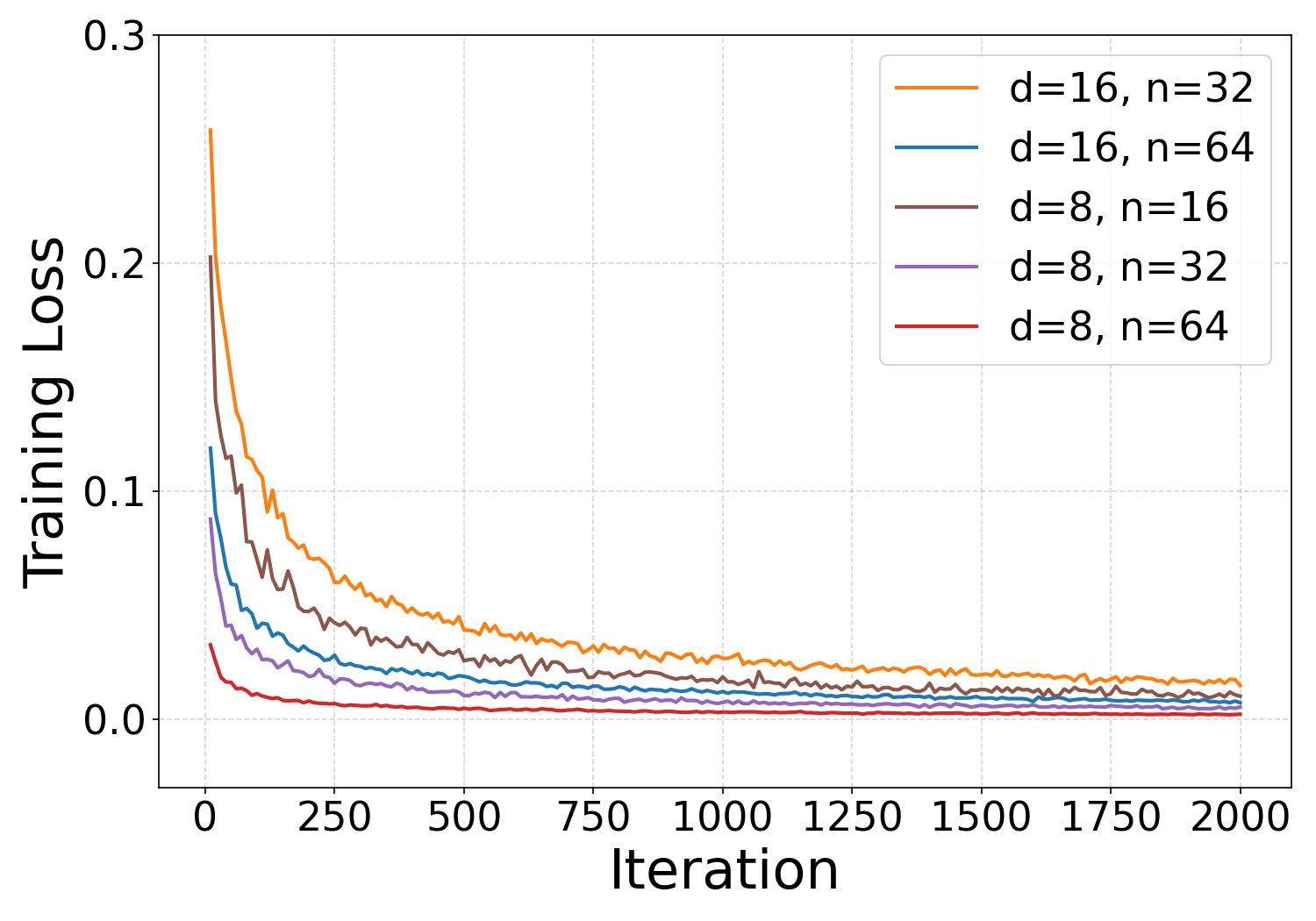}}
\subfigure[Test loss, Task 1]{\includegraphics[width=0.4\textwidth]{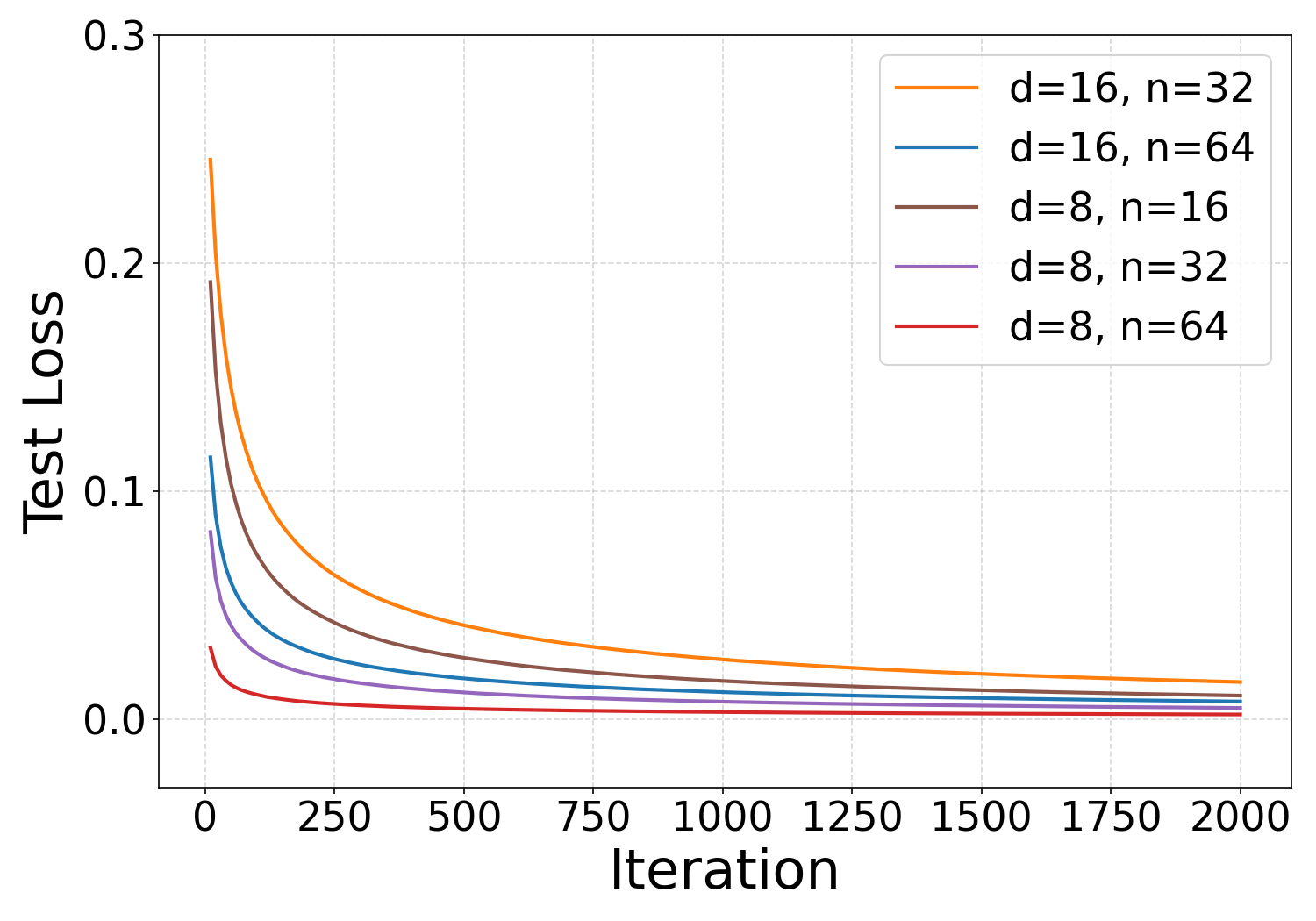}}
\subfigure[Training loss, Task 2]{\includegraphics[width=0.4\textwidth]{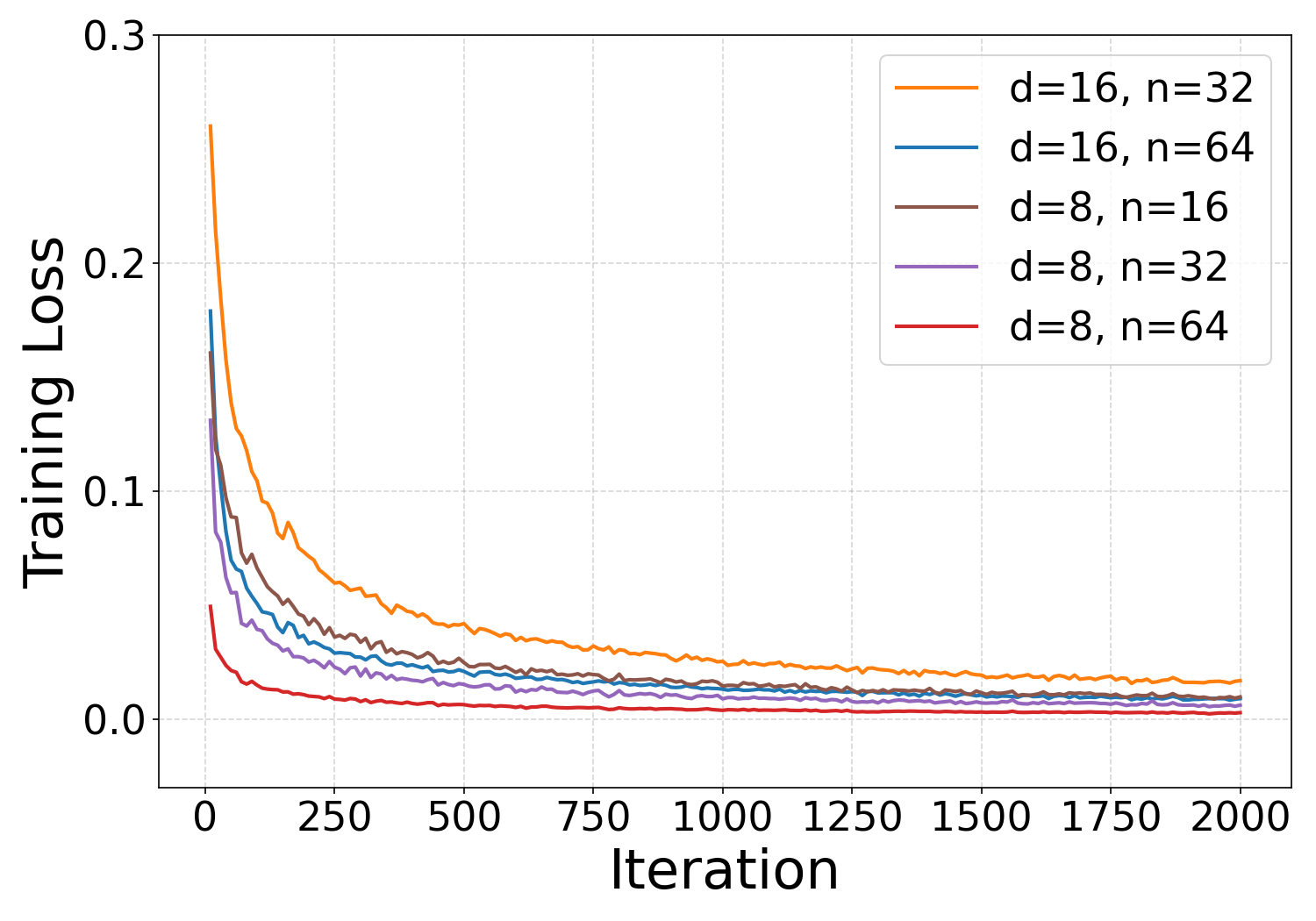}}
\subfigure[Test loss, Task 2]{\includegraphics[width=0.4\textwidth]{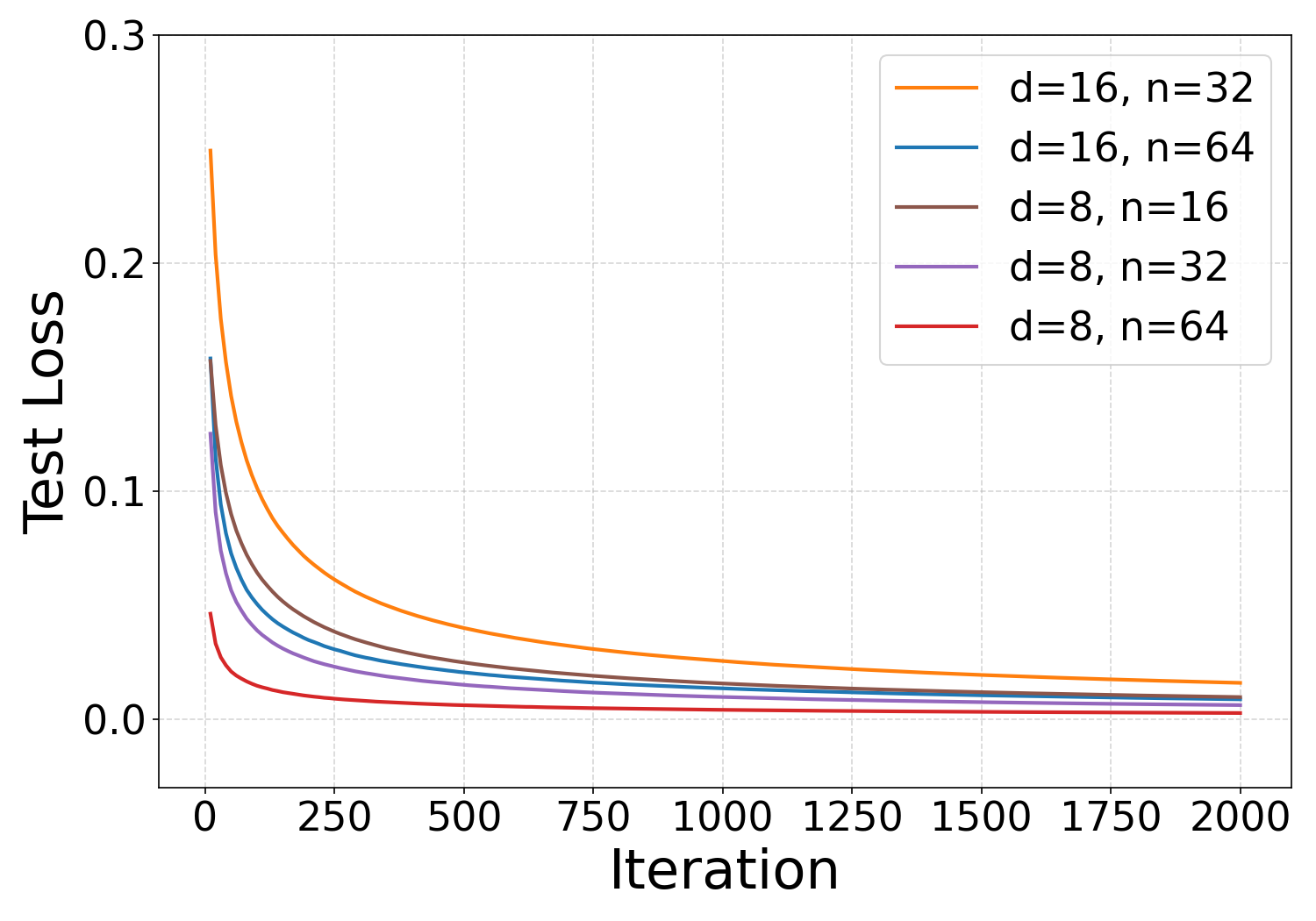}}
\caption{Training and test losses for tasks 1 and 2 with different $d$ and $n$.}
\label{fig:loss_curves}
 \vspace{-5mm}
\end{figure*}

\noindent\textbf{Learned parameter structure.}
%To further verify our theory, we visualize the  matrices $\mathbf{V}$ and $\mathbf{W}$ after 2000 training steps for $d=16, n=32$. As shown in Figure~\ref{fig:heatmaps}, the matrices converge to the sparse structure predicted by our analysis in Theorem \ref{thm:convergence}. Specifically, all entries of both matrices converge to zero except for the elements on the diagonal in the bottom-left block of $\mathbf{V}$ ($\mathbf{V}_{21}$) and the top-right block of $\mathbf{W}$ ($\mathbf{W}_{12}$). This confirms that gradient descent successfully drives the parameters to a solution functionally equivalent to the one-step Power Method update.
We plot the heatmaps of the matrices $\mathbf{V}$ and $\mathbf{W}$ after 2000 training iterations and the evolution of the learned weights for $d=16, n=32$. As shown in Figure~\ref{fig:heatmaps}, all entries of both matrices converge to zero except for the elements on the diagonal in the bottom-left block of $\mathbf{V}$ and the top-right block of $\mathbf{W}$, aligning with the structure predicted by our analysis in Theorem \ref{thm:convergence}. Furthermore, the evolution of these active parameters reveals a polynomial growth pattern. As shown in the log-log plots, the empirical trajectories asymptotically exhibit a slope that aligns with the theoretical $\Theta(t^{1/6})$ prediction, confirming the specific growth rate in Theorem \ref{thm:convergence}.
\begin{figure}[ht!]
\centering
\subfigure[ $\Wb/ \|\Wb \|_{\mathsf{max}}$, Task 1]{\includegraphics[width=0.3\textwidth]{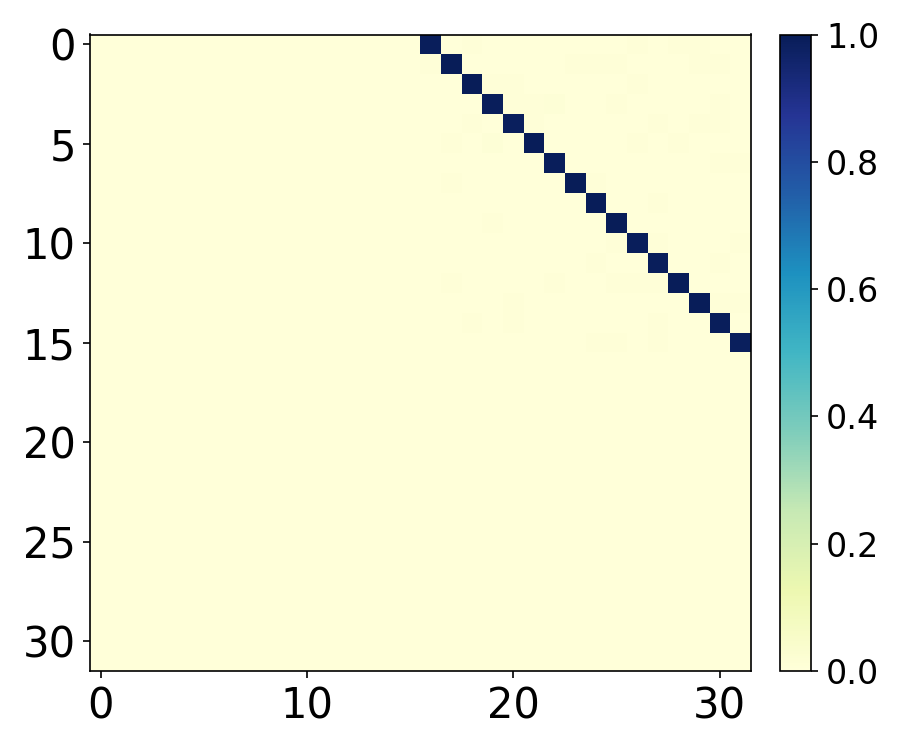}}
\subfigure[ $\Vb/ \|\Vb \|_{\mathsf{max}}$, Task 1]{\includegraphics[width=0.3\textwidth]{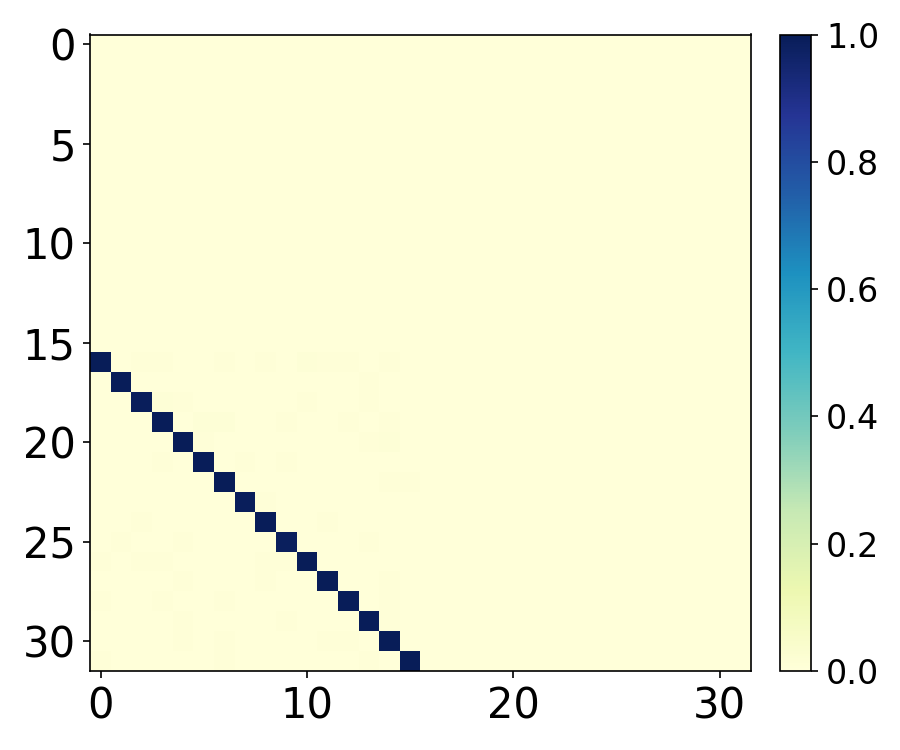}}
\subfigure[$\|\Wb \|_{\mathsf{max}},\|\Vb \|_{\mathsf{max}}$, Task 1]{\includegraphics[width=0.3\textwidth]{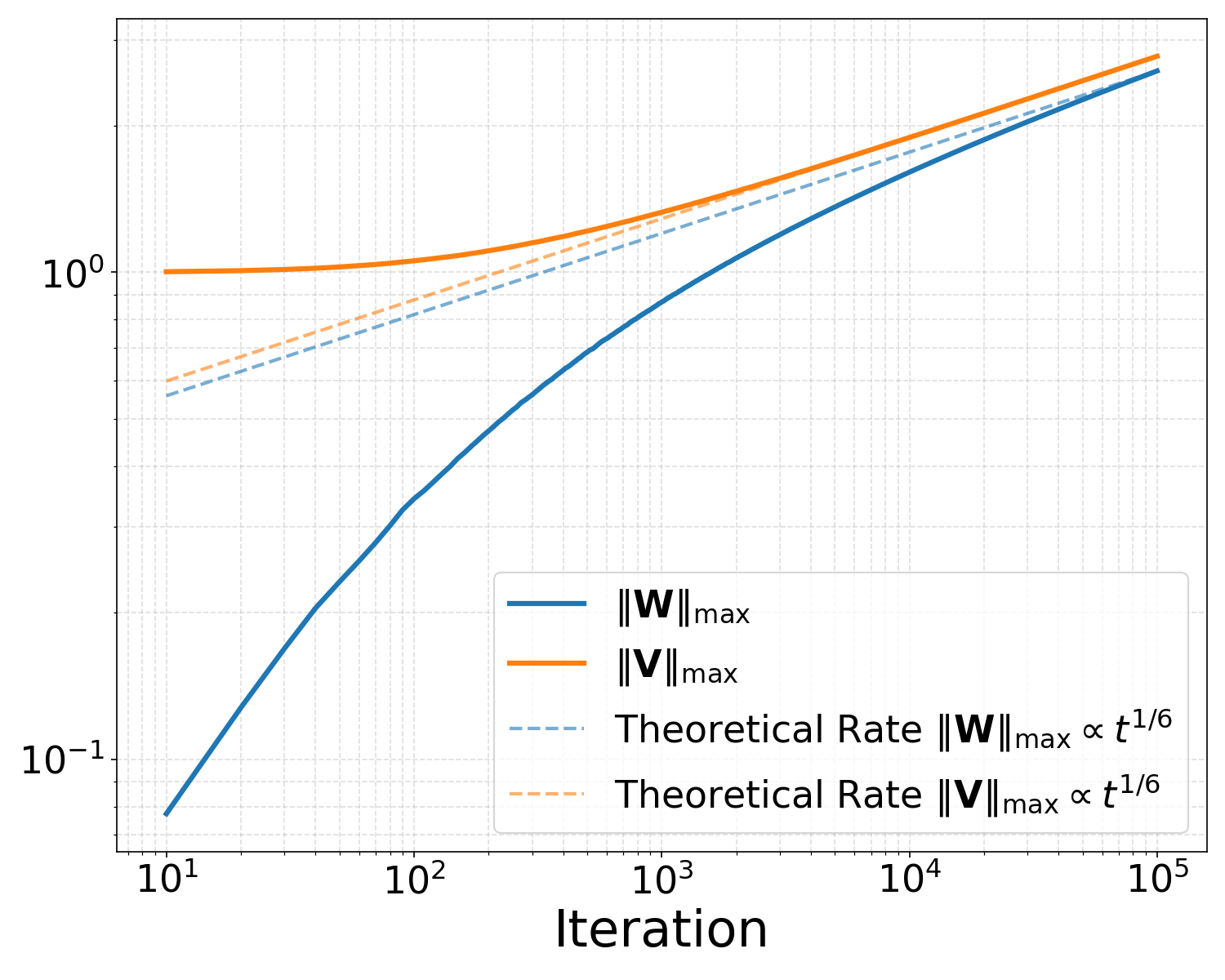}}
\subfigure[ $\Wb/ \|\Wb \|_{\mathsf{max}}$, Task 2]{\includegraphics[width=0.3\textwidth]{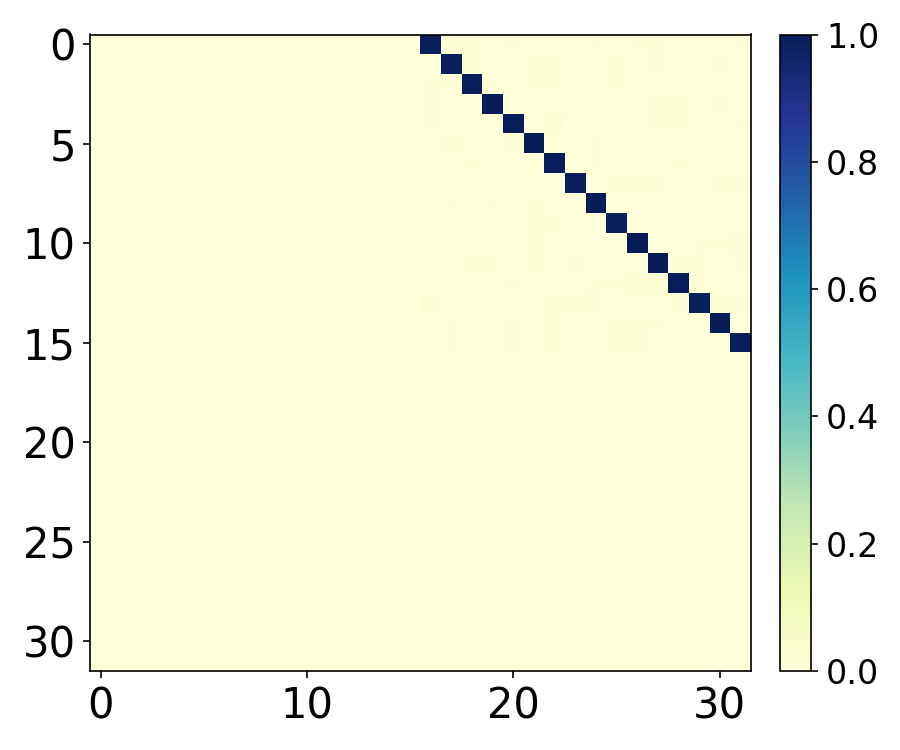}}
\subfigure[$\Vb/ \|\Vb \|_{\mathsf{max}}$, Task 2]{\includegraphics[width=0.3\textwidth]{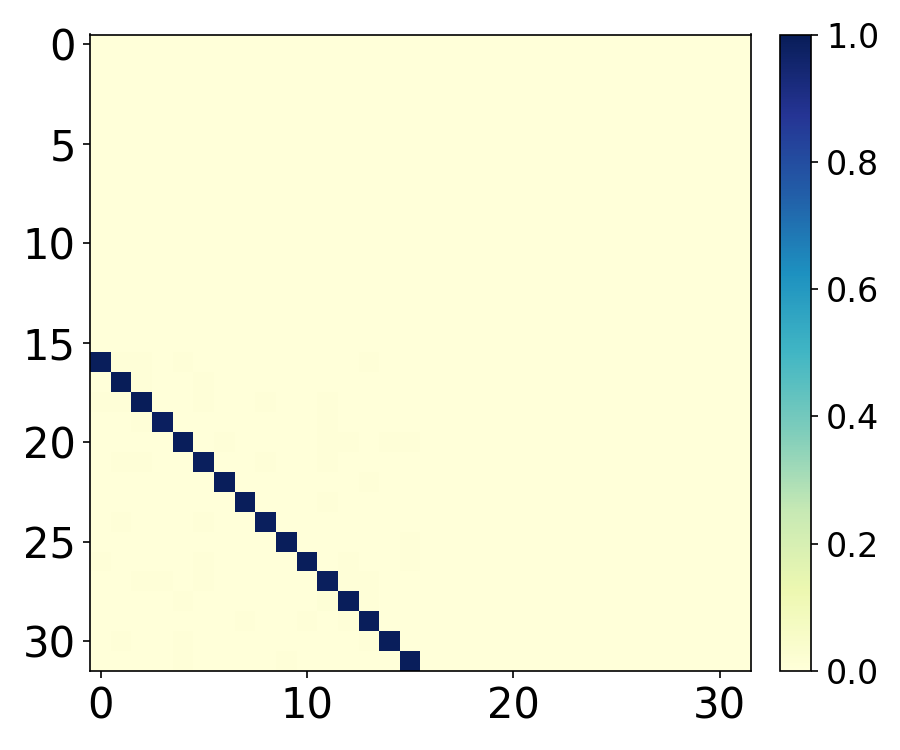}}
\subfigure[$\|\Wb \|_{\mathsf{max}},\|\Vb \|_{\mathsf{max}}$, Task 2]{\includegraphics[width=0.3\textwidth]{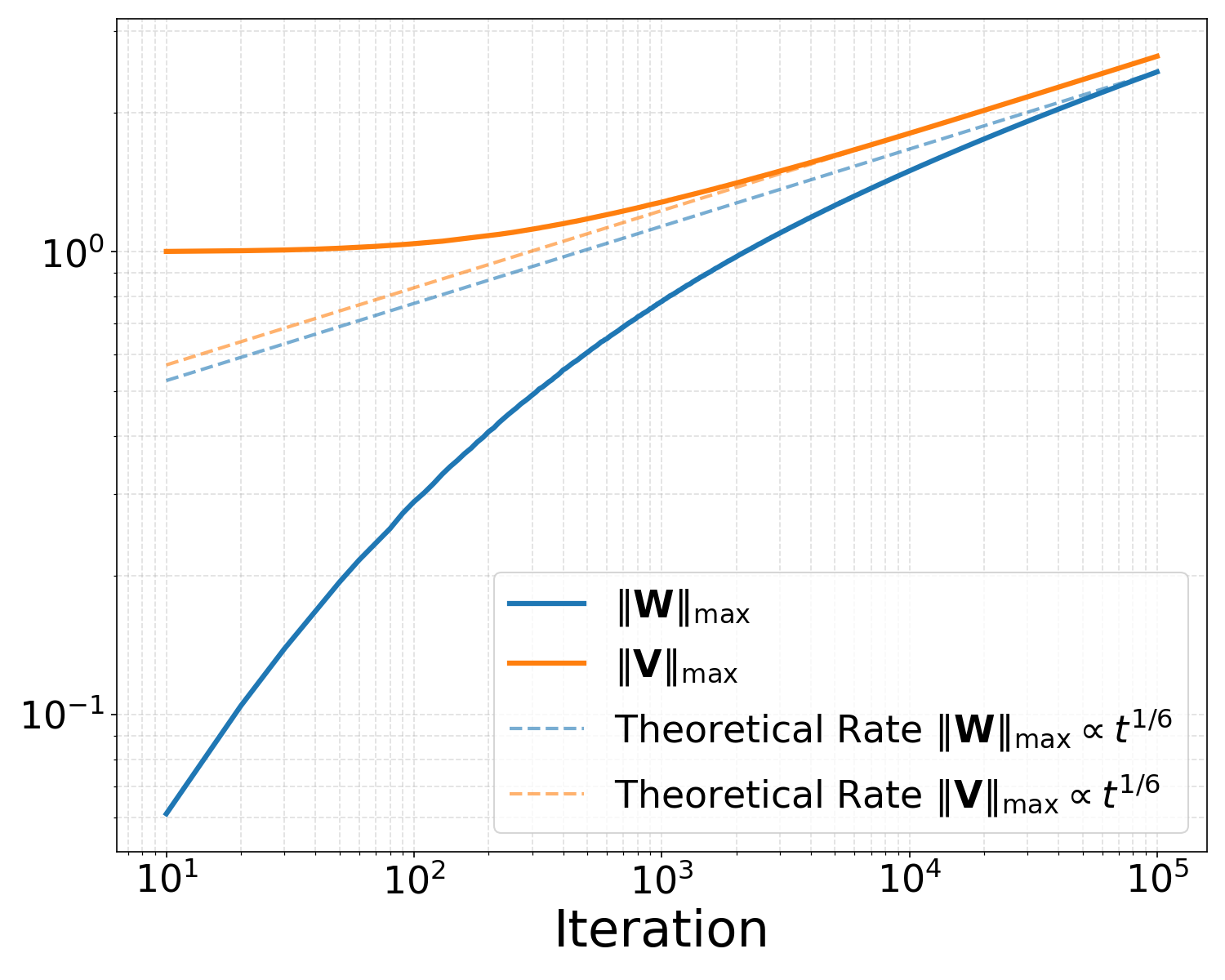}}
\caption{Heatmaps of parameter matrices and evolution of active parameters. Since the active parameters of both matrices diverge, we plot heatmaps to visualize the structures of $\Wb/ \|\Wb \|_{\mathsf{max}}$ and $\Vb/ \|\Vb \|_{\mathsf{max}}$. The dashed lines in the log-log plots represent the theoretical growth rate $\Theta(t^{1/6})$, vertically anchored to the final empirical data points.}
\label{appendix:fig:heatmaps}
 \vspace{-5mm}
\end{figure}

    %Absolute values of the final weight matrices $\mathbf{W}$ (left) and $\mathbf{V}$ (right) after 2000 training steps. The parameters converge to a sparse structure where only the $\mathbf{W}_{12}$ and $\mathbf{V}_{21}$ blocks contain non-zero weights, aligning with the theoretical predictions in Theorem \ref{thm:convergence}.

\noindent\textbf{Performance of Looped Transformer} We empirically verify the performance of the looped transformer. For a fixed test point $(\mathbf{X}_{\text{test}}, \mathbf{a}_{\text{test}})$, the ground-truth principal eigenvector is numerically computed from the empirical covariance matrix $\mathbf{X}_{\text{test}}\mathbf{X}_{\text{test}}^\top$. We set $d=16$, $n=32$, take model snapshots at different training steps $T \in \{100, 300, 2000\}$, and apply them for $L\in\{1, 2,\ldots,25\}$ iterations on the fixed test point. Figure~\ref{fig:looped_eval} plots the error, measured by the angular deviation, against the number of loops $L$ on a semi-logarithmic scale. For each trained snapshot, the error curve appears as a nearly linear trajectory. This linear relationship clearly shows that the error decays exponentially as the number of loops $L$ increases. Furthermore, we also plot the error curve for the power method and observe that models trained for more iterations (larger $T$) exhibit an error curve closer to that of the power method, indicating a faster convergence rate.

\begin{figure}[ht!]
\centering
\subfigure[Looped transformer, Task 1]{\includegraphics[width=0.45\textwidth]{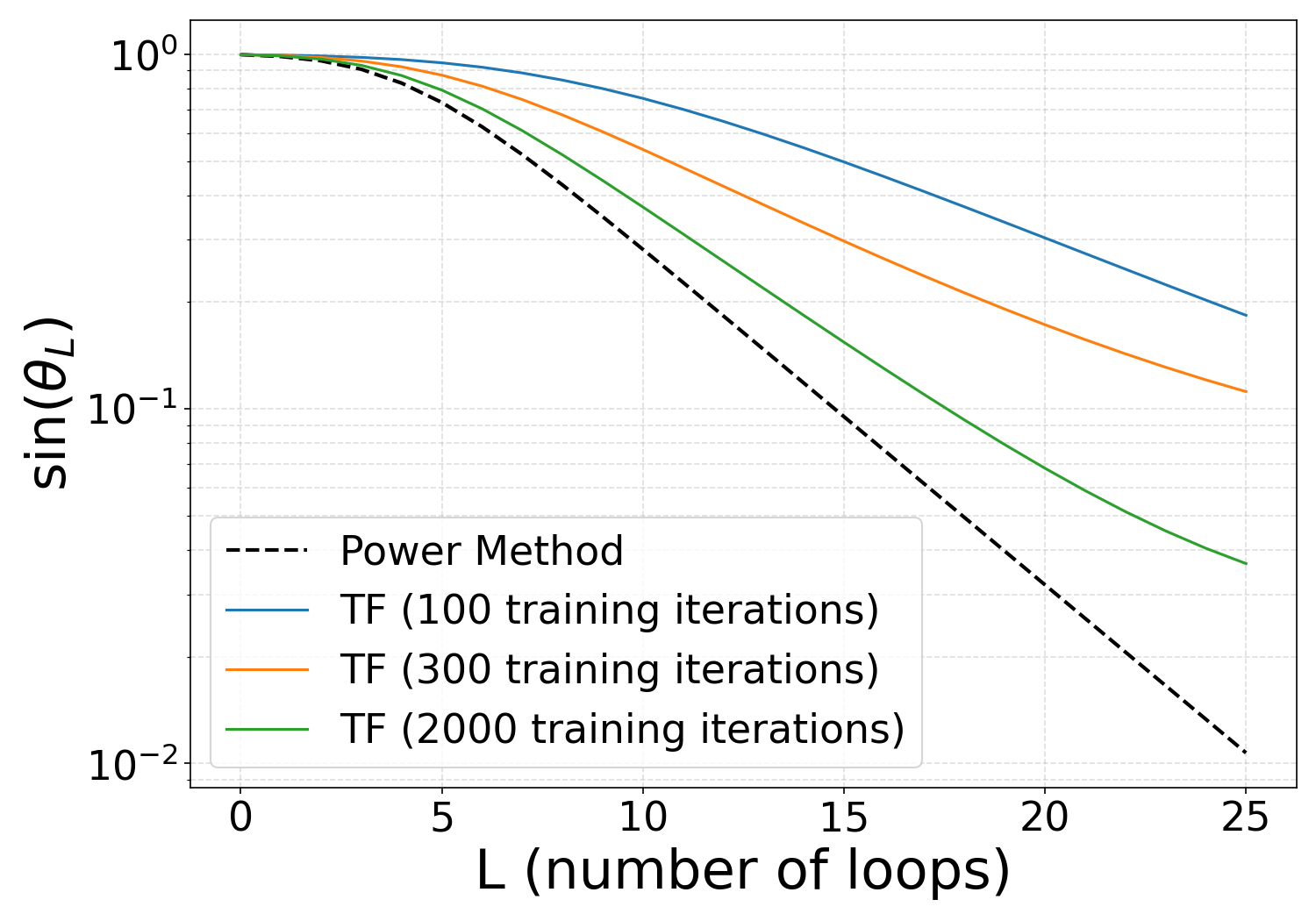}}
\subfigure[Looped transformer, Task 2]{\includegraphics[width=0.45\textwidth]{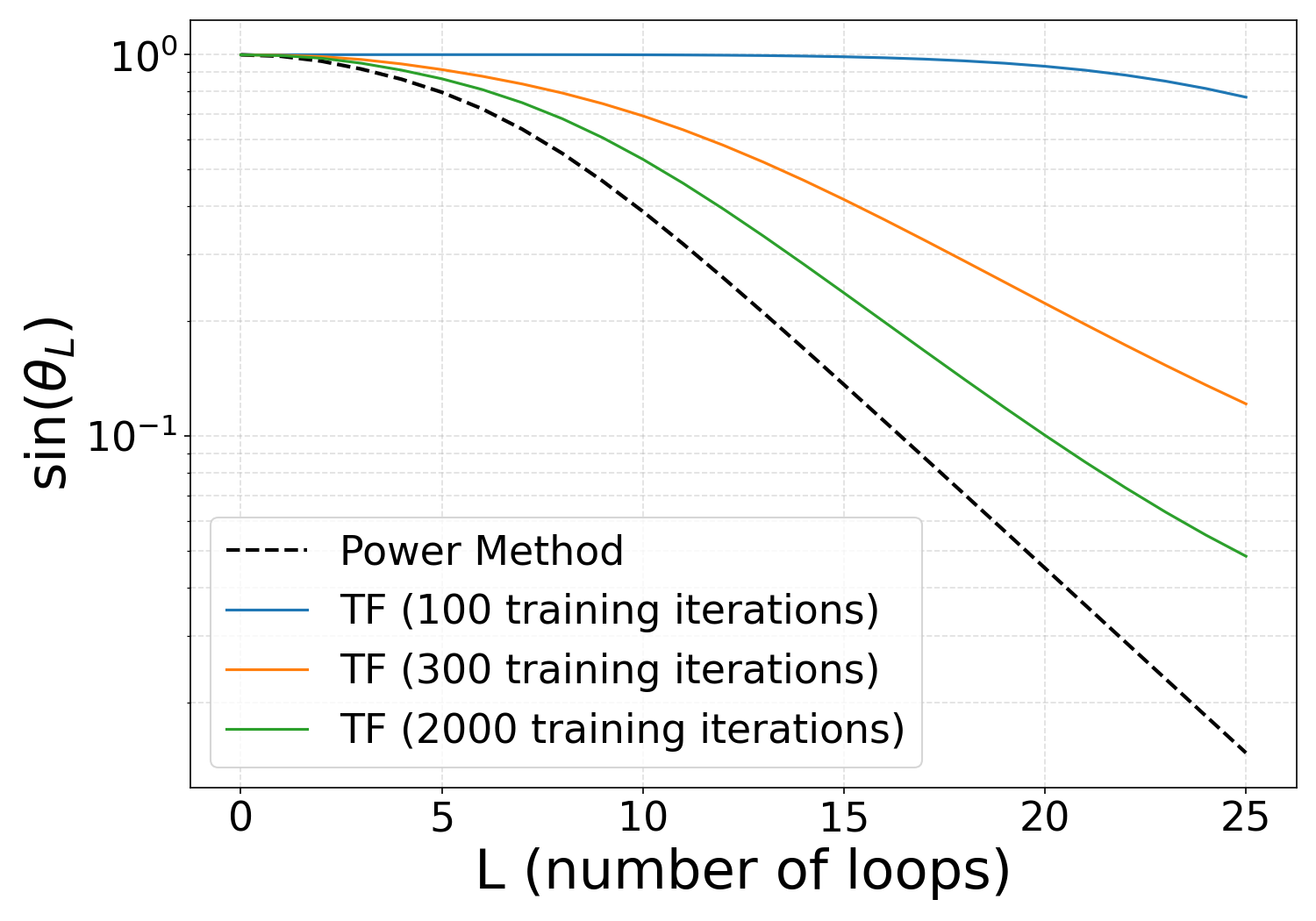}}
\caption{Semi-log plots of the error $\sin(\theta_L)$ versus the number of loops $L$ for models trained for $T \in \{100, 300, 2000\}$ iterations on tasks 1 and 2.}
 \label{fig:looped_eval}
 \vspace{-5mm}
\end{figure}
\subsection{Learned Parameter Matrices for Unnormalized Variants}
In this subsection, we visualize the structure of the parameter matrices predicted by Theorem~\ref{thm:unnorm convergence}.

Figure~\ref{fig:learned parameter unnormalized task 1} visualizes the learned parameter matrices and the evolution of active parameters for both unnormalized variants on task 1. These results confirm the structure and convergence to fixed points as predicted by Theorem~\ref{thm:unnorm convergence}.

\begin{figure}[ht!]
% \vspace{-2mm}
\centering   
\subfigure[$\Wb/ \|\Wb \|_{\mathsf{max}}$ for $\yb_{\text{target}}^{(1)}$]{\includegraphics[width=0.3\textwidth]{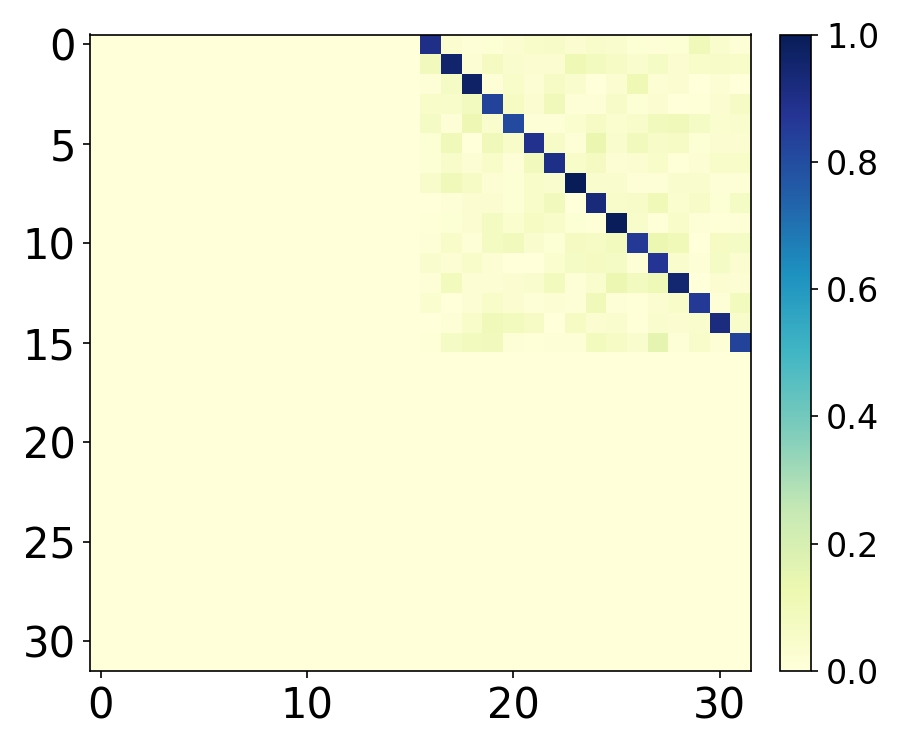}}
\subfigure[$\Vb/ \|\Vb \|_{\mathsf{max}}$ for $\yb_{\text{target}}^{(1)}$]{\includegraphics[width=0.3\textwidth]{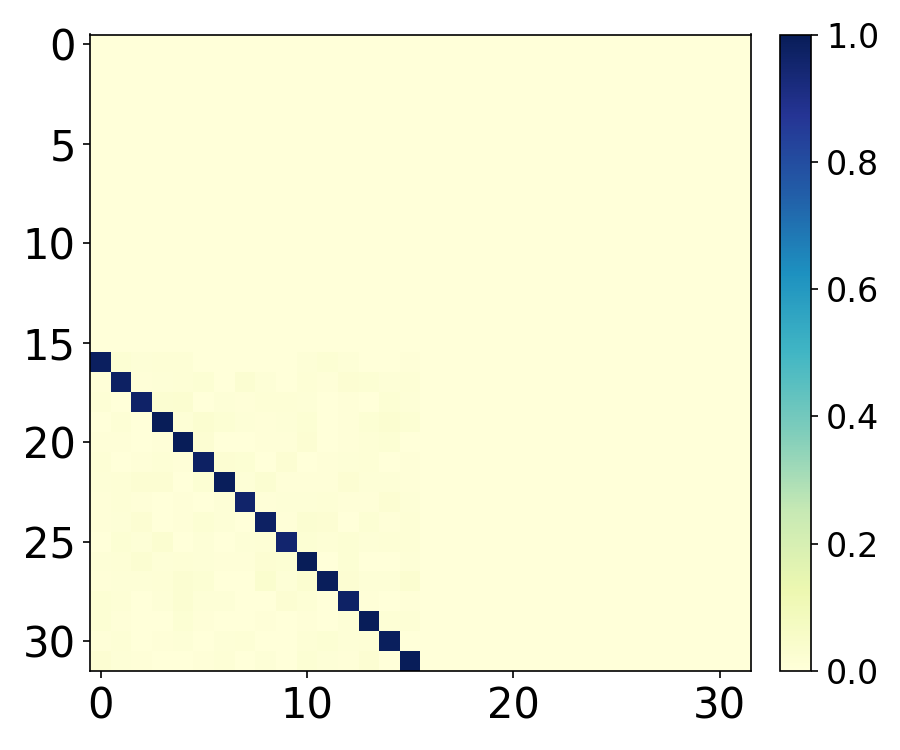}}
\subfigure[$\|\Wb \|_{\mathsf{max}},\|\Vb \|_{\mathsf{max}}$ for $\yb_{\text{target}}^{(1)}$]{\includegraphics[width=0.3\textwidth]{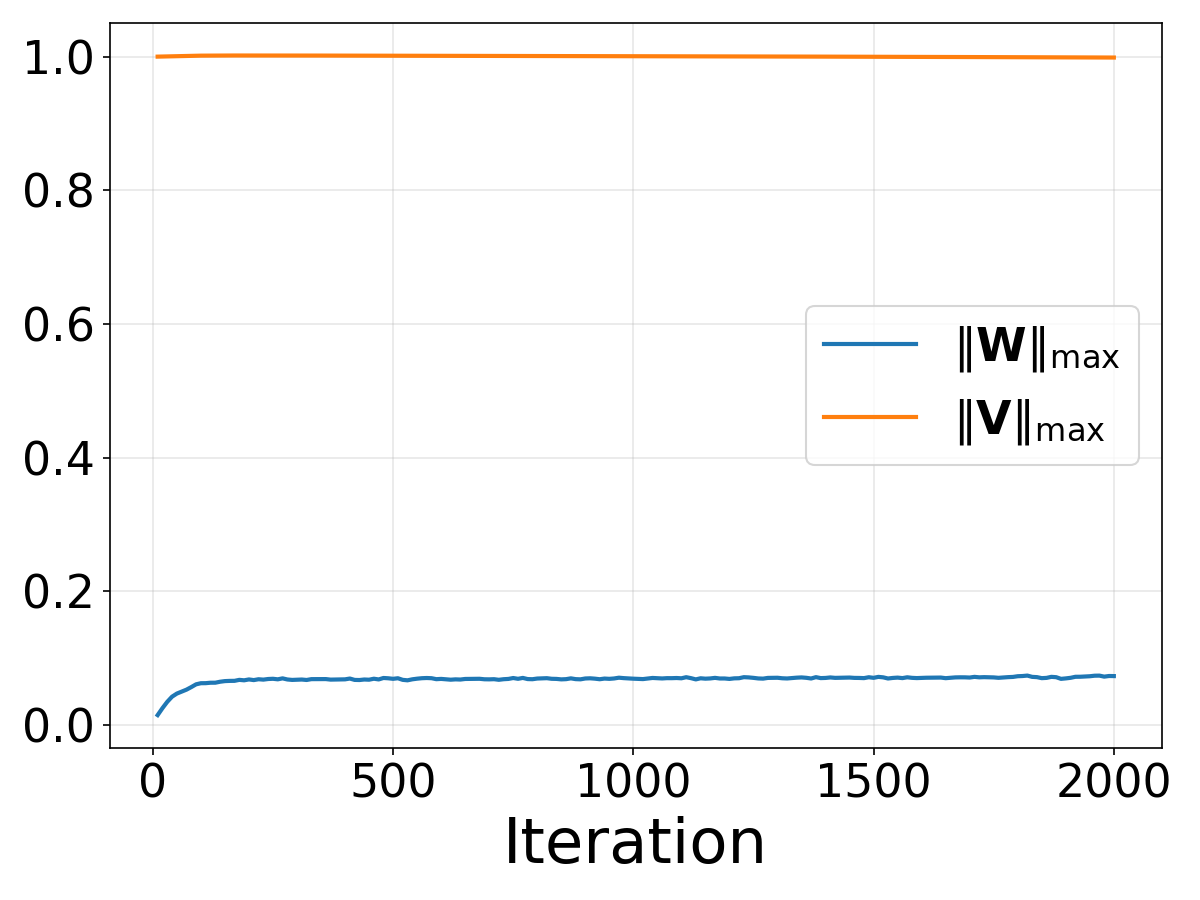}}
\subfigure[$\Wb/ \|\Wb \|_{\mathsf{max}}$ for $\yb_{\text{target}}^{(2)}$]{\includegraphics[width=0.3\textwidth]{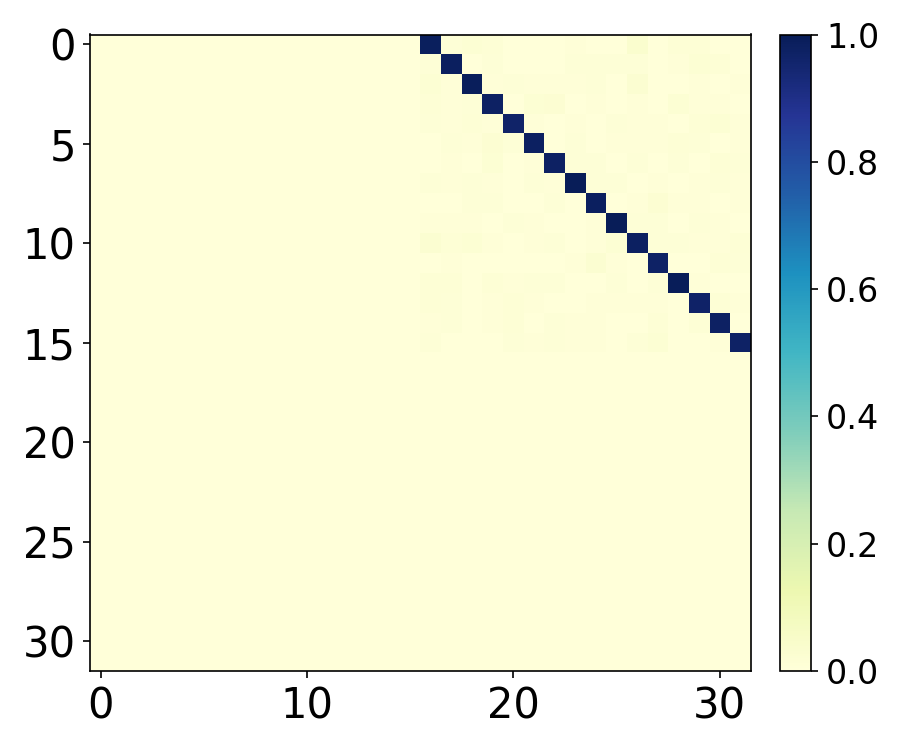}}
\subfigure[$\Vb/ \|\Vb \|_{\mathsf{max}}$ for $\yb_{\text{target}}^{(2)}$]{\includegraphics[width=0.3\textwidth]{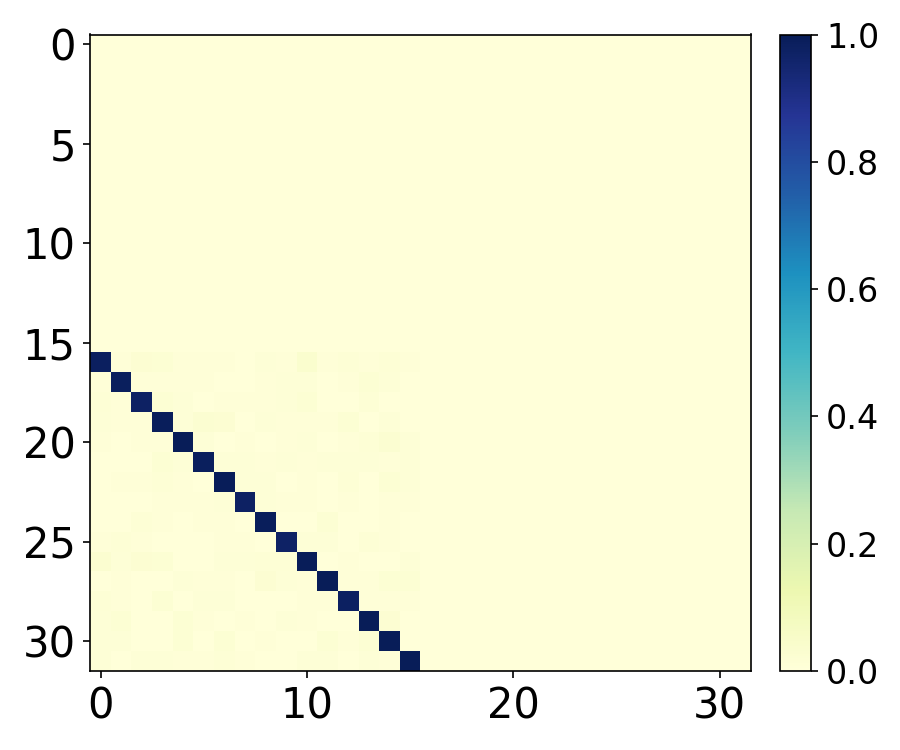}}
\subfigure[$\|\Wb \|_{\mathsf{max}},\|\Vb \|_{\mathsf{max}}$ for $\yb_{\text{target}}^{(2)}$]{\includegraphics[width=0.3\textwidth]{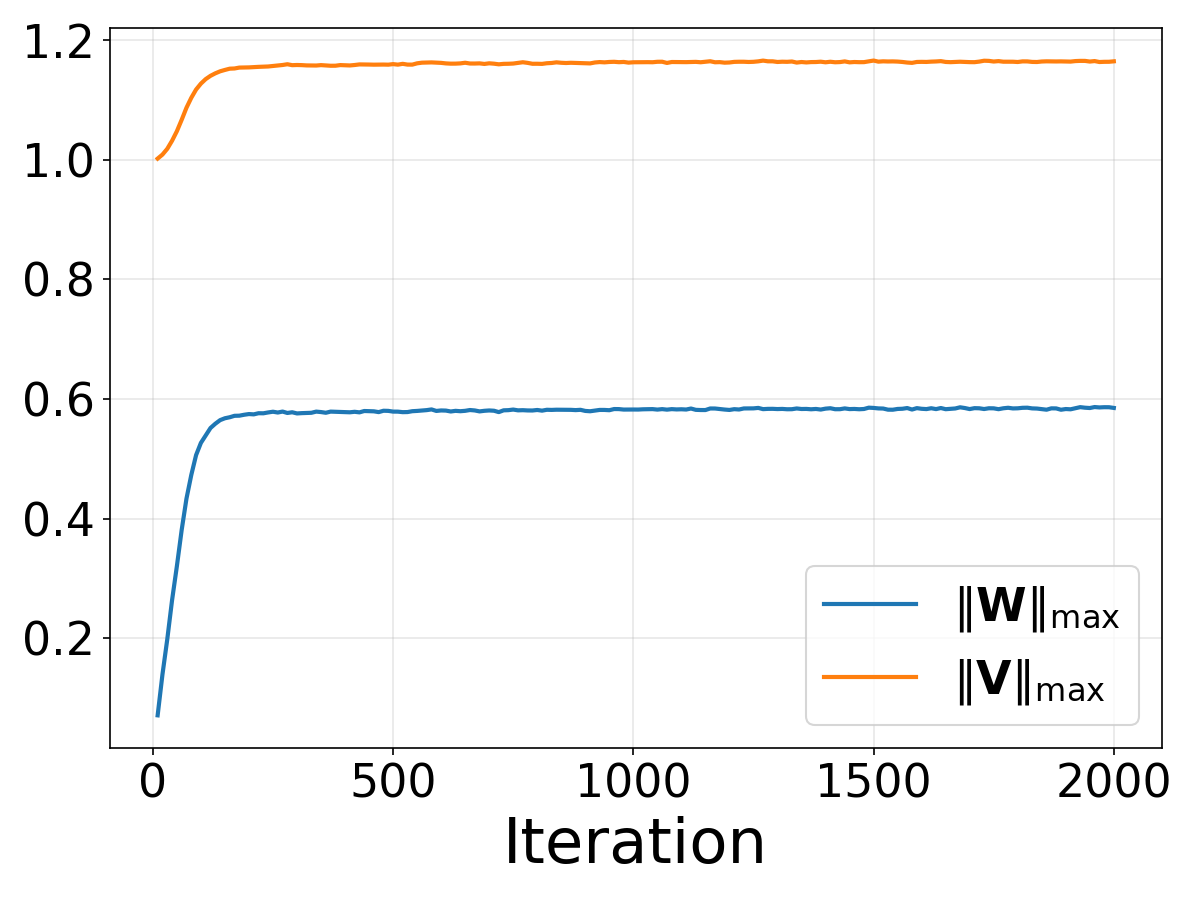}}
\caption{Heatmaps of parameter matrices and evolution of active parameters for both unnormalized models on task 1.}
\label{fig:learned parameter unnormalized task 1}
%\vspace{-5mm}
\end{figure}

Figure~\ref{fig:LN-critical parameter} visualizes the learned parameter matrices and the evolution of active parameters for both unnormalized variants on task 2. All entries of matrices $\Wb$ and $\Vb$ converge to zero except for the elements in the bottom-left block of $\mathbf{V}$ and the top-right block of $\mathbf{W}$, aligning with the sparse structure predicted by our analysis in Theorem \ref{thm:unnorm convergence}. Furthermore, the evolution of these active parameters quickly converges to fixed points. These results confirm the theoretical predictions of Theorem~\ref{thm:unnorm convergence}.

\begin{figure}[ht!]
% \vspace{-2mm}
\centering   
\subfigure[$\Wb/ \|\Wb \|_{\mathsf{max}}$ for $\yb_{\text{target}}^{(1)}$]{\includegraphics[width=0.3\textwidth]{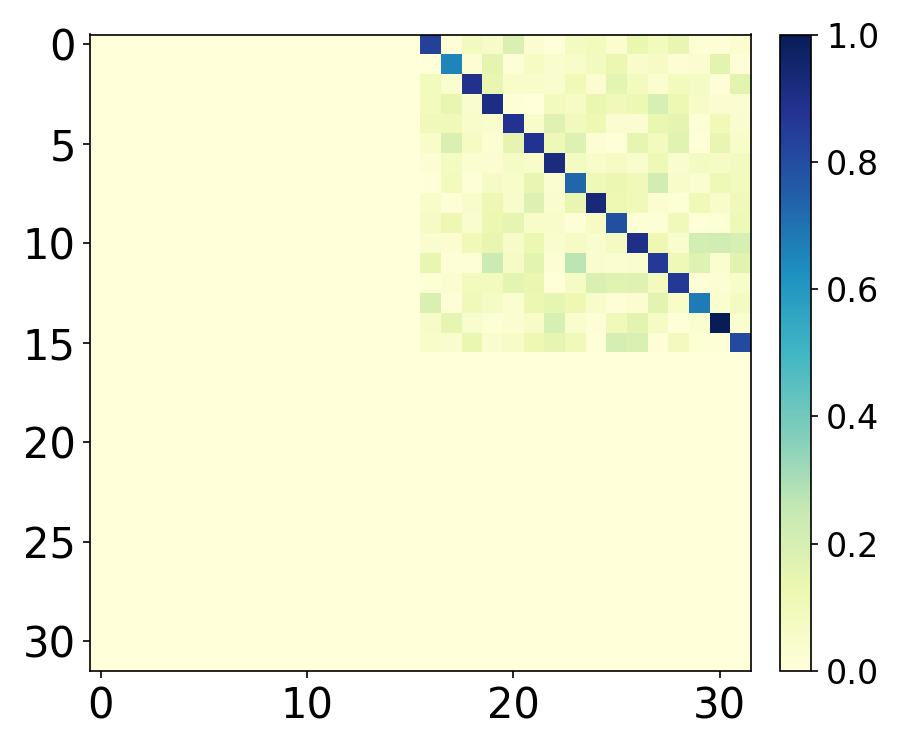}}
\subfigure[$\Vb/ \|\Vb \|_{\mathsf{max}}$ for $\yb_{\text{target}}^{(1)}$]{\includegraphics[width=0.3\textwidth]{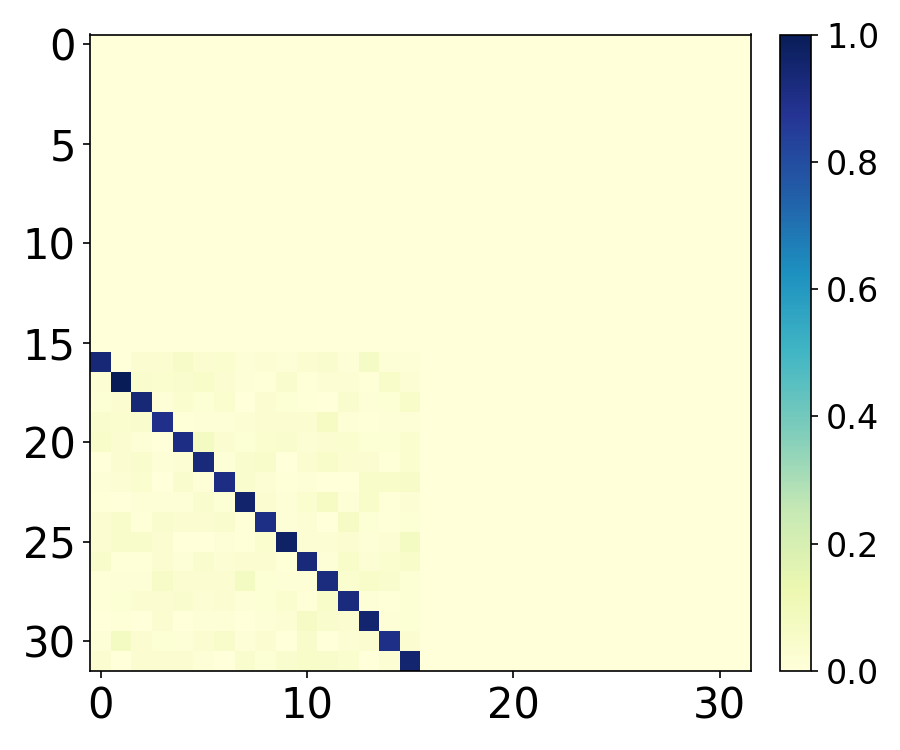}}
\subfigure[$\|\Wb \|_{\mathsf{max}},\|\Vb \|_{\mathsf{max}}$ for $\yb_{\text{target}}^{(1)}$]{\includegraphics[width=0.3\textwidth]{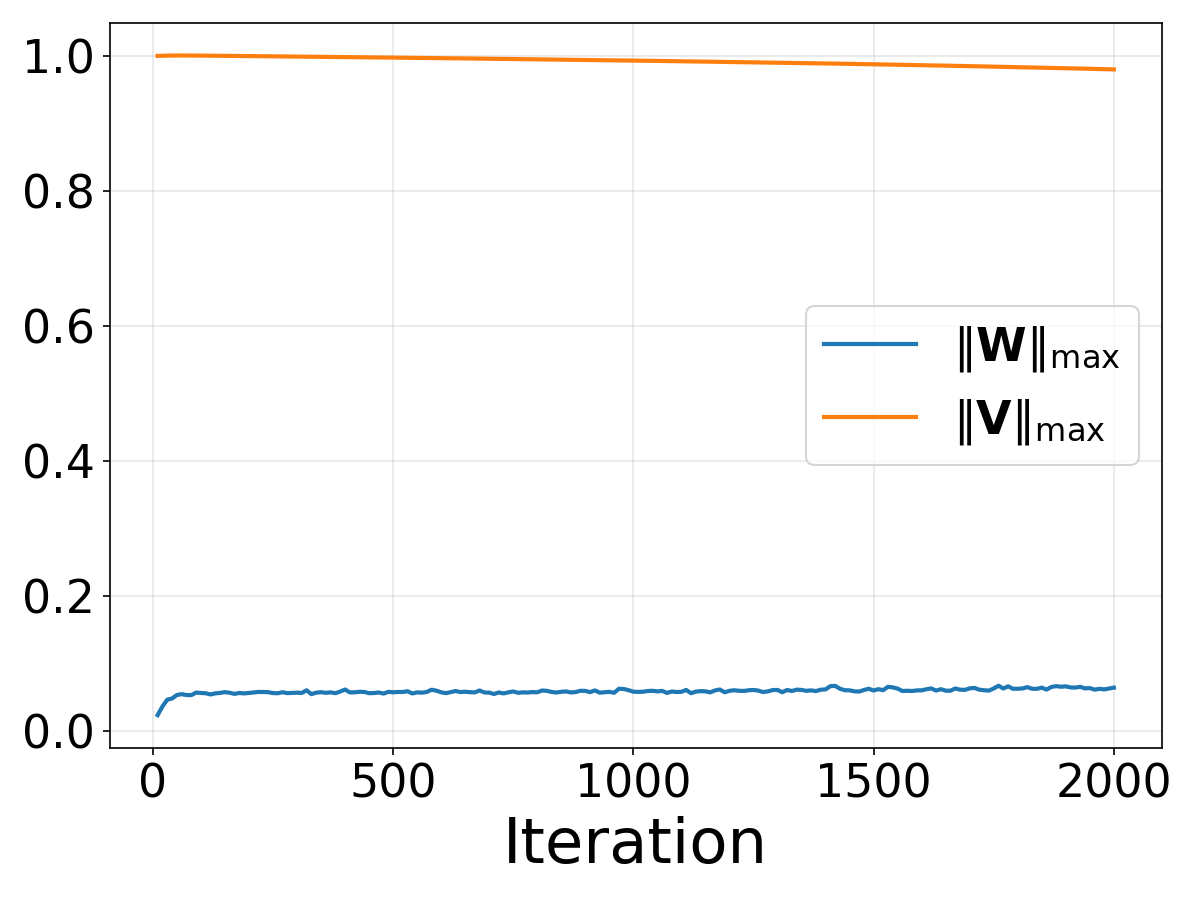}}
\subfigure[$\Wb/ \|\Wb \|_{\mathsf{max}}$ for $\yb_{\text{target}}^{(2)}$]{\includegraphics[width=0.3\textwidth]{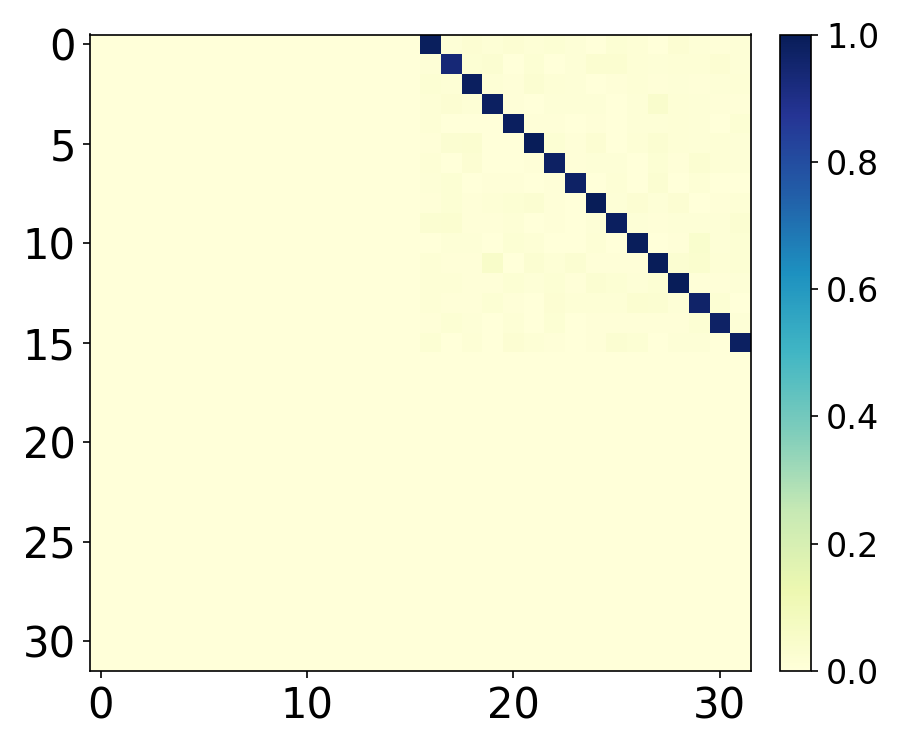}}
\subfigure[$\Vb/ \|\Vb \|_{\mathsf{max}}$ for $\yb_{\text{target}}^{(2)}$]{\includegraphics[width=0.3\textwidth]{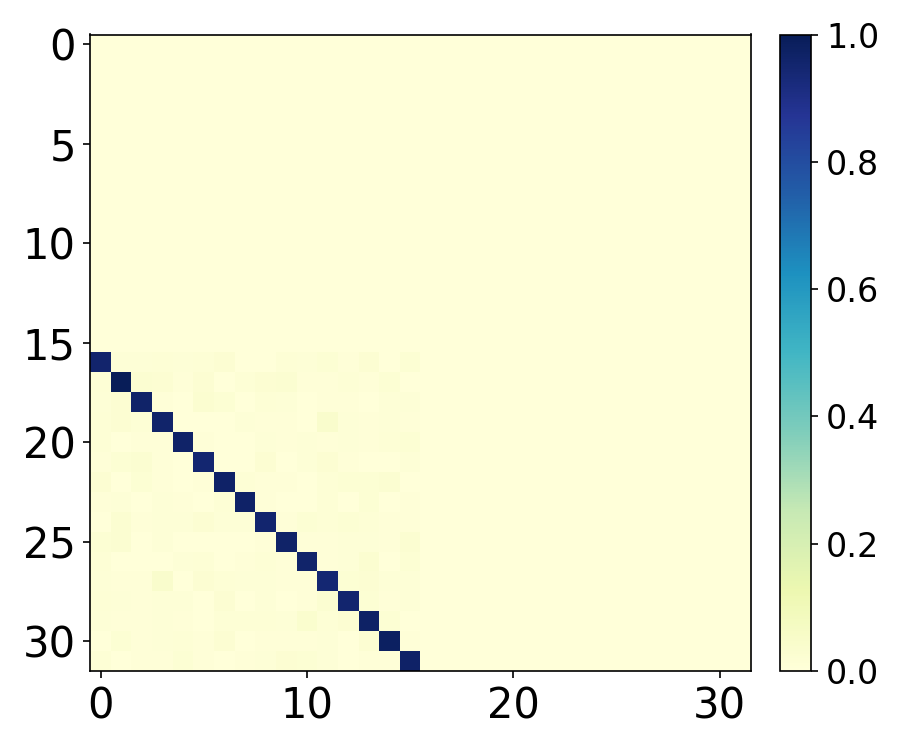}}
\subfigure[$\|\Wb \|_{\mathsf{max}},\|\Vb \|_{\mathsf{max}}$ for $\yb_{\text{target}}^{(2)}$]{\includegraphics[width=0.3\textwidth]{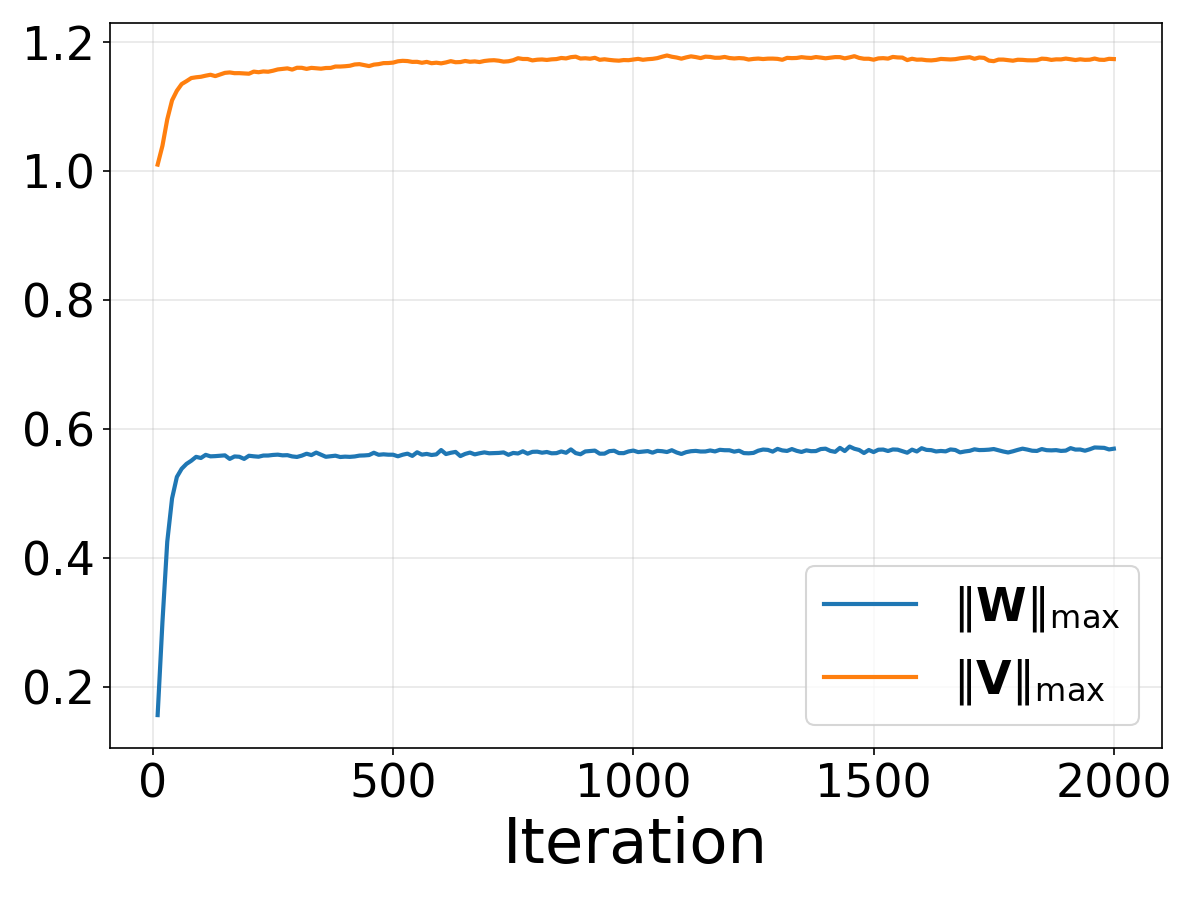}}
\caption{Heatmaps of parameter matrices and evolution of active parameters for both unnormalized models on task 2.}
    \label{fig:LN-critical parameter}
%\vspace{-5mm}
\end{figure}

\subsection{OOD Performance Looped Transformer}

We empirically verify the performance of the looped transformer. For a fixed test point $(\mathbf{X}_{\text{test}}, \mathbf{a}_{\text{test}})$, the ground-truth principal eigenvector is numerically computed from the empirical covariance matrix $\mathbf{X}_{\text{test}}\mathbf{X}_{\text{test}}^\top$. We set $d=16$, $n=32$. We choose $L=10$ and train the looped model for $T=2000$ steps and take snapshots of $\Wb$ and $\Vb$ after training. Using the resulting $\Wb$ and $\Vb$, we construct looped transformers with $L \in \{1,2,\ldots,25\}$ layers and evaluate them on the fixed test point.

Figure~\ref{fig:looped_eval} plots the error, measured by the angular deviation, against the number of layers $L$ on a semi-logarithmic scale. The error curve appears as a nearly linear trajectory. This linear relationship clearly shows that the error decays exponentially as the number of layers $L$ increases. %Moreover, the error curve is close to that of the power method.

\begin{figure}[ht!]
\centering
\subfigure[Looped transformer, Task 1]{\includegraphics[width=0.45\textwidth]{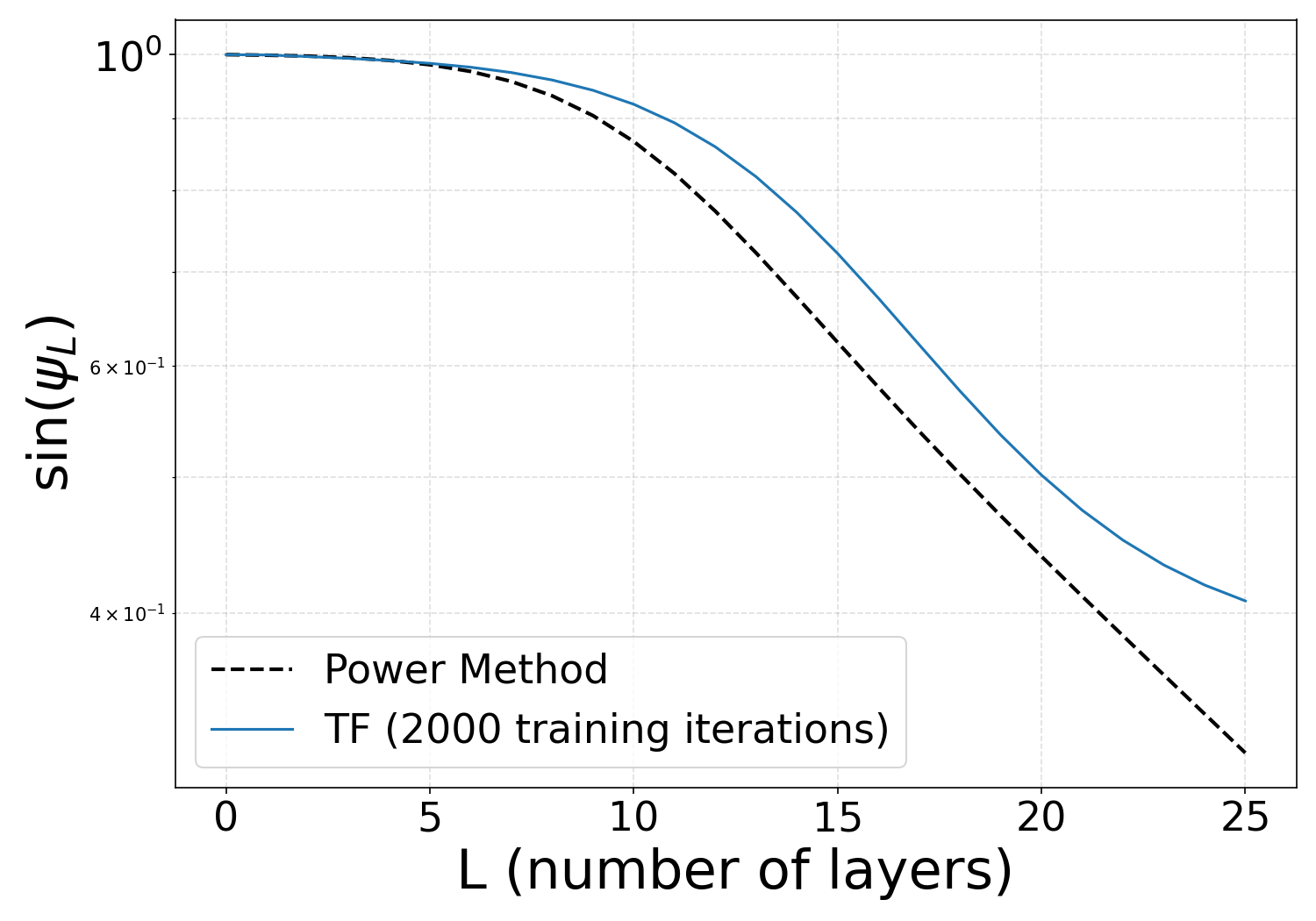}}
\subfigure[Looped transformer, Task 2]{\includegraphics[width=0.45\textwidth]{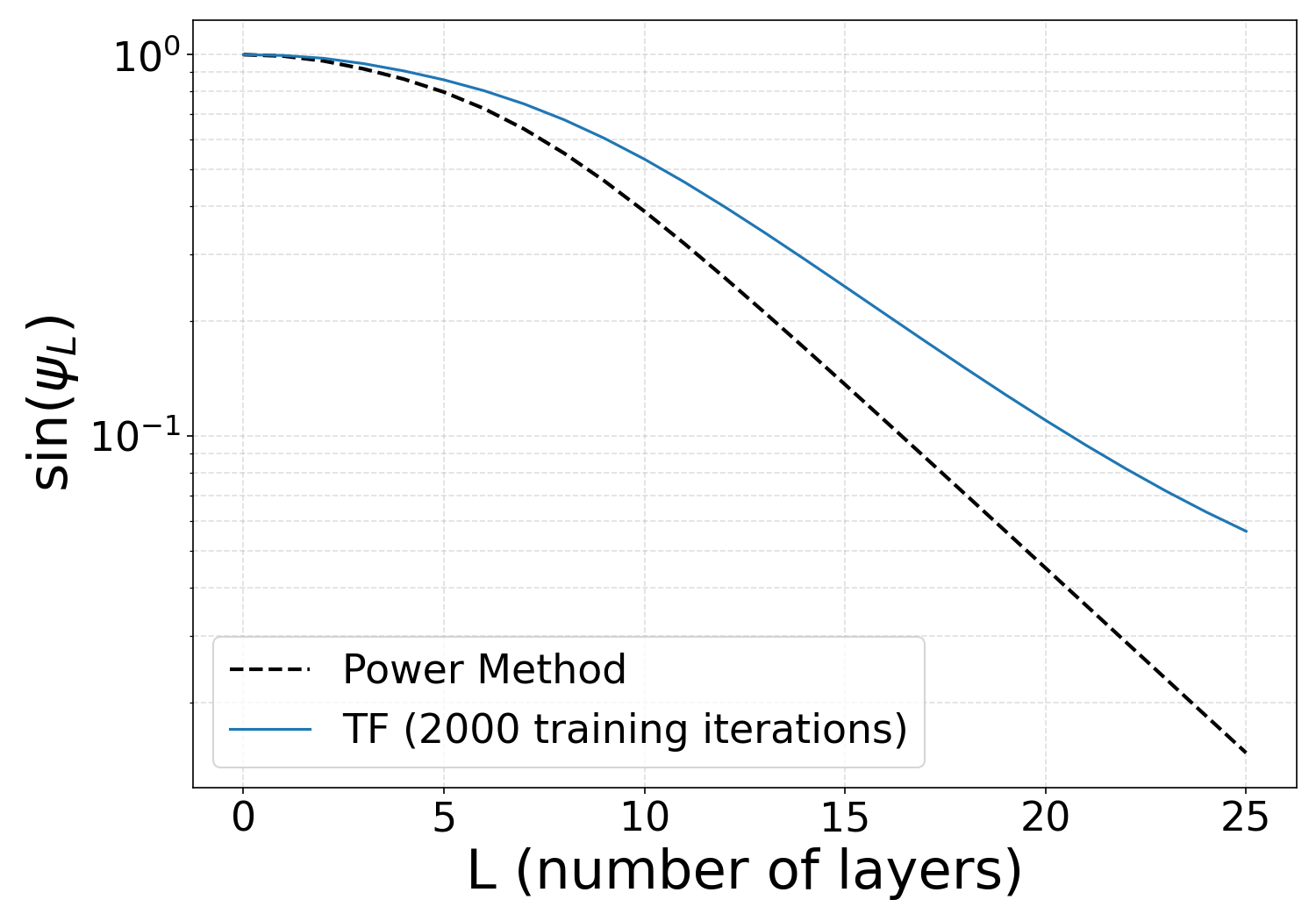}}
\caption{Semi-log plots of the error $\sin(\psi_L)$ versus the number of layers $L$ for models trained for $T = 2000$ iterations on tasks 1 and 2.}
 \label{fig:e2e_looped_eval}
 \vspace{-5mm}
\end{figure}

\bibliography{ref}

\bibliographystyle{ims}

\end{document}